\pgfplotsset{compat=1.16}
\newcommand{\varprod}{\mathbin{\scalebox{1.15}{$\cross$}}}
\newcommand{\tfirst}{u_1}
\newcommand{\rr}{d}
\newcommand{\tlast}{u_{\rr-1}}
\newcommand{\xp}{\x^\prime}
\newcommand{\xpp}{\x^{\prime\prime}}
\newcommand{\M}{\mathcal{M}}
\newcommand{\N}{\mathcal{Z}}
\newcommand{\XX}{\mathcal{X}}
\newif\ifdarkmode
\newif\iftrackchanges
\newcommand{\trackchange}{\iftrackchanges\color{blue}\fi}
\newcommand{\loss}{\ell}
\newcommand{\reals}{\mathbb{R}}
\newcommand{\cross}{\times}
\newcommand{\conv}{\mathbf{Conv}}
\newcommand{\cone}{\mathbf{Cone}}
\newcommand{\Span}{\mathbf{Span}}
\newcommand{\Par}{\mathcal{P}}
\newcommand{\ones}{{1}}
\newcommand{\X}{{X}}
\newcommand{\W}{{W}}
\newcommand{\w}{w}
\newcommand{\y}{y}
\newcommand{\K}{K}
\newcommand{\z}{z}
\newcommand{\x}{x}
\newcommand{\zvec}{v}
\newcommand{\e}{e}
\newcommand{\Z}{Z}
\newcommand{\ball}{\mathcal B}
\newcommand{\dist}{\mathbf{dist}}
\newcommand{\Aff}{\mathbf{Aff}}
\renewcommand{\Span}{\mathbf{Span}}
\newcommand{\diam}{\mathscr{D}}
\newcommand{\Vol}{\mathbf{Vol}}
\newcommand{\Tri}{\triangle}
\newcommand{\rank}{\mathbf{rank}}
\newcommand{\Dmat}{\mathrm{D}}
\newcommand{\diag}{\mathrm{diag}}
\newcommand{\eye}{\mathrm{I}}
\newcommand*\bigcdot{\mathpalette\bigcdot@{.5}}
\newcommand*\bigcdot@[2]{\mathbin{\vcenter{\hbox{\scalebox{#2}{$\m@th#1\bullet$}}}}}
\newcommand{\subalign}[1]{%
  \vcenter{%
    \Let@ \restore@math@cr \default@tag
    \baselineskip\fontdimen10 \scriptfont\tw@
    \advance\baselineskip\fontdimen12 \scriptfont\tw@
    \lineskip\thr@@\fontdimen8 \scriptfont\thr@@
    \lineskiplimit\lineskip
    \ialign{\hfil$\m@th\scriptstyle##$&$\m@th\scriptstyle{}##$\hfil\crcr
      #1\crcr
    }%
  }%
}
\theoremstyle{plain}
\newtheorem{theorem}{Theorem}
\newtheorem{lemma}[theorem]{Lemma}
\newtheorem{corollary}[theorem]{Corollary}
\theoremstyle{definition}
\newtheorem{definition}{Definition}
\theoremstyle{remark}
\newtheorem*{remark}{Remark}
\title{From Complexity to Clarity: Analytical Expressions of Deep Neural Network Weights via Clifford's Geometric Algebra and Convexity}
\author{Mert Pilanci\\ Department of Electrical Engineering, Stanford University \\ \texttt{pilanci@stanford.edu}}
\date{\today}
\begin{document}
 
\maketitle

\begin{abstract} 
    In this paper, we introduce a novel analysis of neural networks based on geometric (Clifford) algebra and convex optimization. We show that optimal weights of deep ReLU neural networks are given by the wedge product of training samples when trained with standard regularized loss. Furthermore, the training problem reduces to convex optimization over wedge product features, which encode the geometric structure of the training dataset. This structure is given in terms of signed volumes of triangles and parallelotopes generated by data vectors. The convex problem finds a small subset of samples via $\ell_1$ regularization to discover only relevant wedge product features.  Our analysis provides a novel perspective on the inner workings of deep neural networks and sheds light on the role of the hidden layers.
\end{abstract}

\section{Introduction}

While there has been a lot of progress in developing deep neural networks (DNNs) to solve practical machine learning problems \cite{krizhevsky2012imagenet, lecun2015deep, openai2023gpt}, the inner workings of neural networks is not well understood. A foundational theory for understanding how neural networks work is still lacking despite extensive research over several decades. In this paper, we provide a novel analysis of neural networks based on geometric algebra and convex optimization. We show that weights of deep ReLU neural networks learn the wedge product of a subset of training samples when trained by minimizing standard regularized loss functions. Furthermore, the training problem for two-layer and three-layer networks reduces to convex optimization over wedge product features, which encode the geometric structure of the training dataset. This structure is given in terms of signed volumes of triangles and parallelotopes generated by data vectors. {\trackchange By the addition of an additional ReLU layer, the wedge products are iterated to yield a richer discrete dictionary of wedge features.} Our analysis provides a novel perspective on the inner workings of deep neural networks and sheds light on the role of the hidden layers. 

\subsection{Prior work}
The quest to understand the internal workings of neural networks (NNs) has led to numerous theoretical and empirical studies over the years. A striking discovery is the phenomenon of "neural collapse," observed when the representations of individual classes in the penultimate layer of a deep neural network tend to a point of near-indistinguishability \cite{papyan2020prevalence}. Despite this insightful finding, the underlying mechanism that enables this collapse is yet to be fully understood. Linearizations and infinite-width approximations have been proposed to explain the inner workings of neural networks \cite{jacot2018neural,chizat2019lazy,radhakrishnan2023wide}. However, these approaches often simplify the rich non-linear interactions inherent in deep networks, potentially missing out on the full spectrum of dynamics and behaviors exhibited during training and inference. 

{
\trackchange
Infinite dimensional convex neural networks were introduced in \cite{bengio2005convex}, offering insights into their structure. Following work analyzed optimization and approximation properties of infinite convex neural networks \cite{bach2017breaking}. Although these works advanced the understanding of convexity in infinite dimensional NNs, they also highlight the computational challenges inherent in training infinite dimensional models, including solving a finite dimensional non-convex problems to add a single neuron \cite{bach2017breaking}. On the other hand, it has been noted that the activation patterns of deep ReLU networks exhibit a structured yet poorly understood simplicity. The work \cite{hanin2019deep} highlighted that the actual number of activation regions a ReLU network learns in practice is significantly smaller than the theoretical maximum.

Previous studies \cite{fisher1975spline,mammen1997locally} have investigated splines in relation to $\ell_1$ extremal problems, demonstrating the adaptability of spline models to local data characteristics. More recent work \cite{balestriero2018spline} connected deep networks to spline theory, showing that many deep learning architectures could be interpreted as max-affine spline operators. Approximation properties of ReLU and squared ReLU networks with regularization were studied in \cite{klusowski2018approximation}. The authors in \cite{savarese2019infinite} considered infinitely wide univariate ReLU networks and showed that the minimum squared $\ell_2$ norm fit is given by a linear spline interpolation. Another line of work \cite{parhi2021banach, parhi2022kinds} developed a variational framework to analyze functions learned by deep ReLU networks, revealing that these functions belong to a space similar to classical bounded variation-type spaces. In a similar spirit, connections to kernel Banach spaces via representer theorems were developed in \cite{bartolucci2023understanding}. The work \cite{unser2019representer} introduced a general representer theorem that connects deep learning with splines and sparsity. In contrast, our results provide an independent and novel perspective on the optimal weights of a deep neural networks through geometric algebra, and may shed light into the spline theory of deep networks. In particular, the relation between linear splines and one-dimensional ReLU networks discovered in \cite{savarese2019infinite} is generalized to arbitrary dimension and depth in our work. The key ingredient in our analysis is the use of wedge products, which are not present in any work analyzing neural networks to the best of our knowledge.

The relationship between neural networks and geometric structures has been another area of research focus. Convex optimization and convex geometry viewpoint of neural networks has been extensively studied in recent work \cite{pilanci2020neural, ergen2021convex, bartan2021training, ergen2021revealing, ergen2021global, wang2021convex, wang2021hidden, ergen2021demystifying, lacotte2020all}. However, previous works have focused mostly on computational aspects of convex reformulations. This work provides an entirely new set of convex reformulations, which are different from the ones in the literature. Two main advantages of our approach are that our results hold for arbitrary depth and dimension, and that they provide a geometric interpretation and novel closed-form formulas, which can be used to polish any existing deep neural network.
}

\begin{figure}
\begin{minipage}{0.32\textwidth}
    \centering
    \begin{tikzpicture}[scale=0.4]
        \begin{axis}[
            axis lines=middle,
            xmin=-0.2, xmax=4,
            ymin=-0.1, ymax=1.5,
            xlabel=$$,
            ylabel=$$,
            ticks = none,
        ]
        \addplot[domain=0:2, black, line width=3pt] {0};
        \addplot[domain=2:4, black, line width=3pt] {(x-2)};
        \node at (1,0) [circle, fill, red, inner sep=6pt] {};
        \node at (0.4,0) [circle, fill, red, inner sep=6pt] {};
        \node at (2,0) [circle, fill, blue, inner sep=6pt] {};
        \node at (3.7,0) [circle, fill, red, inner sep=6pt] {};
        \end{axis}
        \node at (1,-0.5) {$x_1$};
        \node at (2,-0.5) {$x_2$};
        \node at (3.8,-0.5) {$x_3$};
        \node at (6.4,-0.5) {$x_4$};
        \node at (3,5.4,0) {$\mathbf{dist}_+(x,x_3)$};
    \end{tikzpicture}
    \subcaption{One-dimensional neuron}
\end{minipage}
\begin{minipage}{0.32\textwidth}
    \centering
   \begin{tikzpicture}[scale=0.7]
    \begin{axis}[
        view={120}{40},
        axis lines=none, 
        xlabel={$$},
        ylabel={$$},
        zlabel={$$},
        domain=-2:1.5,
        y domain=-2:1.5,
        zmin=0,
        xmax=2,
        ymax=2,
        zmax=4,
        samples=12,
        unbounded coords=jump,
        grid=major, 
        grid style={line width=2pt},
        ]
        \addplot3 [surf, fill=white, faceted color=black]{max(0, -x+y)};
    \end{axis}
    \node at (3,2.1,0) [circle, fill, red, inner sep=2pt] {};
    \node at (3.4,1.5,0) [circle, fill, red, inner sep=2pt] {};
    \node at (2.6,2.0,0) {\large $ x_{1} $};
    \node at (2.8,1.3,0) {\large $ x_{2} $};

    additional points
    \node at (2,2.1,0) [circle, fill, red, inner sep=2pt] {};
    \node at (0.6,1.3,0) [circle, fill, red, inner sep=2pt] {};
    \node at (1.55,2,0) {\large $ x_{3} $};
    \node at (0.1,1.3,0) {\large $ x_{4} $};
    \draw[thick, blue, shorten <=-0.75cm, shorten >=-0.92cm] (3,2.1,0) -- (3.4,1.5,0);
    \node at (2.3,4.05,0) {$\mathbf{dist}_+(x,\mathbf{Aff}(x_1,x_2))$};
    \end{tikzpicture}
    \subcaption{Two-dimensional neuron}
\end{minipage}
\newcommand{\shortenAmountBasicFig}{-1.2cm}
\newcommand{\offsetx}{-2}
\newcommand{\offsety}{-1.5}
\begin{minipage}{0.32\textwidth}
    \centering
    \begin{tikzpicture}[scale=0.62]
        \draw[thin, ->] (-1.5,0) -- (2,0) node[anchor=north west] {};
        \draw[thin, ->] (0,-1.5) -- (0,2) node[anchor=south east] {};
    
        \coordinate (A) at (3.4+\offsetx,1.5+\offsety);
        \coordinate (B) at (2+\offsetx,2.1+\offsety);
        \coordinate (C) at (0.6+\offsetx,1+\offsety);
        \coordinate (D) at (3+\offsetx,2.1+\offsety);
    
        \draw[line width=1pt, blue, dashed, shorten >= \shortenAmountBasicFig, shorten <=\shortenAmountBasicFig] (A) -- (B);
        \draw[line width=1pt, blue, dashed, shorten >= \shortenAmountBasicFig, shorten <=\shortenAmountBasicFig] (A) -- (C);
        \draw[line width=1pt, blue, shorten >= \shortenAmountBasicFig, shorten <=\shortenAmountBasicFig] (A) -- (D);
        \draw[line width=1pt, blue, dashed, shorten >= \shortenAmountBasicFig, shorten <=\shortenAmountBasicFig] (C) -- (D);
        \draw[line width=1pt, blue, dashed, shorten >= \shortenAmountBasicFig, shorten <=\shortenAmountBasicFig] (C) -- (B);
        \draw[line width=1pt, blue, dashed, shorten >= \shortenAmountBasicFig, shorten <=\shortenAmountBasicFig] (B) -- (D);
        \fill[red] (A) circle (4pt);
        \fill[red] (B) circle (4pt);
        \fill[red] (C) circle (4pt);
        \fill[red] (D) circle (4pt);
        \node at (1.1,0.98,0) {\large $ x_{1} $};
        \node at (1.6,0.30,0) {\large $ x_{2} $};
        \node at (1.8,-1.75,0) {$(x-x_2)\wedge(x_1-x_2)=0$};
    \end{tikzpicture}
    \subcaption{Optimal breaklines in $\mathbb{R}^2$}
\end{minipage}
\caption{\label{fig:first}\trackchange An illustration of the geometric interpretation of optimal ReLU neurons. The breaklines/breakpoints pass through a subset of \emph{special} training samples.}
\end{figure}
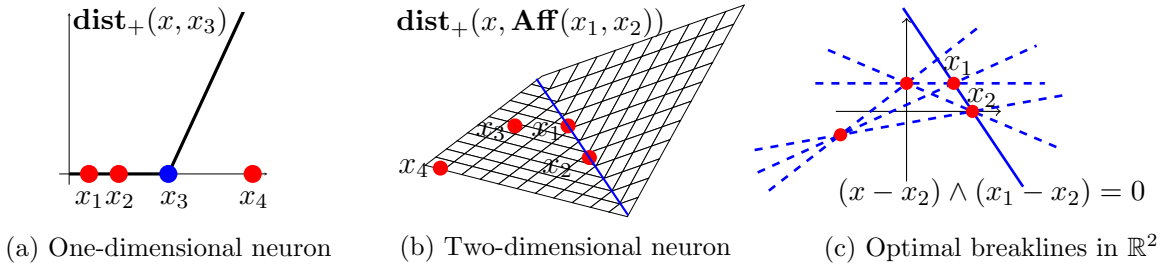

\subsection{Summary of results}
The work in this paper diverges from other approaches by using Clifford's geometric algebra and convex analysis to characterize the structure of optimal neural network weights. We show that the optimal weights for deep ReLU neural networks can be found via the closed-form formula, 
$x_{j_1}\wedge \hdots \wedge x_{j_k}$, known as a
k-blade in geometric algebra, with $\wedge$
signifying the wedge product. For each individual neuron, this expression involves a \emph{special} subset of training samples indexed by $(j_1,\hdots,j_{k})$, which may vary across  neurons. Surprisingly, the entire network training procedure can be reinterpreted as a purely discrete problem that identifies this unique subset for every neuron. Moreover, we show that this problem can be cast as a Lasso variable selection procedure over wedge product features, algebraically encoding the geometric structure of the training dataset. 

{\trackchange
\noindent \textbf{The Unexpected Neuron Functionality:\\} 
The conventional belief suggests that artificial neurons optimize their response by aligning with the relevant input samples \cite{carter2019activation}, a notion inspired by the direction-sensitive neurons observed within the visual cortex \cite{hubel1968receptive}. However, this interpretation hits a dead end in large DNNs, with numerous neurons responding to unrelated features, making it nearly impossible to understand the specific role of individual neurons \cite{o2023disentangling}. Contrary to this conventional wisdom, our findings reveal that ReLU neurons are, in fact, orthogonal to a specific set of data points, due to the properties of the wedge product.  As a consequence, these neurons yield a null output for this distinct subset of training samples, diverging completely from the anticipated alignment. This outcome underscores a nuanced understanding; rather than merely aligning with input samples, the neurons assess the oriented distance relative to the affine hull encapsulated by the special subset of training samples. This concept is visually explained in Figure \ref{fig:first} (a-b), where ReLU activation in 1D and 2D is interpreted as an oriented distance function. The optimal breaklines of the ReLU neurons, i.e., $\{x:w^Tx+b=0\}$, intersect with a select group of \emph{special} training samples, expressed using a simple equation involving wedge products, leading to a zero output from the neurons for these instances, as illustrated in Figure \ref{fig:first}(c). This result challenges the traditional interpretations of the role of hidden layers in DNNs, and provides a fresh perspective on the inner workings of deep neural networks. We show for the first time that geometric algebra provides the right set of concepts and tools to work with such oriented distances, enabling the transformation of the problem into a simple convex formulation.
}

\noindent {\textbf{Decoding DNNs with Geometric Algebra}:\\} Our results show that within a deep neural network, when an input sample $x$ is multiplied with a trained neuron, it yields the product $x^T\star(x_{j_{1}}\wedge \hdots \wedge x_{j_{k}})$. Leveraging geometric algebra, this product can be shown to be equal to the signed distance between $x$ and the linear span of the point set ${x_{j_1},\hdots,x_{j_k}}$, scaled by the length of the neuron. This allows the neurons to measure the oriented distance between the input sample and the affine hull of the special subset of samples (see Figure \ref{fig:first} for an illustration).  Rectified Linear Unit (ReLU) activations transform negative distances, representing inverse orientations, to zero. When this operation is extended across a collection of neurons within a layer, the layer's output effectively translates the input sample into a coordinate system defined by the affine hulls of a special subset of training samples. Consequently, each layer is fundamentally tied solely to a specific subset of the training data. This subset can be identified by examining the weights of a trained network. Furthermore, with access to these training samples, the entirety of the network weights can be reconstructed using the wedge product formula. Moreover, when this operation is repeated through additional ReLU layers, our analysis reveals a geometric regularity within the space partitioning of the network, highlighting a consistent pattern of translations and interactions with dual vectors (see Figure \ref{fig:space_partitioning}). This geometric elucidation sheds fresh light on the mysterious roles played by the hidden layers in DNNs.

\subsection{Notation}

In our notation, lower-case letters are employed to represent vectors, while upper-case letters denote matrices. We use lower-case letters for vectors and upper-case letters for matrices. The notation $[n]$ represents integers from $1$ to $n$. We use a multi-index notation to simplify indexing matrices and tensors. Specifically, $j=(j_1,...,j_k)$ is a multi-index and $\sum_{j}$ 
denotes the summation operator over all indices included in the multi-index $j$, i.e., $\sum_{j_1}\cdots\sum_{j_k}$. Given a matrix $\K\in\reals^{n\times p}$, an ordinary index $i\in[n]$, and a 
multi-index $j=(j_1,...,j_k)$ where $j_i\in[d_i]~\forall i\in[k]$, the notation $\K_{ij}$ denotes the $(i,v(j_1,...,j_k))$-th entry of this matrix, where $v$ represents the function that maps indices $(j_1,...,j_k)$ to a one-dimensional index according to a column-major ordering. Formally, we can define $v(j_1,...,f_d):=(j_1 - 1) + (j_2 - 1) d_1 + (j_3 - 1) d_1 d_2 + ... + (j_n - 1) d_1 d_2 ... d_{k-1}$.
We allow the use of multi-index and ordinary indices together, e.g., $\sum_{j}a_j v_{j_1}\cdots v_{j_d}=\sum_{j_1,\cdots j_{d}} a_{(j_1,\cdots,j_d)}v_{j_1}\cdots v_{j_d}$.
The notation $\K_{i\cdot}$ denotes the $i$-th row of $\K$. The notation $\K_{\cdot j}$ denotes the $j$-th column of $\K$. The $p$-norm of a $d$-dimensional vector $\w$ for some $p\in(0,\infty)$ is represented by $\|\w\|_p$, and it is defined as $\|\w\|_p\triangleq (\sum_{i=1}^d |w_i|^{p})^{1/p}$. In addition, we use the notation $\|\w\|_0$ to denote the number of non-zero entries of $\w$ and $\|\w\|_\infty$ to denote the maximum absolute value of the entries of $\w$.
We use the notation $\ball_p^{d}$ for the unit $p$-norm ball in $\reals^d$ given by $\{w\in\reals^d: \|w\|_p\le 1\}$. The notation $\dist(\mathbf{x},\mathcal{Y})$ is
used for the minimum Euclidean distance between a vector $\mathbf{x}\in\reals^d$ and a subset $\mathcal{Y}\subseteq\reals^d$.   $\Vol(\mathcal{C})$ denotes $d$-volume of a subset $\mathcal{C} \subseteq\reals^d$. The notation $(\cdot)_+$ is used 
the positive part  of a real number. When applied to a scalar multiple of a pseudoscalar in a geometric algebra $\mathbb{G}^d$ such as $\alpha \mathbf{I}\in \mathbb{G}^d$, where $\mathbf{I}^{2}=\pm1$, the notation $(\alpha \mathbf{I})_+=(\alpha)_+\in\reals$ represents the positive part of the scalar component of $\mathbf{I}$. We use the notation $\Vol_+(\cdot)=(\Vol(\cdot))_+$ to denote the positive part of signed volumes. We extend this
notation to other functions, e.g., ${\det}_+(\cdot)$ denotes the positive part of the determinant, and $\dist_+(\cdot,\cdot)$ denotes the positive part of the Euclidean distance. $\mathbf{diam}(S)$ denotes the Euclidean diameter of a subset $S\subseteq \reals^d$. $\Span(S)$ and $\Aff(S)$ denote the linear span and affine hull of a set of vectors $S$ respectively. {\trackchange We overload scalar functions to apply to vectors and matrices element-wise. For instance $(Xw)_+$ denotes the ReLU activation applied to each entry of $Xw$. We use $\gtrsim$ to denote the inequality up to a constant factor.}

\section{Setting and Methodology}

\subsection{Preliminaries}
Consider a deep neural network
\begin{align}
\label{eq:neural_network}
f(x) = \sigma(W^{(L)} \cdots \sigma(W^{(1)} x+b^{(1)}) \cdots+b^{(L)}),
\end{align}
where \( \sigma: \mathbb{R} \to \mathbb{R} \) is a non-linear activation function,
 \( W^{(1)}, \ldots, W^{(L)} \) are trainable weight matrices, \(b^{(1)},...,b^{(L)} \) are trainable bias vectors and \( x \in \mathbb{R}^d \) is the input.
The activation function \( \sigma (\cdot) \) operates on each element individually.
\subsection*{\normalsize Training two-layer ReLU networks}
\label{eq:subsec:twolayer}
We will begin by examining the regularized training objective for a two-layer neural network with ReLU activation function and $m$ hidden neurons.
\begin{align}
\label{eq:two_layer_relu}
p^*&\triangleq\min_{\W^{(1)},\,\W^{(2)},\,b^{(1)},\,b^{(2)}} \loss\Big(\sum_{j=1}^m\sigma(\X \W^{(1)}_j+\ones_n b^{(1)}_{j})\W^{(2)}_{j}+b^{(1)},\y\Big) + \lambda \sum_{j=1}^m \|\W^{(1)}_{j}\|_p^2 + \|\W^{(2)}_{j}\|_p^2,
\end{align}
where $\X\in \mathbb{R}^{n \times d}$ is the training data matrix, $\W^{(1)}\in\reals^{d\times m }$, $\W^{(2)} \in \reals^{m\times c}$ and $b^{(1)}\in\reals^m,\,b^{(2)}\in \reals$ are trainable weights, $\ell(\cdot,y)$ is a convex loss function, $\y\in \mathbb{R}^{n}$ is a vector containing the training labels,  and $\lambda > 0$ is the regularization parameter.
Here we use the $p$-norm in the regularization of the weights. Initially, we will assume an output dimension of $c=1$, and we will extend this to arbitrary values of $c$.
Typical loss functions used in practice include squared loss, logistic loss, cross-entropy loss and hinge loss, which are
convex functions.

When $p=2$, the objective \eqref{eq:two_layer_relu} reduces to the standard weight decay regularized NN problem
\begin{align}
    \label{eq:two_layer_relu_theta}
    p^*&=\min_{\theta,\,b} \loss\big(f_{\theta,b}(X),\y\big) + \lambda \|\theta\|_2^2.
\end{align}
Here, $\theta$ is a vector containing the weights $\W_1$ and $W_2$ in vectorized form and 
\begin{align}
    f_{\theta,\,b}(X) \triangleq \sum_{j=1}^m\sigma(\X \W^{(1)}_{j}+\ones_n b^{(1)}_{j})\W^{(2)}_{j}+\ones_nb^{(2)},
\end{align}
where $\ones_n$ is a vector of ones of length $n$.
When $b$ is set to zero, we refer to the neurons in the first layer as the \emph{bias-free} neurons. When $b$ is not set to zero, we refer to the neurons in the first layer as the \emph{biased} neurons. Note that the bias terms are excluded from the regularization.
\subsection*{\normalsize Training deep ReLU neural networks}
\label{eq:subsec:deepprelim}
We will extend our analysis by also considering the objective in \eqref{eq:two_layer_relu_theta} using the deeper network model show in \eqref{eq:neural_network}. It will be shown by a simple induction that the structural results for two-layer networks applies to blocks of layers in arbitrarily deep ReLU networks.
\subsection*{\normalsize Augmented data matrix}
In order to simplify our expressions for the case of $p=1$, we augment the set of $n$ training data vectors of dimension $d$ by including $d$ additional vectors from the standard basis of $\mathbb{R}^d$ and let $\tilde n=n+d$.
Specifically, we define the augmented data samples as $\{\x_i\}_{i=1}^{n+d} = \{\x_i \}_{i=1}^{n} \cup \{ \e_i \}_{i=1}^{d}$, where $\x_i\in\reals^d$ 
represents the original training data points for $1\le i\le n$ and $e_i$ is the $i$-th standard basis vector in $\mathbb{R}^d$ for $i\ge n$. We define $\tilde \X=[x_1,\cdots,x_{n+d}]^T$.

\subsection{Geometric Algebra}

Clifford's Geometric Algebra (GA) is a mathematical framework that extends the classical vector and linear algebra and provides a unified language for expressing geometric constructions and ideas \cite{artin2016geometric}. GA has found applications in classical  and relativistic physics, quantum mechanics, electromagnetics, computer graphics, robotics and numerous other fields \cite{doran2003geometric,dorst2012applications}. GA enables encoding geometric transformations in a form that is highly intuitive and convenient. More importantly, GA unifies several mathematical concepts, including complex numbers, quaternions, and tensors and provides a powerful toolset.

We consider GA over a $d$-dimensional Euclidean space, denoted as $\mathbb{G}^d$. The fundamental object in $\mathbb{G}^d$ is the \textit{multivector}, $M=\langle M \rangle_0 + \langle M \rangle_1 + \hdots + \langle M \rangle_d$, which is a sum of vectors, bivectors, trivectors, and so forth. Here, $\langle M\rangle_k$ denotes the k-vector part of $M$. For instance, in the two-dimensional space $\mathbb{G}^2$, a multivector can be written as \(M = a + {v} + B\), where \(a\) is a scalar, \({v}\) is a vector, and \(B\) is a bivector. In this case, the basis elements are not just the canonical vectors $e_1,e_2$ but also the grade-2 element $e_1e_2$.
\begin{figure}[!t]
    \begin{minipage}{0.49\textwidth}
    \[
    \begin{tikzpicture}[scale=0.7]
    \draw[->,thick] (0,0) -- (4,0) node[anchor=north west] {\({a}\)};
    \draw[->,thick] (0,0) -- (2,3) node[anchor=south east] {\({b}\)};
    \draw[->,thick,red,dashed] (4,0) -- (6,3); 
    \draw[->,thick,red,dashed] (2,3) -- (6,3);
    \fill[blue,opacity=0.3] (0,0) -- (4,0) -- (6,3) -- (2,3) -- cycle;
    \draw[->,thick] (3.7,1.5) arc[start angle=0, end angle=270, radius=0.7];
    \node at (4,2.5) {\(a \wedge b\)};
    \draw[red] (1,0) arc (0:58:1cm);
    \node at (1.2,0.6) {\(\theta\)};
    \draw[->,thick,green,dashed] (4,0) -- (2,3);
    \node at (-0.2,-0.2) {\(0\)};
    \end{tikzpicture}
    (a)
    \]
    \end{minipage}
    \begin{minipage}{0.49\textwidth}
    \[
        \begin{tikzpicture}[scale=0.6]
            \def\veci{(4,0,0)}
            \def\vecjone{(1,3,0)}
            \def\vecjtwo{(1,1,0)}
            \draw[fill=blue, opacity=0.3] (0,0)--(4,0)--(5,3)--(1,3)--(0,0);
            \draw[fill=red, opacity=0.3] (5,3)--(1,3)--(2,4)--(6,4)--(5,3);
            \draw[fill=yellow, opacity=0.3] (4,0)--(5,3)--(6,4)--(5,1)--(4,0);       
            \draw[dashed, red] (2,4)--(1,1);
            \draw[dashed, red] (0,0)--(1,1);
            \draw[dashed, red] (1,1)--(5,1);
            \draw[->, ultra thick] (0,0,0) -- \veci node[anchor=north east] {\(x_i\)};
            \draw[->, ultra thick] (0,0,0) -- \vecjone node[anchor=north east] {\(x_{j_1}\)};
            \draw[->, ultra thick] (0,0,0) -- \vecjtwo node[anchor=north west] {\(x_{j_2}\)};
            \node at (4,4.5,0) {\(x_i \wedge x_{j_1} \wedge x_{j_2}\)};
            \node[anchor=north east] at (0,0,0) {$0$};
            \draw[fill=green, opacity=0.3] (0,0)--\veci--\vecjone--cycle; 
            \draw[fill=green, opacity=0.3] (0,0)--\veci--\vecjtwo--cycle; 
            \draw[fill=green, opacity=0.3] (0,0)--\vecjone--\vecjtwo--cycle; 
            \draw[fill=green, opacity=0.3] \veci--\vecjone--\vecjtwo--cycle; 
            \fill (0,0,0) circle (2pt); 
            \fill \veci circle (2pt);
            \fill \vecjone circle (2pt);
            \fill \vecjtwo circle (2pt);
            \node at (5.0,-0.35) {(b)};
        \end{tikzpicture} 
    \]
    \end{minipage}
    \begin{minipage}{0.49\textwidth}
        \centering
    \begin{tikzpicture}[scale=1]
        \draw[thick,->] (-0.5,0) -- (2.5,0) node[anchor=west] {$e_1$};
        \draw[thick,->] (0,-0.5) -- (0,2.5) node[anchor=south] {$e_2$};
    
        \fill[fill=blue, opacity=0.3] (0,0) -- (2,1) -- (1,2) -- cycle;
        \node at (2.2,1.9) {\(\frac{1}{2} x_i \wedge x_{j}\)};
    
        \draw[fill=black] (0,0) circle (0.5pt) node[anchor=north east] {0};
        \draw[fill=black] (2,1) circle (0.5pt) node[anchor=west] {\(x_i\)};
        \draw[fill=black] (1,2) circle (0.5pt) node[anchor=south] {\(x_j\)};
        \draw[->, thick] (0,0) -- (2,1);
        \draw[->, thick] (0,0) -- (1,2);
        \draw (0,0) -- (2,1) -- (1,2) -- cycle;
    \end{tikzpicture}
    ~~~~(c)
\end{minipage}
\begin{minipage}{0.49\textwidth}
    \centering
    \begin{tikzpicture}[scale=1]
        \draw[thick,->] (-0.5,0) -- (2.5,0) node[anchor=west] {$e_1$};
        \draw[thick,->] (0,-0.5) -- (0,2.5) node[anchor=south] {$e_2$};
    
        \fill[fill=red, opacity=0.3] (0.5,0.7) -- (2,1) -- (1,2) -- cycle;
        \node at (3.8,1.9) {\( \frac{1}{2} (x_i-x_{j_1}) \wedge (x_{j_2} -  x_{j_1})\)};
        \draw[fill=black] (0.5,0.7) circle (0.5pt) node[anchor=north west] {\(x_{j_1}\)};
        \draw[fill=black] (2,1) circle (0.5pt) node[anchor=west] {\(x_{i}\)};
        \draw[fill=black] (1,2) circle (0.5pt) node[anchor=south] {\(x_{j_2}\)};
        \draw[->, thick] (0,0) -- (0.5,0.7);
        \draw[->, thick] (0.5,0.7) -- (2,1);
        \draw[->, thick] (0.5,0.7) -- (1,2);
        \draw (0.5,0.7) -- (2,1) -- (1,2) -- cycle;
        \node at (4.3,-0.55) {(d)};
    \end{tikzpicture}
\end{minipage}
\caption{Wedge product of two-dimensional (a) and three-dimensional (b) vectors in $\mathbb{G}^2$ and $\mathbb{G}^3$, and the triangular area defined in the convex program from Theorem \ref{thm:main_two_dim_nobias} (c)-(d).\label{fig:2dwedge}}
\end{figure}
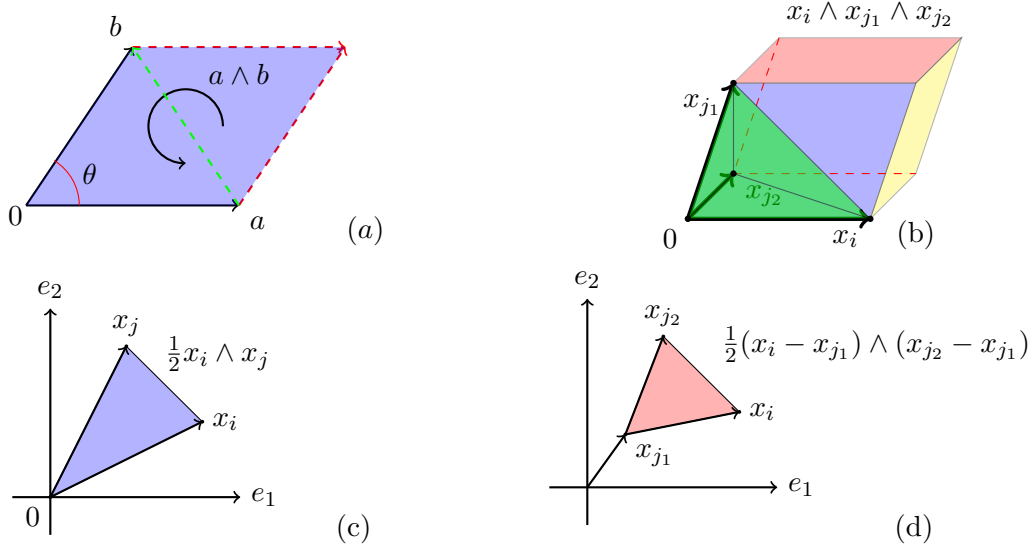
A key operation in GA is the \textit{geometric product}, denoted by juxtaposition of operands: \(ab\). For vectors \({a}\) and \({b}\), the geometric product can be expressed as \({ab} = {a}\cdot{b} + {a}\wedge{b}\), where \(\cdot\) denotes the dot product and \(\wedge\) denotes the wedge (or outer) product. 

The dot product \({a}\cdot{b}\) is a scalar representing the projection of \({a}\) onto \({b}\). The wedge product \({a}\wedge{b}\) is a bivector representing the oriented area spanned by \({a}\) and \({b}\). In the geometric algebra $\mathbb{G}^d$, higher-grade entities (trivectors, 4-vectors, etc.) can be constructed by taking the wedge product of a vector with a bivector, a trivector, and so on. A $k$-blade is a $k$-vector that can be expressed as the wedge product of $k$ vectors. For example, the bivector $a\wedge b$ is a 2-blade.

Figure \ref{fig:2dwedge}(a) shows two important properties of the wedge product in $\reals^2$:

1. Wedge product of two vectors represents the signed area of the paralleogram spanned by the two vectors. In the left figure, \({a} \wedge {b}\) is represented by the blue parallelogram. When $a,b$ are two-dimensional vectors, the magnitude of $a \wedge b$ is equal to this area and is given by $\Big|\det \begin{bmatrix} a_1 & b_1 \\ a_2 & b_2 \end{bmatrix}\Big|=|a_1b_2-b_1a_2|$. The sign of the area is determined by the orientation of the vectors: the area is positive when $a$ can be rotated counter-clockwise to $b$ and negative otherwise.

2. The wedge product is anti-commutative:
In the right figure, \({b} \wedge {a}\) is represented by the red parallelogram. It has the same area as \({a} \wedge {b}=-b \wedge a\), but an opposite orientation. As a result, we have \({a} \wedge {a} = -{a} \wedge {a}=0\).

By considering the half of the parallelogram, the area of the triangle in $\reals^2$ formed by $0$, $a$ and $b$ shown by the dashed line in Figure \ref{fig:2dwedge}(a) is given by the magnitude of $\frac{1}{2} {a} \wedge {b}$. The signed area of a generic triangle in $\mathbb{R}^2$ formed by three arbitrary vectors ${a}$, ${b}$ and ${c}$ is given by
\begin{align*}
    \frac{1}{2} ({a}-{c}) \wedge ({b} - {c}) 
    = \frac{1}{2}\big( a\wedge b -  a\wedge c -  c\wedge b +  c\wedge c\big)
    = \frac{1}{2}\big( a\wedge b +  b\wedge c + c\wedge a
    \big)\,, 
\end{align*} 
which is a consequence of the distributive property of the wedge product. Therefore, the signed area of the triangle is half the sum of the wedge products for each adjacent pair in the counter-clockwise sequence $a\rightarrow b\rightarrow c \rightarrow a$ encircling the triangle.

The metric signature in $\mathbb{G}^d$ over the Euclidean space is characterized by all positive signs, 
indicating that all unit basis vectors are mutually orthogonal and
 have unit Euclidean norm. Let us call the standard (Kronecker) basis 
 in $d$-dimensional Euclidean space $e_1,\hdots,e_d$, which satisfies $e_i^2=1\,\forall i$ and $e_ie_j=-e_je_i\,\forall i\neq j$. The wedge product  $\mathbf{I}\triangleq e_{1}\wedge\cdots\wedge e_{d}=e_1\hdots e_d$ represents the highest grade element and is defined to be the unit pseudoscalar. The inverse of $\mathbf{I}$ is defined as $\mathbf{I}^{-1}=e_d\hdots e_1$, and satisfies $\mathbf{I}^{-1}\mathbf{I}=1$. Squaring $\mathbf{I}$, we obtain $\mathbf{I}^2=(e_1 e_2)^2=e_1e_2e_1e_2=-1$, analogous to the unit imaginary scalar in complex numbers, which is a subalgebra of $\mathbb{G}^2$.

{\trackchange
The $\ell_2$ norm of a multivector is defined as the square root of the sum of the squares of the scalar coefficients of its basis $k$-vectors. For a multivector $M$ it holds that $\|M\|_2^2=\langle M^\dagger M \rangle_0$, where $M^\dagger$ is the reversion of $M$. $M^\dagger$ is analogous to complex conjugation and is defined by three properties:
(i) $(MN)^\dagger = M^\dagger N^\dagger$, (ii) $(M+N)^\dagger = M^\dagger + N^\dagger$, (iii) $M^\dagger=M$ when $M$ is a vector. For instance, $(e_1e_2e_3e_4)^\dagger=e_4e_3e_2e_1$ and $(e_1+e_2e_3)^\dagger=e_1+e_3e_2$.
The definition of inner and wedge products can be naturally extended to multivectors. In particular, the inner-product between two $k$-vectors $M=\alpha_1\wedge\cdots\wedge \alpha_k$ and $N=\beta_1\wedge\cdots\beta_k$ is defined by the Gram determinant $M\cdot N\triangleq \det(\langle \alpha_i, \beta_j\rangle_{i,j=1}^k )$.
}

\subsection*{\normalsize Hodge dual}
\label{sec:cross_product}
A $k$-blade can be viewed as a $k$-dimensional oriented parallelogram. For each such paralleogram, we may associate a $(d-k)$-dimensional orthogonal complement. This duality between $k$-vectors and $(d-k)$-vectors is established through the Hodge star operator $\star$. For every pair of $k$-vectors $M, N\in \mathbb{G}^d$, there exists a unique $(d-k)$-vector $\star M \in \mathbb{G}^d$ with the property that 
\begin{align*}
    \star M \wedge N = ( M \cdot N)\, e_1 \wedge \cdots \wedge e_d = ( M \cdot N)\,\mathbf{I}.
\end{align*}
{\trackchange
We may also express the Hodge dual of a $k$-vector $M$ as $\star M = M \mathbf{I}^{-1}$, where $\mathbf{I}^{-1}$ is the inverse of the unit pseudoscalar.
This linear transformation from $d$-vectors to $d-k$ vectors defined by $M \rightarrow \star M$ is the Hodge star operator. An example in $\mathbb{G}^3$ is $\star e_1 = e_1 \mathbf{I}^{-1} = e_1e_3e_2e_1= e_3e_2$.
}
\subsection*{\normalsize Generalized cross product and wedge products}
The usual cross product of two vectors is only defined in $\reals^3$. {\trackchange  However, the wedge product can be used to define a generalized cross product in any dimension}. The generalized cross product in higher dimensions is an operation that takes in $d-1$ vectors in an $\reals^d$ and outputs a vector that is orthogonal to all of these vectors. The generalized cross product of the vectors $v_1, v_2, \ldots, v_{n-1}$ can be defined via the Hodge dual of their wedge product as $\cross (v_1, v_2, \ldots, v_{n-1}) \triangleq \star (v_1 \wedge v_2 \wedge \ldots \wedge v_{n-1}).
$. It holds that $\cross (v_1, v_2, \ldots, v_{n-1}) \cdot v_i = 0$ for all $i=1,\ldots,n-1$.
{\trackchange The signed distance of a vector $x$ to the linear span of a collection of vectors $x_1,...,x_{d-1}$ can be expressed via the generalized cross product and wedge products as
\begin{align*}
    \dist(x,\Span(x_1,\ldots,x_{d-1})) = \frac{\cross(x_1,\ldots,x_{d-1})^T x}{\|\cross(x_1,\ldots,x_{d-1})\|_2}
    = \frac{\star(x\wedge x_1\wedge\cdots\wedge x_{d-1})}{\|x_1\wedge\cdots\wedge x_{d-1}\|_2}.
\end{align*}
}
{\trackchange
This formulation stems from an intuitive geometric principle: the ratio of the volume of a parallelotope $\mathcal{P}(x, x_1, \ldots, x_{d-1})$, which is spanned by the vectors $x, x_1, \ldots, x_{d-1}$, to the volume of its base $\mathcal{P}(x_1, \ldots, x_{d-1})$, the parallelotope formed by $x_1, \ldots, x_{d-1}$ alone. This ratio effectively captures the height of the parallelotope relative to its base, which corresponds to the distance of $x$ from the subspace spanned by $x_1, \ldots, x_{d-1}$. Note that the Hodge dual $\star$ transforms the $d$-vector in the numerator into a scalar.}
We provide a subset of other important properties of the generalized cross product in Section \ref{sec:generalized_cross_products} of the Appendix.


\subsection{Convex duality}
\label{sec:background}
Convexity and duality plays a key role in the analysis of optimization problems \cite{boyd2004convex}. Here, we show how the convex dual of a non-convex neural network training problem can be used to analyze optimal weights.

\subsection*{\normalsize Convex duals of neural network problems}

The non-convex optimization problem \eqref{eq:two_layer_relu} has a convex dual formulation derived in recent work \cite{pilanci2020neural,ergen2021convex} given by
\begin{align}
    \label{eq:two_layer_relu_dual}
    p^*\ge d^*\triangleq \max_{\zvec\in\reals^n} \,\,-\ell^*(\zvec,\y)\quad\mbox{s.t.}\quad |\zvec^T\sigma(\X\w)|\le \lambda,\,\forall \w \in \ball_p^d,
\end{align}
where we take the network to be bias-free (see Section \ref{sec:biases} for biased neurons).
Here, $\ell^*(\cdot,\y)$ is the convex conjugate of the loss function $\ell(\cdot,\y)$ defined as $\ell^*(\zvec,\y)\triangleq \sup_{\w\in\reals^d} \zvec^T\w-\ell(\w,\y)$ and $\ball_p^d$ is the unit $p$-norm ball in $\reals^d$. 
Moreover, it was shown in \cite{pilanci2020neural} that when $\sigma$ is the ReLU activation, strong duality holds, i.e., $p^*=d^*$, when the number of neurons $m$ exceeds a critical threshold. The value of this threshold can be determined from an optimal solution of the dual problem in \eqref{eq:two_layer_relu_dual}. This result was extended to deeper ReLU networks in \cite{ergen2021global} and to deep threshold activation networks in \cite{ergen2022globally}.

Our main strategy to analyze the optimal weights is based on analyzing the extreme points of the dual constraint set in \eqref{eq:two_layer_relu_dual}. In the following section, we present our main results. 

%
%
\section{Theoretical Results}
\subsection{One-dimensional data}
We start with the simplest case where the training data is one-dimensional, i.e., $d=1$ and the number of training points, $n$ is arbitrary.
\begin{theorem}
\label{thm:main_one_dim}
    For all values of the regularization norm $p\in[1,\infty)$, the two-layer neural network 
    problem in \eqref{eq:two_layer_relu} can be recast as the following $\ell_1$-regularized convex optimization problem
    \begin{align}
    \label{eq:convex_one_dim_two_layer_relu}
    \min_{\subalign{\z &\in \mathbb{R}^{2n} \\ t&\in\mathbb{R}}} \loss\big( \K\z +\ones_n t , \y\big) + \lambda\|\z\|_1\,,
    \end{align}
    where the entries of the matrix $\K$ are given by
    \begin{align}
    \label{eq:K_one_dim}
    K_{ij} \triangleq \begin{cases} (\x_i-\x_j)_+ & 1\le j\le n \\ (\x_{j-n}-\x_i)_+ & n<j\le 2n, \end{cases}
    \end{align}
    and the number of neurons obey $m\ge \|\z^*\|_0$.
    An optimal network can be constructed as
    \begin{align}
    \label{eq:optimal_network_one_dim}
    f(x) = \sum_{j=1}^n \z_j^* (x-x_j)_+ + \sum_{j=1}^{n} \z_{j+n}^* (x_{j}-x)_+ + t^*.
    \end{align}
    where $\z^*$ and $t^*$ are optimizers of \eqref{eq:convex_one_dim_two_layer_relu}.
\end{theorem}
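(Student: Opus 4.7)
The strategy is the classical rescaling (positive-homogeneity) reduction of weight-decay-regularized ReLU networks to an atomic $\ell_1$ problem, followed by the observation that in one dimension the continuum of neuron shapes collapses to a finite polytope whose vertices are indexed by the training points.

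First, I would apply the standard AM-GM rescaling: using positive homogeneity of ReLU, the transformation $W^{(1)}_j\to t W^{(1)}_j$, $W^{(2)}_j\to W^{(2)}_j/t$, $b^{(1)}_j\to tb^{(1)}_j$ preserves the network output, and minimizing $\|W^{(1)}_j\|_p^2+(W^{(2)}_j)^2$ over $t>0$ yields $2|W^{(2)}_j|\,\|W^{(1)}_j\|_p$. Hence \eqref{eq:two_layer_relu} is equivalent to
\begin{align*}
\min\;\loss\!\Big(\textstyle\sum_j \alpha_j\,\sigma(w_j x + b_j) + t\ones_n,\;\y\Big) + 2\lambda\textstyle\sum_j |\alpha_j|\quad\text{s.t. }\|w_j\|_p=1.
\end{align*}
Since $d=1$, $\|w\|_p=|w|$ for every $p\ge 1$, so the unit-norm constraint pins $w_j\in\{+1,-1\}$, and the atomic set of neuron responses on the training data reduces to the two one-parameter families $\mathcal{A}_+=\{((x_i-c)_+)_{i=1}^n:c\in\reals\}$ and $\mathcal{A}_-=\{((c-x_i)_+)_{i=1}^n:c\in\reals\}$.

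Next, assuming the training inputs are sorted $x_1\le\cdots\le x_n$ without loss of generality, I would show that $\mathcal{A}_+\cup\mathcal{A}_-$ collapses, modulo constants absorbed into $t$, to the $2n$ columns of $\K$ in \eqref{eq:K_one_dim}. For any $c\in[x_k,x_{k+1}]$, no coordinate of $((x_i-c)_+)_i$ crosses its kink as $c$ varies in this interval, so the whole vector is affine in $c$; writing $c=\theta x_k+(1-\theta)x_{k+1}$ with $\theta\in[0,1]$ yields $((x_i-c)_+)_i=\theta K_{\cdot k}+(1-\theta)K_{\cdot (k+1)}$, a convex combination of two vertices. For $c\le x_1$ the identity $(x_i-c)_+=(x_i-x_1)_++(x_1-c)$ reduces the atom to $K_{\cdot 1}$ plus a constant absorbable into $t$, and for $c\ge x_n$ the atom is identically zero. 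The $\mathcal{A}_-$ case is symmetric and yields the columns $K_{\cdot (n+1)},\ldots,K_{\cdot 2n}$. Splitting any output weight $\alpha$ into $\alpha\theta$ and $\alpha(1-\theta)$ preserves the $\ell_1$ mass because both pieces keep the sign of $\alpha$, so every atomic feasible point maps to a $(\z,t)$ feasible for \eqref{eq:convex_one_dim_two_layer_relu} with the same penalty; conversely, each $(\z,t)$ feasible for \eqref{eq:convex_one_dim_two_layer_relu} builds a valid width-$\|\z\|_0$ network by placing one hidden unit at each nonzero entry. The two optima therefore coincide, $m\ge\|\z^*\|_0$ neurons suffice, and \eqref{eq:optimal_network_one_dim} is read off directly from $\z^*$.

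The proof is largely mechanical in this base case; the only nontrivial check is that the convex-combination rewriting is genuinely penalty-preserving, which holds because $|\alpha\theta|+|\alpha(1-\theta)|=|\alpha|$ whenever $\theta\in[0,1]$. The deeper purpose of the argument, which I would flag as a signpost for the higher-dimensional theorems to come, is that the piecewise-linear parameterization in a scalar bias is replaced there by a multilinear parameterization in several training samples, and the present convex-combination reduction takes the form of signed-volume identities between wedge products of data vectors.
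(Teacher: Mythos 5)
Your proof is correct, and it takes a genuinely different route from the paper. The paper argues through the dual: starting from \eqref{eq:two_layer_relu_dual}, it notes that the scalar constraint $\|w\|_p\le 1$ forces $w\in\{\pm 1\}$, uses piecewise linearity in the bias to show the dual constraint $\sup_{w,b}|\sum_i v_i(x_iw+b)_+|\le\lambda$ is saturated at $b\in\pm\{x_i\}_{i=1}^n$ (with the $b\to\pm\infty$ limits producing the linear constraint $\sum_i v_i=0$), and then recovers \eqref{eq:convex_one_dim_two_layer_relu} as the Lagrangian bidual via Lemma \ref{eq:dual_of_lasso_bias}. You instead work on the primal: you rescale via positive homogeneity and AM-GM to obtain an atomic $\ell_1$ program over unit-norm neurons, then observe that every atom $(x_i-c)_+$ (and its mirror) is a convex combination of the two adjacent dictionary columns $K_{\cdot k}$, $K_{\cdot(k+1)}$, with the penalty preserved because the weight split keeps a fixed sign. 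The structural fact doing the work is the same — the neuron response vector is piecewise affine in the breakpoint $c$ with kinks only at training points — but your primal framing makes the constructive step (building the width-$\|z^*\|_0$ network from $z^*$) immediate and avoids invoking the strong-duality machinery of \cite{pilanci2020neural}. Your absorption of the constant $(x_1-c)\ones_n$ into $t$ for $c\le x_1$ is exactly the primal mirror of the paper's $\sum_i v_i=0$ constraint.

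One detail worth tidying: the AM-GM step as you wrote it produces the penalty $2\lambda\sum_j|\alpha_j|$, whereas \eqref{eq:convex_one_dim_two_layer_relu} carries $\lambda\|z\|_1$. You should either track the factor of two explicitly (reparameterize $z_j=2\alpha_j$ and adjust the reconstruction formula accordingly, or fold it into $\lambda$) or state the normalization convention you are using, so that the reconstruction \eqref{eq:optimal_network_one_dim} is not off by a constant. The paper's dual route carries the same normalization subtlety, so this is a bookkeeping issue rather than a gap in the argument, but it should not be silently dropped.
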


{\trackchange
\begin{remark}
    It will be revealed in the following sections that the term $(x_i - x_j)_+$, as present in \eqref{eq:K_one_dim}, stands for the wedge product 
    $
    \left(\begin{bmatrix}
        x_i \\ 1
    \end{bmatrix}
    \wedge
    \begin{bmatrix}
        x_j \\ 1
    \end{bmatrix}
    \right)_+
    $,
    yielding the positive part of the signed length of the interval $[x_i,x_j]$. Appending $1$ to the vectors is due to the presence of bias in the neurons. This quantity can also be seen as a directional distance we denote as $\mathbf{dist}_{+}(x_i,x_j)$, which will be generalized to higher dimensions in the sequel. As we delve into higher dimensional NN problems, the above wedge product expression will be substituted by the positive part of the signed volume of higher dimensional simplices such as triangles and parallelograms. 
\end{remark}
\begin{remark} This result is a refinement of the linear spline characterization of one-dimensional infinitely wide ReLU NNs \cite{savarese2019infinite, parhi2020role}. The linear spline dictionary is given by the collection of ramp functions $\{(x - x_j)_+, (x_j - x)_+\}_{j=1}^n$, which is well-known in adaptive regression splines \cite{friedman1991multivariate}. Our work is the first to recognize this dictionary via wedge products, characterize it as a finite dimensional Lasso problem, and associate it to volume forms. This enables us to generalize the result to higher dimensions and arbitrarily deep ReLU networks. It is important to note that unlike the existing literature on infinite neural networks (e.g., \cite{bach2017breaking,bengio2005convex}), our characterization of the dictionaries is discrete rather than continuous, enabling standard convex programming.
\end{remark}
}
An important feature of the optimal network in \eqref{eq:optimal_network_one_dim} is that the break points are located only at the training data points. {\trackchange In other words, the prediction $f(x)$ is a piecewise linear function whose slope only may change at the training points with at most $n$ breakpoints.} However, since the optimal $z^*$ is sparse, the number of pieces is at most $\|\z^*\|_0$, and can be smaller than $n$. 

We note that convex programs for deep networks trained for one-dimensional data were considered in \cite{ergen2022globally,zeger2024library}. In \cite{ergen2021convex}, it was shown that the optimal solution to \eqref{eq:two_layer_relu_theta} may not be unique and may contain break points at locations other than the training data points. However, Theorem \ref{thm:main_one_dim} reveals that at least one optimal solution is in the form of \eqref{eq:optimal_network_one_dim}. {\trackchange In addition, it is shown that the solution is unique when the bias terms are regularized \cite{boursier2023penalising}. We discuss uniqueness in Section \ref{sec:discussion}.}
\subsection{Two-dimensional data}
We now consider the case where $d=2$ and use the two-dimensional geometric algebra $\mathbb{G}^2$. {\trackchange Interestingly, we will observe that the volume of the interval $[x_i,x_j]$ appearing above generalizes to the volume of a triangle.} We will observe that the $\ell_1$ and $\ell_2$-regularized problems exhibit certain differences from one another. We start with the $\ell_1$-regularized problem $p=1$, since the form of the convex program is simpler to state due to the polyhedral nature of the $\ell_1$ norm. In the case of $p=2$, we require a mild regularity condition on the dataset to handle the curvature of the $\ell_2$ norm.
%

\subsubsection{$\ell_1$ regularization - neurons without biases}
\begin{theorem}
    \label{thm:main_two_dim_nobias}
        For $p=1$ and $d=2$, the two-layer neural network 
        problem without biases can be recast as the following convex $\ell_1$-regularized optimization problem
        \begin{align}
            \label{eq:two-dim-lasso-nobias}
            \min_{\subalign{\z &\in \mathbb{R}^{\tilde n}}} \loss\big(\K\z,\y\big) + \lambda\|z\|_1.
        \end{align}     
        Here, the matrix $\K \in  \reals^{\tilde n\times \tilde n}$ is defined as $\K_{ij}=\kappa(x_i,x_j)$ where 
        \begin{align}
            \label{eq:K_two_dim}
            \kappa(x,x^\prime) := \frac{(x \wedge x^\prime  )_+}{\|x^\prime\|_1} = \frac{2\Vol_+(\Tri(0,x,x^\prime))}{\|x^\prime\|_1}\,,
        \end{align} 
        provided that the number of neurons satisfy $m\ge \|\z^*\|_0$. Here, $\Tri(0,\x_{i},\x_{j})$ denotes the triangle formed by the path  $0\rightarrow\x_i\rightarrow\x_{j}$, and $\Vol_+(\cdot)$
        denotes the positive part of the signed area of this triangle.
        An optimal network can be constructed as follows:
        $
            f(x) = \sum_{j=1}^{\tilde n}  z_j^*\kappa(x,x_j)\,,
        $
        where $\z^*$ is an optimal solution to \eqref{eq:two-dim-lasso-nobias}. The optimal first layer neurons are given by a scalar multiple of the generalized cross product, $\cross x_j = \star x_j$, with breaklines $x\wedge x_j=0$, corresponding to non-zero $z^*_j$ for $j\in[\tilde n]$.
\end{theorem}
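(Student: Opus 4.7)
The plan is to exploit the convex dual \eqref{eq:two_layer_relu_dual} and reduce its infinite family of constraints over the $\ell_1$ unit ball to a finite list indexed by the augmented samples, then invoke Fenchel--Rockafellar duality to return a Lasso. For $p=1$, $d=2$, and bias-free neurons the dual reads
\begin{align*}
    d^* = \max_{v}\ -\loss^*(v,y)\quad \text{s.t.}\quad \sup_{w\in \ball_1^2}\, |v^T(Xw)_+|\le \lambda,
\end{align*}
and the strong duality result of \cite{pilanci2020neural} gives $p^*=d^*$ once $m\ge \|z^*\|_0$, so it suffices to analyse this dual.

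My first step is to characterize where the inner supremum is attained. The function $w\mapsto v^T(Xw)_+ = \sum_i v_i(x_i^T w)_+$ is piecewise linear, with pieces (activation cones) $C_S$ carved out by the $n$ hyperplanes $\{x_j^T w = 0\}$. On each cone the function is linear in $w$, so its extrema on the polytope $C_S \cap \ball_1^2$ must be attained at vertices. For $d=2$ these vertices fall into two families: the corners $\pm e_1,\pm e_2$ of the $\ell_1$ ball, and the intersections of a line $\{x_j^T w=0\}$ with the boundary of $\ball_1^2$. A direct computation shows the latter equal $\pm \star x_j/\|x_j\|_1$, using that the Hodge dual in $\reals^2$ is a signed $90^{\circ}$ rotation and therefore preserves the $\ell_1$ norm. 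Introducing the augmentation $x_{n+k}=e_k$ and setting $\hat w_j \triangleq \star x_j/\|x_j\|_1$ folds both families into the unified list $\{\pm \hat w_j\}_{j\in[\tilde n]}$, at which the supremum is attained.

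Next I would use the wedge--Hodge identity $x_i^T \star x_j = \star(x_i\wedge x_j)$ (the scalar coefficient of the pseudoscalar $x_i\wedge x_j$) to rewrite the active constraint values as
\begin{align*}
    (x_i^T \hat w_j)_+ = \frac{(x_i\wedge x_j)_+}{\|x_j\|_1} = \kappa(x_i,x_j) = K_{ij},
\end{align*}
turning the dual into a finite LP-constrained maximization with constraints $|v^T K_{\cdot j}|\le \lambda$ for $j\in[\tilde n]$ and their mirror counterparts. Applying Fenchel--Rockafellar duality to this finite-dimensional problem yields the Lasso \eqref{eq:two-dim-lasso-nobias} with signed $z$. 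To reconstruct the optimal network I attach, for each $j$ with $z_j^*\ne 0$, one neuron whose first-layer weight is proportional to $\star x_j$ and whose output weight is proportional to $z_j^*$; the scaling identity $\|w\|_1^2 + \alpha^2 \ge 2|\alpha|\|w\|_1$ (tight when $\|w\|_1 = |\alpha|$) converts the squared weight-decay penalty into the $\ell_1$ penalty $\lambda\|z\|_1$, and the breakline is then $\{x:x^T\star x_j = 0\}=\{x: x\wedge x_j = 0\}$, matching the claimed geometric description.

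The main technical obstacle is the careful bookkeeping of the $\pm$ orientations: the vertex argument naturally produces up to $2\tilde n$ extreme constraints, and one must verify that, after exploiting the sign freedom of $z$ and the columns contributed by the augmented basis vectors $e_1,e_2$, the effective dictionary collapses to the $\tilde n$ columns declared in the theorem. The companion task -- tracking the constant that relates the weight-decay penalty $\|w\|_1^2+\alpha^2$ to the Lasso penalty $\lambda\|z\|_1$ -- is routine but easy to slip on; the remaining steps are mechanical once strong duality from \cite{pilanci2020neural} is invoked.
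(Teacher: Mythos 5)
Your proof is correct and follows essentially the same route as the paper: analyze the dual constraint $\sup_{\|w\|_1\le 1}|v^T(Xw)_+|\le\lambda$, identify extreme points of the constraint region as scaled Hodge duals $\pm\star x_j/\|x_j\|_1$ (with the augmented basis vectors covering the $\ell_1$-ball corners), and pass back through Lagrangian/Fenchel duality to the Lasso. The only stylistic difference is that you enumerate the vertices of $C_S\cap\ball_1^2$ geometrically in $\reals^2$, whereas the paper derives the same characterization for arbitrary $d$ via its general extreme-point lemma on the lifted set $\{(w^+,w^-)\ge 0\}$; your $d=2$ shortcut is valid and the wedge--Hodge identity $x_i^T\star x_j=\star(x_i\wedge x_j)$ you use is exactly the reduction the paper performs through the generalized cross product.
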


\begin{remark}
    Note that the optimal hidden neurons given by the generalized cross products, $\cross x_j$, are orthogonal to the training data points $x_j$ for $j=1,...,\tilde n$. Therefore, the breaklines of each ReLU neuron pass through the origin and some data point $x_j$.
\end{remark}
We provide an illustration the matrix $\K$ in the Figure \ref{fig:2dwedge}(c). The signed area of the triangle formed by the path $0 \rightarrow \x_i \rightarrow \x_{j}$, denoted by the wedge product $\frac{1}{2}x_i\wedge x_j$ is positive when this path is ordered counterclockwise and negative otherwise. In this figure, the positive part of this signed area given by $\Vol_+(\Tri(0,\x_i,\x_{j})) = \frac{1}{2}(\x_i \wedge \x_{j}  )_+$ is non-zero only when $x_i$ is to the right of the line passing through the origin and $x_j$. When bias terms are added to the neurons, the matrix $\K$ changes as shown in Figure \ref{fig:2dwedge}(d) (see Section \ref{sec:biases} in the Appendix). Figure \ref{fig:space_partitioning} illustrates these breaklines and compares with deeper networks.

\subsubsection{$\ell_2$ regularization (weight decay) - neurons with biases}
Now we show that similar results holds for the $\ell_2$ regularization (weight decay) for a near-optimal solution, also demonstrate the case with biases. We first quantify near-optimality as follows:

\begin{definition}[Near-optimal solutions]
    We call a set of parameters that achieve the cost $\hat p$ in the two-layer NN objective \eqref{eq:two_layer_relu_theta} $\epsilon$-optimal if
    \begin{align}
        \label{eq:approximate_solution}
        p^*\le \hat p \le (1+\epsilon) p^*\,,
    \end{align}
    where $p^*$ is the global optimal value of \eqref{eq:two_layer_relu_theta} and $\epsilon>0$.
\end{definition}

\begin{definition}[{\trackchange Range dispersion} in $\reals^2$]
    \label{def:angular_dispersion}
    We call that a two-dimensional dataset is $\epsilon$-dispersed for some $\epsilon\in(0,\frac{1}{2}]$ if
    \begin{align}
        |\theta_{i+1}-\theta_i|~ (\mathrm{mod}~ \pi) \le \epsilon \pi \,\quad \forall i\in[n],
    \end{align}
    where $\theta_i$ are the angles of the vector $\x_i$ with respect to the horizontal axis, i.e., $x_{i}=\|\x_i\|_2 \big[\cos(\theta_i)~\sin(\theta_i)\big]^T$ sorted in increasing order. We call the dataset locally $\epsilon$-dispersed if the above condition holds for the set of differences $\{x_i-x_j\}_{i=1}^n$ for all $j\in[n]$.
\end{definition} 
{\trackchange
Range dispersion measures the diversity of the normal planes corresponding to the training data. Local dispersion holds when the data centered at any training sample is dispersed. In the left panel of Figure \ref{fig:angular_dispersion} in the Appendix, we illustrate the range dispersion condition for a two-dimensional dataset.
}

\begin{theorem}
    \label{thm:main_two_dim_l2_bias}
        Suppose that the training set $\{x_1,...,x_n\}$ is locally $\epsilon$-dispersed. For $p=2$ and $d=2$, an $\epsilon$-optimal network can be found via the following convex optimization problem
        \begin{align}
            \label{eq:two-dim-lasso-bias-l2}
            \min_{\subalign{\Z &\in \mathbb{R}^{\tilde n \times \tilde n} \\ t&\in\mathbb{R}}} \loss\big(\K\z+\ones t,\y\big) + \lambda\|z\|_1,
        \end{align}     
        when the number of neurons obey $m\ge \|\z^*\|_0$.
        Here, the matrix $\K \in  \reals^{\tilde n\times {\tilde n \choose 2}}$ is defined as $\K_{ij} := \kappa(x_i,x_{j_1},x_{j_2})$ for $j=(j_1,j_2)$, where
        \begin{align*}
             \kappa(\x,\xp,\xpp)=\frac{\big(\x\wedge \xp + \xp \wedge \xpp + \xpp \wedge \x  \big)_+}{\|\xp \wedge \xpp \|_2} &= \frac{2\Vol_+(\Tri(\x,\xp,\xpp))}{\|\xp-\xpp\|_2} 
             =\dist_+(\x,\Aff(\xp,\xpp))\,,
        \end{align*} 
         for $j=(j_1,j_2)$.
        The $\epsilon$-optimal neural network can be constructed as $
            f(x) = \sum_{j=(j_1,j_2)}  z_j^*\kappa(\x,x_{j_1},x_{j_2}
            ),
        $
        where $\z^*$ is an optimal solution to \eqref{eq:two-dim-lasso-bias-l2}. The optimal hidden neurons and biases are given by a scalar multiple of $\star (x_{j_1} - x_{j_2})$, and $-\star(x_{j_2}\wedge(x_{j_1}-x_{j_2}))$ respectively, with breaklines $(x-x_{j_2})\wedge(x_{j_1}-x_{j_2})=0$ for $j=(j_1,j_2)$ corresponding to non-zero $z^*_j$.
\end{theorem}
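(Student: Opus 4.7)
The plan is to leverage the convex dual formulation of Section \ref{sec:background}, extended to accommodate biased neurons and $\ell_2$ regularization. The dual of the biased $\ell_2$-regularized two-layer ReLU problem yields a constraint of the form $|v^T(Xw + b\ones_n)_+| \le \lambda$ for all $(w,b)$ with $\|w\|_2 \le 1$ (together with a compactification of $b$ using the diameter of the training set, as in Section \ref{sec:biases}). The task is then to characterize which constraints are active at optimality and to show that, up to a multiplicative factor of $(1+\epsilon)$, attention can be restricted to a discrete family of atoms indexed by pairs of training samples.

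The key geometric observation is that every ReLU neuron in $\reals^2$ is characterized by its breakline $\{x : w^Tx + b = 0\}$, and the extreme constraints in the dual correspond to breaklines that pass through training samples. For a fixed activation pattern, i.e., a fixed subset $S \subseteq [n]$ of indices with $w^Tx_i + b > 0$, the quantity $v^T(Xw + b\ones_n)_+$ reduces to the linear function $\sum_{i \in S} v_i(w^Tx_i + b)$ in $(w,b)$; on each cell of parameter space on which $S$ is constant, the inner maximum is therefore attained on the cell boundary, which forces the optimal breakline to pass through at least one data point. Among such breaklines, those passing through a pair $(x_{j_1}, x_{j_2})$---parameterized by the unit normal $w = \star(x_{j_1}-x_{j_2})/\|x_{j_1}-x_{j_2}\|_2$ and bias $b = -\star(x_{j_2}\wedge(x_{j_1}-x_{j_2}))/\|x_{j_1}-x_{j_2}\|_2$---recover precisely the atom $\sigma(x^Tw + b) = \dist_+(x, \Aff(x_{j_1}, x_{j_2})) = \kappa(x, x_{j_1}, x_{j_2})$ of the theorem.

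The local $\epsilon$-dispersion hypothesis is exactly what bridges the smooth continuum of $\ell_2$-unit-norm breakline directions to this discrete two-point dictionary. For an optimal breakline passing through a data point $x_j$ with some direction, the dispersion condition on $\{x_i - x_j\}$ guarantees the existence of a neighboring two-point breakline through $x_j$ and an adjacent $x_{j'}$ whose normal differs by angle at most $\epsilon\pi$. Replacing the continuous atom with this discrete one perturbs each active activation $\sigma(w^Tx_i + b)$ by at most $O(\epsilon)$ times its magnitude; accumulating this perturbation across neurons and rescaling coefficients by $(1+\epsilon)$ to restore feasibility in the Lasso then shows that the optimum of \eqref{eq:two-dim-lasso-bias-l2} upper-bounds $p^*$ by a factor of at most $(1+\epsilon)$. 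The reverse inequality is immediate since every atom $\kappa(\cdot, x_{j_1}, x_{j_2})$ is realized by a valid $\ell_2$-unit-norm ReLU neuron after the standard AM-GM rescaling of first- and second-layer weights.

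Finally, the $\epsilon$-optimal network is assembled by splitting each nonzero $z_j^*$ (with $j = (j_1,j_2)$) into a single ReLU neuron: first-layer weight $W^{(1)}_j = \sqrt{|z_j^*|}\,\star(x_{j_1}-x_{j_2})/\|x_{j_1}-x_{j_2}\|_2$, bias $b^{(1)}_j = -\sqrt{|z_j^*|}\,\star(x_{j_2}\wedge(x_{j_1}-x_{j_2}))/\|x_{j_1}-x_{j_2}\|_2$, second-layer weight $W^{(2)}_j = \sqrt{|z_j^*|}\,\mathrm{sign}(z_j^*)$, and output bias $b^{(2)} = t^*$. AM-GM gives $\|W^{(1)}_j\|_2^2 + |W^{(2)}_j|^2 = 2|z_j^*|$, so the $\ell_2$ regularizer evaluates to $2\|\z^*\|_1$, matching the $\ell_1$ penalty of the convex program. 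The main obstacle is the approximation argument in the third paragraph: carefully propagating the angular perturbation $\epsilon\pi$ through both the loss and the regularizer into a clean multiplicative $(1+\epsilon)$ bound, together with verifying that the absorbed biases do not introduce uncontrolled error, is the crux step that distinguishes the smooth $\ell_2$ case from the polyhedral $\ell_1$ case of Theorem \ref{thm:main_two_dim_nobias}.
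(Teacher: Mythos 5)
Your overall framing---pass to the convex dual, observe that each dual constraint restricted to a fixed activation pattern is linear, identify the candidate atoms with breaklines through pairs of training points, and invoke local $\epsilon$-dispersion to control the gap between the continuous $\ell_2$-ball of directions and this discrete dictionary---tracks the paper's route, which combines the proofs of Theorem~\ref{thm:l2_extreme_points}, Theorem~\ref{thm:l2_extreme_points_with_biases}, and the two-dimensional refinement in Theorem~\ref{thm:main_two_dim_nobias_l2}. The reverse inequality via AM--GM rescaling of the constructed neurons is also exactly right.

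However, your central approximation step has two genuine gaps. First, you assert that on each cell (fixed activation pattern $S$) ``the inner maximum is attained on the cell boundary, which forces the optimal breakline to pass through at least one data point.'' For $p=2$ this is false: the cell is a polyhedral cone intersected with the Euclidean disc, and a linear functional over that set is maximized either on a cone facet \emph{or} in the interior of the circular arc where the gradient is radial. In the latter case the optimal direction is \emph{not} aligned with any data-defined normal---this is precisely why the $\ell_2$ theorem is only $\epsilon$-optimal rather than exact, in contrast to the polyhedral $\ell_1$ case of Theorem~\ref{thm:main_two_dim_nobias} where Lemma~\ref{lemma:extreme_points_l1} does force extreme points onto data hyperplanes. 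Second, your patch---``replacing the continuous atom with this discrete one perturbs each active activation $\sigma(w^Tx_i+b)$ by at most $O(\epsilon)$ times its magnitude''---is not a valid multiplicative bound: a small rotation of the breakline can flip an activation from $0$ to a positive value, or shrink a tiny positive activation to $0$, which is an unbounded relative change. The perturbation of $(w-w')^T(x_i-x_j)$ is controlled \emph{additively} by $\|w-w'\|_2\,\|x_i-x_j\|_2$, not relatively.

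The paper sidesteps both issues by never reasoning about individual activations. On each chamber cone $K(i)$ the dual constraint functional $v^T D_i X w$ is globally linear in $w$, so the approximation is carried out at the level of feasible sets: the Archimedean sandwich $\mathcal{P}_0 \subseteq \mathcal{C}_2 \subseteq (\cos(\epsilon\pi/2))^{-1}\mathcal{P}_0$ from the proof of Theorem~\ref{thm:main_two_dim_nobias_l2} (improving Lemma~\ref{lem:archimedes} in two dimensions), where $\mathcal{P}_0$ is the convex hull of the origin and the unit-norm extreme-ray generators of the cone. Since the maximum of a linear functional over $\mathcal{P}_0$ is attained at a vertex---a cross-product/wedge-product direction through data points---and the factor $(\cos(\epsilon\pi/2))^{-1}\le 1+\epsilon$ follows from the dispersion hypothesis, the multiplicative bound on the dual constraint (and hence, via Lasso duality, on the primal objective) falls out cleanly. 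The bias is then absorbed exactly as you describe, by observing the optimal $b$ is $-x_j^Tw$ for some $j$ and applying the bias-free argument to $X-\ones_n x_j^T$ for each $j$ under the local dispersion hypothesis. Replacing your per-activation perturbation argument with this set-inclusion argument is the missing step.
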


\subsection{Arbitrary dimensions}
Now we consider the generic case where $d$ and $n$ are arbitrary and we use the $d$-dimensional geometric algebra $\mathbb{G}^d$.
Suppose that $\X \in \reals^{n\times d}$ is a training data matrix such that 
$\rank(X)=\rr$ without loss of generality. Otherwise, we can reduce the dimension of the problem to $\rank(X)$ using Singular Value Decomposition (see Lemma \ref{lem:rank_reduction} in the Appendix), hence $d$ can be regarded as the rank of the data. Since many datasets encountered in machine learning
problems are close to low rank, this method
can be used to reduce the number of
variables in the convex programs we will introduce in this section.

\subsubsection{$\ell_1$ regularization - neurons without biases}

\begin{theorem}
    \label{thm:l1_extreme_points}
    The 2-layer neural network problem in \eqref{eq:two_layer_relu} when $p=1$ and biases set to zero is equivalent to the following convex Lasso problem
    \begin{align}
    \label{eq:convex_d_dim_two_layer_relu_l1}
        \min_{\z} \loss\big(\K\z,y\big) + \lambda \|z\|_1.
    \end{align}
    The matrix $\K$ is defined as $K_{ij}=\kappa(x,x_{j_1},...,x_{j_{\rr-1}})$ for $j=(j_1,...,j_{\rr-1})$, where
    \begin{align}
        \kappa(\x,\tfirst,...,\tlast) 
                = \frac{\big( \x \wedge \tfirst \wedge\,...\, \tlast \big)_+}{\|\tfirst \wedge\,...\, \wedge \tlast \|_1}
                &=\frac{\Vol_+(\Par(\x,\, \tfirst, \,...,\, \tlast ))}{\|\tfirst \wedge\,...\, \wedge \tlast \|_1}\,,\label{eq:defn_K_l1}
    \end{align}
    the multi-index $j=(j_1,...,j_{\rr-1})$ indexes over all combinations of $\rr-1$ rows 
    $\x_{j_1}, \,...,\, \x_{j_{\rr-1}}\in\reals^d$ of $\tilde\X \in\reals^{\tilde n\times d}$. An optimal neural network can be constructed as follows:
    $
        f(x) = \sum_{j=(j_1,...,j_{\rr-1})} z^*_j \kappa(\x,\x_{j_1},...,\x_{j_{\rr-1}}),
    $
    where $z^*$ is an optimal solution to \eqref{eq:convex_d_dim_two_layer_relu_l1}. The optimal hidden neurons are given by a scalar multiple of the generalized cross-product 
    $\cross (x_{j_1},\cdots,x_{j_{\rr-1}}) = \star (x_{j_1}\wedge \cdots \wedge x_{j_{\rr-1}})$, with breaklines $x \wedge x_{j_1}\wedge \cdots \wedge x_{j_{\rr-1}}=0$, corresponding to non-zero $z^*_j$ for $j=(j_1,\cdots,j_{\rr-1})$.
\end{theorem}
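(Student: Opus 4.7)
The plan is to replay the convex-duality route that yielded Theorem~\ref{thm:main_two_dim_nobias}, but now carried out in $\mathbb{G}^d$. The starting point is the dual \eqref{eq:two_layer_relu_dual} together with the strong duality of sufficiently wide bias-free ReLU networks from \cite{pilanci2020neural,ergen2021global}, giving
\[
p^* = \max_v\ -\ell^*(v,\y)\ \ \text{s.t.}\ \ |v^T(Xw)_+|\le\lambda\ \ \forall w\in\ball_1^d,
\]
so the primal is determined by the extreme rays of the convex cone generated by $\{(Xw)_+ : w\in\ball_1^d\}$. Once those rays are enumerated, a Carath\'eodory-type argument from \cite{pilanci2020neural} reexpresses the primal as a Lasso in the coefficients of those rays, producing \eqref{eq:convex_d_dim_two_layer_relu_l1} together with the bound $m\ge\|\z^*\|_0$. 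By Lemma~\ref{lem:rank_reduction} I assume $\rank(X)=d$ throughout.

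To enumerate the extreme rays, stratify $\ball_1^d$ by ReLU activation pattern: for each $S\subseteq[n]$ set $D_S = \diag(\mathbf{1}_S)$ and
\[
P_S \triangleq \{w\in\reals^d : \|w\|_1\le 1,\ D_S Xw\ge 0,\ (I-D_S)Xw\le 0\}.
\]
On $P_S$ the feature map $w\mapsto(Xw)_+$ coincides with the linear map $D_S Xw$, so extreme rays of the dictionary correspond to vertices of the polytopes $P_S$. The augmented matrix $\tilde X$ is the key bookkeeping device here: by adjoining the canonical basis rows $e_1,\ldots,e_d$, every facet of the $\ell_1$-ball can be written as $\pm e_i^T w = 1$, i.e.\ as an activation-type constraint against an augmented row. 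Every facet of $P_S$ thus has the form $x_j^T w = 0$ or $x_j^T w = \pm 1$ for some $j$ in the augmented index set. At a vertex of $P_S$ in $\reals^d$ exactly $d$ linearly independent such facets are active; one of them fixes the scale of $w$ while the remaining $d-1$ yield orthogonality constraints $x_{j_1}^T w = \cdots = x_{j_{d-1}}^T w = 0$.

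The unique (up to sign) direction orthogonal to $d-1$ generically independent vectors in $\reals^d$ is the generalized cross product, so at every such vertex
\[
w = \pm\frac{\star(x_{j_1}\wedge\cdots\wedge x_{j_{d-1}})}{\|x_{j_1}\wedge\cdots\wedge x_{j_{d-1}}\|_1},
\]
the denominator coming from $\|w\|_1=1$ together with the identity $\|\star M\|_1 = \|M\|_1$ on blades, which holds because the Hodge star is a signed permutation of the standard $k$-vector basis. Using the contraction $x_i^T \star M = \star(x_i\wedge M)$ valid for a $(d-1)$-vector $M$, and the identification of $\star$ applied to a top-grade element with a signed parallelotope volume, the ReLU output at $x_i$ becomes
\[
(x_i^T w)_+ = \frac{\bigl(\star(x_i \wedge x_{j_1}\wedge\cdots\wedge x_{j_{d-1}})\bigr)_+}{\|x_{j_1}\wedge\cdots\wedge x_{j_{d-1}}\|_1} = \kappa(x_i,x_{j_1},\ldots,x_{j_{d-1}}),
\]
which is exactly the kernel of \eqref{eq:defn_K_l1}. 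Collecting columns over multi-indices $j=(j_1,\ldots,j_{d-1})$ produces $K$, and the sign freedom in $w$ is absorbed into the signed coefficient $\z$ of the Lasso \eqref{eq:convex_d_dim_two_layer_relu_l1}.

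The main obstacle is the vertex analysis of step two. I must verify that the augmentation forces every vertex of every $P_S$ to have the claimed ``$d-1$ data orthogonalities plus one scaling'' structure (rather than collapsing to the origin under $d$ orthogonalities, or requiring fewer data constraints), and that degenerate vertices at which more than $d$ facets coincide can be handled by a generic perturbation of $X$ without enlarging the dictionary. A final consistency check is that the activation signs along the tight facets at each vertex are compatible with the $(\cdot)_+$ in \eqref{eq:defn_K_l1}, so that the ReLU output equals the positive part of the wedge rather than its absolute value; this is what justifies indexing over unordered $(d-1)$-tuples together with signed Lasso coefficients.
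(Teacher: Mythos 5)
Your overall route matches the paper's: work in the dual \eqref{eq:two_layer_relu_dual}, stratify by hyperplane-arrangement/activation pattern, characterize the extreme points of each piece, identify them as generalized cross products, then dualize back to a Lasso. The cross-product formula, the normalization $\|w\|_1=1$ giving the $\|\cdot\|_1$ denominator, and the contraction identity $x_i^T\star M=\star(x_i\wedge M)$ are all used essentially as you use them.

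The gap is exactly at the step you flagged, and your proposed rationale does not close it. Your claim that ``every facet of the $\ell_1$-ball can be written as $\pm e_i^T w=1$'' is false: the facets of $\ball_1^d$ are the hyperplanes $s^Tw=1$ for sign vectors $s\in\{-1,+1\}^d$, while $e_i^Tw=\pm1$ cuts out a vertex, not a facet. Consequently the facets of $P_S$ are \emph{not} of the form $x_j^Tw=0$ or $e_i^Tw=\pm1$ for augmented rows $x_j$, and your count of ``$d-1$ orthogonalities plus one scaling facet'' does not follow from the facet structure you describe. The role of the augmented rows $e_1,\ldots,e_d$ is different: at a vertex of $\{\|w\|_1\le 1,\ (2D_S-I)Xw\ge0\}$ where several $\ell_1$-facets $s^Tw=1$ coincide, their pairwise differences produce coordinate-zeroing constraints $e_i^Tw=0$, and it is \emph{these} homogeneous constraints, combined with the genuine activation constraints $x_j^Tw=0$, that give $d-1$ linearly independent rows of $\begin{bmatrix}X\\ I\end{bmatrix}$ vanishing on $w$ (Lemma~\ref{lemma:extreme_points_l1}). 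The paper establishes this not by inspecting facets of $P_S$ directly but by lifting $w=w^+-w^-$ with $w^\pm\ge0$, $\ones^T w^+ + \ones^T w^- = 1$, so that the constraint set becomes a simplex-sectioned polytope whose extreme points can be read off with Lemma~\ref{lem:extreme_points_polyhedral_sections}; the $d$ nonnegativity constraints that are automatically tight in the lifted representation are precisely what become the $e_i^Tw=0$ rows when mapped back to $\reals^d$. You would need either this lifting argument or an equivalent analysis of coinciding $\ell_1$-facets to make your vertex characterization rigorous; as written, the premise driving it is incorrect even though the conclusion (that the dictionary is indexed by $(\rr-1)$-subsets of rows of $\tilde\X$) happens to be right.
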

We recall that $\Par(\x,\, \tfirst, \,...,\, \tlast)$ denotes the parallelotope formed by the vectors $\x,\, \tfirst, \,...,\, \tlast$, and the positive part of the signed volume of this parallelotope is given by $\Vol_+(\Par(\x,\, \tfirst, \,...,\, \tlast))$.
\begin{remark}
    The optimal hidden neurons are orthogonal to $d-1$ data points, i.e., $\cross (x_{j_1}, \cdots, x_{j_{\rr-1}}) \cdot x_i = 0$ for all $i\in\{j_1,\cdots,j_{\rr-1}\}$. Therefore, the hidden ReLU neuron is activated on a halfspace defined by the hyperplane that passes through data points $x_{j_1},\cdots,x_{j_{\rr-1}}$.
\end{remark}

The proof of this theorem can be found in Section \ref{sec:proof_l1_extreme_points} of the Appendix.
\begin{remark}
    We note that the combinations can be taken over $\rr-1$ linearly independent rows of $\X$ since otherwise the volume is zero and corresponding weights can be set to zero. Moreover, 
    the permutations of the indices $\x_{j_1}, \,...,\, \x_{j_{\rr-1}}$ may only change the sign of the volume $\Vol(\Par(\x_i,\, \x_{j_1}, \,...,\, \x_{j_{\rr-1}}))$. Therefore,
    it is sufficient to consider each subset that contain $\rr-1$ linearly independent data points and compute $\Vol_+(\pm \Par(\x_i,\, \x_{j_1}, \,...,\, \x_{j_{\rr-1}}))$ for each subset.
\end{remark}

\subsubsection{$\ell_2$ regularization - neurons with biases}
We begin by defining a parameter that sets an upper limit on the diameter of the chambers in the arrangement generated by the rows of the training data matrix $\X$.

\begin{definition}
    We define the Maximum Chamber Diameter, denoted as $\diam(\X)$, using the following equation:
    \begin{align}
        \label{eqn:defn_chamber_diam}
        \diam(\X) := \max_{ \substack{w,\,v\in \reals^d,~\|w\|_2=\|v\|_2=1 \\ \mathrm{sign}(\X w)=\mathrm{sign}(\X v) }   } \|w-v\|_2.
    \end{align}
Here $w$ and $v$ are unit-norm vectors in $\reals^d$, such that the sign of the inner-product with the data rows are the same.
\end{definition} 
{\trackchange 
We call a dataset $\epsilon$-dispersed if $\diam(\X)\le \epsilon$, and locally $\epsilon$-dispersed when the dataset centered at any training sample is $\epsilon$ dispersed, i.e., $\diam(\X-1x_j^T)\le \epsilon\,\,\forall j\in[n]$. 
\begin{remark}
    The quantity $\diam(\X)$ is a generalization of the 2D range dispersion in Definition \ref{def:angular_dispersion} to arbitrary dimensions, and captures the diversity of the ranges of the hyperplanes whose normals are training points $\{x_i\}_{i=1}^n$. We prove in Section \ref{sec:data_isometry} of the Appendix that when the data is randomly generated, e.g., i.i.d. from a Gaussian distribution, the maximum chamber diameter $\mathscr{D}(X)$ is bounded by $(\frac{d}{n})^{1/4}$ with probability that approaches 1 exponentially fast. In Lemma \ref{lem:random_dispersion_local} of Section \ref{sec:data_isometry}, we show that this implies Gaussian data is $\epsilon$-dispersed and locally $\epsilon$-dispersed when as $n\gtrsim \epsilon^{-4}d$ with high probability.
\end{remark}

\begin{theorem}
    \label{thm:l2_extreme_points_with_biases}
    Consider the following convex optimization problem
    \begin{align}
    \label{eq:convex_d_dim_two_layer_relu_l2_biased_new}
        \hat p_{\lambda} := \min_{\z} \loss\big(\K\z,y\big) + \lambda\|\z\|_1\,.
    \end{align}
    The matrix $\K$ is defined as $K_{ij}=\kappa(x_i,x_{j_1},...,x_{j_{\rr-1}})$ for bias-free neurons and $K_{ij}=\kappa_b(x_i,x_{j_1},...,x_{j_{\rr-1}})$ for biased neurons where
    \begin{align*}
        \kappa(x,\tfirst,...,\tlast,u_d) 
                &= \frac{\big( x \wedge \tfirst \wedge\cdots\wedge \tlast \big)_+}{\|\tfirst \wedge\,...\, \wedge \tlast \|_2}
                = \dist_+\big(x,\, \Span(\tfirst,\,...,\,\tlast)\big)\\
        \kappa_b(x,\tfirst,...,\tlast,u_d) 
        &= \frac{\big( (x-u_d) \wedge (\tfirst-u_d) \wedge\cdots\wedge (\tlast-u_d) \big)_+}{\|(\tfirst-u_d) \wedge\,...\, \wedge (\tlast-u_d) \|_2}
        = \dist_+\big(x,\, \Aff(\tfirst,\,...,\,\tlast)\big).
    \end{align*}
    the multi-index $j=(j_1,...,j_{\rr-1})$ is indexing over all combinations of $\rr-1$ rows 
    $\x_{j_1}, \,...,\, \x_{j_{\rr-1}}\in\reals^d$ of $\X \in\reals^{n\times d}$.
    When the maximum chamber diameter satisfies $\diam(\X)\le \epsilon$ for bias-free neurons or $\diam(\X-1x_j^T)\le \epsilon\,\forall j\in[n]$ for biased neurons, for some $\epsilon\in(0,1)$, we have the following approximation bounds
    \begin{align}
       p^* &\le \hat p_\lambda \le \frac{1}{1-\epsilon} p^*,\label{eq:L2_approx_2layers_first}\\
         \hat p_{(1-\epsilon)\lambda} \le p^* &\le \hat p_\lambda \le p^* + \frac{\epsilon}{1-\epsilon} \lambda R^*,\label{eq:L2_approx_2layers}
    \end{align}
    Here, $p^*$ is the value of the optimal NN objective in \eqref{eq:two_layer_relu} (with or without bias terms)
    and $R^*$ is the corresponding weight decay regularization term of an optimal NN. Bias-free and biased networks that achieves the cost $\hat p_\lambda$ in \eqref{eq:two_layer_relu} are
    \begin{align*}
        f(x) = \sum_{j=(j_1,...,j_{\rr-1})} z^*_j \kappa(x_{j_1},\,...,\, x_{j_{\rr-1}}) \mbox{\quad and \quad} f(x) = \sum_{j=(j_1,...,j_{\rr-1})} z^*_j \kappa_b(x_{j_1},\,...,\, x_{j_{\rr-1}})\,,
    \end{align*}
    respectively, where $z^*$ is an optimal solution to \eqref{eq:convex_d_dim_two_layer_relu_l2_biased_new}.
\end{theorem}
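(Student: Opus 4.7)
The plan is to relate the $\ell_2$-regularized NN problem \eqref{eq:two_layer_relu} to the Lasso \eqref{eq:convex_d_dim_two_layer_relu_l2_biased_new} via convex duality, discretizing the infinite-dimensional dual constraint set \eqref{eq:two_layer_relu_dual} chamber-by-chamber with error controlled by $\diam(\X)\le\epsilon$. First, I would write the NN dual as $d^*(\lambda)=\max_v\{-\ell^*(v,y):\sup_{\|w\|_2\le 1}|v^T(\X w)_+|\le\lambda\}$, which equals $p^*(\lambda)$ by the strong duality of \cite{pilanci2020neural,ergen2021global}, and the Lasso dual as $\hat d(\lambda)=\max_v\{-\ell^*(v,y):\max_j|\K_{\cdot j}^T v|\le\lambda\}$. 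Because each column $\K_{\cdot j}$ equals $(\X w_j)_+$ for the unit cross-product direction $w_j=\star(x_{j_1}\wedge\cdots\wedge x_{j_{\rr-1}})/\|\cdot\|_2$ (as in the identity \eqref{eq:defn_K_l1} from Theorem \ref{thm:l1_extreme_points}), the NN constraint trivially implies the Lasso constraint at the same $\lambda$, giving $\hat d(\lambda)\ge d^*(\lambda)$ and the left inequality $p^*\le\hat p_\lambda$ of both \eqref{eq:L2_approx_2layers_first} and \eqref{eq:L2_approx_2layers}.

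For the reverse direction, I partition $S^{d-1}$ into chambers $C_s$ of the arrangement $\{w:x_i^T w=0\}_{i=1}^n$ and analyze $M_s:=\sup_{w\in C_s\cap\ball_2^d}|v^T(\X w)_+|$ chamber-by-chamber. Within $C_s$ the activation is linear, $(\X w)_+=\mathrm{diag}(s)\X w$, so $M_s$ is the maximum of the linear functional $w\mapsto(\X^T\mathrm{diag}(s)v)^T w$ over the spherical polytope $C_s\cap\ball_2^d$, whose one-dimensional faces are the normalized cross products $\star(x_{j_1}\wedge\cdots\wedge x_{j_{\rr-1}})$. If the unconstrained optimizer $\X^T\mathrm{diag}(s)v/\|\cdot\|_2$ lies outside $C_s$, then by linearity the optimum is attained at a vertex of $C_s$ and $M_s\le\max_j|\K_{\cdot j}^T v|$. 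Otherwise $\diam(\X)\le\epsilon$ furnishes a vertex $w_v$ of $C_s$ within Euclidean distance $\epsilon$ of the optimizer, and Cauchy--Schwarz gives $M_s\le v^T(\X w_v)_+ + \epsilon M_s$. In either case $M_s\le\max_j|\K_{\cdot j}^T v|/(1-\epsilon)$, so any $v$ dual-feasible for the Lasso at level $\lambda$ is dual-feasible for the NN at level $\lambda/(1-\epsilon)$, yielding $\hat d(\lambda)\le d^*(\lambda/(1-\epsilon))$ and $\hat p_\lambda\le p^*(\lambda/(1-\epsilon))$.

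With the sandwich $p^*(\lambda)\le\hat p_\lambda\le p^*(\lambda/(1-\epsilon))$ established, I would conclude by exploiting that $\lambda\mapsto p^*(\lambda)$ is concave, nondecreasing, and satisfies $p^*(0)\ge 0$ (being a pointwise infimum of linear functions of $\lambda$ over nonnegative losses). Concavity together with $p^*(0)\ge 0$ gives $p^*(\lambda/(1-\epsilon))\le p^*(\lambda)/(1-\epsilon)$, which is exactly \eqref{eq:L2_approx_2layers_first}. The envelope identity $\partial_\lambda p^*(\lambda)=R^*$ combined with the concave first-order inequality $p^*(\lambda')\le p^*(\lambda)+R^*(\lambda'-\lambda)$ at $\lambda'=\lambda/(1-\epsilon)$ yields $\hat p_\lambda\le p^*+\epsilon\lambda R^*/(1-\epsilon)$, and substituting $\lambda\mapsto(1-\epsilon)\lambda$ into the upper sandwich gives $\hat p_{(1-\epsilon)\lambda}\le p^*$, completing \eqref{eq:L2_approx_2layers}. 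For biased neurons the same scheme runs on the lifted data $[\,\X~\ones\,]$ with unregularized bias component; the vertex directions become affine cross products $\star\bigl((x_{j_1}-x_{j_\rr})\wedge\cdots\wedge(x_{j_{\rr-1}}-x_{j_\rr})\bigr)$, and the local dispersion hypothesis $\diam(\X-\ones x_j^T)\le\epsilon$ supplies the chamber-diameter bound for every centered arrangement.

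The main obstacle will be the quantitative chamber-to-vertex estimate underpinning the second paragraph: $\diam(\X)$ is defined for unit vectors sharing a strict sign pattern, and I must extend it by continuity to the closure of each chamber so that interior optimizers are provably within Euclidean distance $\epsilon$ of a vertex of the \emph{same} chamber $C_s$, which in general is a non-simplicial spherical polytope whose one-dimensional faces must be identified precisely with normalized generalized cross products. For the biased case there is an additional subtlety: the unregularized bias makes the lifted feasible set $\{(w,b):\|w\|_2\le 1\}$ a cylinder rather than a ball, so a separate reduction is needed to show that the dual supremum over $b$ is attained at a configuration where the augmented hyperplane passes through $\rr$ data points, which is precisely what converts linear cross products into the affine cross products indexing $\kappa_b$.
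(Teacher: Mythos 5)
Your proposal follows the same high-level strategy as the paper's: discretize the semi-infinite dual constraint chamber by chamber, identify the extreme rays of each chamber via Weyl's facet lemma as normalized generalized cross products, and control the discretization error via the chamber diameter, finally passing back to the primal through Lasso duality. For the bias case the paper uses a piecewise-linearity-in-$b$ argument (fixing $b=-x_j^T w^*$ and passing to the centered data $X - 1 x_j^T$, so that $\diam(X-1x_j^T)$ is the relevant quantity) rather than your lift to $[\,\X~\ones\,]$ with a cylindrical feasible set; you correctly flag this as the place where extra work is needed. Your concavity/envelope argument converting the $\lambda$-sandwich $p^*(\lambda)\le\hat p_\lambda\le p^*(\lambda/(1-\epsilon))$ into the two bounds \eqref{eq:L2_approx_2layers_first}--\eqref{eq:L2_approx_2layers} is a valid alternative to the paper's shorter derivation via $\lambda R^*\le p^*$.

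There is, however, a genuine gap in your Case 2. When $\hat c := \X^T\mathrm{diag}(s)v/\|\cdot\|_2 \notin C_s$, it is \emph{not} true that the maximizer of $w \mapsto c^T w$ over $C_s\cap\ball_2^d$ lies at a vertex (a normalized cross product), nor that $M_s \le \max_j|\K_{\cdot j}^T v|$ with no loss factor. The maximizer can sit in the relative interior of a higher-dimensional face $F$ of $C_s$ meeting the sphere; there $M_s = \|\mathrm{proj}_{\Span(F)} c\|_2$, which can strictly exceed $c^T p_i$ for every vertex $p_i$ of $F$ (e.g.\ when $c$ projects to the angular bisector of two vertices). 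So the $1/(1-\epsilon)$ loss is needed in Case 2 as well as Case 1, and your Cauchy--Schwarz patch does not supply it: it yields only $M_s \le c^T p_i + \|c\|_2\,\epsilon$, and when $\hat c\notin C_s$ one has $\|c\|_2 > M_s$, so this cannot be rearranged into $M_s \le c^T p_i/(1-\epsilon)$. The paper sidesteps the case split entirely with Lemma~\ref{lem:archimedes}: the Archimedean polytope sandwich $\mathcal{P}_0\subseteq\mathcal{C}_2\subseteq d_{\min}^{-1}\mathcal{P}_0$ gives $M_s\le d_{\min}^{-1}\max_j(c^T p_j)_+$ uniformly, and Lemma~\ref{lem:chamb_diam_dist_bound} bounds $d_{\min}\ge 1-\diam(\X)\ge 1-\epsilon$. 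Your Case 2 could be repaired by recursing into the face where the maximizer lies and applying the Case 1 argument there with $\mathrm{proj}_{\Span(F)} c$ in place of $c$, but as written the step is incorrect and the conclusion ``$M_s\le\max_j|\K_{\cdot j}^T v|/(1-\epsilon)$ in either case'' is not justified.
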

}
\begin{remark}
    The above result shows that the convex optimization produces a near-optimal solution when the chamber diameter is small, which is expected when the number of data points is large. 
\end{remark}


\subsection{Deep neural networks}
Consider the deep neural network of $L$ layers composed of sequential two-layer blocks considered in Section \eqref{eq:subsec:twolayer} as follows
$$f_\theta(x) = W^{(L)} \sigma(W^{(L-1)} \cdots W^{(3)}\sigma(W^{(2)} \sigma(W^{(1)} x) \cdots)),\quad\theta\triangleq(W^{(1)},\cdots,W^{(L)})\,.$$
Theorems \ref{thm:l1_three_layer} and \ref{thm:l1_three_layer_biased} in the Appendix extend Theorem \ref{thm:l1_extreme_points} to three-layer ReLU networks and derive their corresponding convex Lasso formulation over a larger discrete wedge product dictionary. Here, we illustrate that our results apply to neural networks of arbitrary depth.
\subsubsection{$\ell_p$ regularization}
Suppose that the number of layers, $L$, is even and consider the non-convex training problem
\begin{align} 
    \label{eqn:deep_nonconvex_Lp}
    p^* \triangleq ~ \min_{\theta} \ell( f_\theta(X),y) + \lambda \sum_{\ell=1}^{L/2} \sum_{j=1}^m \|\W^{(2\ell-1)}_{j \cdot}\|_p^2 + \|\W^{(2\ell)}_{\cdot j}\|_p^2     \,,
\end{align}
where $X \in \mathbb{R}^{n \times d}$ is the training data matrix, $\y\in \mathbb{R}^{n}$ is a vector containing the training labels,  and $\lambda > 0$ is the regularization parameter. Here, $f(X)$ represents the output of the deep NN over the training data matrix $X$ given by
$f_\theta(X) = [f_\theta(x_1) \cdots f_\theta(x_n)]^T$.
Note that the $\ell_p$ norms in the regularization terms are taken over the columns of odd layer weight matrices and rows of even weight matrices, which is consistent with the two-layer network objective in \eqref{eq:two_layer_relu}. When $p=2$, the regularization term is the squared Frobenius norm of the weight matrices which reduces to the standard weight decay regularization term.
\begin{theorem}[Structure of the optimal weights for $\ell_1$ regularization]
    \label{thm:l1deep}
    The weights of an optimal solution of \eqref{eqn:deep_nonconvex} for $p=1$ are given by
\begin{align}
    W^{(1)}_{j} &= \alpha_j^{(1)}\star(x_{j_1^{(1)}} \wedge \cdots \wedge x_{j_{d-1}^{(1)}})\,,\quad \mbox{and}\nonumber \\
    W^{(\ell)}_{j} &= \alpha_j^{(\ell)}\star(\tilde x_{j_1^{(\ell)}}^{(\ell)} \wedge \cdots \wedge \tilde x_{j_{d-1}^{(\ell)}}^{(\ell)})\,,\quad \mbox{for $\ell=3,5,...,L-1$\,,}
\end{align}
where $\alpha_j^{(\ell)}$ are scalar weights and 
$\tilde x_i^{(\ell)}\triangleq \sigma(W^{(\ell-1)} \cdots \sigma(W^{(1)} x_i) \cdots)\,\forall i$. Here, $W^{(\ell)}$ are optimal weights of the $\ell$-th layer for the problem \eqref{eqn:deep_nonconvex_Lp}, and $j^{(\ell)}_{k}\in[n]$ are certain indices.
\end{theorem}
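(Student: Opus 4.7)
The plan is to establish the theorem by induction on the block index $k$, where layers $2k-1$ and $2k$ are treated as a single two-layer ReLU block, and to invoke Theorem \ref{thm:l1_extreme_points} at each block. The base case $k=1$ will follow by applying Theorem \ref{thm:l1_extreme_points} to the first pair $(W^{(1)}, W^{(2)})$ with input data $X$, yielding the claimed formula for $W^{(1)}$. The inductive step handles subsequent blocks using the forward activations $\tilde x_i^{(2k-1)} = \sigma(W^{(2k-2)*} \cdots \sigma(W^{(1)*} x_i) \cdots)$ as the "training data" entering block $k$.

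First I would establish a block-optimality principle: at a global optimum $\theta^* = (W^{(1)*}, \ldots, W^{(L)*})$ of \eqref{eqn:deep_nonconvex_Lp}, freezing every weight matrix outside a given block, the remaining pair $(W^{(2k-1)}, W^{(2k)})$ must solve the partial minimization
\begin{align*}
\min_{W^{(2k-1)}, W^{(2k)}} \widetilde\loss\Big(W^{(2k)}\sigma(W^{(2k-1)} \tilde X^{(2k-1)}), y\Big) + \lambda \sum_j \Big(\|W^{(2k-1)}_{\cdot j}\|_1^2 + \|W^{(2k)}_{j \cdot}\|_1^2\Big),
\end{align*}
where $\widetilde\loss(\cdot, y)$ is the original loss composed with the downstream layers $W^{(2k+1)*}, \ldots, W^{(L)*}$. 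Algebraically this is a two-layer ReLU training problem with new data matrix $\tilde X^{(2k-1)}$.

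The chief obstacle is that $\widetilde\loss$ is generally non-convex, since composing with subsequent ReLU layers destroys convexity in the block output, so Theorem \ref{thm:l1_extreme_points} cannot be invoked directly. I would circumvent this by passing to an equivalent minimum-norm reformulation: let $Z^*$ denote the block output at the global optimum. Then $(W^{(2k-1)*}, W^{(2k)*})$ must minimize the block regularization subject to the linear-in-output constraint $W^{(2k)} \sigma(W^{(2k-1)} \tilde X^{(2k-1)}) = Z^*$, since otherwise the overall objective could be strictly decreased without altering any other layer. This constrained problem is a two-layer ReLU training problem whose effective loss is the convex indicator function of $\{Z^*\}$; this indicator arises as the limit of bona fide convex losses (e.g., squared loss with a vanishing discrepancy budget), so the extreme-point / wedge-product characterization of Theorem \ref{thm:l1_extreme_points} transfers to the limit by continuity of optima and of the dual constraint set.

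Applying Theorem \ref{thm:l1_extreme_points} to the block-level problem with input $\tilde X^{(2k-1)}$ then gives that each hidden-layer column $W^{(2k-1)*}_{\cdot j}$ is a scalar multiple of the Hodge dual of a $(d-1)$-fold wedge product of rows of $\tilde X^{(2k-1)}$, i.e.\ $\alpha_j^{(2k-1)} \star(\tilde x_{j_1^{(2k-1)}}^{(2k-1)} \wedge \cdots \wedge \tilde x_{j_{d-1}^{(2k-1)}}^{(2k-1)})$ for some indices $j_1^{(2k-1)}, \ldots, j_{d-1}^{(2k-1)} \in [n]$. Setting $k=1$ gives the formula for $W^{(1)}$ with $\tilde X^{(1)} = X$, and iterating $k = 2, \ldots, L/2$ produces the stated expression at every odd layer. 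The most delicate technical point to verify is that strong duality holds simultaneously at each block so that Theorem \ref{thm:l1_extreme_points} is genuinely applicable; this is a standard consequence of the width being large enough to represent the active support in each block, as established in \cite{pilanci2020neural, ergen2021global}.
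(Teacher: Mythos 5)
Your proposal follows essentially the same decomposition as the paper's proof: fix an optimal $\theta^*$, freeze all but one two-layer block, replace that block's pair by the minimizer of the block regularization subject to the block output matching $Y^{*(\ell)}$, show the swap preserves global optimality by a contradiction on the regularization term, iterate over blocks, and invoke a two-layer characterization at each step. You correctly identify the key reformulation — passing from the non-convex composed loss $\widetilde\loss$ to the equality-constrained minimum-norm problem — which is exactly what the paper does via its auxiliary problems.

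Two points, however. First, the limiting argument (indicator function as a limit of squared losses, continuity of optima and of the dual constraint set) is unnecessary and weaker than what you have available: the indicator $\iota_{\{Z^*\}}$ of a singleton is itself a proper closed convex function with convex conjugate $\iota_{\{Z^*\}}^*(v) = v^T Z^*$, so the convex-duality machinery applies directly without any approximation. Second, and more substantively, you invoke the scalar-output Theorem \ref{thm:l1_extreme_points} at each block, but an intermediate two-layer block has a matrix-valued output $\sigma(\tilde X^{(2k-1)} W^{(2k-1)}) W^{(2k)} \in \reals^{n\times c}$ with $c>1$ (the width of the next layer), so the scalar-output theorem does not apply as stated. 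You need the vector-output extension, Theorem \ref{thm:vectorout}, which is precisely what the paper cites; there the dual constraint becomes $\|V^T\sigma(\tilde X w)\|_2 \le \lambda$ rather than $|v^T\sigma(\tilde X w)| \le \lambda$, and the regularizer on the convex side becomes a group $\ell_1/\ell_2$ norm. The extreme-point analysis in the vector case does reduce to the same wedge-product description (as the proof of Theorem \ref{thm:vectorout} shows by writing $\|V^T \sigma(\tilde X w)\|_2 = \max_{\|u\|_2\le 1} u^T V^T \sigma(\tilde X w)$), so the conclusion you want still holds, but the citation as written has a real gap. With that replacement your proof is correct and matches the paper's route.
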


\subsubsection{$\ell_2$ regularization}
Consider the training problem \eqref{eqn:deep_nonconvex_Lp} with $p=2$, which simplifies to
\begin{align}
    \label{eqn:deep_nonconvex}
    p^* \triangleq \min_{\theta} ~ \ell( f_\theta(X),y) + \lambda \sum_{\ell=1}^L \|\W^{(\ell)}\|_F^2\,.
\end{align}

\begin{theorem}[Structure of the optimal weights for $\ell_2$ regularization]
    \label{thm:l2deep}
    Consider an approximation of the optimal solution of \eqref{eqn:deep_nonconvex} given by
    \begin{align}
        W^{(1)}_{j} &= \alpha_j^{(1)} \star(x_{j_1^{(1)}} \wedge \cdots \wedge x_{j_{d-1}^{(1)}})\,,\quad \mbox{and}\nonumber \\
        W^{(\ell)}_{j} &= \alpha_j^{(\ell)}\star( \tilde x_{j_1^{(\ell)}}^{(\ell)} \wedge \cdots \wedge \tilde x_{j_{d-1}^{(\ell)}}^{(\ell)})\,,\quad \mbox{for $\ell=2,3,...,L$\,,} \label{eq:l2_deep_optimal_weights}
    \end{align}
where $\alpha_j^{(\ell)}$ are scalar weights, 
$\tilde x_i^{(\ell)}\triangleq \sigma(W^{(\ell-1)} \cdots \sigma(W^{(1)} x_i) \cdots)$ and $j^{(\ell)}_{k}\in[n]$ are certain indices. The above weights provide the same loss as the optimal solution of \eqref{eqn:deep_nonconvex}. Moreover, the regularization term is only a factor $2/(1-\epsilon)$ larger than the optimal regularization term, where $\epsilon$ is an uppper-bound on the chamber diameters $\mathscr{D}(X^{\ell})$ for $\ell=0,...,L-2$. Here, $X^{\ell}= \sigma(\cdots\sigma(X W^{(1)})\cdots W^{(\ell-1)})$ are the $\ell$-th layer activations of the network given by the weights \eqref{eq:l2_deep_optimal_weights}.
\end{theorem}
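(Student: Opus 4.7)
My plan is to apply Theorem \ref{thm:l2_extreme_points_with_biases} recursively to blocks of two consecutive layers in the deep network. First I would take a globally optimal solution $(W^{(1)*},\ldots,W^{(L)*})$ of \eqref{eqn:deep_nonconvex} and compute the induced activations $\tilde x_i^{(\ell)*}$. The structural observation driving the argument is that at a global optimum, for each block $(W^{(\ell)*}, W^{(\ell+1)*})$ viewed as a two-layer network from input $\tilde X^{(\ell)*}$ to output $\tilde X^{(\ell+2)*}$, the pair must itself minimize the corresponding two-layer regularized objective on that input--output pair. Otherwise replacing the block by a strictly better two-layer fit would lower the overall objective in \eqref{eqn:deep_nonconvex}, contradicting global optimality.

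With this blockwise optimality in hand, I would invoke Theorem \ref{thm:l2_extreme_points_with_biases} on each such block. The chamber-diameter hypothesis $\diam(X^{\ell}) \le \epsilon$ for $\ell = 0,\ldots,L-2$ supplies the required regularity condition at every layer, so the approximation bounds \eqref{eq:L2_approx_2layers_first}--\eqref{eq:L2_approx_2layers} apply uniformly. The theorem then yields hidden neurons that are scalar multiples of $\star(\tilde x_{j_1^{(\ell)}}^{(\ell)} \wedge \cdots \wedge \tilde x_{j_{d-1}^{(\ell)}}^{(\ell)})$, which is exactly the form asserted in \eqref{eq:l2_deep_optimal_weights}. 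I would handle the multi-output nature of intermediate layers by decomposing the outgoing weight matrix $W^{(\ell+1)}$ column by column, applying the single-output two-layer result to each output coordinate separately, and forming the hidden layer as the union of the wedge atoms selected across coordinates.

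For the factor $2/(1-\epsilon)$ in the regularization bound, I would use the standard balancing trick for $\ell_2$-regularized ReLU networks: positive homogeneity of ReLU allows rescaling $W^{(\ell)}_{\cdot j} \to c W^{(\ell)}_{\cdot j}$ and $W^{(\ell+1)}_{j\cdot} \to c^{-1} W^{(\ell+1)}_{j\cdot}$ without changing the network output, and the arithmetic--geometric mean inequality gives $\min_{c>0}\{c^{2}\|W^{(\ell)}_{\cdot j}\|_2^2 + c^{-2}\|W^{(\ell+1)}_{j\cdot}\|_2^2\} = 2\|W^{(\ell)}_{\cdot j}\|_2 \|W^{(\ell+1)}_{j\cdot}\|_2$. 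Under this balanced parametrization, the weight-decay objective aggregates to a sum of gauge terms that match the form appearing in Theorem \ref{thm:l2_extreme_points_with_biases}, and the per-block bound $\hat p_\lambda \le \frac{1}{1-\epsilon} p^*$ lifts to the global regularization at the cost of the extra factor of $2$ that accounts for the gauge form relative to the squared Frobenius form.

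The main obstacle I expect is the interaction between successive rewrites: replacing block $(W^{(\ell)}, W^{(\ell+1)})$ changes the downstream activations $\tilde x^{(\ell+2)}, \tilde x^{(\ell+3)}, \ldots$ unless the rewrite preserves the block's output on the training data, not merely its scalar objective value. To address this I would rely on the explicit neuron construction inside Theorem \ref{thm:l2_extreme_points_with_biases}, in which the hidden neurons are $\star$-duals of wedge products of the block's input samples and the outer weights are given by the Lasso solution $z^*$; I would argue that this representation interpolates the block's optimal output at the training points coordinatewise, so that subsequent activations are preserved exactly. Once this output-preserving property is established, the recursion on $\ell$ proceeds cleanly, the chamber-diameter conditions $\diam(X^\ell) \le \epsilon$ control each stage, and the claimed structure and regularization bound follow.
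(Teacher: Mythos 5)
Your high-level strategy — replacing consecutive two-layer blocks one at a time, preserving each block's output on the training data so that downstream activations are unchanged, and controlling each stage via the chamber-diameter hypothesis and the interpolation result (Corollary \ref{cor:L2_interpolation}) — is the same strategy the paper uses, and your resolution of the downstream-activation obstacle (near-optimal interpolant with exact output match) is exactly right.

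Two of your specific mechanisms, however, would not survive scrutiny. First, you propose handling the vector-valued output of an intermediate block by applying the scalar-output two-layer theorem column by column and then unioning the selected wedge atoms. This fails because the first-layer weights of a block are \emph{shared} across all outgoing coordinates: a coordinate-wise Lasso will generically select a different wedge dictionary for each output column, and unioning them duplicates hidden neurons, inflating the block's regularization by a factor that scales with the output dimension rather than a universal constant. The paper instead invokes Theorem \ref{thm:vectorout}, the vector-output two-layer result, whose group-$\ell_1$ regularizer $\sum_j \|Z_{\cdot j}\|_2$ selects a \emph{single shared} set of wedge atoms across all output coordinates; that is what keeps the bound dimension-free.

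Second, and more fundamentally, your explanation of the factor $2/(1-\epsilon)$ does not hold up. The balancing identity $\min_{c>0}\bigl(c^2\|W^{(\ell)}_{\cdot j}\|_2^2 + c^{-2}\|W^{(\ell+1)}_{j\cdot}\|_2^2\bigr) = 2\|W^{(\ell)}_{\cdot j}\|_2\|W^{(\ell+1)}_{j\cdot}\|_2$ is an equality used \emph{inside} the two-layer proof to pass between the squared and bilinear regularization forms; it introduces no multiplicative slack and cannot be the source of a factor of $2$. In the paper, that factor comes from an entirely different place: the \emph{overlapping} block decomposition $(1,2),(2,3),\dots,(L-1,L)$, which is forced in the $\ell_2$ case because the deep regularizer $\sum_\ell \|W^{(\ell)}\|_F^2$ does not naturally pair layers the way \eqref{eqn:deep_nonconvex_Lp} with $p=1$ does, and because the theorem asserts the wedge structure for \emph{every} layer (not just odd ones), so each $W^{(\ell)}$ with $\ell<L$ must appear as the first layer of some block. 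Each interior layer then contributes to the regularization budget of two consecutive blocks, and that double counting is precisely what produces the $2$. If you used disjoint blocks as in Theorem \ref{thm:l1deep}, you would avoid the factor of $2$ but only obtain the wedge structure on alternate layers, contradicting the statement being proved.
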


\subsubsection{Interpretation of the optimal weights}
\label{sec:interpretation}
A fully transparent interpretation of how deep networks build representations can be given using our results. We have shown that each optimal neuron followed by a ReLU activation measures the positive distance of an input sample to the linear span (or affine hull, in the presence of bias terms) generated by a unique subset of training points using the formula

$$(x^T\star(x_{j_{1}}\wedge \hdots \wedge x_{j_{k}}))_+= \dist_+\big( x,  \Span(x_{j_1},\,...,\, x_{j_{\rr-1}}) \big).$$

The ReLU activation serves as a crucial orientation determinant in this context. By nullifying negative signed distances, it effectively establishes a directionality in the space. Geometrically speaking, it delineates the specific side of the affine hull relevant for a particular input sample.  In intermediate layers, the formula is applied to the activations of the previous layer, which are themselves signed distances to affine hulls of subsets of training data.

Since each layer of the network consists of a number of neurons, the activations of the network transforms the input data into a series of distances to these unique affine hulls as

$$ \big[ \dist_+\big( x,  \Span(x_{j_{1}^{(1)}},\,...,\, x_{j_{k}^{(1)}}) \big),\, \hdots,\, \dist_+\big( x,  \Span(x_{j_1^{(m)}},\,...,\, x_{j_{k}^{(m)}}) \big)  \big].$$

Moreover, the information encapsulated within the weights of the network can be succinctly represented by the indices \(j_{1}^{(1)},\hdots, j_{k}^{(1)}, \hdots, j_{1}^{(m)}, \hdots, j_{k}^{(m)} \). These indices highlight the pivotal training samples that effectively determine the geometric orientation of each neuron. The formula essentially implies that the deep neural network's behavior and decisions are intrinsically tied to specific subsets of the training data, denoted by these critical indices. 


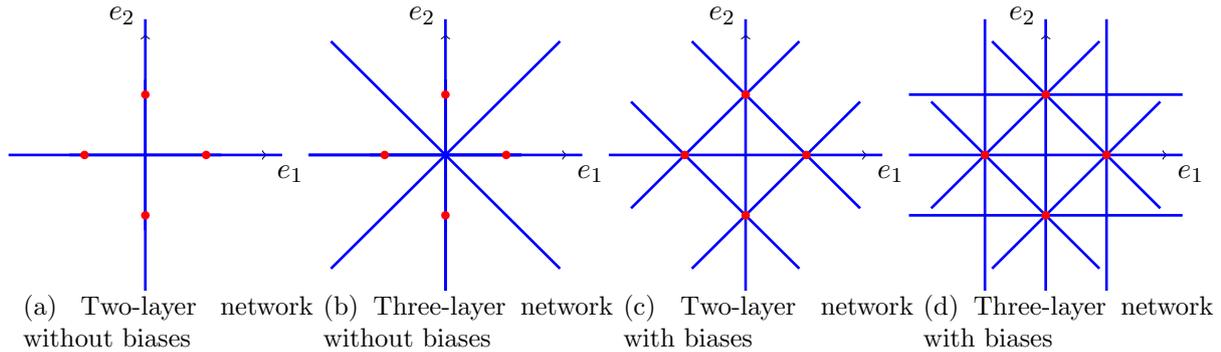
\begin{figure}[!t]
    \def\shortenAmount{-1.0cm}
    \def\shortenAmountDiag{-1cm}
    \def\scaleparam{0.8}
    \begin{minipage}{0.24\textwidth}
    \centering
        \begin{tikzpicture}[scale=\scaleparam]
            \draw[thin, ->] (-2,0) -- (2,0) node[anchor=north west] {$e_1$};
            \draw[thin, ->] (0,-2) -- (0,2) node[anchor=south east] {$e_2$};
        
            \coordinate (A) at (1,0);
            \coordinate (B) at (0,1);
            \coordinate (C) at (-1,0);
            \coordinate (D) at (0,-1);
            \coordinate (O) at (0,0);
        
            \draw[line width=1pt, blue, shorten >= \shortenAmount, shorten <=\shortenAmount] (O) -- (A);
            \draw[line width=1pt, blue, shorten >= \shortenAmount, shorten <=\shortenAmount] (O) -- (B);
            \draw[line width=1pt, blue, shorten >= \shortenAmount, shorten <=\shortenAmount] (O) -- (C);
            \draw[line width=1pt, blue, shorten >= \shortenAmount, shorten <=\shortenAmount] (O) -- (D);
        
            \fill[red] (A) circle (2pt);
            \fill[red] (B) circle (2pt);
            \fill[red] (C) circle (2pt);
            \fill[red] (D) circle (2pt);
        \end{tikzpicture}
        \subcaption{Two-layer network without biases}
    \end{minipage}
    \begin{minipage}{0.24\textwidth}
    \centering
        \begin{tikzpicture}[scale=\scaleparam]
            \draw[thin, ->] (-2,0) -- (2,0) node[anchor=north west] {$e_1$};
            \draw[thin, ->] (0,-2) -- (0,2) node[anchor=south east] {$e_2$};
        
            \coordinate (A) at (1,0);
            \coordinate (B) at (0,1);
            \coordinate (C) at (-1,0);
            \coordinate (D) at (0,-1);
            \coordinate (O) at (0,0);
            \coordinate (E) at (1,1);
            \coordinate (F) at (-1,1);
            \coordinate (G) at (-1,-1);
            \coordinate (H) at (1,-1);
            \draw[line width=1pt, blue, shorten >= \shortenAmount, shorten <=\shortenAmount] (O) -- (A);
            \draw[line width=1pt, blue, shorten >= \shortenAmount, shorten <=\shortenAmount] (O) -- (B);
            \draw[line width=1pt, blue, shorten >= \shortenAmount, shorten <=\shortenAmount] (O) -- (C);
            \draw[line width=1pt, blue, shorten >= \shortenAmount, shorten <=\shortenAmount] (O) -- (D);
            \draw[line width=1pt, blue, shorten >= \shortenAmount, shorten <=\shortenAmount] (E) -- (G);
            \draw[line width=1pt, blue, shorten >= \shortenAmount, shorten <=\shortenAmount] (F) -- (H);
            \draw[line width=1pt, blue, shorten >= \shortenAmount, shorten <=\shortenAmount] (O) -- (C);
            \draw[line width=1pt, blue, shorten >= \shortenAmount, shorten <=\shortenAmount] (O) -- (D);
        
            \fill[red] (A) circle (2pt);
            \fill[red] (B) circle (2pt);
            \fill[red] (C) circle (2pt);
            \fill[red] (D) circle (2pt);
        \end{tikzpicture}
        \subcaption{Three-layer network without biases}
    \end{minipage}
    \begin{minipage}{0.24\textwidth}
    \centering
        \begin{tikzpicture}[scale=\scaleparam]
            \draw[thin, ->] (-2,0) -- (2,0) node[anchor=north west] {$e_1$};
            \draw[thin, ->] (0,-2) -- (0,2) node[anchor=south east] {$e_2$};
        
            \coordinate (A) at (1,0);
            \coordinate (B) at (0,1);
            \coordinate (C) at (-1,0);
            \coordinate (D) at (0,-1);
            \coordinate (O) at (0,0);
        
            \draw[line width=1pt, blue, shorten >= \shortenAmount, shorten <=\shortenAmount] (A) -- (B);
            \draw[line width=1pt, blue, shorten >= \shortenAmount, shorten <=\shortenAmount] (A) -- (C);
            \draw[line width=1pt, blue, shorten >= \shortenAmount, shorten <=\shortenAmount] (A) -- (D);
            \draw[line width=1pt, blue, shorten >= \shortenAmount, shorten <=\shortenAmount] (C) -- (D);
            \draw[line width=1pt, blue, shorten >= \shortenAmount, shorten <=\shortenAmount] (C) -- (B);
            \draw[line width=1pt, blue, shorten >= \shortenAmount, shorten <=\shortenAmount] (B) -- (D);
            \fill[red] (A) circle (2pt);
            \fill[red] (B) circle (2pt);
            \fill[red] (C) circle (2pt);
            \fill[red] (D) circle (2pt);
        \end{tikzpicture}
        \subcaption{Two-layer network with biases}
    \end{minipage}
    \begin{minipage}{0.24\textwidth}
    \centering
        \begin{tikzpicture}[scale=\scaleparam]
        \draw[thin, ->] (-2,0) -- (2,0) node[anchor=north west] {$e_1$};
        \draw[thin, ->] (0,-2) -- (0,2) node[anchor=south east] {$e_2$};

        \coordinate (A) at (1,0);
        \coordinate (B) at (0,1);
        \coordinate (C) at (-1,0);
        \coordinate (D) at (0,-1);
        \coordinate (O) at (0,0);
        \coordinate (E) at (1,1);
        \coordinate (F) at (-1,1);
        \coordinate (G) at (-1,-1);
        \coordinate (H) at (1,-1);

        \draw[line width=1pt, blue, shorten >= \shortenAmountDiag, shorten <=\shortenAmount] (A) -- (B);
        \draw[line width=1pt, blue, shorten >= \shortenAmountDiag, shorten <=\shortenAmount] (A) -- (C);
        \draw[line width=1pt, blue, shorten >= \shortenAmountDiag, shorten <=\shortenAmount] (A) -- (D);
        \draw[line width=1pt, blue, shorten >= \shortenAmountDiag, shorten <=\shortenAmount] (C) -- (D);
        \draw[line width=1pt, blue, shorten >= \shortenAmountDiag, shorten <=\shortenAmount] (C) -- (B);
        \draw[line width=1pt, blue, shorten >= \shortenAmountDiag, shorten <=\shortenAmount] (B) -- (D);
        \draw[line width=1pt, blue, shorten >= \shortenAmount, shorten <=\shortenAmount] (E) -- (H);
        \draw[line width=1pt, blue, shorten >= \shortenAmount, shorten <=\shortenAmount] (E) -- (F);
        \draw[line width=1pt, blue, shorten >= \shortenAmount, shorten <=\shortenAmount] (F) -- (G);
        \draw[line width=1pt, blue, shorten >= \shortenAmount, shorten <=\shortenAmount] (H) -- (G);
        \fill[red] (A) circle (2pt);
        \fill[red] (B) circle (2pt);
        \fill[red] (C) circle (2pt);
        \fill[red] (D) circle (2pt);
        \end{tikzpicture}
        \subcaption{Three-layer network with biases}
\end{minipage}
\caption{\trackchange Optimal space partitioning of two-layer and three-layer ReLU networks predicted by Theorems \ref{thm:l1_extreme_points} and \ref{thm:l1_three_layer_biased} for $p=1$. The blue lines represent the breaklines of optimal neurons. The red dots represent the training data points. Theorem \ref{thm:l1_three_layer_biased} is provided in the Appendix. \label{fig:space_partitioning}}
\end{figure}

This interpretation not only offers a geometric perspective on neural networks but also explains the pivotal role hidden layers play. Each hidden layer is a series of coordinate transformations, represented by the affine hulls of various data point subsets. As the data progresses through the network, it gets transformed and re-encoded, with every neuron contributing to this transformation based on its unique geometric connection to the training dataset.

{\trackchange
\subsection{Space partitioning of optimal deep networks}
We now illustrate the optimal two-layer neurons predicted by Theorems \ref{thm:l1_extreme_points} and compare them with optimal three-layer neurons (see Theorems \ref{thm:l1_three_layer_biased}-\ref{thm:l1_three_layer} in the Appendix) as regularization tends to zero for $p=1$ unless stated otherwise. Consider the two-dimensional training data $\{x_1=(1,0),x_2=(0,1),x_3=(-1,0),x_4=(0,-1)\}$ shown in Figure \ref{fig:space_partitioning}. 

In panel (a), we consider a two-layer ReLU network without biases. Two optimal neurons are $(w_1^Tx)_+$, $(w_2^Tx)_+$, given by Theorem \ref{thm:main_two_dim_nobias}. Their breaklines, $w_1^Tx=0$ and $w_2^Tx=0$, are plotted as blue lines, and pass through the origin and data points, since the optimal neurons are scalar multiples of the Hodge duals of 1-blades formed by data points.

In panel (b), we consider a three-layer ReLU network without biases, and we display all four optimal neurons given by Theorem \ref{thm:l1_three_layer}. In addition to the neurons with breaklines $w_1^Tx=0$ and $w_2^Tx=0$, we also have $w_3^Tx=0$ and $w_4^Tx=0$ which are translations of the affine hulls, $\Aff(x_1,x_2)$ and $\Aff(x_2,x_3)$, to the origin.

In panel (c), we consider a two-layer ReLU network with biases regularized with $p=2$, and we display all six optimal neurons given by Theorem \ref{thm:main_two_dim_l2_bias}. Their breaklines pass between each pair of samples, since the optimal neurons are scalar multiples of the Hodge duals of 1-blades formed by the differences of data points.

In panel (d), we consider a three-layer ReLU network with biases, and we display all 12 optimal neurons given by Theorem \ref{thm:l1_three_layer_biased}. In addition to the breaklines that pass between each pair of samples, we also have translations of all possible affine combinations of size two, e.g., $\Aff(x_1,x_2)$, $\Aff(x_1,x_3)$,..., to every data point.
}

\section{Numerical Results}
In this section, we introduce and examine a numerical procedure to take advantage of the closed-form formulas in refining neural network parameters and producing a geometrically interpretable network.

\subsection{Refining neural network weights via geometric algebra}
We apply the characterization of Theorem \ref{thm:l1_extreme_points}, which states that the hidden neurons are scalar multiples of $\star (x_{j_1}\wedge \cdots \wedge x_{j_{\rr-1}})$, Additionally, they are orthogonal to the $r-1$ training data points specified by ${x_{j_1}, \cdots, x_{j_{\rr-1}} }$, where $r$ represents the rank of the training data matrix.

The inherent challenge lies in identifying the specific subset of the $r-1$ training points needed to form each neuron. Fortunately, this subset can be estimated when we have access to approximate neuron weights, typically acquired using standard non-convex heuristics such as stochastic gradient descent (SGD) or variants such as Adam and AdamW \cite{kingma2014adam, loshchilov2018decoupled}. After obtaining an approximate weight vector for each neuron, we can gauge which subsets of training data are nearly orthogonal to the neuron. This is achieved by evaluating the inner-products between the neuron weight and all training vectors, subsequently selecting the $r-1$ entries of the smallest magnitude. This refinement, which we term the \emph{polishing} process, is delineated as follows for each neuron $w_1,...,w_m$:\\
{\trackchange
$~~$ For each $j\in[m]$  (optional: Append 1 to the training samples to account for the neuron bias term)
\begin{enumerate}
    \item Calculate the inner-product magnitudes: $|x_i^T w_j|$ for each $i\in[n]$.
    \item Identify the $r-1$ training vectors with the minimal inner-product magnitude, denoted as ${x_{j_1}, \cdots, x_{j_{\rr-1}} }$.
    \item Update the neuron using: $w_j\leftarrow \star (x_{j_1}\wedge \cdots \wedge x_{j_{\rr-1}})=\cross (x_{j_1}, \dots, x_{j_{\rr-1}})$. As a result, we have $w_j\perp x_{j_1}, \dots, x_{j_{\rr-1}}$. This can be done by solving the linear system $w_j^T x_{j_i}=0$ for $i=1,...,r-1$, or finding a minimal left singular vector of the matrix $[x_{j_1},\dots,x_{j_{\rr-1}}]$, and normalizing $w_j$ such that $\|w_j\|_p=1$.
    \item Optimize the weights of the following layer(s).
    \item Optimize the scaling factors between consecutive layers (see Appendix \ref{sec:optimizing_scaling_coefficients}).
\end{enumerate}

As a result, each neuron is assigned a closed-form symbolic expression, which only depends on a small subset of training samples. 
}

To illustrate this method, we present two examples in Figure \ref{fig:cnn_nlp}. 


{\trackchange
In Figure \ref{fig:cnn_nlp} (a), we investigate binary image classification on the CIFAR dataset \cite{krizhevsky2009learning}. A four-layer convolutional network composed of two convolutional layers with  $3\times3\times 32$ filters and two fully connected layers with $512$ hidden neurons is trained to distinguish class $0$ (airplane) from class $2$ (bird).  We train it via AdamW optimizer using default hyperparameters using 20 epochs and a batch size of 2048. After training, we implement the proposed polishing process on the first layer, and re-train the second-layer weights via convex optimization. The resulting average train/validation accuracies are plotted over 5 independent trials to account for the randomness in optimization. We observe that the polishing process improves both the training and test accuracy. In Section \ref{sec:additional_numerical}.1, we provide additional results with different hyperparameters.

In Figure \ref{fig:cnn_nlp} (b), repeat the same polishing strategy for a small character-based autoregressive language model. We train a two-layer ReLU network to predict the next character in a sequence of characters from a small subset of Wikipedia consisting of first $650000$ characters from the article titled 'Neural network (machine learning)' and other articles linked from the same page.
We use the AdamW optimizer with a learning rate of $10^{-4}$ and a batch size of 8192. The block size is set to $16$ characters. We apply polishing to the first layer and re-optimize the final layer weights via convex optimization. The resulting average train/validation accuracies, along with 1-standard deviation error bars, are plotted over 8 independent trials to account for the randomness in optimization. We observe a significant improvement in perplexity after polishing when the number of neurons is large enough. In Section \ref{sec:additional_numerical}.5, we provide additional results with different hyperparameters.

In the Appendix (Section \ref{sec:additional_numerical}), we provide a detailed analysis of the effect of changing the hyperparameters and optimizers, including the learning rate, momentum parameters ($\beta_1$ and $\beta_2$ in Adam and AdamW), batch sizes, number of epochs, and also present additional results with fully connected networks and other binary classification tasks. We observe that the polishing process consistently improves the quality of the weights, leading to a significant improvement in the accuracy of the network while making the neurons fully interpretable as oriented distance functions via geometric algebra.
}

\begin{figure}[!t]
    \begin{minipage}{0.49\textwidth}
    \centering
    \includegraphics*[width=9.8cm, trim={1.2cm 0.4cm 0.1cm 1.2cm}]{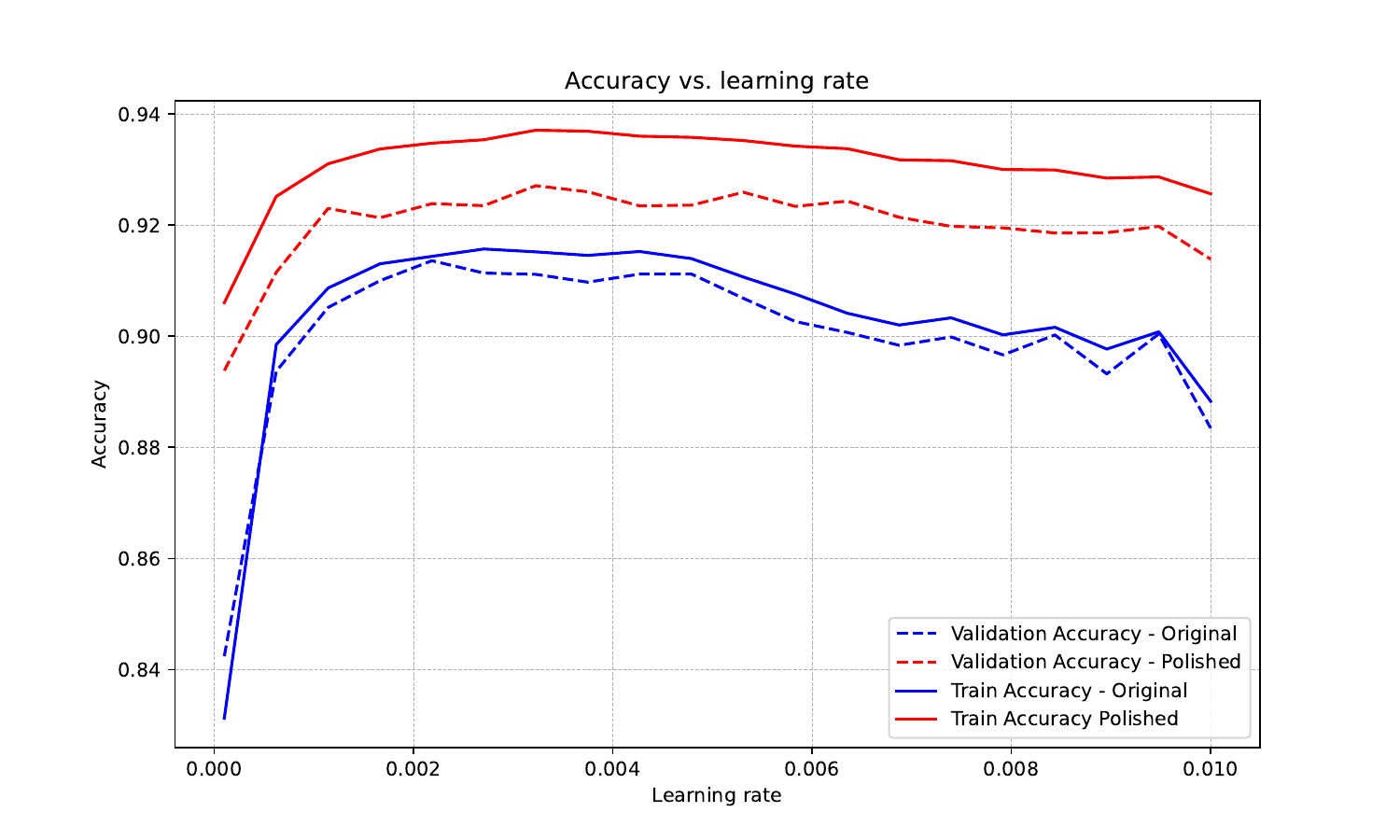}
    \subcaption*{(a) 4-layer convolutional neural network}
    \end{minipage}
    \begin{minipage}{0.49\textwidth}
    \centering
    \includegraphics*[width=7.35cm, trim={0.8cm 0.2cm 1cm 0.91cm}]{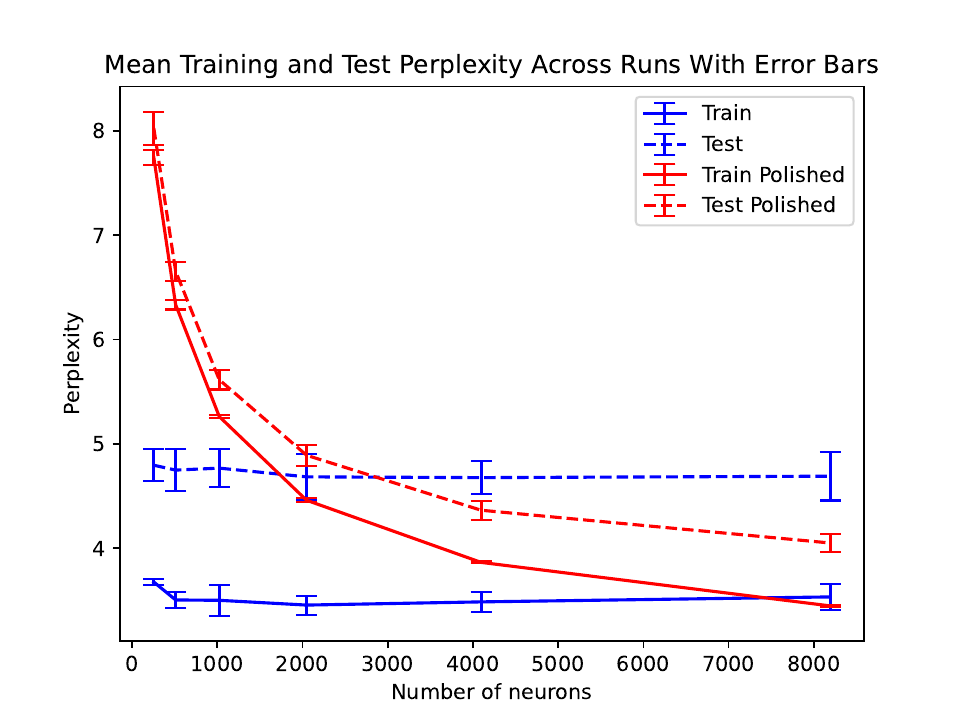}
    \subcaption*{(b) 2-layer autoregressive language model}
    \end{minipage}
    \caption{\label{fig:cnn_nlp}\trackchange Comparison of AdamW and polishing CIFAR image classification (a) character-level MLP trained on a small subset of Wikipedia (b). Section \ref{sec:additional_numerical} in the Appendix contains additional numerical results, including a comprehensive hyperparameter search. \label{fig:polishing_cifar_adamw}}
\end{figure}

\section{Discussion}
\label{sec:discussion}
In this work, we have presented an analysis that uncovers a deep connection between Clifford's geometric algebra and optimal ReLU neural networks. By demonstrating that optimal weights of such networks are intrinsically tied to the wedge product of training samples, our results enable an understanding of how neural networks build representations as explicit functions of the training data points. Moreover, these closed-form functions not only provide a theoretical lens to understand neural networks, but also has the potential to guide new architectures and training algorithms that directly harness these geometric insights.
 
{\trackchange
\noindent \textbf{Computational complexity:}
The computational complexity of the polishing process is dominated by step 3, which involves solving a linear system or finding a minimal left singular vector. This can be done in $O(n^2r)$ time using the QR decomposition or the SVD. This process is repeated for each neuron, resulting in a total complexity of $O(n^2rm)$, where $m$ is the number of neurons. In contrast, the complexity of training the neural network to global optimal using the convex programs derived in Theorems \ref{thm:l1_extreme_points} is $O( {n \choose r } n^2)$, which is tractable for small $r$. Note that for convolutional neural networks, the rank is bounded by the spatial size of the filter, which is a small constant \cite{ergen2024path}. Another application where the data is inherently low rank is Neural Radiance Fields \cite{mildenhall2021nerf}. The exponential complexity in $r$ can not be improved unless $P=NP$ \cite{pilanci2020neural, wang2023polynomial}. However, the convex programs can be well-approximated by sampling the wedge products, in a similar manner to the randomized sampling employed in convex formulations of NNs \cite{ergen2024path, ergen2023convex}. Recent work showed that random sampling of polynomially many variables in the convex program \eqref{eq:two_layer_relu_dual} provides a strong approximation with only logarithmic gap to the global optimum \cite{kim2024convex}. Studying randomized algorithms for geometric algebra is a promising and unexplored direction to make progress in this area.

\noindent \textbf{Interpretability:} Our findings also contribute to the broader challenge of neural network interpretability. The polishing process is expected to improve the quality of the weights, leading to a significant improvement in the accuracy of the network while making the neurons fully interpretable as oriented distance functions via geometric algebra. More precisely, after polishing each ReLU neuron precisely outputs $(x_i^Tw)_+ = \dist_+\big( x_i,  \Span(x_{j_1},\,...,\, x_{j_{\rr-1}}) \big)$. By elucidating the roles hidden layers play in encoding geometric information of training data points through signed volumes, we have taken a step towards a more transparent and foundational theory of deep learning.

\noindent \textbf{Uniqueness:} We note that the optimal weights of a ReLU neural network are not unique, and permutation, merging and splitting operations on the neurons can lead to equivalent networks. However, all globally optimal solutions can be recovered via the set of optimal solutions of the convex program \eqref{eq:two_layer_relu_dual} by considering these three operations \cite{mishkin2023optimal, wang2021hidden}. Moreover, under certain mild assumptions, the convex program admits a unique solution \cite{boursier2023penalising}. In addition, all stationary points of the non-convex training objective can be recovered via the convex program when certain variables are constrained to be zero \cite{wang2021hidden}, up to permutation, merging and splitting. An important open question is characterizig the entire optimal set of the convex programs via geometric algebra, which we leave as future work.
}

There are many other open questions for further research. Exploring how these insights apply to state-of-the-art network architectures, or in the context of different regularization techniques and variations of activation functions, such as the ones in attention layers, could be of significant interest. While our techniques allow for the interpretation of layer weights in popular pretrained network models, we leave this for further research. Additionally, practical implications of our results, including potential improvements to the polishing process remain to be fully explored. Our results also underlines the potential and utility of integrating geometric algebra into the theory of deep learning.

\section*{Acknowledgements}
This work was supported in part by the National Science Foundation (NSF) CAREER Award under Grant CCF-2236829, Grant ECCS-2037304 and Grant DMS-2134248; in part by the U.S. Army Research Office Early Career Award under Grant W911NF-21-1-0242; and in part by the Office of Naval Research Award N00014-24-1-2164. 

We express our deep gratitude to Emmanuel Cand\`es, Stephen Boyd, Emi Zeger and Orhan Arikan for their valuable input and engaging discussions.


\bibliographystyle{unsrtnat}
\bibliography{main_revised}

\newpage
\section{Appendix I - Additional Numerical Results}
\newtheorem{innercustomthm}{Theorem}
\newenvironment{theorem_numbered}[1]
  {\renewcommand\theinnercustomthm{#1}\innercustomthm}
  {\endinnercustomthm}

\subsection{Additional Numerical Results}
\label{sec:additional_numerical}
In this section, we provide additional experiments showcasing the effectiveness of the proposed \emph{polishing} process in enhancing the performance of neural networks. We consider a variety of datasets and architectures, including a three-layer fully connected ReLU network, a four-layer convolutional neural network, and a two-layer ReLU network for an autoregressive character-level text prediction task. We compare the performance of the \emph{polishing} process with the standard methods such as SGD, Adam and AdamW, and experiment with all hyperparameters, including the learning rate, momentum parameters, number of epochs, and batch size. We also provide a comparison of the decision regions pre and post-polishing, highlighting the enhanced data distribution fit due to the polishing. It can be observed that the polishing process significantly improves the performance of the neural networks, leading to a marked improvement in the objective value and the decision regions, for a variety of datasets and architectures under different optimization hyperparameters.

\subsubsection{Spiral dataset}

\begin{figure}
    \begin{minipage}{0.39\textwidth}
    \centering
    \includegraphics*[width=5.8cm, trim={0.03cm 0 0 -1cm}, clip]{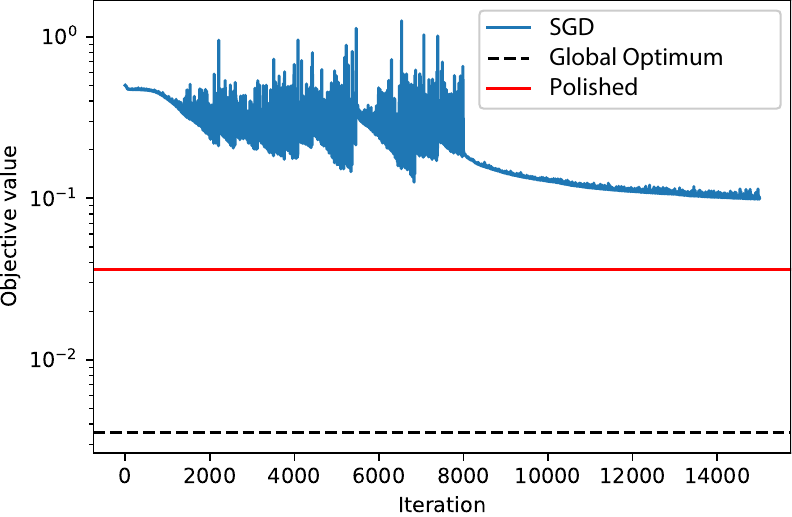}
    \end{minipage}
    \begin{minipage}{0.59\textwidth}
    \centering
    \includegraphics*[width=10.6cm, trim={0 0 0 0.03cm}, clip]{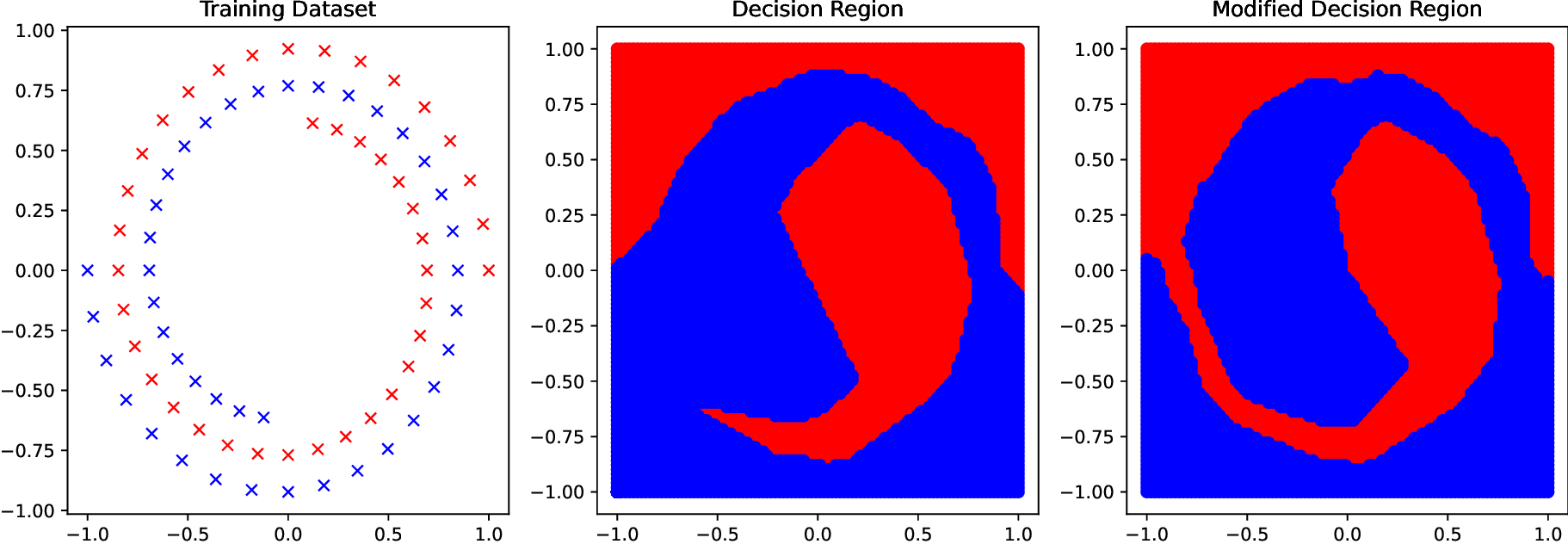}
    \end{minipage}
    \caption{Comparison of SGD and polishing via geometric algebra in the spiral dataset. \label{fig:polishing_spiral}}
\end{figure}
Figure \ref{fig:polishing_spiral} for the 2D spiral dataset and a two-layer neural network optimized with squared loss. In the initial panel of this figure, the training curve of a two-layer ReLU neural network from \eqref{eq:two_layer_relu} is depicted, considering $p=2$ and weight decay regularization set at $\beta = 10^{-5}$. The dataset, divided into two classes represented by blue and red crosses, is showcased in the second panel. By resorting to the dual formulation in \eqref{eq:two_layer_relu_dual}, the global optimum value is computed. Notably, while SGD is far from the global optimum, the \emph{polishing} process enhances the neurons, leading to a marked improvement in the objective value—evidenced by the solid line in the left panel. A comparative visualization of the decision region pre and post-polishing is presented in the subsequent panels, highlighting the enhanced data distribution fit due to the polishing.  
\newgeometry{left=1.8cm, right=2cm, top=2cm} 
\subsubsection{Image classification and text prediction}
We now consider the settings in Figure \ref{fig:cnn_nlp} and vary the optimization hyperparameters, including the momentum parameter $\beta_1$ of AdamW, the number of epochs, batch sizes for the four-layer convolutional neural network trained on the CIFAR dataset. We consider three different binary classification tasks on the CIFAR benchmark, and a character level text prediction task on the same subset of Wikipedia.

\begin{figure}[!hbp]
    \textbf{7.1. Four-layer convolutional neural network for CIFAR class 0 vs class 1\\}
    \textbf{(a) Varying AdamW momentum parameter $\beta_1$ and number of epochs}
   
    \begin{minipage}{0.48\textwidth}
        \centering
        \includegraphics[width=1.1\textwidth,trim=42 15 10 30, clip]{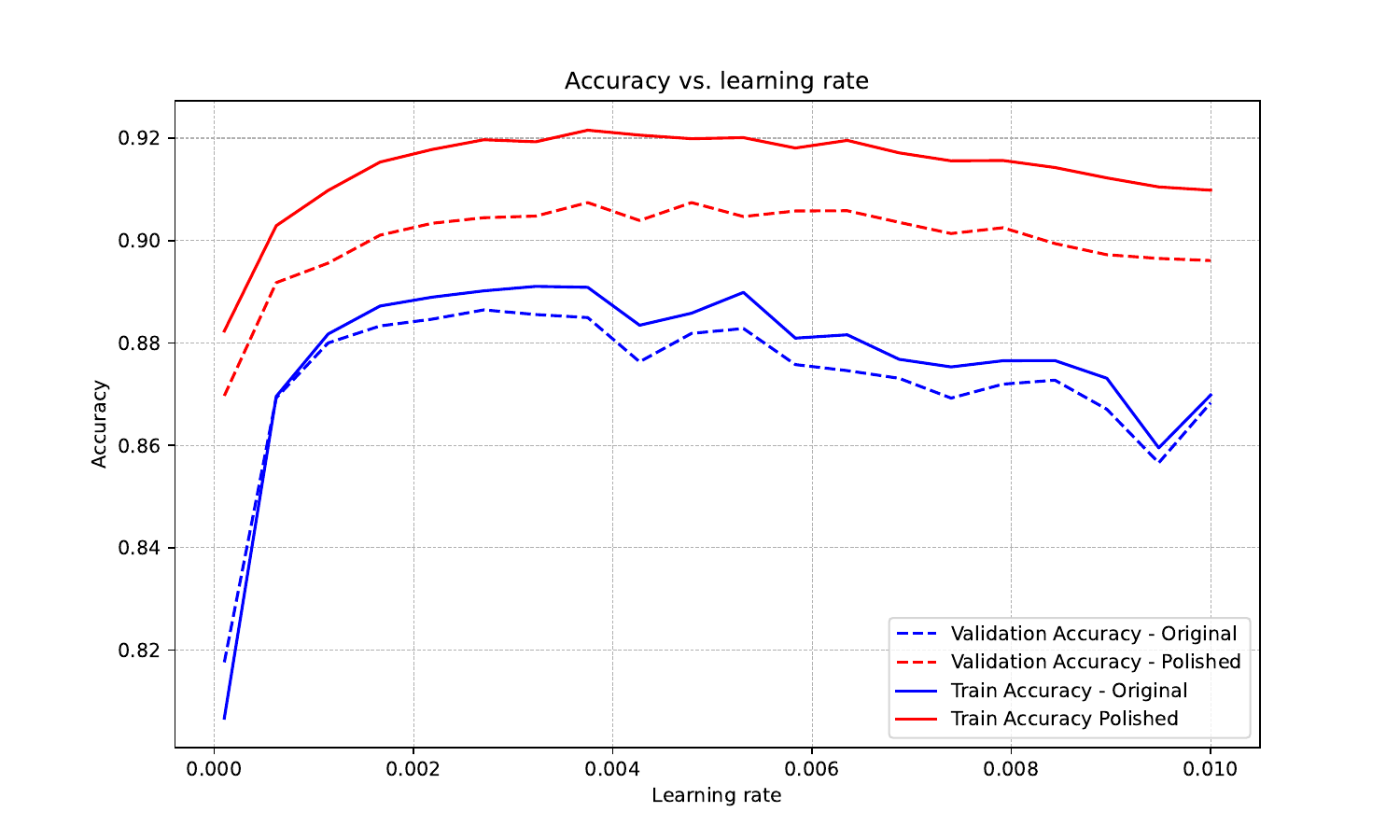}
        \caption{AdamW, $\beta_1=0.9$, bs=2048, epochs=10}
    \end{minipage}%
    \begin{minipage}{0.48\textwidth}
        \centering
        \includegraphics[width=1.05\textwidth,trim=42 15 10 30, clip]{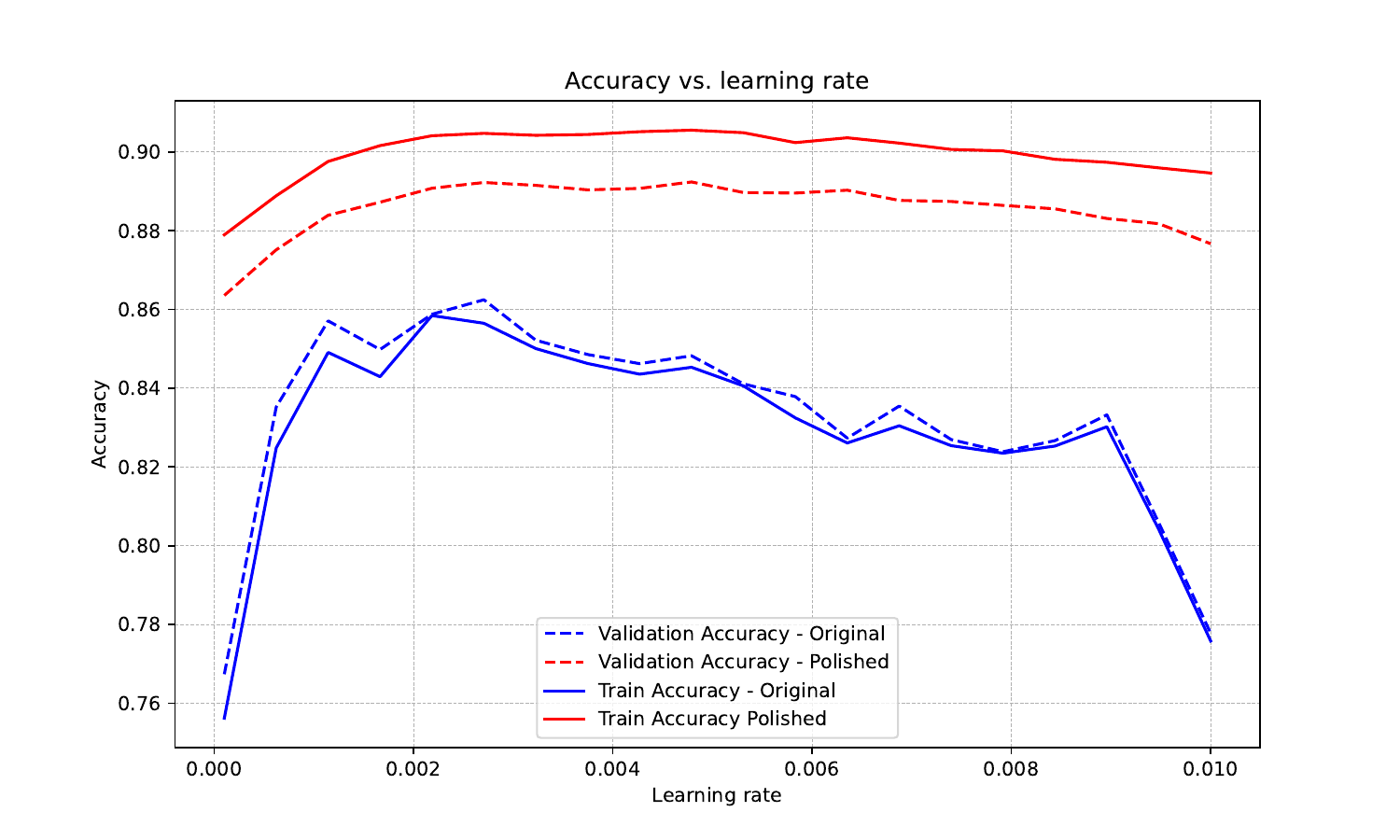}
        \caption{AdamW $\beta_1=0.8$, bs=2048, epochs=10}
    \end{minipage}
    \begin{minipage}{0.48\textwidth}
        \centering
        \includegraphics[width=1.05\textwidth,trim=42 15 10 30, clip]{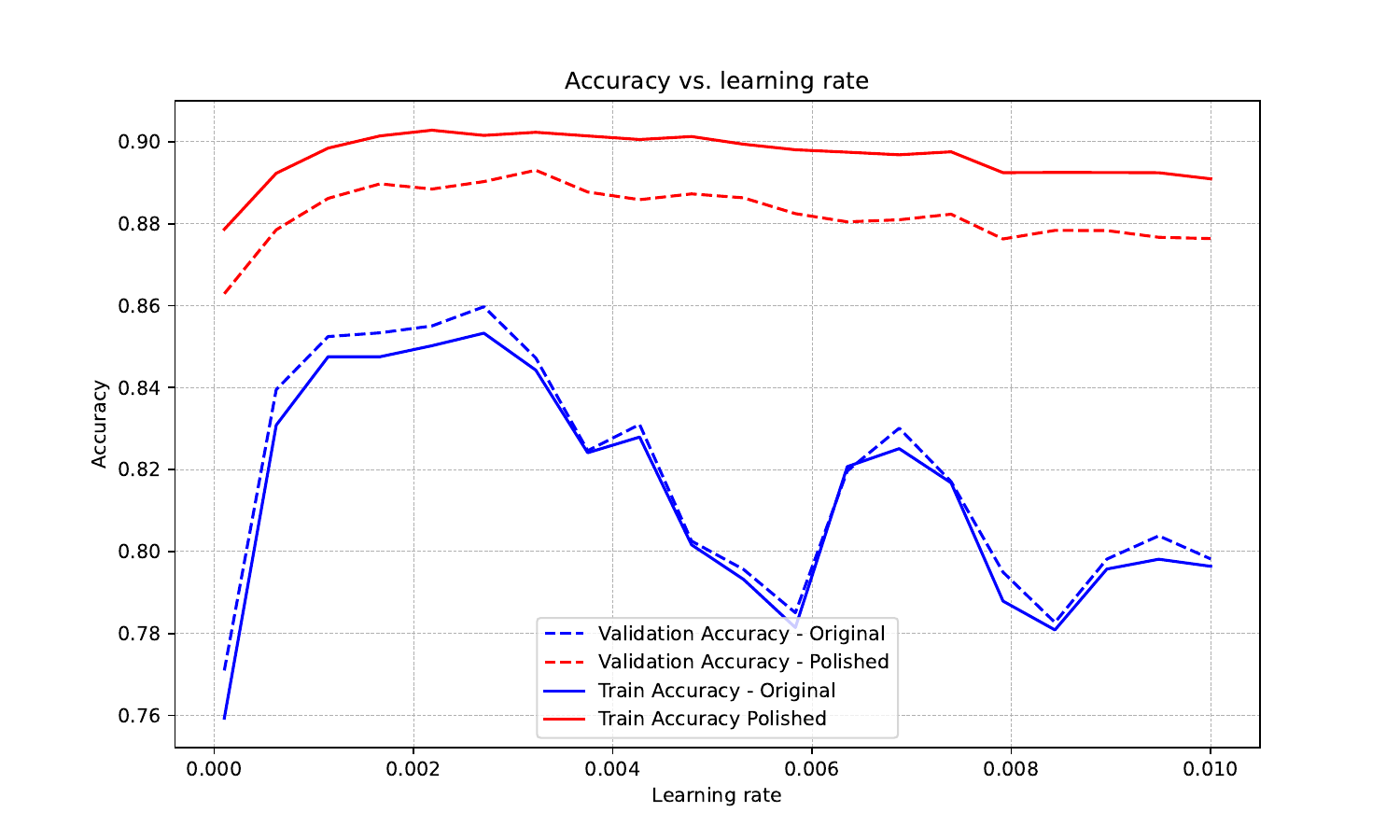}
        \caption{AdamW $\beta_1=0.6$, bs=2048, epochs=10}
    \end{minipage}
    \begin{minipage}{0.48\textwidth}
        \centering
        \hspace{-0.5cm}
        \includegraphics[width=1.05\textwidth,trim=42 15 10 30, clip]{figures/nn_polishing/cnn/0vs1/0cnn_10epochs_1024bs_adamwp9beta1_accuracy_vs_lr02212024-013958.pdf}
        \caption{AdamW $\beta_1=0.9$, bs=1024, epochs=10}
    \end{minipage}
    \begin{minipage}{0.48\textwidth}
        \centering
        \includegraphics[width=1.05\textwidth,trim=42 15 10 30, clip]{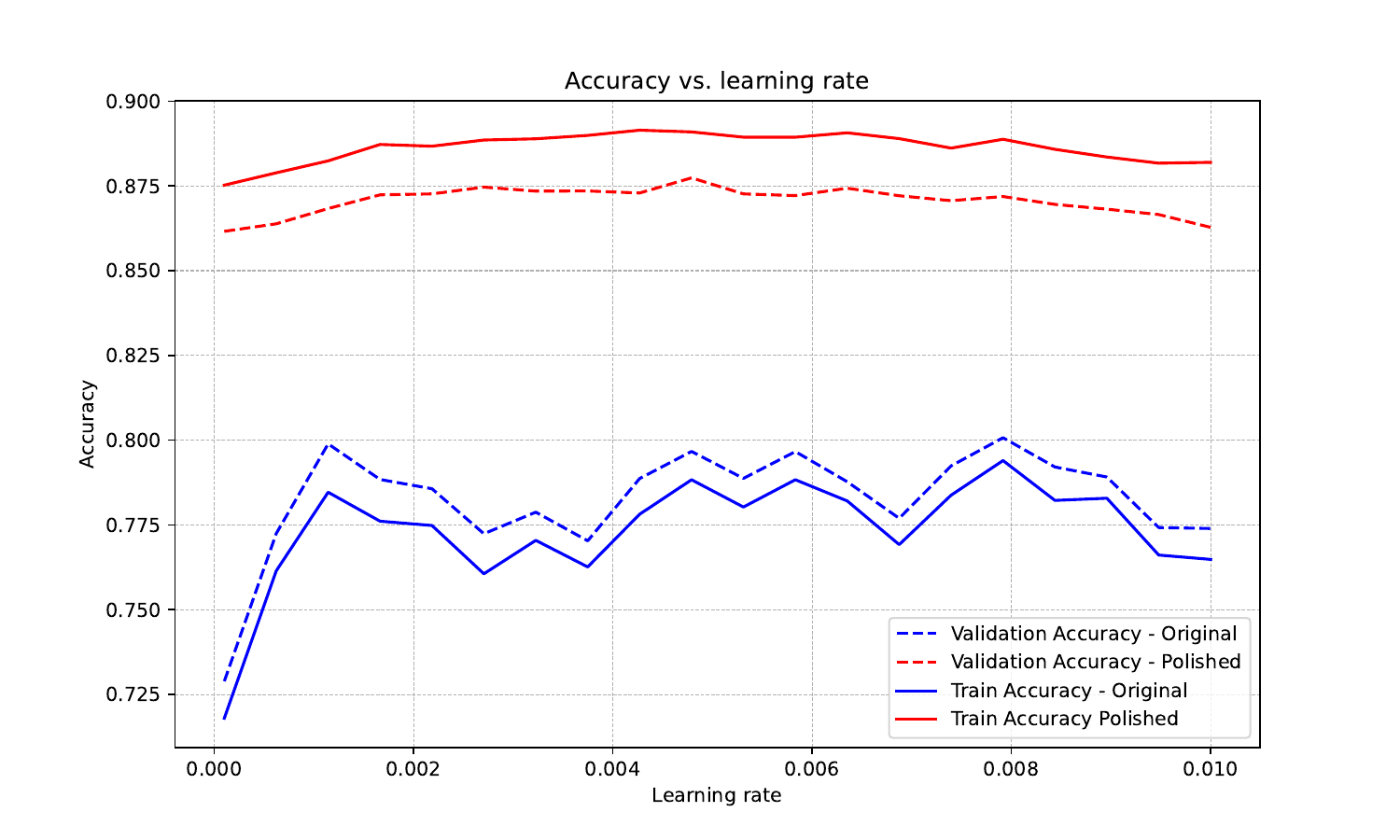}
        \caption{AdamW $\beta_1=0.6$, bs=2048, epochs=5}
    \end{minipage}
    \begin{minipage}{0.48\textwidth}
        \centering
        \includegraphics[width=1.05\textwidth,trim=42 15 10 30, clip]{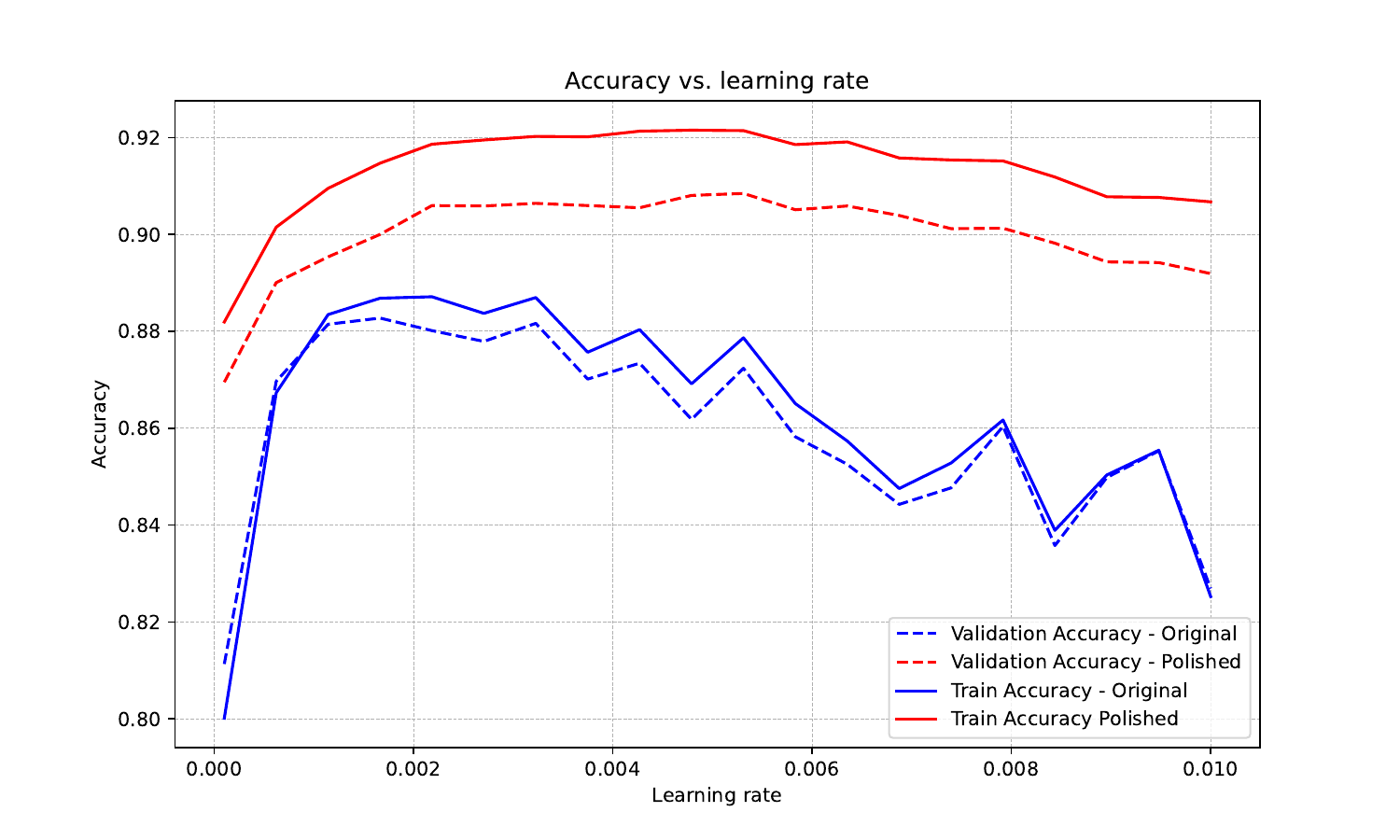}
        \caption{AdamW $\beta_1=0.9$, bs=2048, epochs=20}
    \end{minipage}
\end{figure}
%
\begin{figure}[!hbp]
    \textbf{7.2. Four-layer convolutional network for CIFAR class 0 vs class 2\\~\\}
    \textbf{(a) Varying AdamW momentum parameter $\beta_1$ and batch size (bs)}
    
    \begin{minipage}{0.49\textwidth}
        \centering
        \includegraphics[width=1.1\textwidth,trim=42 15 10 30, clip]{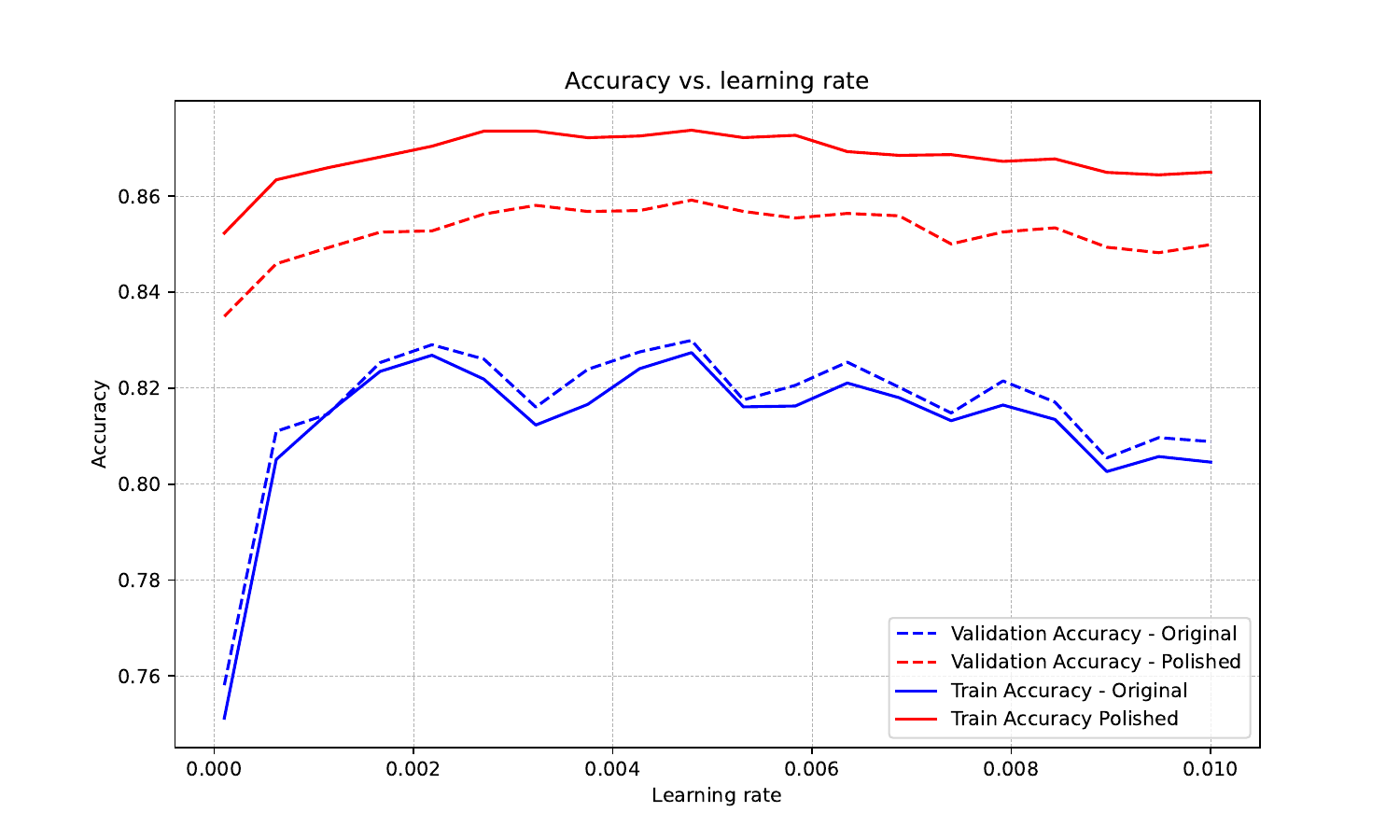}
        \caption{AdamW, $\beta_1=0.9$, bs=2048, epochs=10}
    \end{minipage}%
    \begin{minipage}{0.49\textwidth}
        \centering
        \includegraphics[width=1.1\textwidth,trim=42 15 10 30, clip]{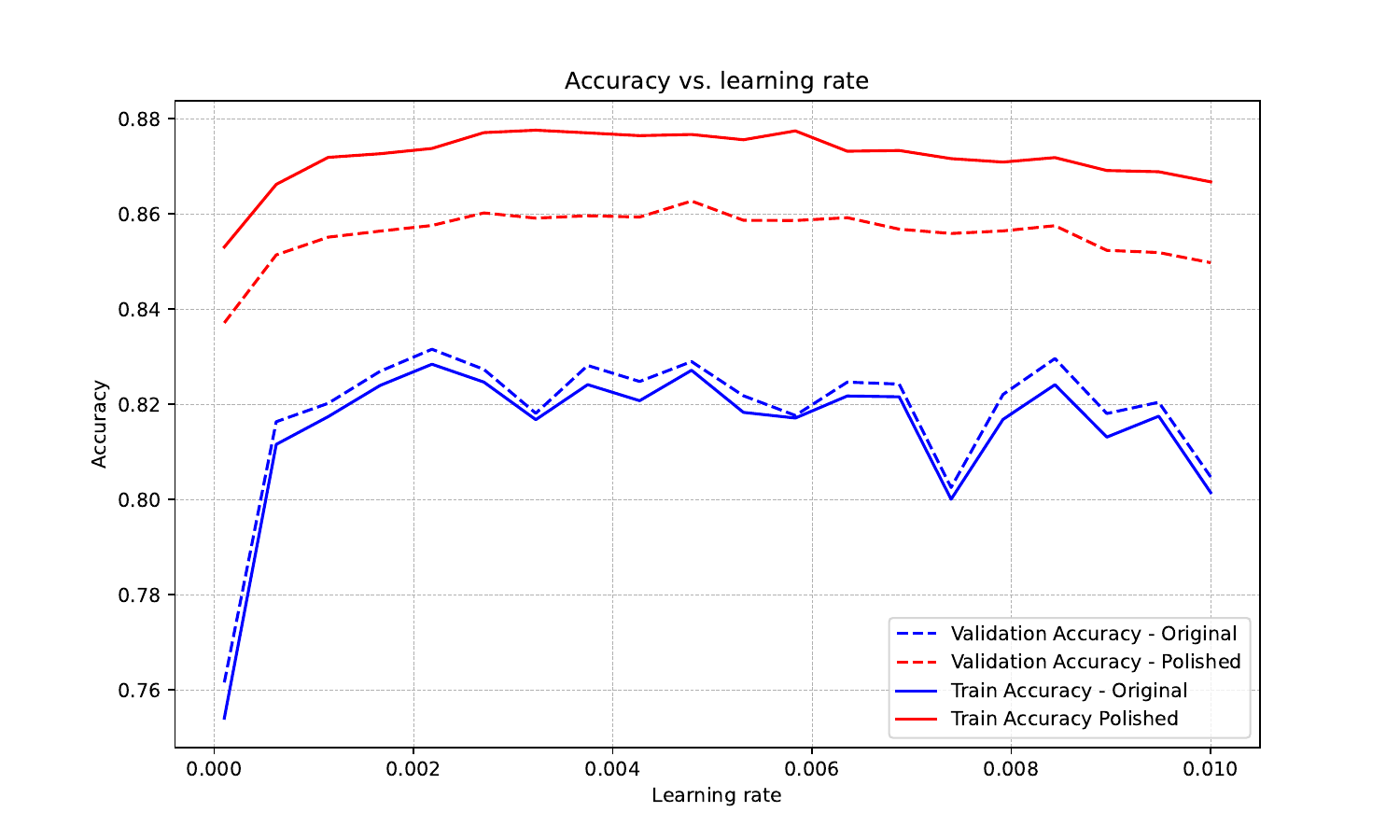}
        \caption{AdamW $\beta_1=0.8$, bs=2048, epochs=10}
    \end{minipage}
    \begin{minipage}{0.49\textwidth}
        \centering
        \includegraphics[width=1.1\textwidth,trim=42 15 10 30, clip]{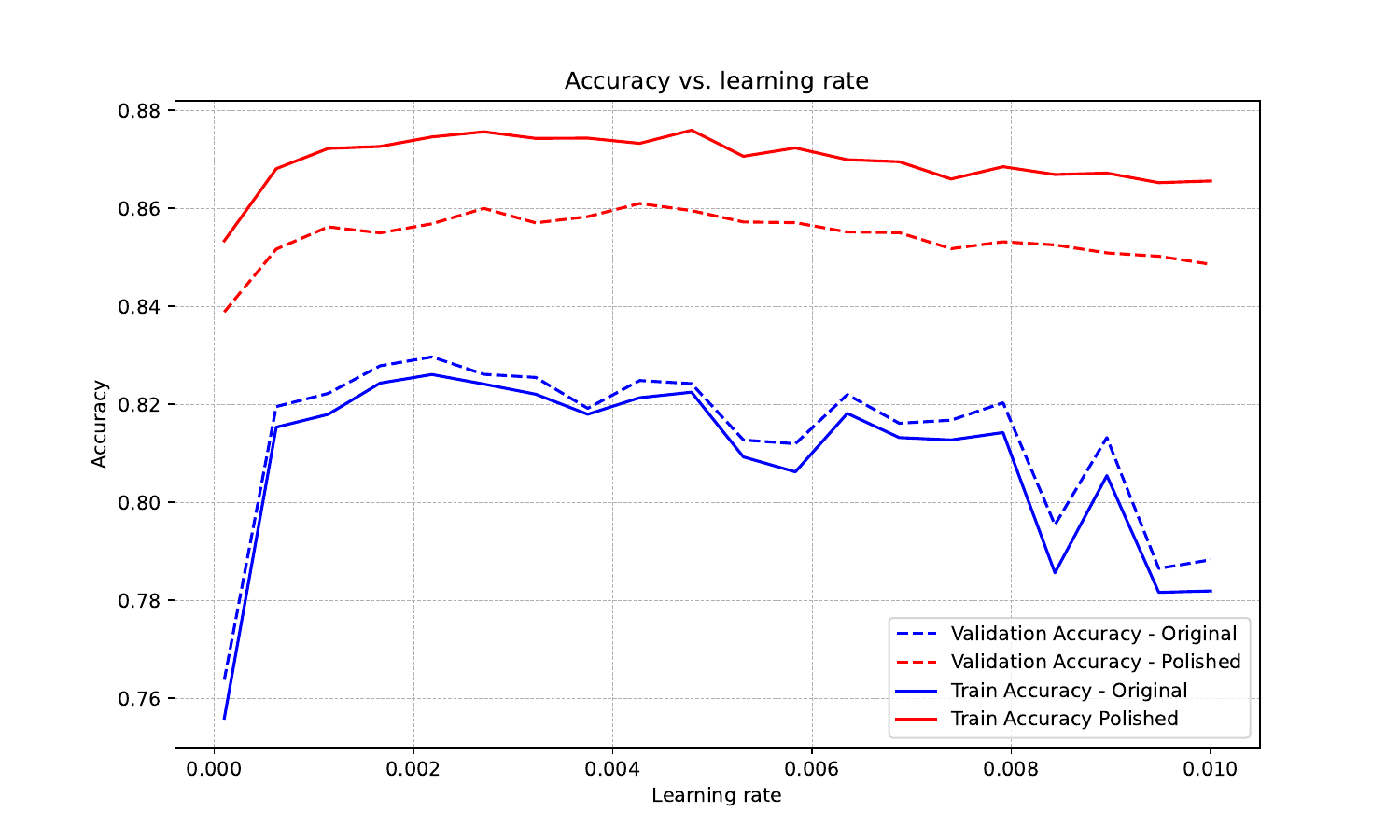}
        \caption{AdamW $\beta_1=0.6$, bs=2048, epochs=10}
    \end{minipage}
    \begin{minipage}{0.49\textwidth}
        \centering
        \includegraphics[width=1.1\textwidth,trim=42 15 10 30, clip]{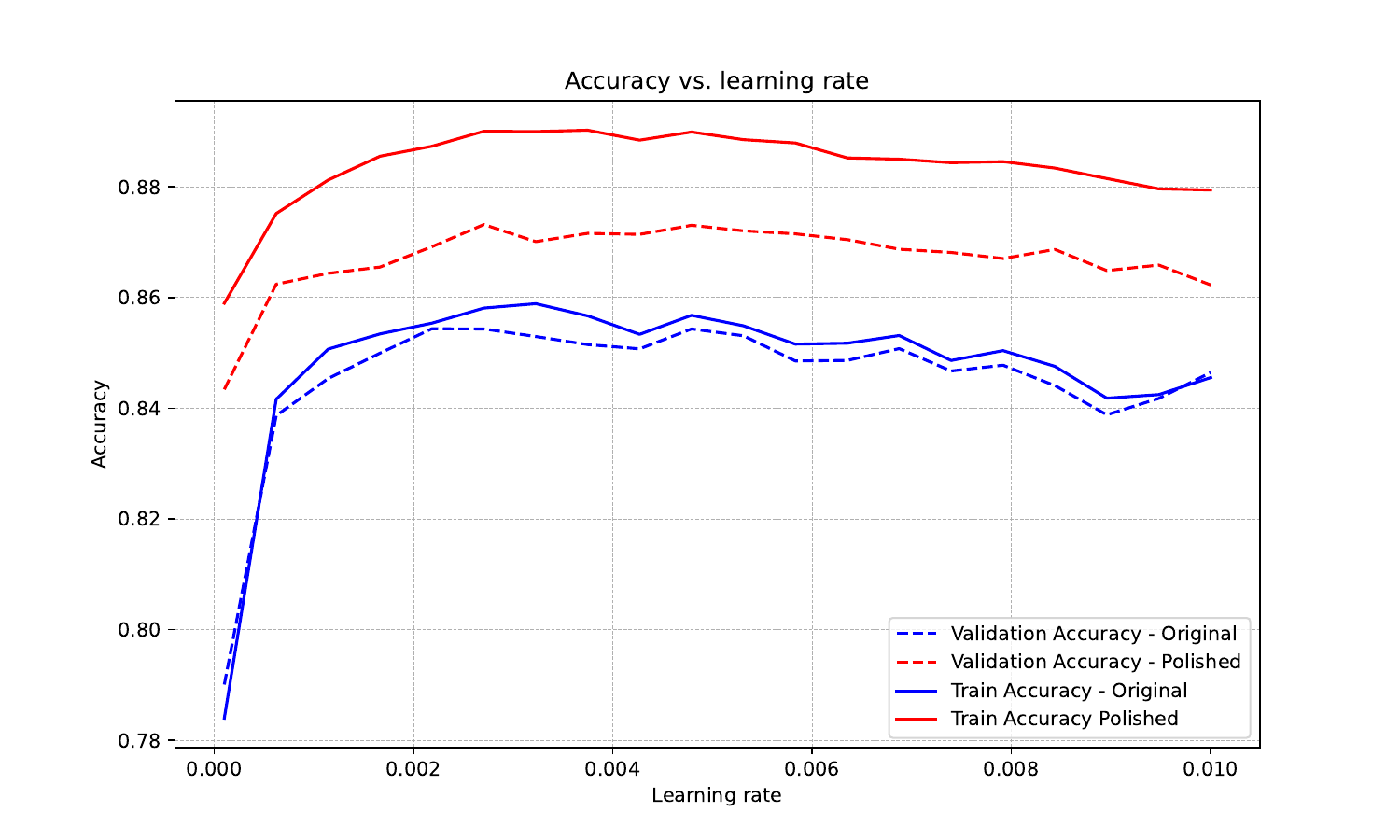}
        \caption{AdamW $\beta_1=0.9$, bs=1024, epochs=10}
    \end{minipage}

    \textbf{(b) Varying number of epochs\\}
    
    \begin{minipage}{0.49\textwidth}
        \centering
        \includegraphics[width=1.1\textwidth,trim=42 15 10 30, clip]{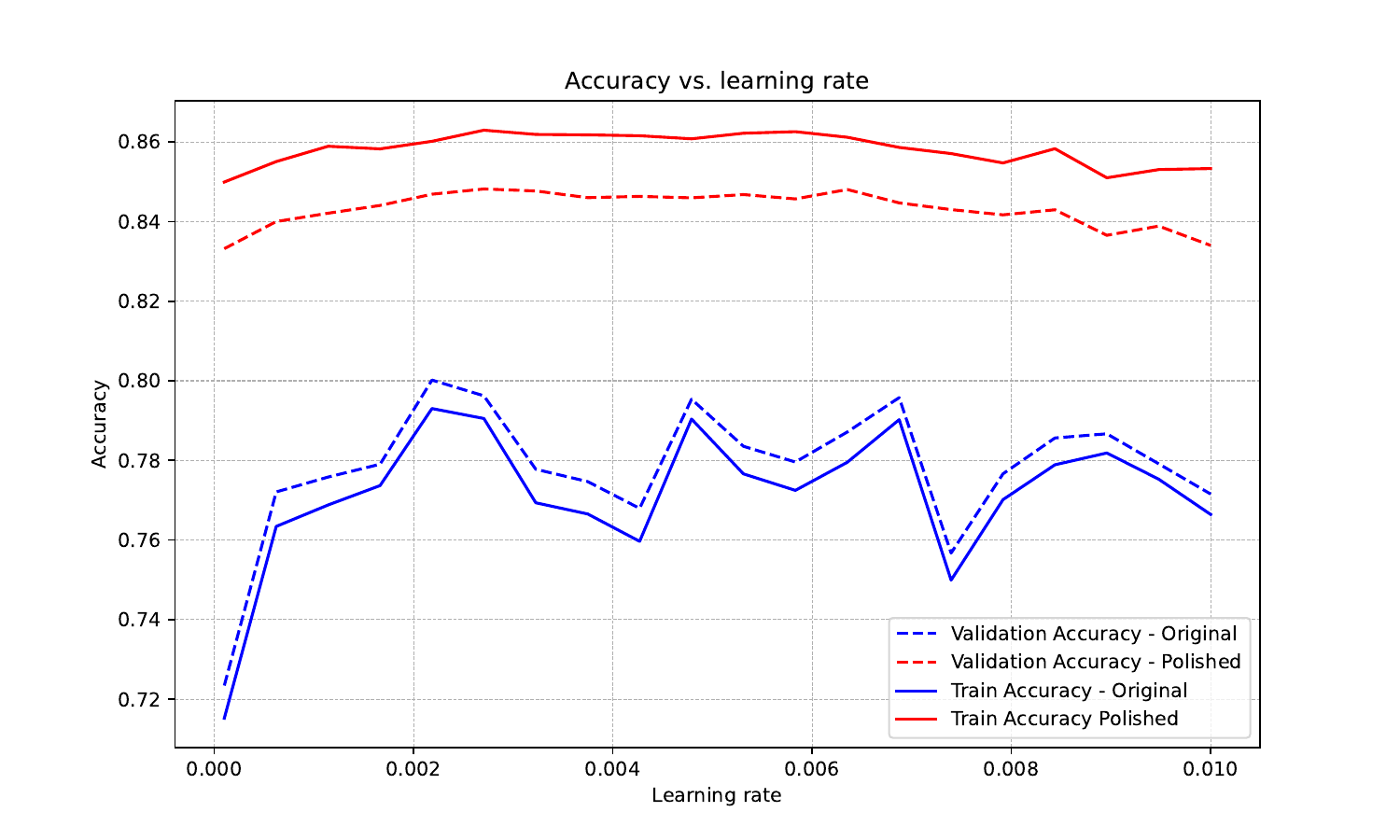}
        \caption{AdamW $\beta_1=0.6$, bs=2048, epochs=5}
    \end{minipage}
    \begin{minipage}{0.49\textwidth}
        \centering
        \includegraphics[width=1.1\textwidth,trim=42 15 10 30, clip]{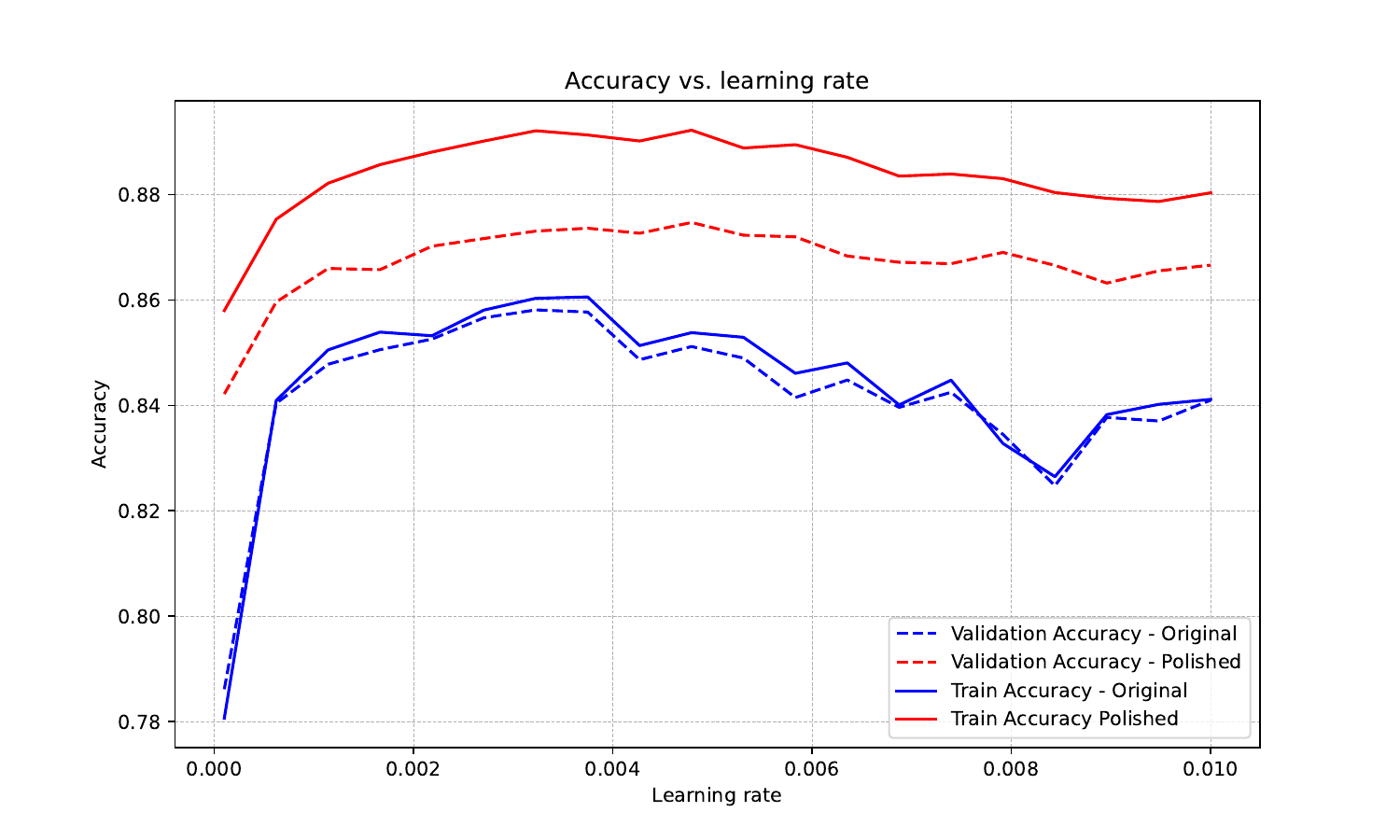}
        \caption{AdamW $\beta_1=0.9$, bs=2048, epochs=20}
    \end{minipage}
\end{figure}
%
\begin{figure}[!hbp]
    \textbf{7.3. Four-layer convolutional neural network for CIFAR class 0 vs class 3\\~\\}
    \textbf{(a) Varying AdamW momentum parameter $\beta_1$ and batch size (bs)}
    
    \begin{minipage}{0.48\textwidth}
        \centering
        \includegraphics[width=1.1\textwidth,trim=42 15 10 30, clip]{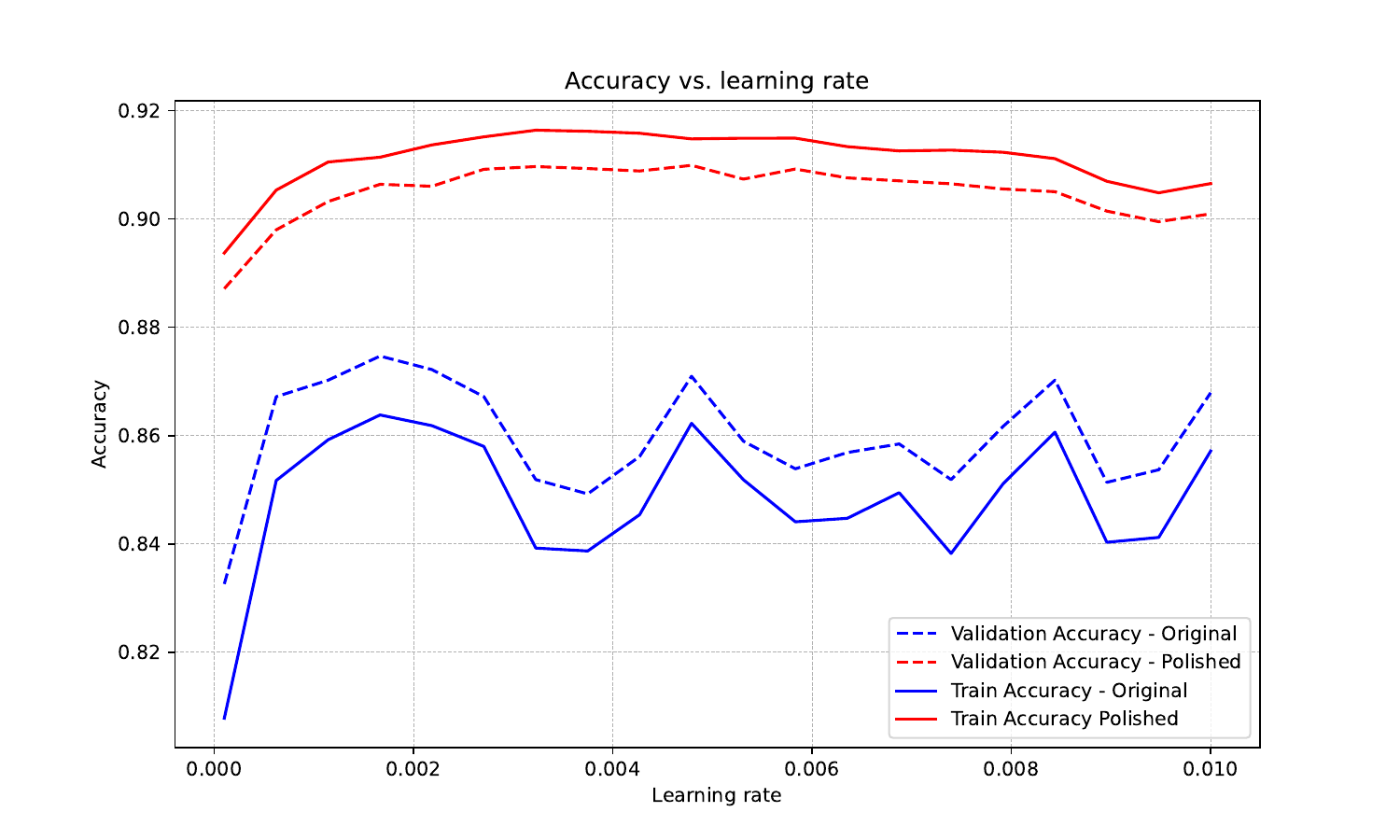}
        \caption{AdamW, $\beta_1=0.9$, bs=2048, epochs=10}
    \end{minipage}%
    \begin{minipage}{0.48\textwidth}
        \centering
        \includegraphics[width=1.1\textwidth,trim=42 15 10 30, clip]{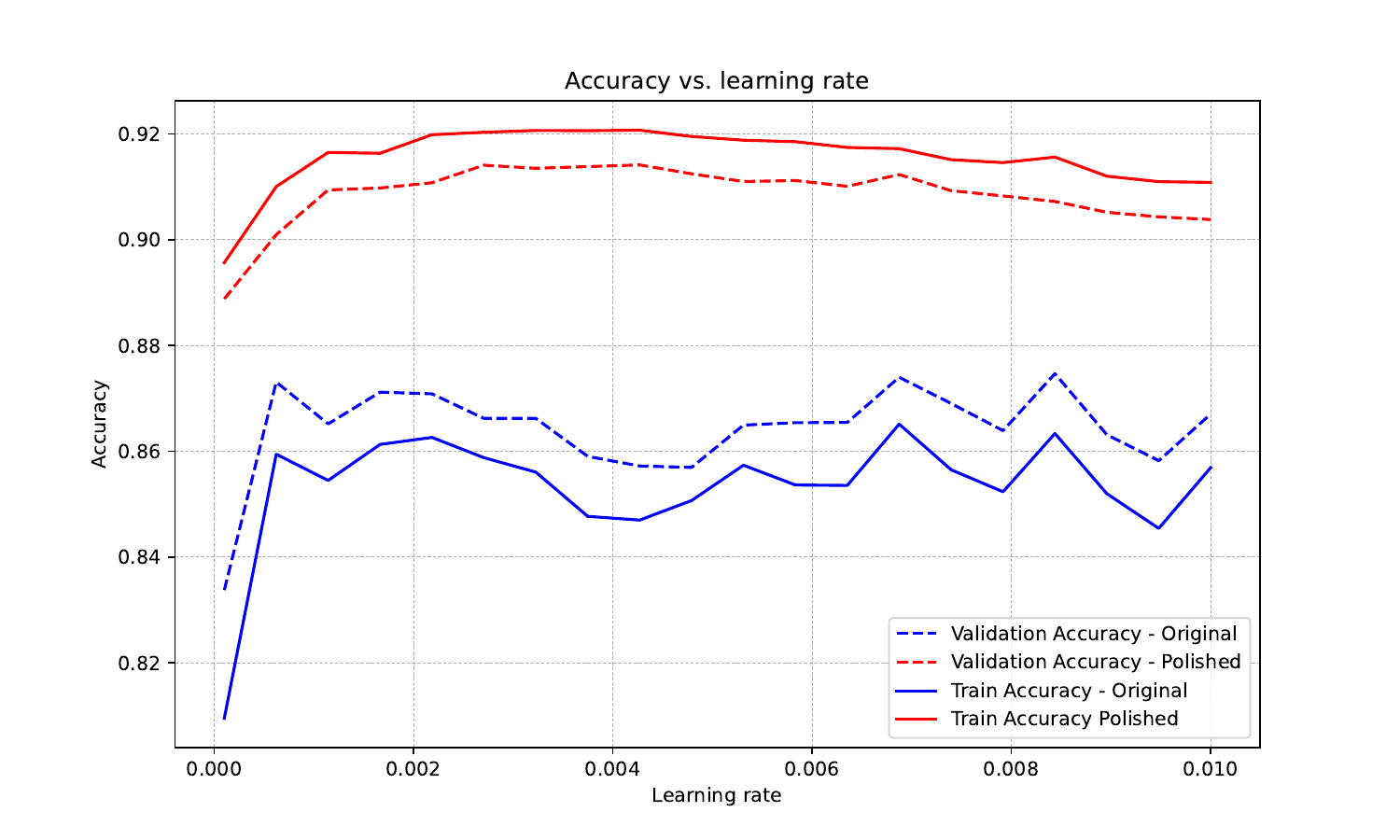}
        \caption{AdamW $\beta_1=0.8$, bs=2048, epochs=10}
    \end{minipage}
    \begin{minipage}{0.48\textwidth}
        \centering
        \includegraphics[width=1.1\textwidth,trim=42 15 10 30, clip]{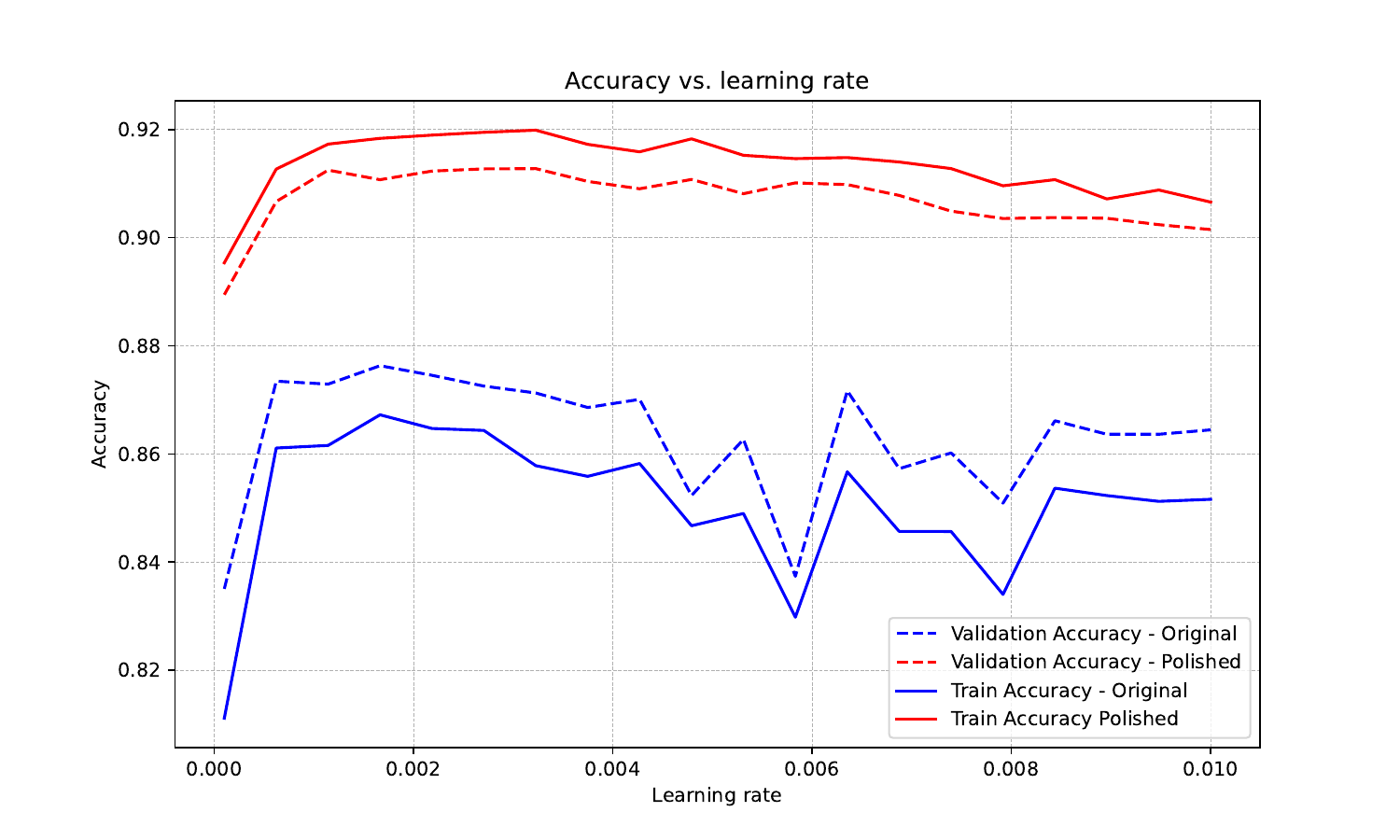}
        \caption{AdamW $\beta_1=0.6$, bs=2048, epochs=10}
    \end{minipage}
    \begin{minipage}{0.48\textwidth}
        \centering
        \includegraphics[width=1.1\textwidth,trim=42 15 10 30, clip]{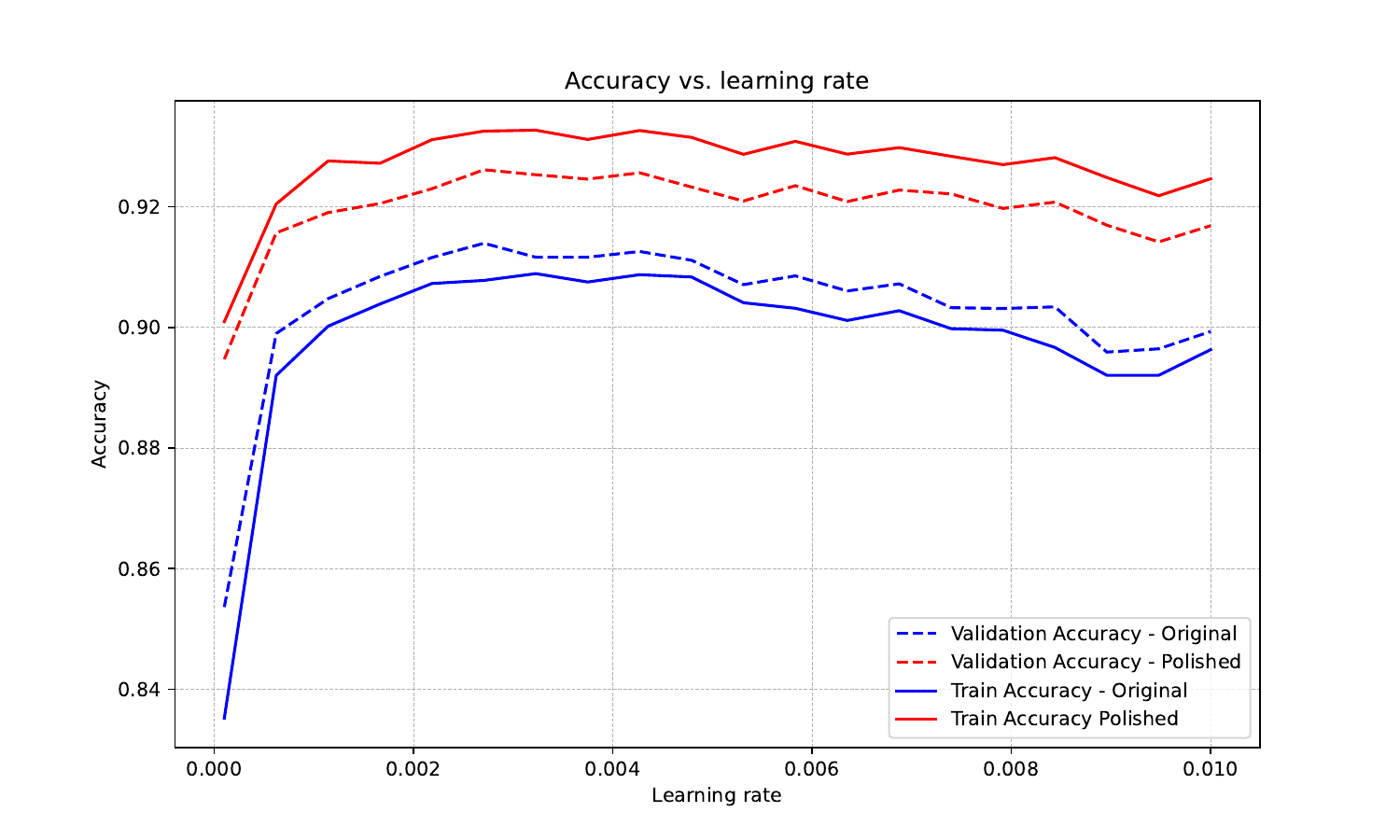}
        \caption{AdamW $\beta_1=0.9$, bs=1024, epochs=10}
    \end{minipage}
    
    \textbf{(b) Varying number of epochs\\}

    \begin{minipage}{0.48\textwidth}
        \centering
        \includegraphics[width=1.1\textwidth,trim=42 15 10 30, clip]{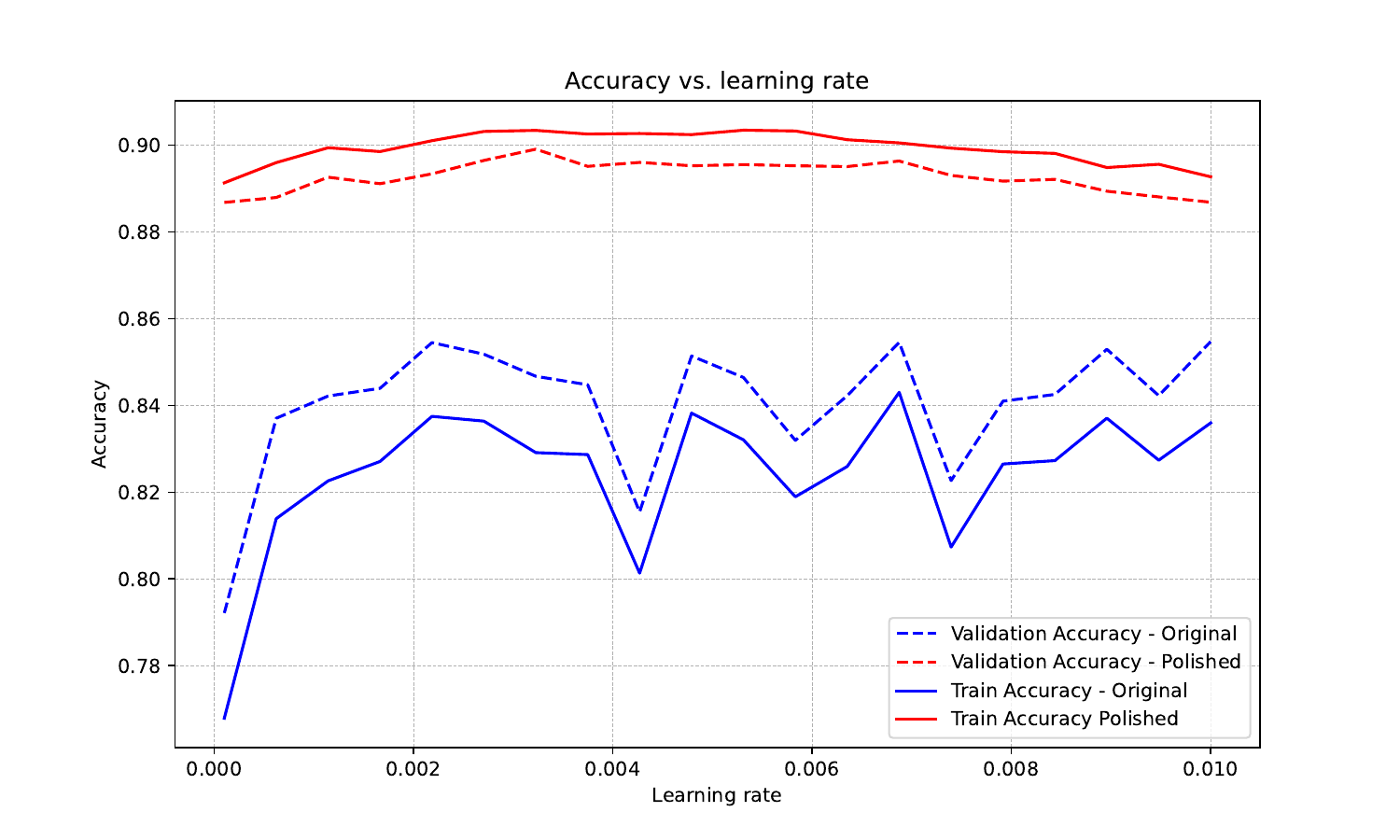}
        \caption{AdamW $\beta_1=0.6$, bs=2048, epochs=5}
    \end{minipage}
    \begin{minipage}{0.48\textwidth}
        \centering
        \includegraphics[width=1.1\textwidth,trim=42 15 10 30, clip]{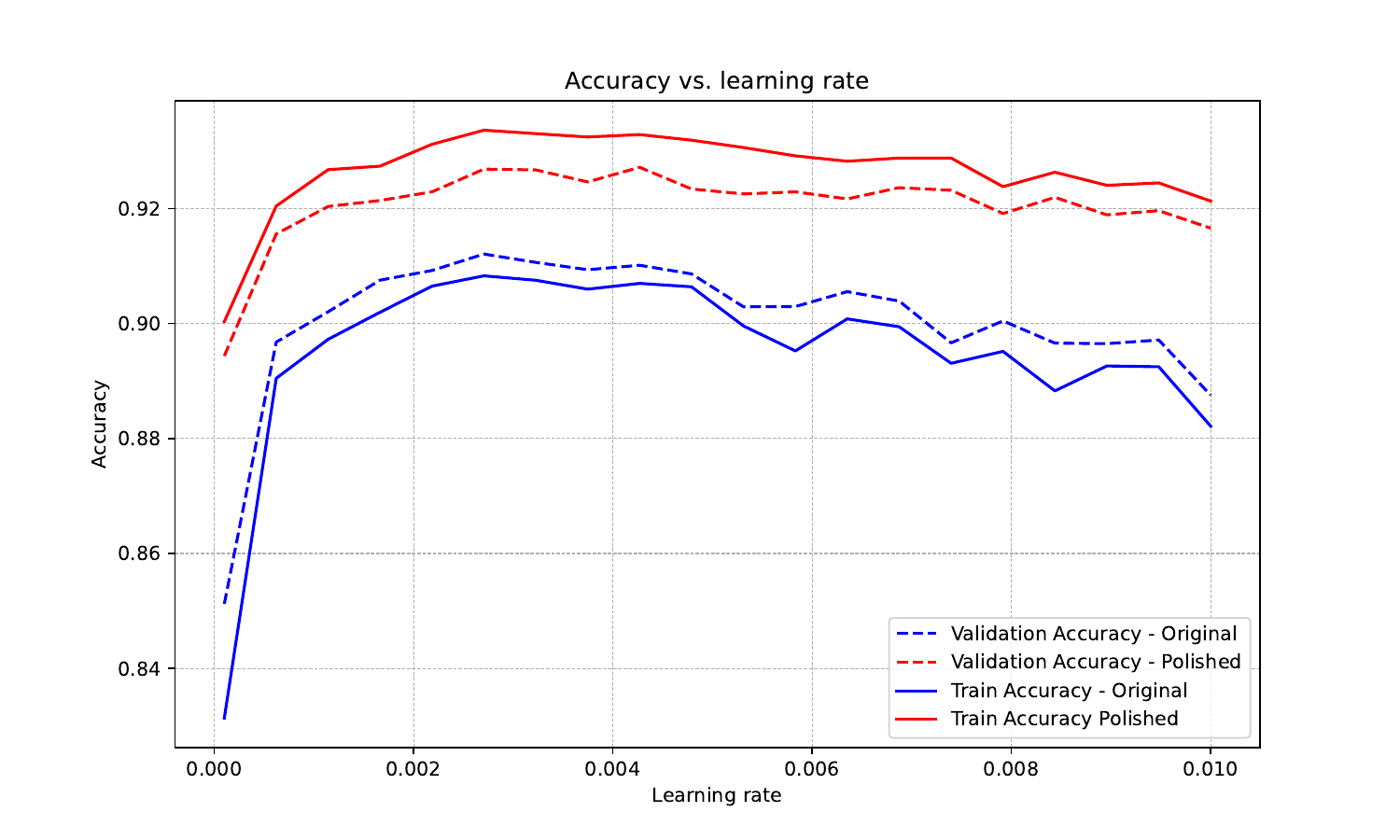}
        \caption{AdamW $\beta_1=0.9$, bs=2048, epochs=20}
    \end{minipage}
\end{figure}

\restoregeometry

\newgeometry{left=1.5cm, right=2.5cm, top=2cm} 
In this section, we vary the optimization hyperparameters, including the momentum parameters $\beta_1, \beta_2$ of AdamW, the number of epochs, batch sizes for a three-layer fully-connected ReLU network with $512$ in each hidden layer, trained for binary classification CIFAR dataset. We consider the task of distinguishing class 1 (automobile) from 8 (ship). We display the average accuracies over 5 independent trials for each learning rate.

\begin{figure}[!h]
    \textbf{7.4. Three-layer fully connected ReLU network for CIFAR class 1 vs class 8\\}
    \textbf{(a) Varying AdamW momentum parameter $\beta_1$ when $\beta_2=0.999$ (default), batch size=2048, epochs=5}
   
    \begin{minipage}{0.49\textwidth}
        \centering
        \includegraphics[width=1\textwidth,trim=42 15 10 30, clip]{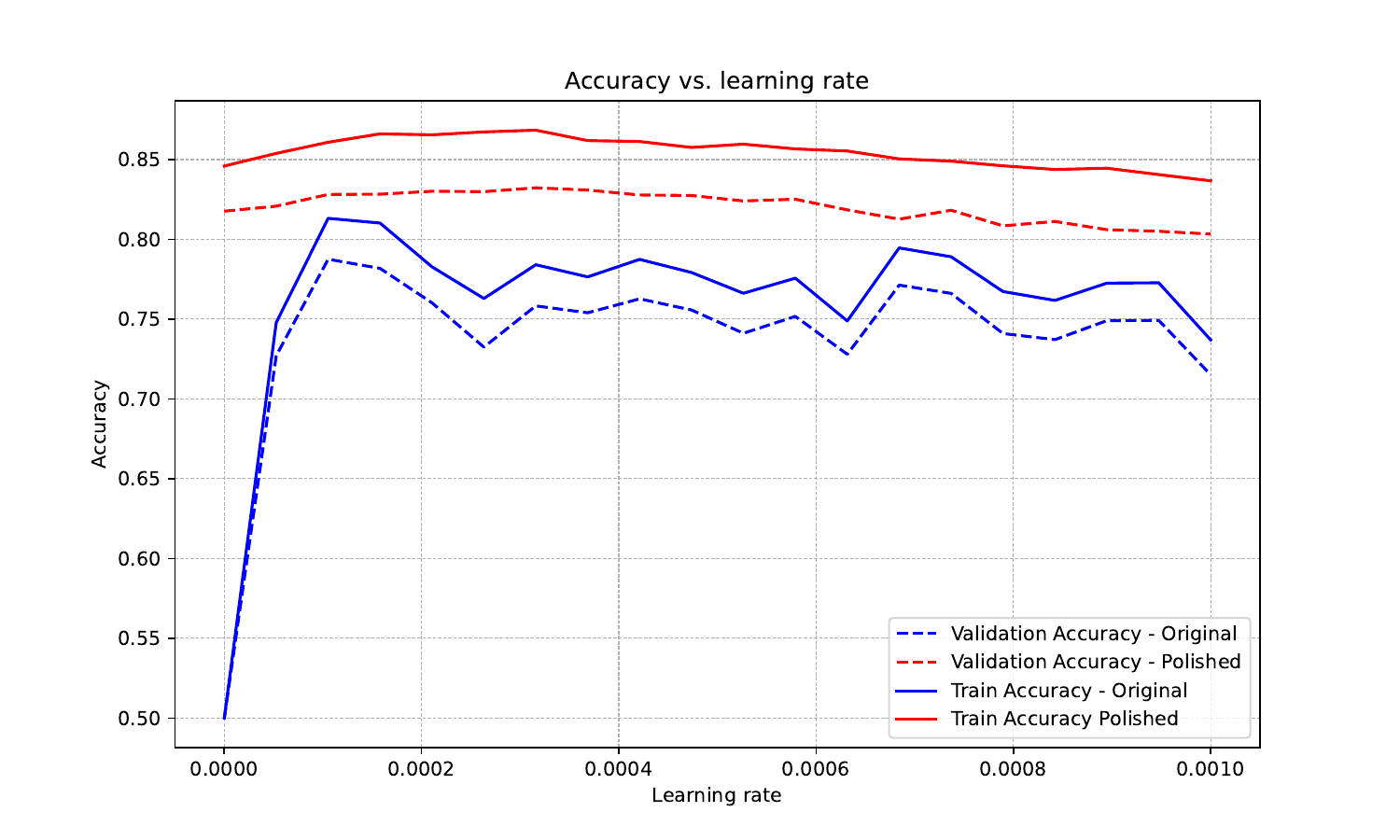}
        \caption{AdamW, $\beta_1=0.3$}
    \end{minipage}%
    \begin{minipage}{0.49\textwidth}
        \centering
        \includegraphics[width=1\textwidth,trim=42 15 10 30, clip]{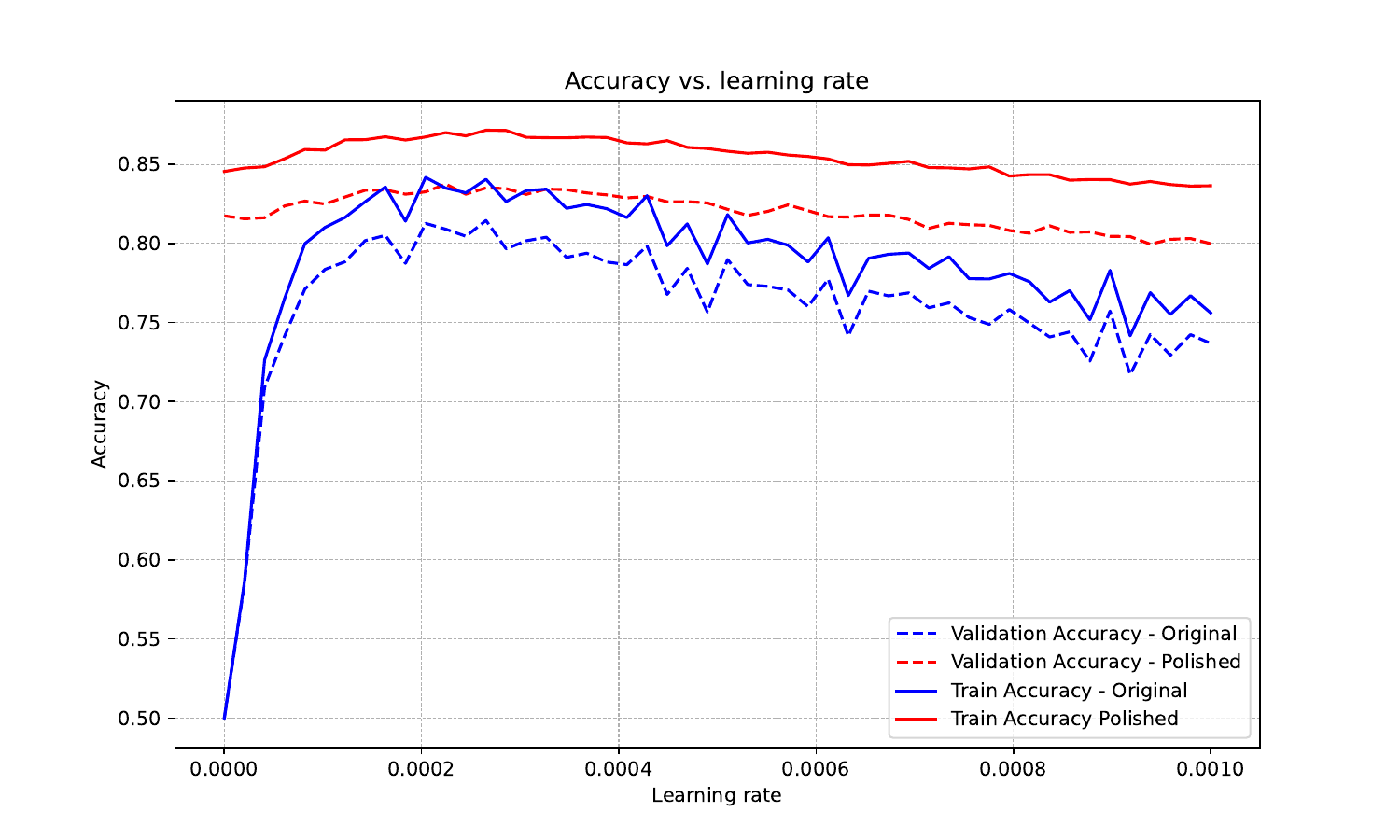}
        \caption{AdamW $\beta_1=0.4$}
    \end{minipage}
    \begin{minipage}{0.49\textwidth}
        \centering
        \includegraphics[width=1\textwidth,trim=42 15 10 30, clip]{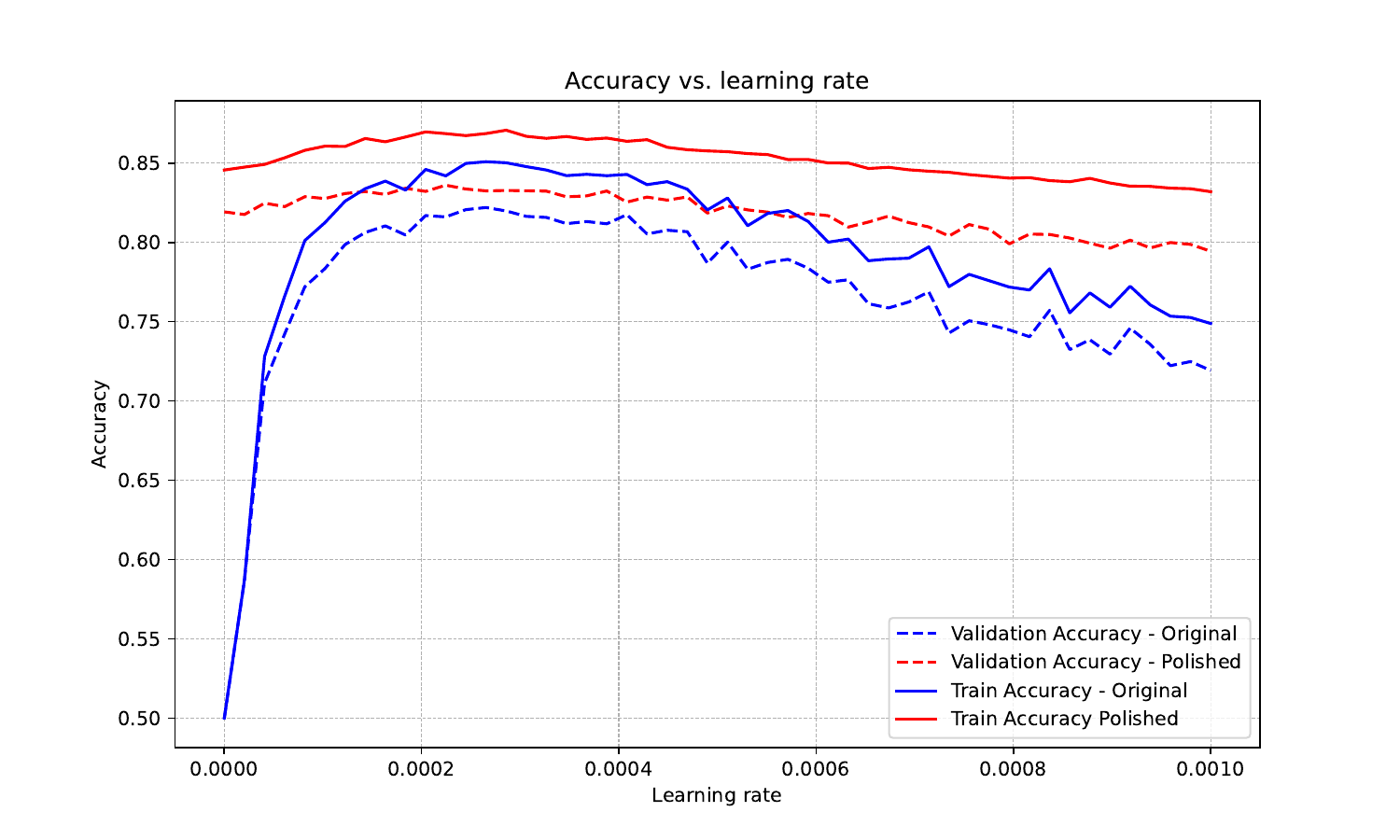}
        \caption{AdamW $\beta_1=0.5$}
    \end{minipage}
    \begin{minipage}{0.49\textwidth}
        \centering
        \includegraphics[width=1\textwidth,trim=42 15 10 30, clip]{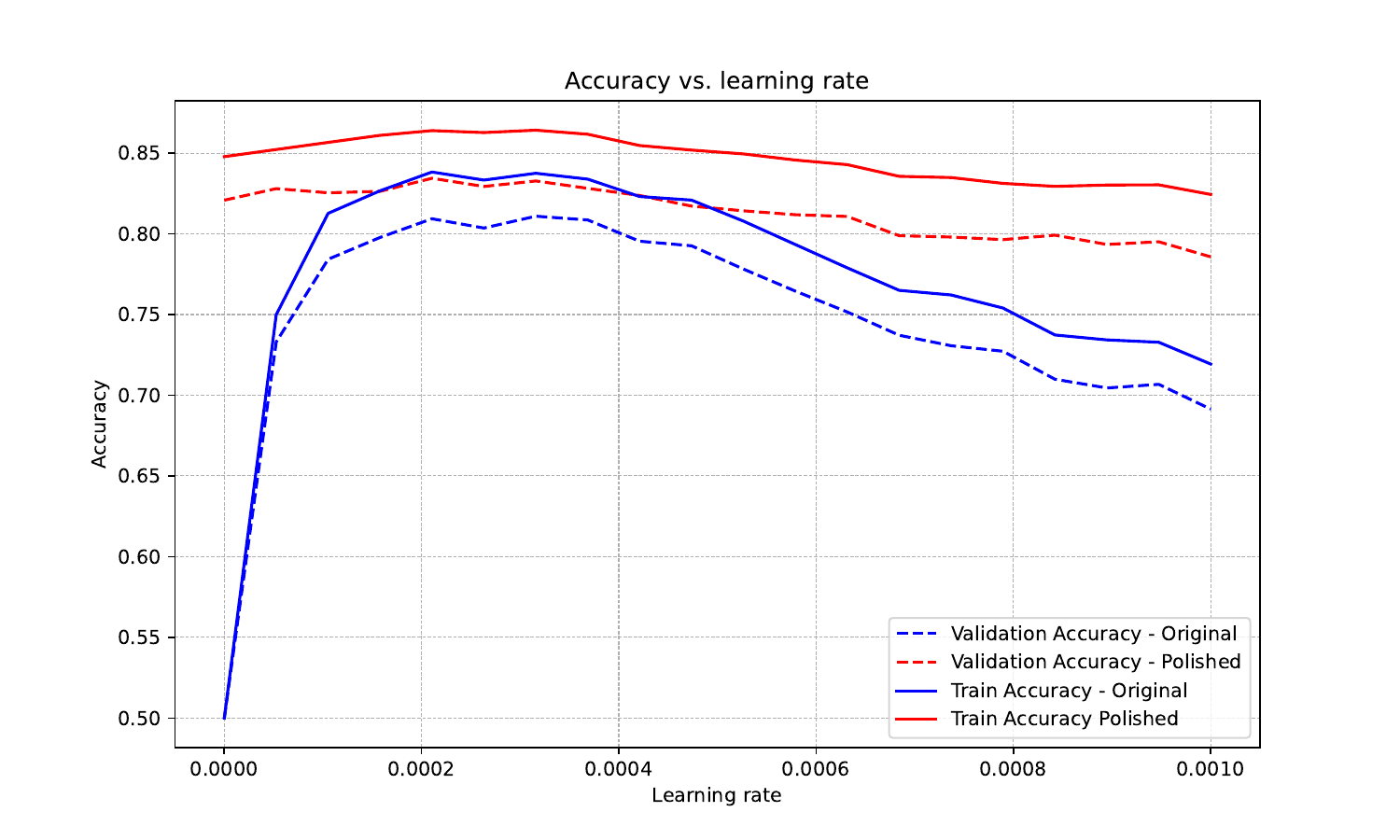}
        \caption{AdamW $\beta_1=0.6$}
    \end{minipage}
    \begin{minipage}{0.49\textwidth}
        \centering
        \includegraphics[width=1\textwidth,trim=42 15 10 30, clip]{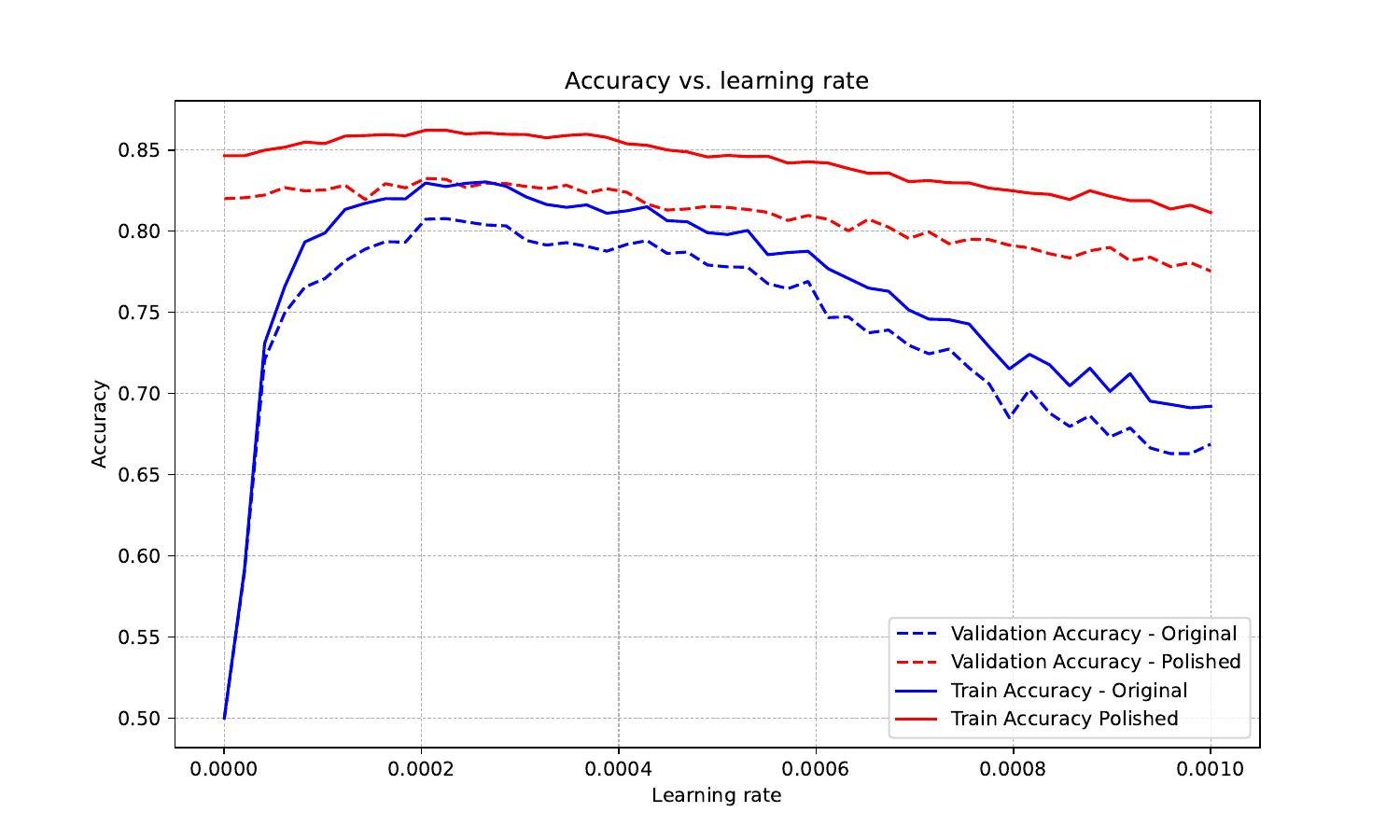}
        \caption{AdamW $\beta_1=0.7$}
    \end{minipage}
    \begin{minipage}{0.49\textwidth}
        \centering
        \includegraphics[width=1\textwidth,trim=42 15 10 30, clip]{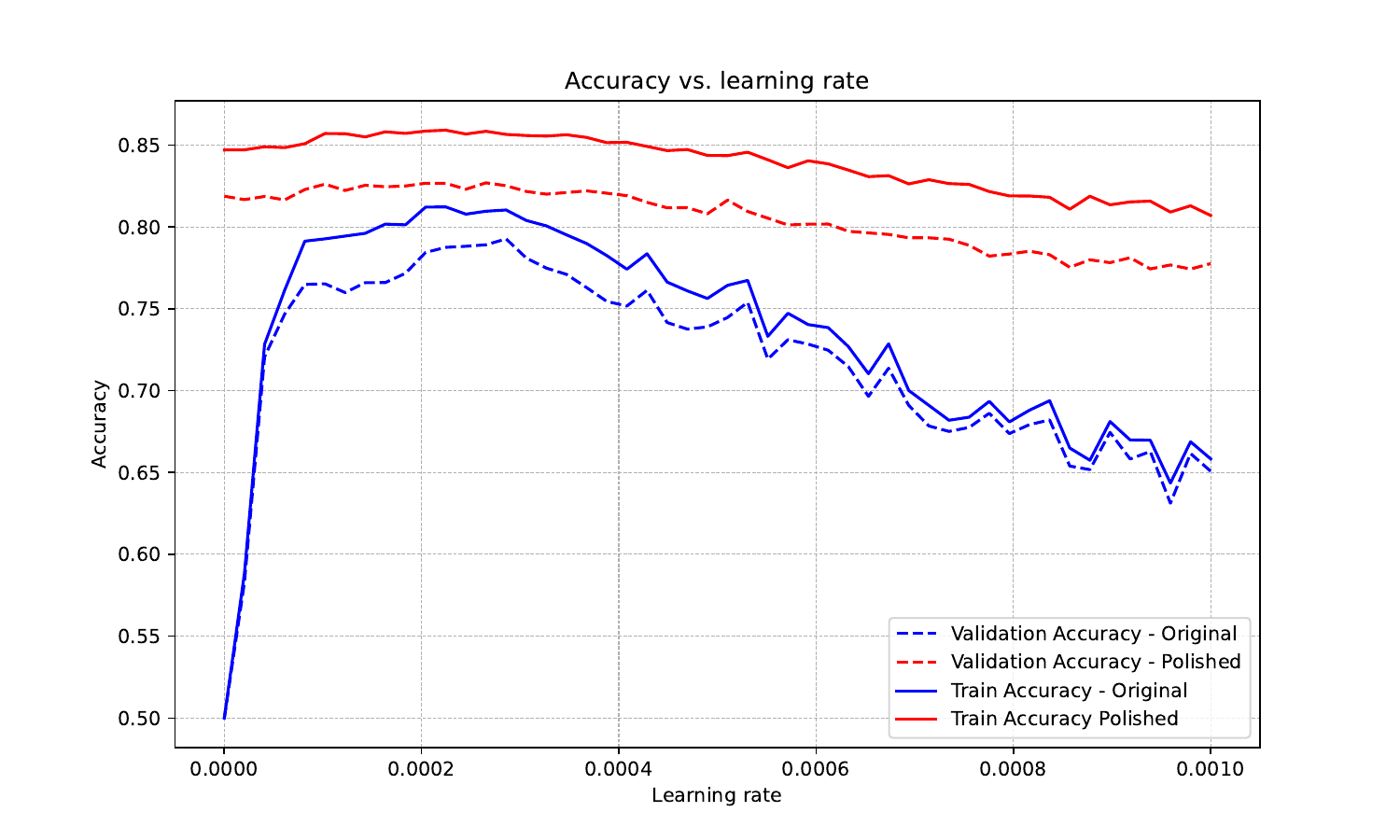}
        \caption{AdamW $\beta_1=0.9$}
    \end{minipage}
\end{figure}
\restoregeometry


\newgeometry{left=1.8cm, right=2cm, top=2cm} 
\noindent\textbf{7.5. Character based language model\\}
In this section, we present additional numerical results for the character-based autoregressive language model.
We compare AdamW with polishing in a small subset of text from Wikipedia, consisting $52000$ characters from the article titled 'Neural network (machine learning)'. The block size is set to $16$ characters and the batch size and learning rate is varied.
\begin{figure}[h!]
    \begin{minipage}{0.49\textwidth}
    \centering
    \includegraphics*[width=8cm, trim={0.03cm 0 0 -1cm}]{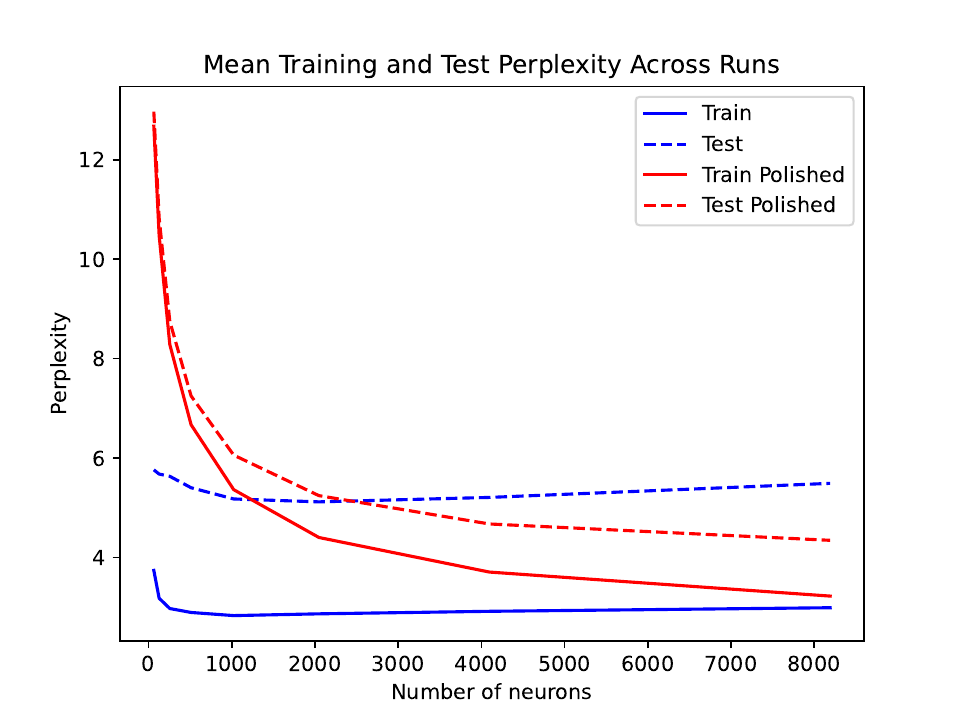}
    \end{minipage}
    \begin{minipage}{0.49\textwidth}
    \centering
    \includegraphics*[width=8cm, trim={0.03cm 0 0 -1cm}]{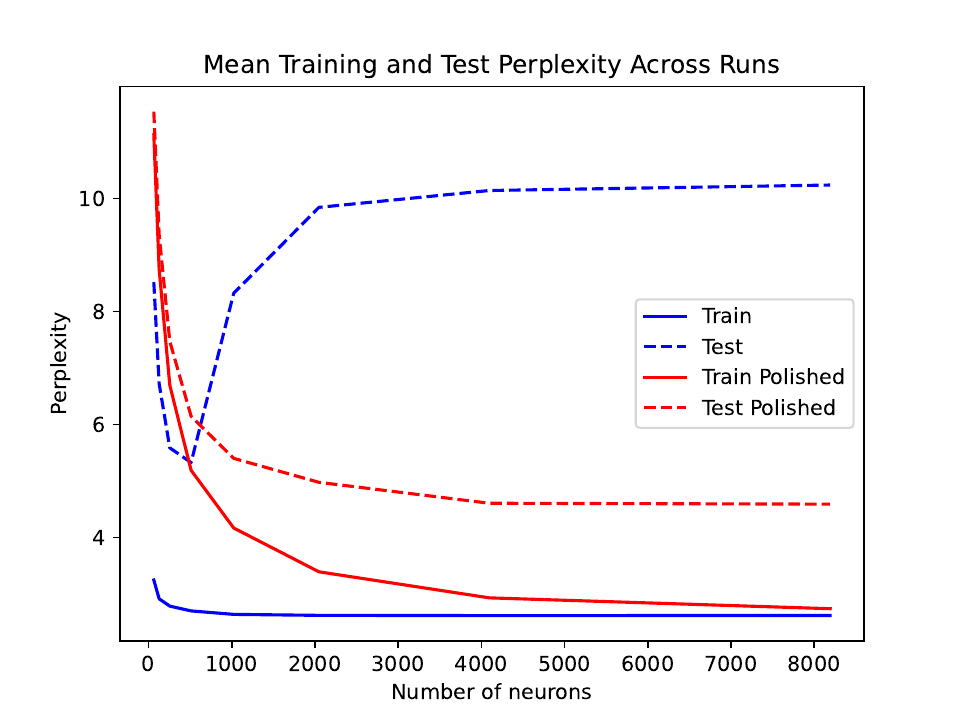}
    \end{minipage}
    \caption{Character-level MLP (left) Adam with learning rate $10^{-4}$, batch size 2048 and (right) AdamW with learning rate $10^{-3}$, batch size 2048. \label{fig:polishing_cifar_adamw}}
\end{figure}

\begin{figure}[h!]
    \begin{minipage}{0.49\textwidth}
    \centering
    \includegraphics*[width=8cm, trim={0.03cm 0 0 -1cm}]{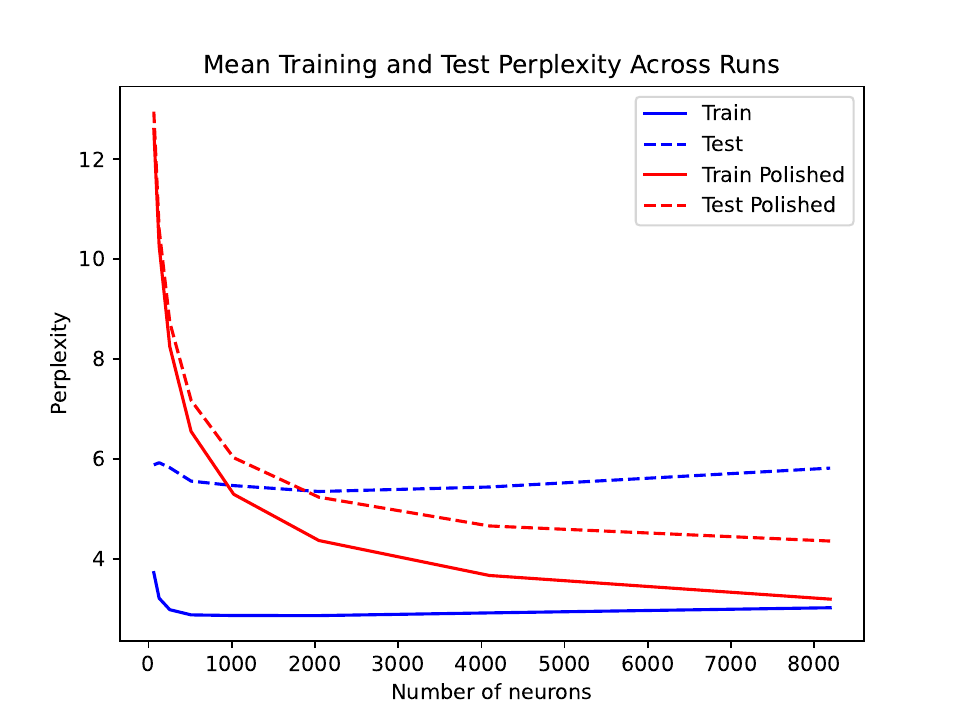}
    \end{minipage}
    \begin{minipage}{0.49\textwidth}
    \centering
    \includegraphics*[width=8cm, trim={0.03cm 0 0 -1cm}]{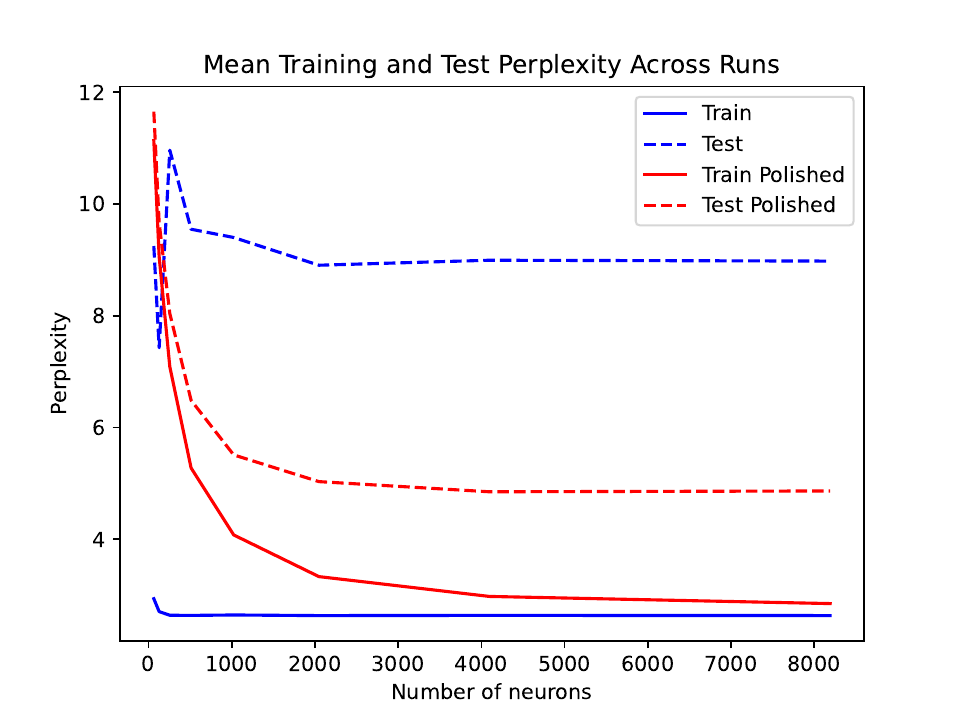}
    \end{minipage}
    \caption{Character-level MLP (left) Adam with learning rate $10^{-4}$, batch size 512 and (right) AdamW with learning rate $10^{-2}$, batch size 2048. \label{fig:polishing_cifar_adamw}}
\end{figure}
\restoregeometry


\section{Appendix II - Additional Theoretical Results}

\subsection{Generalized cross products}
\label{sec:generalized_cross_products}
\begin{definition}
    \label{def:generalized_cross_product}
    Let $x_1,\ldots,x_{d-1}\in\reals^d$ be a collection of $d-1$ vectors and let $A=\begin{bmatrix}x_1,\ldots, x_{d-1}\end{bmatrix}\in\reals^{d\times (d-1)}$ be the matrix
    whose columns are the vectors $\{x_i\}_{i=1}^{d-1}$. The \emph{generalized cross product} \cite{berger2009geometry} of $x_1,\ldots,x_{d-1}$ is defined as
    \begin{align}
        \label{eq:generalized_cross_product}
        x_1\cross \ldots \cross x_{d-1} = \cross(x_1,\ldots,x_{d-1}) \triangleq \sum_{i=1}^{d} (-1)^{i-1}  |A_i|e_i,
    \end{align}
    where $A_i$ is the square matrix obtained from $A$ by deleting the $i$-th row and $\{e_i\}_{i}^d$ is the standard basis of $\reals^d$.
\end{definition}
We next list some properties of the generalized cross product.
\begin{itemize}
    \item The cross product $\cross(x_1,\ldots,x_{d-1})$ is orthogonal to all vectors $x_1,\ldots,x_{d-1}$.
    \item The cross product $\cross(x_1,\ldots,x_{d-1})$ equals zero if and only if $x_1,\ldots,x_{d-1}$ are linearly dependent.
    \item The norm of the cross product is given by the $(d-1)$-volume
    of the parallelotope spanned by $x_1,\ldots,x_{d-1}$, which is defined  as $$P(x_1,\ldots,x_{d-1})\triangleq \Big\{\sum_{i=1}^{d-1} t_i x_i \, |\, 0\le t_i\le 1\,\forall i\in[d-1]\Big\},$$ where $x_1,\ldots,x_{d-1}$ are linearly independent.
    \item The cross product changes its sign when the order of two vectors is interchanged, i.e., $x_1\cross x_2 \cross ... \cross x_d=-x_2\cross x_1 \cross ... \cross x_d$  
    due to the determinant expansion of the cross product in \eqref{eq:generalized_cross_product}.
    \item Inner-product with a vector gives the $d$-volume of the parallelotope spanned by $x_1,\ldots,x_{d-1}$ and the vector, i.e.,
    \begin{align*}
        \cross(x_1,\ldots,x_{d-1})^T x = \det\Big[ x ~~ x_1 ~~ \ldots ~~ x_{d-1}\Big] = \Vol\big(P(x,x_1,\ldots,x_{d-1})\big).
    \end{align*}
    \item Distance of a vector $x$ to the linear span of a collection of vectors $x_1,...,x_{d-1}$ is given by
    \begin{align*}
        \dist(x,\Span(x_1,\ldots,x_{d-1})) = \frac{\Vol\big(P(x,x_1,\ldots,x_{d-1})\big)}{\Vol\big(P(x_1,\ldots,x_{d-1})\big)} =  \frac{\cross(x_1,\ldots,x_{d-1})^T x}{\|\cross(x_1,\ldots,x_{d-1})\|_2}.
    \end{align*}
    \item Distance of a vector $x$ to the affine hull of a collection of vectors $x_1,...,x_{d}$ is given by
    \begin{align*}
        \dist(x,\Aff(x_1,\ldots,x_{d-1})) &= \frac{\Vol\big(P(x-x_d,x_1-x_d,\ldots,x_{d-1})\big)}{\Vol\big(P(x_1-x_d,\ldots,x_{d-1}-x_d)\big)} \\
        &=  \frac{\cross(x_1-x_d,\ldots,x_{d-1}-x_d)^T (x-x_d)}{\|\cross(x_1-x_d,\ldots,x_{d-1}-x_d)\|_2}\\
        &=\frac{\star\big((x_1-x_d) \wedge \ldots \wedge (x_{d-1}-x_d) \wedge (x-x_d)\big)}{\|(x_1-x_d) \wedge \ldots \wedge (x_{d-1}-x_d)\|_2}.
    \end{align*}
    \item {\trackchange The rejection of a vector $x$ from the affine hull of a collection of vectors $x_1,...,x_{d}$ can be written using Geometric Algebra $\mathbb{G}^d$ as
    \begin{align*}
        x-\mathbf{Proj}_{\mathbf{Aff}(x_1,...,x_d)}(x) = \frac{(x_1-x_d) \wedge \ldots \wedge (x_{d-1}-x_d) \wedge (x-x_d)}{(x_1-x_d) \wedge \ldots \wedge (x_{d-1}-x_d)}.
    \end{align*}
    }
    \end{itemize}
When $d=3$, the cross product reduces to the usual cross product of three vectors in $\reals^3$. For example,
\begin{align*}
    \begin{bmatrix} a \\ b \\ c \end{bmatrix} \cross \begin{bmatrix} a^\prime \\ b^\prime \\ c^\prime \end{bmatrix} = 
    \begin{bmatrix} bc^\prime-b^\prime c  \\ca^\prime-c^\prime a\\ ab^\prime - a^\prime b \end{bmatrix}.
\end{align*} 
When $d=2$, the cross product of a vector $\cross x$ is rotation by a right angle in the clockwise direction in the plane. For example,
\begin{align*}
    \times\Big( \begin{bmatrix} a \\ b \end{bmatrix} \Big)  = \begin{bmatrix} b \\ -a \end{bmatrix}.
\end{align*} 
Detailed derivations of these results as well as further properties of generalized cross products can be found in Section 8.11 of \cite{berger2009geometry}, and their connections to the volumes of 
parallelotopes and zonotopes can be found in \cite{gover2010determinants}.


\begin{figure}[t!]
    \centering
    \begin{minipage}{0.49\textwidth}
        \centering
        \begin{tikzpicture}
            \foreach \angle in { 7.91635024,  12.24414396,  24.97835178,
            31.57205615,  40.51264181,  56.76749865,  57.54996478,
            60.34668904,  63.91000046,  64.16055723,  67.04347013,
            76.85965361,  81.81682205,  87.25648137,
            98.26017847, 105.31243221, 118.74695097, 124.70932863,
           137.07441666, 138.40726944, 143.24685651, 156.85056497,
           157.33508429, 160.91623224, 167.41702786,
           171.94009568}
            \draw[thick,-] (\angle:2cm) -- (180+\angle:2cm);
            \foreach \angle in { 7.91635024,  180+12.24414396,  24.97835178,
            31.57205615,  180+40.51264181,  180+56.76749865,  180+57.54996478,
            60.34668904,  180+63.91000046,  64.16055723,  180+67.04347013,
            76.85965361,  180+81.81682205,  87.25648137,
            180+98.26017847, 180+105.31243221, 118.74695097, 124.70932863,
           137.07441666, 180+138.40726944, 143.24685651, 180+156.85056497,
           180+157.33508429, 180+160.91623224, 167.41702786,
           171.94009568}
            \fill (\angle:2cm) circle (2pt);
            
          \fill (40.51264181:2cm) circle (2pt);
          \fill (56.76749865:2cm) circle (2pt);
            \node at (180+-7+40.51264181:2.35cm) {$\theta_i$};
            \node at (-10+40.51264181:2.75cm) {$\theta_i+\pi$};
            \node at (10+56.76749865:2.55cm) {$\theta_{i+1}$};
            \draw[->, ultra thick, red] (0,0) -- (180+40.51264181:3.0cm);
            \draw[->, ultra thick, red] (0,0) -- (56.76749865:3.25cm);
            \node at (8+180+40.51264181:2.2cm) {$\x_i$};
            \node at (56.76749865:3.55cm) {$\x_{i+1}$};
            \draw[<->, ultra thick, blue] (40.51264181:2cm) arc (40.51264181:56.76749865:2cm);
            \node at (48.5:2.35cm) {$\epsilon\pi $};
          \end{tikzpicture}
    \end{minipage}
    \begin{minipage}{0.49\textwidth}
        \centering
        \begin{tikzpicture}
          \draw[thick] (0,0) circle (2cm);
          \foreach \angle in {1.2120571 ,   8.91635024,  20.24414396,  27.97835178,
          33.57205615,  35.51264181,  56.76749865,  57.54996478,
          60.34668904,  63.91000046,  64.16055723,  67.04347013,
          76.85965361,  81.81682205,  83.43313216,  87.25648137,
          98.26017847, 105.31243221, 118.74695097, 124.70932863,
         137.07441666, 138.40726944, 143.24685651, 156.85056497,
         157.33508429, 160.91623224, 162.97862701, 167.41702786,
         171.94009568, 177.22430291}
          \draw[thick,-] (\angle:2cm) -- (180+\angle:2cm);
            \draw[ultra thick,red,-] (35.51264181:2cm) -- (56.76749865:2cm);
         \draw[ultra thick,red,-] (180+35.51264181:2cm) -- (180+56.76749865:2cm);
        \fill (35.51264181:2cm) circle (2pt);
        \fill (56.76749865:2cm) circle (2pt);
        \fill (180+35.51264181:2cm) circle (2pt);
        \fill (180+56.76749865:2cm) circle (2pt);
          \node at (35.51264181:2.35cm) {$A$};
          \node at (56.76749865:2.35cm) {$B$};
        \node at (180+35.51264181:2.35cm) {$A^\prime$};
        \node at (180+56.76749865:2.35cm) {$B^\prime$};
        \end{tikzpicture}
        \end{minipage}
    \caption{(left) An illustration of the angular dispersion condition. The angle between the span of two consecutive vectors $\x_i$ and $\x_{i+1}$ is bounded by $\epsilon \pi $. (right) The maximum chamber diameter of this line arrangement is the Euclidean distance between $A$ and $B$, i.e., $\mathscr{D}(X)=\|A-B\|$. \label{fig:angular_dispersion}\label{fig:chamber_diameter}}
    \end{figure}
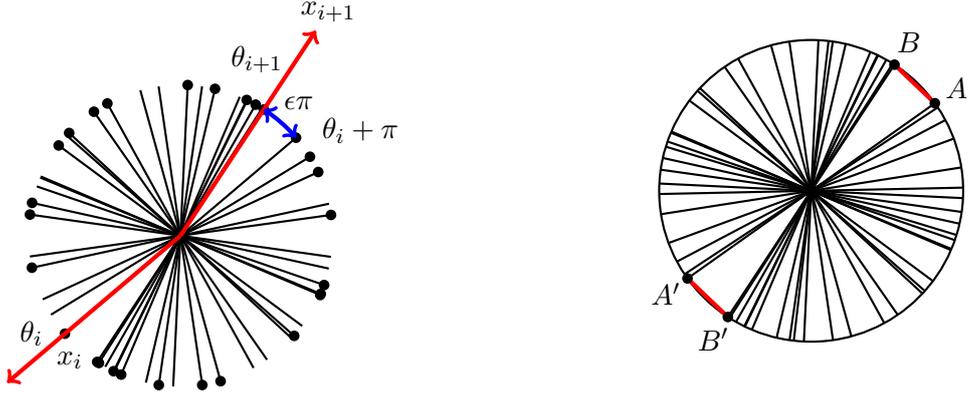

\subsection{Data isometry and chamber diameter}
\label{sec:data_isometry}
\subsubsection{Bounding the chamber diameter}
Results from high dimensional probability can be used to establish bounds on the chamber diameter of a hyperplane arrangement generated by a random collection of training points. A similar analysis was considered in \cite{plan2014dimension} for the purpose of dimension reduction.

We first show that the chamber diameter is small when the training dataset satisfies an isometry condition. 
\begin{lemma}[Isometry implies small chamber diameter]
    \label{lem:conc_chamb_diam}
    Suppose that the following condition holds
    \begin{align}
        \label{eq:l1-isometry}
        (1-\epsilon) \|w\|_2 \le \frac{1}{\alpha n}\|Xw\|_1 \le  (1+\epsilon) \|\w\|_2,\quad \forall \w \in \reals^d,
    \end{align}
where $\alpha\in\reals$ is fixed. Then, the chamber diameter $\mathscr{D}(X)$ defined in \eqref{eqn:defn_chamber_diam}
is bounded by $4\sqrt{\epsilon}$.
\end{lemma}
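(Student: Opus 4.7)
The plan is to exploit the fact that the definition of $\mathscr{D}(X)$ restricts attention to unit vectors $w,v$ that induce the \emph{same} sign pattern on the rows of $X$, i.e.\ $\mathrm{sign}(x_i^T w)=\mathrm{sign}(x_i^T v)$ for all $i\in[n]$. Under this restriction, every row satisfies $x_i^T w$ and $x_i^T v$ having matching signs, and therefore
\[
|x_i^T(w+v)| \;=\; |x_i^T w|+|x_i^T v|,
\]
which is the crucial equality case of the triangle inequality. Summing over $i$ gives
$\|X(w+v)\|_1 = \|Xw\|_1+\|Xv\|_1$. This is the central identity that I would place at the start of the argument, since it converts the combinatorial sign-pattern constraint into a clean analytic equality.

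Next I would apply the assumed $\ell_1$-isometry \eqref{eq:l1-isometry} to both sides of the equality. The right-hand side is lower-bounded using the lower isometry at $w$ and $v$ separately: $\|Xw\|_1+\|Xv\|_1 \ge (1-\epsilon)\alpha n(\|w\|_2+\|v\|_2) = 2(1-\epsilon)\alpha n$, using $\|w\|_2=\|v\|_2=1$. The left-hand side is upper-bounded using the upper isometry at $w+v$: $\|X(w+v)\|_1 \le (1+\epsilon)\alpha n \|w+v\|_2$. Combining and cancelling $\alpha n$ yields
\[
\|w+v\|_2 \;\ge\; \frac{2(1-\epsilon)}{1+\epsilon}.
\]

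Finally I would invoke the parallelogram identity $\|w-v\|_2^2+\|w+v\|_2^2 = 2\|w\|_2^2+2\|v\|_2^2 = 4$ (again using unit norms), which gives
\[
\|w-v\|_2^2 \;\le\; 4-\frac{4(1-\epsilon)^2}{(1+\epsilon)^2} \;=\; \frac{16\epsilon}{(1+\epsilon)^2} \;\le\; 16\epsilon,
\]
so $\|w-v\|_2\le 4\sqrt{\epsilon}$. Taking the maximum over admissible $w,v$ yields $\mathscr{D}(X)\le 4\sqrt{\epsilon}$.

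I do not anticipate a substantial obstacle; the only conceptual step is recognizing that the sign-pattern constraint is exactly what makes the $\ell_1$ triangle inequality tight, and once that observation is made, the isometry plus the parallelogram identity finish the proof mechanically. The constant $4$ is essentially sharp in this argument because it comes from $\sqrt{16\epsilon/(1+\epsilon)^2}$, which is tight to leading order in $\epsilon$.
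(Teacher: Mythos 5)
Your proof is correct and follows essentially the same route as the paper's: both arguments hinge on the observation that the shared sign pattern makes the $\ell_1$ triangle inequality an equality, $\|X(w+v)\|_1=\|Xw\|_1+\|Xv\|_1$, then sandwich via the isometry applied to $w$, $v$, and $w+v$ (the paper uses the midpoint $w_a=\tfrac12(w+v)$, which is the same up to scaling), and close with the parallelogram relation $\|w-v\|_2^2=4-\|w+v\|_2^2$ (the paper phrases this as computing $w^Tv$ from $\|w_a\|_2^2$, but it is the identical algebra). Your final step $\|w-v\|_2^2\le 16\epsilon/(1+\epsilon)^2\le 16\epsilon$ is an exact computation where the paper inserts the slightly looser bound $\bigl(\tfrac{1-\epsilon}{1+\epsilon}\bigr)^2\ge 1-4\epsilon$, but both arrive at $4\sqrt{\epsilon}$.
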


Next, we show that the isometry condition is satisfied when the training dataset is generated from a random distribution. Consequently, we obtain a bound on the chamber diameter of the hyperplane arrangement generated by the random dataset.
\begin{lemma}[Random datasets have small chamber diameter]
    \label{lem:random_diam}
    Suppose that $x_1,\cdots,x_n\sim \mathcal{N}(0,I_d)$ are $n$ random vectors sampled from the standard $d$-dimensional multivariate normal distribution and let $X=[x_1,\cdots,x_n]^T$. Then, the $\ell_2$ diameter $\mathscr{D}(X)$ satisfies
    \begin{align}
        \mathscr{D}(X) \le 9 \Big( \frac{d}{n}\Big)^{1/4},    
    \end{align}
with probability at least $1-2e^{-d/2}$. 
\end{lemma}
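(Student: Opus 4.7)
The plan is to combine the deterministic Lemma \ref{lem:conc_chamb_diam} with a probabilistic estimate: it suffices to show that a standard Gaussian design matrix $X$ satisfies the $\ell_1$-isometry condition \eqref{eq:l1-isometry} with $\alpha = \sqrt{2/\pi}$ and $\epsilon \lesssim \sqrt{d/n}$ on the high-probability event stated. Indeed, combining that isometry with Lemma \ref{lem:conc_chamb_diam} yields $\mathscr{D}(X) \le 4\sqrt{\epsilon} \lesssim (d/n)^{1/4}$, with the final constant $9$ obtained by bookkeeping below.

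First I would fix a unit vector $w \in S^{d-1}$. Then $Xw$ has i.i.d.\ standard normal entries, so $\mathbb{E}\|Xw\|_1 = n\sqrt{2/\pi}$. Viewing $f(X) \triangleq \|Xw\|_1$ as a function of the Gaussian matrix $X$, a short computation shows $f$ is $\sqrt{n}$-Lipschitz in $X$ (with respect to the Frobenius norm), since $|f(X) - f(X')| \le \|(X-X')w\|_1 \le \sqrt{n}\,\|(X-X')w\|_2 \le \sqrt{n}\,\|X-X'\|_F$. By Gaussian concentration,
\begin{equation*}
\Pr\!\left( \Big| \tfrac{1}{n}\|Xw\|_1 - \sqrt{2/\pi} \Big| \ge t \right) \le 2 e^{-nt^2/2}.
\end{equation*}

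Next I would upgrade this pointwise bound to a uniform one over the sphere by a standard $\eta$-net argument. Let $\mathcal{N}_\eta \subset S^{d-1}$ be a minimal $\eta$-net, so $|\mathcal{N}_\eta| \le (3/\eta)^d$. A union bound gives, with failure probability at most $2(3/\eta)^d e^{-nt^2/2}$, the estimate $|n^{-1}\|Xw\|_1 - \sqrt{2/\pi}| \le t$ for all $w \in \mathcal{N}_\eta$. Extending from the net to the whole sphere uses the Lipschitz constant of $w \mapsto n^{-1}\|Xw\|_1$, which is at most $n^{-1}\|X\|_{2\to 1} \le n^{-1/2}\|X\|_{op}$; since $\|X\|_{op} \le \sqrt{n} + C_0\sqrt{d}$ with very high probability (by a standard operator-norm bound for Gaussian matrices), this Lipschitz constant is $O(1)$ on the good event. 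Choosing $\eta$ a fixed constant (e.g.\ $1/4$) and $t$ of order $\sqrt{d/n}$ so that $nt^2/2 \ge d\log(12) + d/2$ makes the failure probability at most $2 e^{-d/2}$, yielding the uniform $\ell_1$-isometry with $\epsilon \le C_1 \sqrt{d/n}$ for an explicit constant $C_1$.

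Applying Lemma \ref{lem:conc_chamb_diam} with this $\epsilon$ gives $\mathscr{D}(X) \le 4\sqrt{C_1}\,(d/n)^{1/4}$, and I would check that the constants can be arranged so that $4\sqrt{C_1} \le 9$, using the explicit bounds on the covering number and on $\|X\|_{op}$. The main obstacle is a purely quantitative one, namely making the constants in the net/Lipschitz argument tight enough to land at $9(d/n)^{1/4}$ with failure probability $2e^{-d/2}$ rather than merely $O((d/n)^{1/4})$ with probability $1-e^{-cd}$; this is not conceptually hard but requires carefully optimizing $\eta$ and bounding $\|X\|_{op}$ on the same high-probability event where the net estimate holds.
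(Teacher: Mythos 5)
Your overall strategy mirrors the paper's: establish the $\ell_1$-isometry \eqref{eq:l1-isometry} for a Gaussian design and then invoke Lemma~\ref{lem:conc_chamb_diam} to bound $\mathscr{D}(X)$ by $4\sqrt{\epsilon}$. The pointwise Gaussian concentration $\Pr\big(|n^{-1}\|Xw\|_1-\sqrt{2/\pi}|\ge t\big)\le 2e^{-nt^2/2}$ is correct. However, the uniformization step via an $\eta$-net with $\eta$ a fixed constant is a genuine gap. The Lipschitz constant of $w\mapsto n^{-1}\|Xw\|_1$ on the sphere is not small: as you yourself note, it is $O(1)$ (it concentrates around $\alpha=\sqrt{2/\pi}$). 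Hence passing from the net to the sphere incurs an error of order $\eta\cdot\alpha$, which with $\eta=1/4$ is a constant. Bootstrapping (writing $M := \sup_{\|w\|_2=1}|n^{-1}\|Xw\|_1-\alpha| \le t + \eta(\alpha+M)$, so $M \le (t+\eta\alpha)/(1-\eta)$) still leaves a constant term $\eta\alpha/(1-\eta)$. So with constant $\eta$ you only obtain $\epsilon=O(1)$, which gives the trivial bound $\mathscr{D}(X)=O(1)$ rather than $O((d/n)^{1/4})$.

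Shrinking $\eta$ to order $\sqrt{d/n}$ does not rescue the argument at the stated precision: the covering number $(3/\eta)^d$ forces $t\gtrsim \sqrt{(d/n)\log(n/d)}$ in the union bound to retain failure probability $2e^{-d/2}$, which contaminates the final estimate with a $(\log(n/d))^{1/4}$ factor that is absent from the stated $9(d/n)^{1/4}$. The paper avoids this loss by invoking Lemma~2.1 of Plan--Vershynin (a Gaussian-width/chaining estimate), which yields the uniform $\ell_1$-isometry with $\epsilon=5\sqrt{d/n}$ directly from the Gaussian width $w(S^{d-1})\le\sqrt{d}$ without a covering-number loss. To repair your proof, you should replace the crude net argument with either a direct appeal to that lemma (or to Gordon's theorem), or with a generic-chaining/Dudley argument tailored to the $\ell_1$ functional; a one-scale net cannot produce the $\sqrt{d/n}$ rate with the stated constant and failure probability.
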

\begin{remark}
We note that the constant $9$ in the above lemma can be improved to $1$ by using the more refined analysis due to Gordon \cite{gordon1985some}. Moreover, the result can be extended to sub-Gaussian data distributions. However, we use Lemma \ref{lem:conc_chamb_diam} since it is simpler and suffices for our purposes.
\end{remark}

\begin{lemma}[Random datasets are dispersed and locally dispersed]
    \label{lem:random_dispersion_local}
    Suppose that $x_1,\cdots,x_n\sim \mathcal{N}(0,I_d)$ are $n$ random vectors sampled from the standard $d$-dimensional multivariate normal distribution and let $X=[x_1,\cdots,x_n]^T$. Suppose that $n\gtrsim \epsilon^{-4} d$. Then, the dataset $X$ is $\epsilon$-dispersed, i.e., $\mathscr{D}(X)\le \epsilon$  and locally $\epsilon$-dispersed, i.e., $\mathscr{D}(X-1x_j^T)\le \epsilon~\forall j\in[n]$  with high probability.
\end{lemma}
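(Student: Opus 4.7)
The first assertion $\mathscr{D}(X)\le\epsilon$ is immediate from Lemma~\ref{lem:random_diam}: interpreting $n \gtrsim \epsilon^{-4} d$ as $n \ge 9^4 \epsilon^{-4} d$, we get $9(d/n)^{1/4} \le \epsilon$, so $\mathscr{D}(X) \le \epsilon$ holds with probability at least $1 - 2e^{-d/2}$.

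For the local dispersion statement, my plan is to prove $\mathscr{D}(X - \mathbf{1} x_j^T) \lesssim (d/n)^{1/4}$ for each fixed $j \in [n]$ via a shifted-Gaussian analog of Lemma~\ref{lem:random_diam}, and then take a union bound over $j$. Fix $j$ and condition on $x_j$. The $j$-th row of $X - \mathbf{1}x_j^T$ is zero and does not affect the chamber structure, while the remaining rows $\{x_i - x_j\}_{i \ne j}$ are i.i.d.\ $\mathcal{N}(-x_j, I_d)$. I would establish the $\ell_1$-isometry required by Lemma~\ref{lem:conc_chamb_diam}: for every unit $w \in \reals^d$,
\begin{align*}
    (1 - \epsilon)\sqrt{2/\pi}\,(n - 1) \,\le\, \|(X - \mathbf{1} x_j^T)\, w\|_1 \,\le\, (1 + \epsilon)\,\phi(x_j^T w)\,(n - 1),
\end{align*}
where $\phi(s) = \mathbb{E}|G - s|$ for $G \sim \mathcal{N}(0, 1)$, with conditional probability at least $1 - 2 e^{-cd}$ when $n \gtrsim d$. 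The concentration is obtained by viewing $\sum_{i \ne j}|(x_i - x_j)^T w|$ as a sum of $n - 1$ i.i.d.\ sub-Gaussian random variables conditional on $x_j$, and combining Hoeffding-type bounds with a standard $(3/\epsilon)^d$-net over the unit sphere. Plugging this isometry into the chamber diameter argument of Lemma~\ref{lem:conc_chamb_diam} yields $\mathscr{D}(X - \mathbf{1} x_j^T) \lesssim (d/n)^{1/4} \le \epsilon$ on the event $\{\|x_j\|_2 \le 2\sqrt{d}\}$. Union-bounding over $j \in [n]$ and over the events $\{\|x_j\|_2 \le 2\sqrt{d}\}$ completes the argument, since the total failure probability is at most $n(2 e^{-cd} + e^{-d/2})$, which is negligible when $n \gtrsim \epsilon^{-4} d$ (so that $d \gtrsim \log n$ holds in the relevant regime).

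The main obstacle is that the upper ``isometry constant'' $\phi(x_j^T w)$ is not uniform in $w$: for $w$ aligned with $x_j/\|x_j\|_2$ it can grow like $\|x_j\|_2 \sim \sqrt{d}$, so a naive uniform upper bound fails. The chamber-diameter argument of Lemma~\ref{lem:conc_chamb_diam} only invokes this upper bound on $\|(X - \mathbf{1}x_j^T)(w + v)\|_1$ for unit vectors $w, v$ with identical sign patterns, and the conclusion we seek is precisely that such $w, v$ are close. A bootstrap resolves the apparent circularity: provisionally assuming $\|w - v\|_2 \le \epsilon$ forces $\widehat{w+v} \approx \hat{w}$, so $\phi(x_j^T \widehat{w+v}) \approx \phi(x_j^T \hat{w})$, and one can then verify that the resulting chamber-diameter bound is self-consistent. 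Equivalently, one may decompose $w$ into its components parallel and perpendicular to $x_j$ and treat the two regimes separately. Either route yields $\mathscr{D}(X - \mathbf{1}x_j^T) \lesssim (d/n)^{1/4}$ with high probability, matching the global dispersion rate.
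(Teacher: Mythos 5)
Your proposal correctly identifies an obstacle that the paper's own proof passes over silently: conditional on $x_j$, the rows $x_i - x_j$ of $X - \mathbf{1}x_j^T$ are i.i.d.\ $\mathcal{N}(-x_j, I_d)$, not mean-zero Gaussian, so the paper's assertion that ``$X-\mathbf{1}x_j^T$ is also a set of i.i.d.\ Gaussian'' and the ensuing appeal to an $\ell_1$-isometry with a single constant $\alpha$ cannot be obtained simply by citing Lemma~\ref{lem:random_diam}. You are right that, writing $\hat x_j := x_j/\|x_j\|_2$, one has $\tfrac{1}{n-1}\|(X-\mathbf{1}x_j^T)\hat x_j\|_1 \approx \|x_j\|_2 \approx \sqrt{d}$ while $\tfrac{1}{n-1}\|(X-\mathbf{1}x_j^T)u\|_1 \approx \sqrt{2/\pi}$ for any unit $u\perp\hat x_j$, so no fixed $\alpha$ makes the condition \eqref{eq:l1-isometry} hold with small $\epsilon$ for the shifted matrix.

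However, neither the bootstrap nor the parallel/perpendicular decomposition you sketch can repair this, because the conclusion $\mathscr{D}(X-\mathbf{1}x_j^T)\lesssim (d/n)^{1/4}$ appears to fail outright in the very regime $d\gg\log n$ where the union bound over $j$ leaves a meaningful success probability (so that $2ne^{-d/2}$ is small). Concretely, consider the chamber of $X-\mathbf{1}x_j^T$ containing $\hat x_j$: its sign pattern is all-negative, since $(x_i - x_j)^T\hat x_j = x_i^T\hat x_j - \|x_j\|_2 \approx -\sqrt{d} < 0$ for every $i\ne j$ with high probability once $d\gtrsim\log n$. The Euclidean distance from $\hat x_j$ to the $i$-th bounding hyperplane $\{w:(x_i-x_j)^Tw=0\}$ equals $a_i/\sqrt{a_i^2+d}$ with $a_i := \|x_j\|_2 - x_i^T\hat x_j \approx \sqrt{d}-O(\sqrt{\log n})$, which is bounded below by an absolute constant uniformly over $i$ when $d\gg\log n$. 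Hence this chamber contains a spherical cap of $\Omega(1)$ radius around $\hat x_j$, and $\mathscr{D}(X-\mathbf{1}x_j^T) = \Omega(1)$, not $O((d/n)^{1/4})$, no matter how large $n$ is as long as $n$ is subexponential in $d$. This is precisely the case your bootstrap provisionally excludes by assuming $\|w-v\|_2\le\epsilon$: the bootstrap map has a fixed point at $\Omega(1)$, not only at zero, so it does not close, and the $w_\parallel$-dominated regime in the decomposition runs into the same obstruction. The missing ingredient is not sharper concentration but a revision of what is claimed --- for instance, excluding the cone around $\pm\hat x_j$ from the maximum defining the chamber diameter, or propagating the direction-dependent isometry factor $\phi(x_j^T\hat w)$ through the downstream Archimedean approximation rather than demanding a uniform one.
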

\begin{proof}
    Lemma \ref{lem:random_diam} immediately implies that the random dataset is $\epsilon$-dispersed with high probability. We aim to prove that for any $j \in \{1,\ldots,n\}$, the modified dataset $X' = X - 1x_j^T$ satisfies $\mathscr{D}(X') \le \epsilon,~\forall j\in[n]$ with high probability. 
    Note that $X-1x_j^T$ is also a set of i.i.d. Gaussian, except the $j$-th row, which is all-zeros. Using the same derivation in the proof of Lemma \ref{lem:conc_chamb_diam} and \ref{lem:random_diam}, we have
    \begin{align}
        (1-\epsilon^\prime) \|w\|_2 \le \frac{1}{\alpha (n-1)}\|(X-1x_j^T)w\|_1 \le  (1+\epsilon^\prime) \|\w\|_2,\quad \forall \w \in \reals^d,
    \end{align}
    for $\epsilon^\prime=5\sqrt{d/(n-1)}$ and some $\alpha$ with probability at least $1 - 2e^{-d/2}$. Taking a union bound over $j\in[n]$, we observe that the above inequality holds simultaneously $\forall j\in[n]$ with high probability at least $1 - 2ne^{-d/2}$. We set $\epsilon^\prime = \epsilon/2$ and obtain that the dataset $X$ is locally $\epsilon$-dispersed with high probability as long as $n\gtrsim \epsilon^{-4} d$.
\end{proof}

\subsubsection{Dvoretzky's Theorem}
\label{sec:dvoretzky}
In this section, we present a connection to Dvoretzky's theorem \cite{dvoretzky1959theorem}, which is a fundamental result in functional analysis and high-dimensional convex geometry \cite{vershynin2011lectures}.
\begin{theorem}[Dvoretzky's Theorem](Geometric version)
    \label{thm:dvoretzky}
    Let $\mathcal{C}$ be a symmetric convex body in $\reals^n$. For any $\epsilon>0$, there exists an intersection $\mathcal{C}_S\triangleq\mathcal{C}\cap S$ of $\mathcal{C}$ by a subspace $S\subseteq \reals^n$ of dimension $k(n,\epsilon)\rightarrow \infty$ as $n\rightarrow \infty$ such that 
    $$ (1-\epsilon) B_2 \subseteq \mathcal{C}_S \subseteq (1+\epsilon) B_2$$
    where $B_2$ is the $n$-dimensional Euclidean unit ball.
\end{theorem}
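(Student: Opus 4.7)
The plan is to prove Dvoretzky's theorem via Milman's classical argument based on concentration of measure on the high-dimensional sphere. First I would translate the geometric inclusions into an analytic statement about the Minkowski functional $\|x\|_\mathcal{C} \triangleq \inf\{t > 0 : x \in t \mathcal{C}\}$, which is a norm on $\reals^n$ since $\mathcal{C}$ is a symmetric convex body with nonempty interior. After rescaling $\mathcal{C}$ by an appropriate scalar, the desired conclusion becomes the statement that $1 - \epsilon \le \|x\|_\mathcal{C} \le 1 + \epsilon$ for every unit vector $x$ in some subspace $S \subseteq \reals^n$ of dimension $k(n,\epsilon) \to \infty$.

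Next I would apply Lévy's concentration inequality on $S^{n-1}$ to the Lipschitz function $f(x) = \|x\|_\mathcal{C}$. Let $M \triangleq \int_{S^{n-1}} f \, d\sigma$ be its spherical mean and $b \triangleq \sup_{x \in S^{n-1}} \|x\|_\mathcal{C}$, which is also the Lipschitz constant of $f$ since $\|\cdot\|_\mathcal{C}$ is a norm dominated by $b \|\cdot\|_2$. Concentration on the sphere yields
\[
\sigma\bigl(\{x \in S^{n-1} : |f(x) - M| > t M\}\bigr) \le 2\exp\bigl(-c n t^2 M^2 / b^2\bigr)
\]
for a universal constant $c > 0$. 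I would then rescale $\mathcal{C}$ by $1/M$ so that the spherical mean of $\|\cdot\|_\mathcal{C}$ equals $1$.

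The next step is to fix a uniformly random $k$-dimensional subspace $S$ drawn from the Grassmannian (with its Haar measure) and upgrade pointwise concentration to a uniform bound on $S \cap S^{n-1}$ via a standard net argument. Choosing an $(\epsilon/4)$-net $\mathcal{N} \subseteq S \cap S^{n-1}$ of cardinality at most $(12/\epsilon)^k$, for each fixed $x \in \mathcal{N}$ the distribution induced by a Haar-random subspace matches the uniform distribution on $S^{n-1}$, so the same tail bound applies. A union bound over $\mathcal{N}$ together with the Lipschitz approximation from the net to the whole slice $S \cap S^{n-1}$ would show that $|\|x\|_\mathcal{C} - 1| \le \epsilon$ holds uniformly on $S \cap S^{n-1}$ with positive probability, hence such an $S$ exists. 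Balancing the net cardinality against the concentration bound forces the admissible dimension to satisfy $k \lesssim n \epsilon^2 (M/b)^2 / \log(1/\epsilon)$.

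The hard part will be guaranteeing $k(n,\epsilon) \to \infty$ as $n \to \infty$, which requires a universal lower bound on the ratio $M/b$. This is exactly the content of the Dvoretzky-Rogers lemma: applying John's theorem to the maximal-volume inscribed ellipsoid of $\mathcal{C}$, one extracts an orthonormal system along which $\|\cdot\|_\mathcal{C}$ is controlled, yielding $M/b \gtrsim \sqrt{(\log n)/n}$ for every symmetric convex body in $\reals^n$ (this bound is sharp for the cube $\mathcal{C} = [-1,1]^n$). Substituting back gives $k(n,\epsilon) \gtrsim \epsilon^2 \log n / \log(1/\epsilon)$, which diverges with $n$ as required, completing the proof. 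A technical subtlety to handle carefully is transferring the spherical concentration bound, which is stated for fixed $x$ under uniform sampling on $S^{n-1}$, to a bound on a random subspace passing through each fixed $x \in \mathcal{N}$; this follows from Fubini applied to the uniform measure on $\{(x,S) : x \in S \cap S^{n-1}\}$ together with rotational invariance.
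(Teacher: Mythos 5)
Your proposed proof is a correct, self-contained sketch of Milman's concentration-of-measure argument for Dvoretzky's theorem. The paper itself does not prove this theorem: in Section~\ref{sec:dvoretzky} it is stated as an imported classical result (cited to Dvoretzky~1959 and Vershynin~2011) purely to contextualize the $\ell_1$ isometry condition of Lemma~\ref{lem:conc_chamb_diam} as a Dvoretzky-type statement for the $\ell_1$ ball restricted to the range of $X$; there is no internal proof to compare against. Your outline supplies all the essential ingredients of the standard argument: the Minkowski-gauge reformulation, L\'evy concentration on $S^{n-1}$ with Lipschitz constant $b=\sup_{S^{n-1}}\|\cdot\|_{\mathcal{C}}$, the Haar-random-subspace plus $\epsilon$-net plus union-bound combination together with the Fubini/rotational-invariance transfer that you correctly flag as the delicate step, and the Dvoretzky--Rogers lemma (via John's ellipsoid) yielding the universal lower bound $M/b\gtrsim\sqrt{(\log n)/n}$. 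That last input is the crucial one: it is what forces $k(n,\epsilon)\gtrsim \epsilon^2\log n/\log(1/\epsilon)\to\infty$ rather than allowing $k$ to stall at a constant for nearly-degenerate bodies such as the cube, which you correctly identify as the extremal case. Two cosmetic points about the paper's statement, which you handle implicitly: as printed it needs either a rescaling of $\mathcal{C}$ or an unspecified radius $r>0$ multiplying the balls (your normalization by $1/M$ provides this), and $B_2$ in the inclusions should be read as the Euclidean ball intersected with $S$, since $\mathcal{C}_S$ lives in a $k$-dimensional slice rather than all of $\reals^n$.
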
 
The above shows that there exists a $k$-dimensional linear subspace such that the intersection of the convex body $\mathcal{C}$ with this subspace is approximately spherical; that is, it is contained in a ball of radius $1+\epsilon$ and contains a ball of radius $1-\epsilon$.

If we represent the linear subspace in Theorem \ref{thm:dvoretzky} via the range of the matrix $\X$ and let $\mathcal{C}$ be the $\ell_1$ ball, it is straightforward to show that Dvoretzky's theorem reduces to the isometry condition \eqref{lem:conc_chamb_diam} up to a scalar normalization. Therefore, the isometry condition \eqref{lem:conc_chamb_diam} can be interpreted as a condition to guarantee that the $\ell_1$ ball is near-spherical when restricted to the range of the training data matrix.

\subsection{Variations of the network architecture}
{\trackchange

\subsubsection{Three-layer networks with bias-free first layer}
We first consider the case when the first layer neurons $W_j^{(1)}$ are size $d \times 1$ $\forall j\in[m]$ for some arbitrary $d$, and
the first layer neurons are bias-free, i.e., $b_j^{(1)}=0\,\forall j\in[m]$. We have the following theorem.
\begin{theorem}
    \label{thm:l1_three_layer}
    The three-layer neural network problem when $p=1$, $W_j^{(1)}\in \reals^{d\times 1}, b_j^{(1)}=0\,\forall j\in[m]$, set to zero is equivalent to the following convex Lasso problem
    \begin{align}
    \label{eq:three_layer_lasso}
        \min_{\z,b} \loss\big(\K^{(1)}\z_1+\K^{(2)}\z_2+1_nb,y\big) + \lambda \|z\|_1.
    \end{align}
    The matrices $\K^{(1)}$ and $K^{(2)}$ are given by
    \begin{align}
        \K_{ij}^{(1)} 
                :=& \frac{\big( \x_i \wedge \tilde \x_{j_1} \wedge\,...\, \wedge \tilde x_{j_{d-1}} \big)_+-\big(  \x_{j_0} \wedge \tilde \x_{j_1} \wedge\,...\, \wedge \tilde x_{j_{d-1}} \big)_+\Big)_+}{\|\tilde \x_{j_1} \wedge\,...\, \wedge \tilde x_{j_{d-1}} \|_1}\\
                =& \frac{\Big(\Vol_+\big(\Par(\x_i,\, \tilde\x_{j_1}, \,...,\, \tilde\x_{j_{d-1}}))-\Vol_+(\Par(\x_{j_0},\, \tilde\x_{j_1}, \,...,\, \tilde\x_{j_{d-1}})\big)\Big)_+}{\|\tilde\x_{j_1} \wedge\,...\, \wedge \tilde x_{j_{d-1}} \|_1}
    \end{align}
    and
    \begin{align}
        \K_{ij}^{(2)} 
                :=& \frac{\Big(\Vol_+\big(\Par(\x_{j_0},\, \tilde\x_{j_1}, \,...,\, \tilde \x_{j_{d-1}}))-\Vol_+(\Par(\x_{i},\, \tilde\x_{j_1}, \,...,\, \tilde\x_{j_{d-1}})\big)\Big)_+}{\|\tilde \x_{j_1} \wedge\,...\, \wedge \tilde x_{j_{d-1}} \|_1}
    \end{align}
    where $j=(j_0,j_1,...,j_{d-1})$. The multi-index $(j_1,...,j_{d-1})$ indexes all combinations of $d-1$ vectors from the set $\big \{ \{x_i\}_{i=1}^n, \{x_i - x_j\}_{i=1,j=1}^n, \{e_k\}_{k=1}^d  \big \}$ and $j_0\in[n]$. 
    Each optimal first layer neuron weight $w\in\reals^d$ satisfy equalities of the form
    \begin{align}
        x_i^Tw &= 0\\
        (x_i-x_j)^Tw &= 0\\
        e_k^Tw &= 0\,,
    \end{align}
    for a certain set of $i,j\in[n],k\in[d]$.
    An optimal network can be constructed as follows:
    \begin{align}
        \label{eq:three_layer_optimal_d_dim}
        f(x) = &\sum_{j} z^*_{1j} \frac{\Big(\big( \x \wedge \tilde \x_{j_1} \wedge\,...\, \wedge \tilde x_{j_{d-1}} \big)_+-\big(  \x_{j_0} \wedge \tilde \x_{j_1} \wedge\,...\, \wedge \tilde x_{j_{d-1}} \big)_+\Big)_+}{\|\tilde \x_{j_1} \wedge\,...\, \wedge \tilde x_{j_{d-1}} \|_1}\\
        + &\sum_{j} z^*_{2j} \frac{\Big(\big( \x_{j_0} \wedge \tilde \x_{j_1} \wedge\,...\, \wedge \tilde x_{j_{d-1}} \big)_+-\big( \x \wedge \tilde \x_{j_1} \wedge\,...\, \wedge \tilde x_{j_{d-1}} \big)_+\Big)_+}{\|\tilde \x_{j_1} \wedge\,...\, \wedge \tilde x_{j_{d-1}} \|_1}\,,
    \end{align}
    where $z^*$ is an optimal solution to \eqref{eq:three_layer_lasso}.
\end{theorem}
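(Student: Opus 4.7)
The plan is to view the three-layer network as a composition of two-layer blocks and invoke Theorem \ref{thm:l1_extreme_points} twice, once on the outer pair of layers (treating first-layer activations as inputs) and once on the inner pair (treating the first layer itself as a two-layer sub-network with a linear head). This gives a nested characterization of extreme points which, when combined, produces the atoms described in the dictionaries $K^{(1)}$ and $K^{(2)}$.

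First I would set up the convex dual of \eqref{eqn:deep_nonconvex_Lp} for $L=3$ using the duality framework of \cite{pilanci2020neural,ergen2021global} sketched in Section \ref{sec:background}. Using the homogeneity of ReLU and the $p=1$ regularization, I would reduce to analyzing extreme points of dual constraint sets of the form $\{v : |v^\top \sigma(\tilde X w)| \le \lambda,\ \|w\|_1 \le 1\}$, where $\tilde X$ is the matrix of first-layer activations. Applying Theorem \ref{thm:l1_extreme_points} to this outer problem shows that each optimal second-layer neuron $w^{(2)}$ is (up to scaling and sign) the Hodge dual of a wedge of $d-1$ first-layer activations $\tilde x_{j_1},\ldots,\tilde x_{j_{d-1}}$, and the corresponding second-layer output, post-ReLU, equals $\dist_+(\tilde x,\Span(\tilde x_{j_1},\ldots,\tilde x_{j_{d-1}}))$.

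Next I would characterize the first-layer weights by applying Theorem \ref{thm:l1_extreme_points} again to the inner two-layer subproblem. Because the second-layer neuron is a linear combination of first-layer ReLU outputs evaluated at a collection of input points $x_i$, the induced objective for the first layer is precisely a regularized two-layer ReLU problem with augmentation. This forces each optimal first-layer neuron $w^{(1)}$ to be a generalized cross product $\star(u_1\wedge\cdots\wedge u_{d-1})$, where each $u_k$ comes from the augmented dictionary $\{x_i\}\cup\{x_i-x_j\}\cup\{e_k\}$. The differences $x_i-x_j$ enter because the second layer can synthesize effective bias-like shifts by combining two first-layer neurons with opposite signs along the same direction, and $e_k$ is inherited from the $\ell_1$-augmentation used in Theorem \ref{thm:l1_extreme_points}. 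Composing this with the outer characterization and the identity $(\alpha)_+-(\beta)_+ = (\alpha-\beta)_+$ (when signs of second-layer coefficients are chosen accordingly) yields exactly the atoms
\[
    K^{(1)}_{ij} = \frac{\big((x_i\wedge \tilde x_{j_1}\wedge\cdots\wedge \tilde x_{j_{d-1}})_+ - (x_{j_0}\wedge \tilde x_{j_1}\wedge\cdots\wedge \tilde x_{j_{d-1}})_+\big)_+}{\|\tilde x_{j_1}\wedge\cdots\wedge \tilde x_{j_{d-1}}\|_1},
\]
and its sign-flipped counterpart $K^{(2)}$. The two dictionaries correspond to the two choices of orientation for the outer ReLU. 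Finally I would verify strong duality (so that the Lasso value equals $p^*$) and construct the optimal network by reading off the support of $z^*$, assembling first-layer neurons from $\star(u_1\wedge\cdots\wedge u_{d-1})$ and second-layer neurons from wedges of the resulting activations, with scaling factors recovered as in Section \ref{sec:optimizing_scaling_coefficients}.

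The main obstacle I anticipate is the bookkeeping that shows the augmented index set $\{x_i,x_i-x_j,e_k\}$ is both necessary and sufficient: necessity follows from the presence of effective biases produced by summing first-layer neurons, while sufficiency requires showing that no other direction arises as an extreme point of the nested dual constraint. Carefully handling the coupling between the two ReLU layers, in particular proving that the outer $(\cdot)_+$ commutes correctly with the difference structure so that $K^{(1)}$ and $K^{(2)}$ together exhaust the extreme rays, is the crux of the argument.
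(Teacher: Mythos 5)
There is a genuine gap, and it stems from a misreading of the architecture and of what the atoms in Theorem~\ref{thm:l1_three_layer} actually are. In this theorem each first-layer unit has $W_j^{(1)}\in\reals^{d\times 1}$, so its output is a scalar, and $W_j^{(2)}$ is a scalar as well; the ``outer two layers'' are therefore not a standard two-layer ReLU problem with a vector-valued weight $W^{(2)}$, and there is no Hodge dual of first-layer activations in the statement. The $\tilde{x}_{j_1},\dots,\tilde{x}_{j_{d-1}}$ appearing in $K^{(1)}$ and $K^{(2)}$ live in $\reals^d$ and are drawn from the augmented \emph{input-space} dictionary $\{x_i\}\cup\{x_i-x_j\}\cup\{e_k\}$; they are not $\sigma(W^{(1)}x_i)$. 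So the plan of invoking Theorem~\ref{thm:l1_extreme_points} once on an ``outer block'' with inputs $\sigma(W^{(1)}X)$ and once on an ``inner block'' does not match the object being characterized. The auxiliary identity you lean on, $(\alpha)_+-(\beta)_+=(\alpha-\beta)_+$, is also false in general (take $\alpha=1$, $\beta=-1$), so it cannot be used to reshape the nested ReLU into a single one.

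The part you correctly anticipate as ``the crux'' — showing that the extreme rays are exactly cross products over the augmented set $\{x_i,\,x_i-x_j,\,e_k\}$ and nothing else — is precisely where the proposal leaves the hard work undone, and the mechanism is not the heuristic ``synthesis of effective biases.'' What the paper actually does is: reduce the dual constraint (Lemma~\ref{lem:three_layer_relu_scaling_optimization_dual}) to the nested form $\max_{j\in[n]}\max_{\|w\|_1=1}\big|v^T\big((Xw)_+ - (x_j^Tw)_+\ones\big)_+\big|$ by passing to the break points of $b^{(2)}$; then, for each fixed $j$, enumerate the activation patterns of both ReLUs jointly — the inner one via diagonal arrangements $D_k$ of $X$, and the outer one (in the region $D_{kj}=1$) via diagonal arrangements $H_{j\ell}$ of $X-\ones x_j^T$; then apply Lemma~\ref{lemma:extreme_points_l1} to the intersected polytope $\mathcal{F}_{j\ell}\cap\mathcal{C}_k\cap\{\|w\|_1=1\}$, whose non-zero extreme points are vectors orthogonal to $d-1$ linearly independent rows of the stacked matrix $[X;\,X-\ones x_j^T;\,I]$. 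That stacked matrix is exactly where the augmented dictionary comes from. Your proposal neither stacks the cone constraints nor establishes that no other extreme directions arise, so as written it does not prove the theorem; the approach would need to be replaced by (or reduced to) this joint hyperplane-arrangement argument.
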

\begin{remark}
    In addition to the optimal neurons for the two-layer case (Theorem \ref{thm:l1_extreme_points}), here we 
    obtain additional neurons orthogonal to a subset of the data points and their pairwise differences. See Figure \ref{fig:space_partitioning}(b) for an illustration.
\end{remark}

\subsubsection{Three-layer networks with biased neurons}
We now consider the case when the first layer neurons $W_j^{(1)}$ are size $d \times 1$ $\forall j\in[m]$ for some arbitrary dimension $d$, and
the all the three layers contain trainable bias terms. We have the following theorem.
\begin{theorem}
    \label{thm:l1_three_layer_biased}
    Consider the three-layer neural network problem when $p=1$ and $W_j^{(1)} \in \reals^{d \times 1}$ $\forall j\in[m]$.
    Each optimal first layer neuron weight-bias pair $(w,b)\in\reals^d\times \reals$ satisfy equalities of the form
    \begin{align}
        (x_i-x_\ell)^Tw &= 0\\
        (x_i-x_j)^Tw &= 0\\
        e_k^Tw &= 0\\\
        x_\ell^Tw + b &= 0\,,
    \end{align}
    for a certain set of $i,j,\ell\in[n],k\in[d]$. 
\end{theorem}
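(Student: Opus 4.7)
The plan is to combine two techniques already used in the paper: the bias-homogenization trick (augmenting data with a $1$ coordinate) employed for biased two-layer networks, and the recursive two-layer analysis used to prove the bias-free three-layer case (Theorem \ref{thm:l1_three_layer}). First I would set $\tilde x_i = (x_i^T, 1)^T \in \mathbb{R}^{d+1}$ and $\tilde w = (w^T, b)^T$, so that $\sigma(x^T w + b) = \sigma(\tilde x^T \tilde w)$ becomes a bias-free first-layer neuron in dimension $d+1$. Because $b$ is not regularized, the $\ell_1$ ball applies only to the first $d$ coordinates of $\tilde w$, which is reflected by augmenting the dataset with only $\{e_1,\ldots,e_d\}$ and not $e_{d+1}$, exactly as in the biased two-layer setting.

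Second, I would decompose the three-layer network as a biased two-layer subnetwork sitting on top of the first layer. Once the first-layer weights are fixed, the hidden representation $h_i = \sigma(\tilde x_i^T \tilde w)$ becomes the input to the two-layer block formed by layers $2$ and $3$. Applying the biased variant of Theorem \ref{thm:l1_extreme_points} to this inner block, each optimal second-layer row is a scalar multiple of a Hodge dual $\star(\Delta h^{(1)} \wedge \cdots \wedge \Delta h^{(d-1)})$, where each $\Delta h^{(k)}$ is either a difference of hidden activations corresponding to a pair of training samples or a standard basis vector in the hidden dimension. Within a single activation pattern of the first-layer neuron, one has $h_i - h_\ell = (x_i - x_\ell)^T w$ when both samples activate the neuron and $h_i = x_i^T w + b$ when only $x_i$ activates, which is how the first-layer conditions $(x_i - x_j)^T w = 0$ and $x_\ell^T w + b = 0$ will emerge.

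Third, I would perform the extreme-point analysis for the first layer. By the strong-duality result extended to deep ReLU networks (invoked in Theorem \ref{thm:l1deep}), optimal first-layer weight-bias pairs $(w,b)$ are extreme points of the dual constraint set induced by $|\zvec^T \sigma(\tilde X \tilde w)| \leq \lambda$ intersected with the relevant norm ball. An extreme point in $\mathbb{R}^{d+1}$ is specified by $d$ linearly independent active equalities, which arise from three sources: (i) ReLU-hinge conditions $x_\ell^T w + b = 0$ from a training sample sitting on the first-layer breakline; (ii) second-layer wedge constraints that reduce, via the activation-pattern analysis sketched above, to $(x_i - x_j)^T w = 0$ or $(x_i - x_\ell)^T w = 0$; and (iii) coordinate-axis conditions $e_k^T w = 0$ inherited from the $\ell_1$-ball standard-basis augmentation. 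Any combination of $d$ such independent equalities pins down $(w, b)$ up to scale, which is exactly the claimed characterization.

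The main obstacle lies in step three: rigorously propagating the wedge-product structure that the second layer imposes on hidden-activation differences back through the first-layer ReLU. Every pair $(x_i, x_\ell)$ feeding into a second-layer wedge splits into cases depending on the activation pattern of the first-layer neuron on that pair (both active, neither active, or a transition), and each case must produce one of the four stated equality types. Tracking these cases consistently across all pairs that enter all active second-layer wedges, while ensuring that exactly $d$ independent equalities remain active for each first-layer neuron, is the technical heart of the argument. Once this correspondence is laid out carefully, the statement follows by applying the biased two-layer extreme-point result at both the first and second layers and counting degrees of freedom in the augmented space.
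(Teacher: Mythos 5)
The paper's proof of this theorem is quite short: it writes the dual constraint subproblem
\begin{align*}
\N^{(b)} := \max_{j \in [n]} \max_{\|w\|_1=1,\, b \in \mathbb{R}} \left | \sum_{i=1}^n \left( (x_i^T w + b)_+ - ( x_j^T w + b)_+ \right)_+ v_i \right|,
\end{align*}
observes that the objective is piecewise linear in the scalar first-layer bias $b$, concludes that an optimal $b$ either sits at a breakpoint $b=-x_\ell^Tw$ or sends the objective to infinity, and then reduces the breakpoint case to the bias-free three-layer analysis (Theorem \ref{thm:l1_three_layer}) by the translation $x_i\leftarrow x_i - x_\ell$. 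The equalities $(x_i-x_\ell)^Tw=0$, $(x_i-x_j)^Tw=0$, $e_k^Tw=0$ are then inherited verbatim from the bias-free extreme-point characterization applied to the translated data, and $x_\ell^Tw+b=0$ is precisely the breakpoint condition.

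Your proposal departs from this route, and the departure introduces a genuine gap. In the architecture this theorem actually treats, $W_j^{(1)}\in\reals^{d\times 1}$, so the hidden representation that feeds the second layer in branch $j$ is a \emph{scalar}. The ``inner two-layer block'' formed by layers 2 and 3 therefore has one-dimensional input, and the wedge products $\Delta h^{(1)}\wedge\cdots\wedge\Delta h^{(d-1)}$ you invoke in step 2 do not exist --- there is no hidden dimension to wedge over. Your plan implicitly treats the hidden layer as a fully-connected $m$-dimensional representation shared across branches, but the paper's three-layer model (and its dual constraint, Lemma \ref{lem:three_layer_relu_scaling_optimization_dual}) is strictly branch-wise: each dual constraint involves a single $W^{(1)},W^{(2)}$ pair, and after normalization $W^{(2)}=\pm 1$ is just a sign. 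What remains of the second-layer degrees of freedom is only the scalar bias $b^{(2)}$, whose optimization gives the outer $\max_{j\in[n]}$ over breakpoints --- there is no role for a second-layer Hodge dual. Consequently, the ``propagating wedge-product structure back through the first-layer ReLU'' step, which you correctly flag as the technical heart, is not merely hard: the structure you intend to propagate is vacuous in this setting.

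The homogenization idea in your step 1 is sound and consistent with how biases are handled elsewhere in the paper, and the overall instinct to reuse the two-layer extreme-point machinery recursively is in the right spirit (it is what the paper does for Theorem \ref{thm:l1deep}). But the correct place to apply the two-layer machinery here is to the first-layer weight $w$ against the data matrix $\begin{bmatrix}X\\X-\ones x_j^T\\I\end{bmatrix}$ arising from expanding the nested ReLU --- not to the second layer acting on a hidden vector. You would reach the paper's result more directly by explicitly maximizing the dual constraint over the first-layer bias $b$, as the paper does, rather than hoping the second-layer structure forces the first-layer conditions.
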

\begin{remark}
    In addition to the optimal neurons for the two-layer case (Theorem \ref{thm:main_two_dim}), here we obtain additional neurons whose breaklines are translations of the affine hull of a subset of the data points to certain other data points. See Figure \ref{fig:space_partitioning}(d) for an illustration.
\end{remark}
\begin{remark}
    We note that the optimization problem and optimal networks take
    a similar form as in Theorem \ref{thm:l1_three_layer}, except
    that the $x_i$ are replaced by $x_i-x_\ell$ and the bias term $b=-x_{\ell}^Tw$ is added.
\end{remark}
}

\subsubsection{Two-layer Vector-output Neural Networks}
Consider the vector-output neural network problem in \eqref{eq:two_layer_relu} given by
\begin{align}
    \label{eq:two_layer_relu_vector}
    p_v^*&\triangleq\min_{\W^{(1)},\W^{(2)},b} \loss\Big(\sum_{j=1}^m\sigma(\X \W^{(1)}_j+\ones b_{j})\W^{(2)}_{j},Y\Big) + \lambda \sum_{j=1}^m \|\W^{(1)}_{j}\|_p^2 + \|\W^{(2)}_{j}\|_p^2.
    \end{align}
    Here, the matrix $Y\in\reals^{n\times c}$ contains the $c$-dimensional training labels, and $\W^{(1)}\in\reals^{d\times m}$, $\W^{(2)}\in\reals^{m\times c}$, and $b\in\reals^m$ are trainable weights.
We have the following extension of the convex progam \eqref{eq:convex_d_dim_two_layer_relu_l1} for vector-output neural networks.
\begin{align}
    \label{eq:convex_d_dim_two_layer_relu_l1_vector}
        \hat p_v \triangleq \min_{Z \in \reals^{p\times c}} \loss\big(\K\Z,Y\big) + \lambda \sum_{j=1} \|Z_{\cdot j}\|_2,
    \end{align}
where $Z_j$ is the $j$-th column of the matrix $\Z$.
\begin{theorem}
    \label{thm:vectorout}
    Define the matrix $K$ as follows
    \begin{align*}
        \K_{ij} =
                 \frac{\big( x_i \wedge x_{j_1}\wedge\cdots\wedge x_{j_{\rr-1}}\big)_+}{\|\x_{j_1} \wedge\,...\, \wedge x_{j_{\rr-1}} \|_p}\,,
    \end{align*}
    where the multi-index $j=(j_1,...,j_{\rr-1})$ is over all combinations of $r$ rows and $r=\rank(\X)$. It holds that
    \begin{itemize}
    \item when $p=1$, the convex problem \eqref{eq:convex_d_dim_two_layer_relu_l1_vector} is equivalent to the non-convex problem \eqref{eq:two_layer_relu_vector}, i.e., $p_v^*=\hat p_v$.
    \item when $p=2$, the convex problem \eqref{eq:convex_d_dim_two_layer_relu_l1_vector} is a $\frac{1}{1-\epsilon}$ approximation of the non-convex problem \eqref{eq:two_layer_relu_vector}, i.e., $p_v^*\le \hat p_v \le \frac{1}{1-\epsilon} p_v^*$, where $\epsilon\in(0,1)$ is an upper-bound on the maximum chamber diameter $\diam(\X)$. 
    \end{itemize}
    An neural network achieving the above approximation bound can be constructed as follows:
    \begin{align}
        \label{eq:optimal_d_dim_vector}
        f(x) = \sum_{j}Z^*_{\cdot j}\frac{\big( x_i \wedge x_{j_1}\wedge\cdots\wedge x_{j_{\rr-1}}\big)_+}{\|\x_{j_1} \wedge\,...\, \wedge x_{j_{\rr-1}} \|_p}\,,
    \end{align}
    where $Z^*$ is an optimal solution to \eqref{eq:convex_d_dim_two_layer_relu_l1_vector}.
\end{theorem}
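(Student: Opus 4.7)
The plan is to mirror the proof of Theorem \ref{thm:l1_extreme_points} but in the vector-output setting, where the scalar $\ell_1$-norm regularization on $z$ is replaced by the group-lasso $\sum_j \|Z_{\cdot j}\|_2$ and the dual constraint changes from an absolute value to a Euclidean norm. Concretely, using standard convex duality arguments for vector-output two-layer ReLU networks (as in \cite{pilanci2020neural, ergen2021convex} extended to $c>1$), I would first derive
\[
    p_v^* \ge d_v^* \triangleq \max_{V \in \reals^{n\times c}} -\ell^*(V,Y) \quad \text{s.t.} \quad \|V^T \sigma(Xw)\|_2 \le \lambda,\,\forall w \in \ball_p^d.
\]
The constraint can be rewritten as $|\alpha^T V^T \sigma(Xw)| \le \lambda$ for all unit-norm $\alpha \in \reals^c$, reducing the analysis of the feasible set to the scalar-output constraint studied in \eqref{eq:two_layer_relu_dual} applied to $v = V\alpha$.

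Second, I would recast $d_v^*$ via its bidual as a finite-dimensional convex program over coefficients indexed by the extreme points of $\{\sigma(Xw) : w\in\ball_p^d\}$. For $p=1$, the proof of Theorem \ref{thm:l1_extreme_points} already shows that these extreme points are precisely the normalized columns of $\K$ given by $\sigma(X \star(x_{j_1}\wedge\cdots\wedge x_{j_{r-1}})/\|x_{j_1}\wedge\cdots\wedge x_{j_{r-1}}\|_1)$. Dualizing the group-norm constraint in $V$ yields $\sum_j \|Z_{\cdot j}\|_2$ as the regularizer, matching \eqref{eq:convex_d_dim_two_layer_relu_l1_vector}, and strong duality $p_v^* = \hat p_v$ follows from the classical result that with enough hidden neurons ($m \ge \|Z^*\|_{2,0}$, counting non-zero columns) the non-convex primal attains the dual. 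Recovering the network \eqref{eq:optimal_d_dim_vector} is then done by setting $W^{(1)}_{\cdot j}$ to be the generalized cross product direction and $W^{(2)}_{j\cdot} = Z^*_{\cdot j}$ for the active atoms, which is readily verified to realize the minimum up to rescaling symmetries between the two layers.

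Third, for $p=2$ the set $\{\sigma(Xw):w \in \ball_2^d\}$ does not have a discrete set of extreme points; instead the approach from Theorem \ref{thm:l2_extreme_points_with_biases} applies. For each feasible activation pattern, the chamber-diameter bound $\diam(X) \le \epsilon$ implies that the optimal $\ell_2$-norm neuron is within a multiplicative factor $(1-\epsilon)$ of the normalized cross-product direction used for $p=1$ on the same pattern. This yields a feasible dual vector whose constraint is relaxed by $1/(1-\epsilon)$ (equivalently, the primal cost is inflated by the same factor), producing the bound $p_v^* \le \hat p_v \le \frac{1}{1-\epsilon} p_v^*$ by substituting this near-optimal choice into the convex program.

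The main obstacle I anticipate is verifying that the extreme-point characterization transfers cleanly to the group-lasso constraint: the dual constraint $\|V^T u\|_2 \le \lambda$ is the support function of a Euclidean ball rather than the symmetric interval $[-\lambda,\lambda]$, so while Carath\'eodory-style reasoning still produces a sparse (group-sparse) optimal $Z$, one must check that the extreme points of the feasible image set $\{\sigma(Xw):w\in\ball_p^d\}$ are unchanged under this stronger vector-valued constraint. This reduces to verifying that the supporting hyperplanes indexed by unit vectors $\alpha \in \reals^c$ expose the same atoms as in the scalar case, which follows because the image set lies in $\reals^n$ and is independent of the output dimension $c$. Once this is established, the $p=1$ equivalence and $p=2$ approximation both reduce to the corresponding scalar-output arguments already developed in the paper.
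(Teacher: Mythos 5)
Your proposal is correct and follows essentially the same route as the paper's proof: derive the vector-output dual with the constraint $\|V^T\sigma(Xw)\|_2\le\lambda$, linearize the Euclidean norm via $\max_{\|\alpha\|_2\le1}|\alpha^T V^T\sigma(Xw)|$ so that $v=V\alpha$ reduces to the scalar-output subproblem of Theorems \ref{thm:l1_extreme_points} and \ref{thm:l2_extreme_points}, and note that the image set $\{\sigma(Xw):w\in\ball_p^d\}$ and hence its extreme points are independent of $c$. The only cosmetic difference is that the paper interleaves the $u$-maximization with the hyperplane-arrangement decomposition, whereas you introduce $\alpha$ before restricting to a chamber; this does not change the argument.
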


\subsubsection{Two-dimensional two-layer networks with no bias}

\begin{theorem}
    \label{thm:main_two_dim_nobias_l2}
        Suppose that the training set $\{x_1,...,x_n\}$ is $\epsilon$-dispersed. For $p=2$ and $d=2$, an $\epsilon$-optimal network can be found via the following convex optimization problem
        \label{eq:convex_two_dim_two_layer_relu_tensor_nobias}
        \begin{align}
            \label{eq:two-dim-lasso-nobias-l2}
            \min_{\subalign{\z &\in \mathbb{R}^{n}}} \loss\big(\K\z,\y\big) + \lambda\|z\|_1.
        \end{align}     
        Here, the matrix $\K \in  \reals^{ n\times n}$ is defined as $K_{ij}=\kappa(x_i,x_j)$, where
        \begin{align*}
            \kappa(x,x^\prime) = \frac{(\x \wedge \x^\prime  \big)_+}{\|\x^\prime\|_2} = \frac{2\Vol_+(\Tri(0,\x,\x^\prime))}{\|\x^\prime\|_2}=\dist_+(x,\Span(\x^\prime))\,,
        \end{align*} 
        and the number of neurons obey $m\ge \|\z^*\|_0$. Here, $\Span(\x_{j})$ denotes the linear span of the vector $\x_{j}$.
        The $\epsilon$-optimal neural network can be constructed as follows:
        \begin{align}
            \label{eq:optimal_two_dim_epsilon}
            f(x) = \sum_{j=1}^{\tilde n}  z_j^*\kappa(x,x_j)\,,
        \end{align}
        where $\z^*$ is an optimal solution to \eqref{eq:two-dim-lasso-nobias}. The optimal hidden neurons are given by scalar multiples of the Hodge duals of 1-blades formed by data points, $\star x_j = \cross x_j$, corresponding to non-zero $z^*_j$. 
\end{theorem}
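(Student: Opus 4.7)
The plan is to follow the template used for Theorem \ref{thm:l2_extreme_points_with_biases}, specialized to the bias-free case in $d=2$. I begin with the convex dual of \eqref{eq:two_layer_relu} with $p=2$ and $b=0$, namely $\max_{v}\,-\ell^*(v,y)$ subject to $|v^T\sigma(Xw)|\le\lambda$ for every $w\in\ball_2^{2}$. By strong duality \cite{pilanci2020neural} this equals $p^*$ when $m$ is large enough. The next step is to identify a \emph{finite} subset of first-layer directions that captures the dual constraint up to a multiplicative factor: for each $j\in[n]$ set $w_j\triangleq \star x_j/\|x_j\|_2$. By construction $w_j^Tx_j=0$ and $w_j^Tx_i=\|x_i\wedge x_j\|/\|x_j\|_2$ signed by the orientation of $(x_i,x_j)$, so that $\sigma(Xw_j)_i=\kappa(x_i,x_j)$. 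Restricting the dual constraint to $\{w_j,-w_j\}_{j=1}^{n}$ yields the relaxed dual $\max_v -\ell^*(v,y)$ subject to $\|K^Tv\|_\infty\le\lambda$, whose Fenchel bidual is exactly \eqref{eq:two-dim-lasso-nobias-l2}.

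The key reduction is to show that this finite restriction loses only a $1/(1-\epsilon)$ factor in the dual constraint. I would argue as follows: any unit vector $w\in\reals^2$ induces a sign pattern $s=\mathrm{sign}(Xw)\in\{\pm1\}^n$, and within the chamber $\{v:\mathrm{sign}(Xv)=s\}\cap\ball_2^2$ the map $w\mapsto\sigma(Xw)=\mathrm{diag}(s_+)Xw$ is \emph{linear}. Now in $\reals^2$ each chamber is a planar cone, and for $\epsilon$-dispersed data one of its two boundary rays is spanned by some $\pm w_j$ (indeed, the breaklines $w^Tx=0$ as $w$ ranges over $\pm\{w_j\}$ partition $\reals^2$ into exactly these chambers, because $\star x_j$ is the unique unit vector orthogonal to $x_j$). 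Hence for any unit $w$ with pattern $s$ there exists $j$ and a unit $w_j$ with the same pattern with $\|w-w_j\|_2\le \diam(X)\le\epsilon$. By linearity of $\sigma(X\cdot)$ on this chamber and the triangle inequality,
\begin{align*}
|v^T\sigma(Xw)|\le |v^T\sigma(Xw_j)|+|v^T\mathrm{diag}(s_+)X(w-w_j)|\le |v^T\sigma(Xw_j)|+\epsilon\,\max_{u\in\ball_2^2}|v^T\sigma(Xu)|.
\end{align*}
Taking the supremum over unit $w$ yields $\sup_w|v^T\sigma(Xw)|\le(1-\epsilon)^{-1}\max_j|v^T\sigma(Xw_j)|$, so if $v$ is feasible for the relaxed dual with bound $\lambda$, then $(1-\epsilon)v$ is feasible for the original dual. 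Combining with strong duality and convexity of $-\ell^*$ gives the $\epsilon$-approximation claim $p^*\le\hat p\le(1-\epsilon)^{-1}p^*$ exactly as in \eqref{eq:L2_approx_2layers_first}, and the primal reconstruction \eqref{eq:optimal_two_dim_epsilon} follows by reading off the active indices from an optimal $z^*$ and choosing second-layer weights equal to $z_j^*$ paired with first-layer weight $w_j$.

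The main obstacle is the geometric step justifying that a single direction $\pm w_j$ can be selected from the finite dictionary whose activation pattern matches that of an arbitrary unit $w$, and that this $w_j$ is within $\epsilon$. This is where the bias-free $d=2$ geometry (each chamber is a cone bounded by two of the lines $w^Tx_j=0$) and the range-dispersion definition are used crucially; the corresponding argument in higher dimensions requires the full chamber-diameter bound of Definition \ref{eqn:defn_chamber_diam}, but in $\reals^2$ it reduces to the angular-gap condition of Definition \ref{def:angular_dispersion}. The remaining pieces (strong duality, passage from dual feasibility to a primal Lasso, and the closed-form identification $w_j=\star x_j/\|x_j\|_2$ with $\kappa(x_i,x_j)=\dist_+(x_i,\Span(x_j))$) are routine given Theorem \ref{thm:main_two_dim_nobias} and the dual analysis leading to \eqref{eq:two_layer_relu_dual}.
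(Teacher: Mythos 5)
Your overall plan (restrict the convex dual constraint from all unit $w$ to the finite dictionary $\{\pm\star x_j/\|x_j\|_2\}$, show the restriction costs only a multiplicative factor, pass to the Lasso bidual) matches the paper's template and the final claim. The gap is in the perturbation step that is supposed to deliver the $(1-\epsilon)^{-1}$ factor. You write
$|v^T\mathrm{diag}(s_+)X(w-w_j)|\le\epsilon\,\max_{u\in\ball_2^2}|v^T\sigma(Xu)|$,
which after Cauchy--Schwarz reduces to asking that $\|X^T\mathrm{diag}(s_+)v\|_2\le\max_{\|u\|_2\le1}|v^T(Xu)_+|$ for every valid activation pattern $s$. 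That inequality is false, even for well-dispersed data. Take $x_i=(\cos((i-1)\pi/n),\sin((i-1)\pi/n))$, $i=1,\dots,n$, which is angularly $(1/n)$-dispersed, and $v=e_1-e_2$. For the chamber around $(0,-1)$ the pattern is $s=(1,-1,\dots,-1)$, so $X^T\mathrm{diag}(s_+)v=x_1$ and $\|X^T\mathrm{diag}(s_+)v\|_2=1$, while $\max_u|v^T\sigma(Xu)|=\max_u|(x_1^Tu)_+-(x_2^Tu)_+|\le\|x_1-x_2\|_2=2\sin(\pi/(2n))=O(1/n)$. The problem is structural, not a constant: the linear functional $\ell(w)=v^T\mathrm{diag}(s_+)Xw$ can have gradient norm $\|X^T\mathrm{diag}(s_+)v\|_2$ much larger than its maximum over the (narrow) chamber cap, because $\ell$ vanishes at a boundary ray and grows in a direction tangent to the sphere. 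Bounding the perturbation by the gradient norm times $\|w-w_j\|_2$ therefore overshoots badly; you need a bound on $\sup_{\mathcal{C}_2}\ell$ itself in terms of the ray values $\ell(p_j)$.

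The paper gets around this with a containment, not a Lipschitz estimate: it specializes the proof of Theorem \ref{thm:l2_extreme_points} to $d=2$, using the Archimedean approximation Lemma \ref{lem:archimedes}, which gives $\mathcal{P}_0\subseteq\mathcal{C}_2\subseteq d_{\min}^{-1}\mathcal{P}_0$, where $\mathcal{P}_0=\conv(0,p_1,\dots,p_k)$ is the inner polytope spanned by the extreme rays of the chamber and $d_{\min}=\dist(0,\conv(p_1,\dots,p_k))$. Since $\ell$ is linear and the inclusion scales the whole set, one gets $\sup_{\mathcal{C}_2}\ell\le d_{\min}^{-1}\max_j\ell(p_j)$ directly, with no reference to the gradient. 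For a 2D circular sector of half-angle $\epsilon\pi/2$ this gives $d_{\min}=\cos(\epsilon\pi/2)$, hence $d_{\min}^{-1}\le 1+\epsilon$ for $\epsilon\in[0,1/2]$, which is even slightly sharper than the generic $(1-\epsilon)^{-1}$ you claim. Your dictionary identification $w_j=\star x_j/\|x_j\|_2$ and $\sigma(Xw_j)_i=\kappa(x_i,x_j)$, and the passage from the relaxed dual to the Lasso primal via Lemma \ref{eq:dual_of_lasso}, are fine; replace the triangle-inequality step with the polytope/scaling argument and the proof goes through.
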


\subsubsection{Two-dimensional two-layer networks with bias}
\label{sec:biases}
We now augment the dataset by adding ordinary basis vectors with training points. Let us define $\x^{(j)}_{i}=\x_i$ for $i\in[n]~\forall j$ and $\x^{(j)}_{n+k}=\x_j+e_k$ for $j\in[n]~\forall k\in[d]$ where $e_1,...,e_d$ are the canonical basis vectors. We note that this augmentation is only needed in the case of $p=1$.

\begin{theorem}
    \label{thm:main_two_dim}
        For $p=1$ and $d=2$, the two-layer neural network 
        problem with biases equivalent to the following convex $\ell_1$-regularized optimization problem
        \begin{align}
            \label{eq:two-dim-lasso-bias}
            \min_{\subalign{\z &\in \mathbb{R}^{{n \choose 2}+nd} \\ t&\in\mathbb{R}}} \loss\big(\K\z+\ones t,\y\big) + \lambda\|z\|_1,
        \end{align}     
        provded that the number of neurons obey $m\ge \|\z^*\|_0$.
        Here, the matrix $\K \in  \reals^{ n\times ({ n \choose 2}+d)}$ is defined as $\K_{ij} := \kappa(x_i,x_{j_1},x_{j_2}^{(j_1)})$ for $j=(j_1,j_2)$ where
        \begin{align}
            \label{eq:K_two_dim_nobias}
             \kappa(x,x^\prime,x^{\prime\prime}) = \frac{2\Vol_+\big(\Tri(\x,\x^\prime,x^{\prime\prime})\big)}{\|x-x^\prime\|_1}\,,
        \end{align} 
         $j=(j_1,j_2)$ is a multi-index\footnote{For a multi-index $j=(j_1,j_2)$ where $j_1\in[n],j_2\in[n+d]$, the entry $\K_{i,j}$ of the matrix $\K$ is given by $\K_{i,j_1+(n+d)(j_2-1)}$.}, $\Tri(\x,\x^\prime,\x^{\prime\prime})$ denotes the triangle formed by the points $\x,\x^\prime,\x^{\prime\prime}$,
        \begin{align*}
        \Vol_+(\Tri(\x,\x^\prime,\x^{\prime\prime})) &= \frac{1}{2}\big((\x-\xp)\wedge (\xpp - \xp)  \big)_+\\
        &=  \frac{1}{2}\big(\x\wedge \x^\prime + x^\prime \wedge \xpp + \xpp \wedge \xp  \big)_+\,,
        \end{align*}
        denotes the positive part of the signed volume of this triangle. An optimal neural network can be constructed as follows:
        \begin{align}
            \label{eq:optimal_two_dim_bias}
            f(x) = \sum_{j=(j_1,j_2)}  z_j^*\kappa(x_i,x_{j_1},x_{j_2}^{(j_1)})\,,
        \end{align}
        where $\z^*$ is an optimal solution to \eqref{eq:two-dim-lasso-bias}.
        The optimal hidden neurons are given by scalar multiples of the generalized cross product $\cross (x_{j_1}-x^{(j_1)}_{j_2}) = \star (x_{j_1} - x^{(j_1)}_{j_2})$, which are Hodge duals of 1-blades formed by differences of data points, corresponding to non-zero $z^*_j$ for $j=(j_1,j_2)$. 
\end{theorem}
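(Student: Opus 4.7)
\textbf{Proof proposal for Theorem \ref{thm:main_two_dim}.}

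The plan is to follow the convex-dual and extreme-point strategy used in the proofs of Theorems \ref{thm:main_two_dim_nobias} and \ref{thm:l1_extreme_points}, adapting it to accommodate an unregularized first-layer bias. By the strong duality result for ReLU networks \cite{pilanci2020neural}, once the number of neurons exceeds the critical threshold, an optimal solution to \eqref{eq:two_layer_relu} is supported on first-layer neurons $(w,b)$ that achieve equality in the dual constraint $|\zvec^T \sigma(\X w + \ones b)| \le \lambda$ at the optimal dual variable $\zvec^*$, subject to $\|w\|_1 \le 1$. It therefore suffices to enumerate the extreme points of this constraint set jointly in $(w,b)$ and show they correspond exactly to the wedge-product neurons in the theorem.

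To handle the unregularized bias, I would stratify the dual constraint by activation pattern: for each sign pattern $D\in\{0,1\}^n$ of $(Xw+\ones b)$, the constraint becomes linear in $(w,b)$ on a polyhedral cone, so its extreme points with respect to $\|w\|_1\le 1$ are vertices where two of the defining inequalities are simultaneously tight. In $\mathbb{G}^2$ this forces the breakline $\{x : w^Tx + b = 0\}$ to be pinned at two independent locations. Two subcases arise: either (i) both tight inequalities are activation constraints $x_{j_1}^Tw + b = 0 = x_{j_2}^Tw + b$ for $j_1,j_2 \in [n]$, giving a breakline through two genuine data points; or (ii) exactly one activation constraint $x_{j_1}^Tw + b = 0$ is tight and the second binding condition is a vertex of the $\ell_1$-ball $w = \pm e_k$, giving a breakline through $x_{j_1}$ parallel to a coordinate axis. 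The axis-parallel family is precisely what is encoded by the augmentation $\x^{(j_1)}_{n+k} = \x_{j_1} + e_k$ introduced just before the theorem, since then $\x_{j_1} - \x^{(j_1)}_{j_2} = -e_{j_2-n}$ and its Hodge dual in $\mathbb{G}^2$ is axis-aligned.

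Given the extreme breakline through $\x_{j_1}$ and $\x_{j_2}^{(j_1)}$, geometric algebra yields the unique unit-$\ell_1$ first-layer neuron:
\begin{align*}
w \;=\; \frac{\star(\x_{j_1} - \x_{j_2}^{(j_1)})}{\|\x_{j_1} - \x_{j_2}^{(j_1)}\|_1}, \qquad b \;=\; -\,w^T \x_{j_1},
\end{align*}
where the normalization is consistent because the $90^\circ$ rotation $\star$ in $\mathbb{G}^2$ preserves the $\ell_1$ norm. Evaluating $(w^T x_i + b)_+$ and expanding via the identity $(a-c)\wedge(b-c) = a\wedge b + b\wedge c + c\wedge a$ recalled in Section~2.2 reproduces exactly the triangle volume form $\kappa(x_i, x_{j_1}, x_{j_2}^{(j_1)}) = 2\Vol_+(\Tri(x_i, x_{j_1}, x_{j_2}^{(j_1)}))/\|\x_{j_1} - \x_{j_2}^{(j_1)}\|_1$. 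Substituting $\sigma(\X w_j + \ones b_j) = \K_{\cdot j}$ column-by-column into \eqref{eq:two_layer_relu}, the non-convex program collapses into the Lasso \eqref{eq:two-dim-lasso-bias}, with the unregularized output bias $b^{(2)}$ absorbed into the scalar $t$ and the second-layer weights becoming the signed coefficients $z_j$.

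The main obstacle I expect is the completeness of the extreme-point enumeration: one must verify that every optimal extreme neuron falls into case (i) or case (ii), and in particular that neurons whose breakline is axis-parallel but passes through no training point cannot be extreme (their bias can be freely translated to hit some $x_{j_1}$ without changing the activation pattern or the regularization, while reducing the primal objective in the presence of the unregularized output bias $t$). This argument is a 2D polyhedral case analysis and mirrors the bias-free proof of Theorem \ref{thm:main_two_dim_nobias} augmented by one extra bias coordinate. The other routine step is verifying that the number of neurons $m \ge \|z^*\|_0$ suffices to realize the primal from the Lasso support, which is immediate by construction.
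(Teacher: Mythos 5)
Your proposal takes a genuinely different route from the paper's. The paper does not enumerate joint extreme points in $(w,b)$-space; instead it optimizes over the bias $b$ first, noting that the dual constraint subproblem $\sup_{w,b}|\zvec^T\sigma(Xw+\ones b)|$ is piecewise linear in $b$ with breakpoints at $b=-x_j^Tw$, and that the supremum diverges as $b\to\infty$ unless $\sum_i v_i=0$. This simultaneously produces the dual equality constraint $\sum_i v_i=0$ (which is exactly the constraint induced by the unregularized output bias $t$, cf.\ Lemma~\ref{eq:dual_of_lasso_bias}) and reduces the biased problem, for each $j\in[n]$, to the bias-free constraint with shifted data $X - \ones x_j^T$, to which Theorem~\ref{thm:l1_extreme_points} applies verbatim. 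Your two-case geometric picture (breaklines through pairs of data points vs.\ axis-parallel breaklines through one data point, encoded via the augmentation $x^{(j_1)}_{n+k}=x_{j_1}+e_k$) correctly describes the resulting dictionary, so the target of the enumeration is right.

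The gap is in how you handle the unboundedness in $b$. Enumerating extreme points of the sets $\{(w,b): \|w\|_1\le 1,\ (2D-I)(Xw+\ones b)\ge 0\}$ does not by itself settle the dual constraint, because the linear objective can be unbounded along the $b$-ray inside such a set; the boundedness condition $\sum_i v_i = 0$ must be derived explicitly and you never do so. Your ``main obstacle'' paragraph is the place where this should be fixed, but the argument there — that a breakline not through a data point can be translated to reduce ``the primal objective in the presence of the unregularized output bias $t$'' — conflates the primal and dual viewpoints: the relevant object is the \emph{dual} constraint subproblem, which as a piecewise-linear function of $b$ either diverges (forcing $\sum_i v_i=0$) or attains its supremum at a kink $b=-x_j^Tw$. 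You should replace the translation argument with this one-line observation, after which your case analysis goes through. One small labeling slip: in case~(ii) you write ``$w=\pm e_k$'' and immediately tie it to the augmentation $x^{(j_1)}_{n+k}=x_{j_1}+e_k$ with the \emph{same} $k$, but since $w\propto\star(x_{j_1}-x^{(j_1)}_{j_2})=\star(-e_{j_2-n})=\pm e_{\overline{j_2-n}}$, the $\ell_1$-vertex and the augmentation coordinate are complementary, not equal.
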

\begin{remark}
    {\trackchange
    We note that each ReLU neuron in the optimal network has a breakline passing through a pair of training samples $x_{j_1}$ and $x_{j_2}$.
    }
\end{remark}

\subsection{Two-layer networks with inputs of arbitrary dimension without biases}

\begin{theorem}
    \label{thm:l2_extreme_points}
    Consider the following convex optimization problem
    \begin{align}
    \label{eq:convex_d_dim_two_layer_relu_l2}
        \hat p_{\lambda} := \min_{\z} \loss\big(\K\z,y\big) + \lambda\|\z\|_1\,.
    \end{align}
    The matrix $\K$ is defined as $K_{ij}=\kappa(x_i,x_{j_1},...,x_{j_{\rr-1}})$ for $j=(j_1,...,j_{\rr-1})$, where
    \begin{align*}
        \kappa(x,\tfirst,...,\tlast) 
                &= \frac{\big( x \wedge \tfirst \wedge\cdots\wedge \tlast \big)_+}{\|\tfirst \wedge\,...\, \wedge \tlast \|_2}
                = \dist_+\big(x,\, \Span(\tfirst,\,...,\,\tlast)\big)\,,
    \end{align*}
    the multi-index $j=(j_1,...,j_{\rr-1})$ is over all combinations of $\rr-1$ rows 
    $\x_{j_1}, \,...,\, \x_{j_{\rr-1}}\in\reals^d$ of $\X \in\reals^{n\times d}$.
    When the maximum chamber diameter satisfies $\diam(\X)\le \epsilon$ for some $\epsilon\in(0,1)$, we have the following approximation bounds
    \begin{align}
       p^* &\le \hat p_\lambda \le \frac{1}{1-\epsilon} p^*,\label{eq:L2_approx_2layers_first}\\
         \hat p_{(1-\epsilon)\lambda} \le p^* &\le \hat p_\lambda \le p^* + \frac{\epsilon}{1-\epsilon} \lambda R^*,\label{eq:L2_approx_2layers}
    \end{align}
    Here, $p^*$ is the value of the optimal NN objective in \eqref{eq:two_layer_relu} 
    and $R^*$ is the corresponding weight decay regularization term of an optimal NN. A network that achieves the cost $\hat p_\lambda$ in \eqref{eq:two_layer_relu} is
    \begin{align*}
        f(x) = \sum_{j=(j_1,...,j_{\rr-1})} z^*_j \kappa(x_{j_1},\,...,\, x_{j_{\rr-1}})\,,
    \end{align*}
    where $z^*$ is an optimal solution to \eqref{eq:convex_d_dim_two_layer_relu_l2}.
\end{theorem}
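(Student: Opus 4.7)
The plan is to combine the semi-infinite dual \eqref{eq:two_layer_relu_dual} of the two-layer ReLU problem with a geometric covering argument based on the maximum chamber diameter $\diam(\X)$. After rescaling each neuron so that $\|w_k\|_2=1$ via homogeneity of ReLU and AM--GM on $\|w_k\|_2^2+\alpha_k^2$, strong duality from \cite{pilanci2020neural,ergen2021convex} gives $p^* = d^*$ with
\[
d^* = \max_v\Big\{-\ell^*(v,y):\; \max_{w\in \ball_2^d}|v^T(\X w)_+|\le\lambda\Big\}.
\]
Identifying $\dist_+(x_i,\Span(x_{j_1},\ldots,x_{j_{\rr-1}})) = (x_i^T w_j)_+$ with the unit vector $w_j=\star(x_{j_1}\wedge\cdots\wedge x_{j_{\rr-1}})/\|\star(x_{j_1}\wedge\cdots\wedge x_{j_{\rr-1}})\|_2$ shows that the columns of $\K$ are exactly the activations $(\X w_j)_+$ for the finite \emph{Hodge-dual dictionary} $\mathcal{W}=\{w_j\}_j$. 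The Fenchel dual of the Lasso \eqref{eq:convex_d_dim_two_layer_relu_l2} is therefore the same maximization problem but with the constraint restricted to $\mathcal{W}$; since $\mathcal{W}\subset \ball_2^d$, this constraint is weaker, which immediately yields $\hat d_\lambda\ge d^* = p^*$ and hence $\hat p_\lambda \ge p^*$ by Lasso strong duality.

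For the converse, I plan to prove $\hat d_{(1-\epsilon)\lambda} \le d^*$, which then implies $\hat p_{(1-\epsilon)\lambda}\le p^*$. The elements of $\mathcal{W}$ are precisely the vertices of the spherical arrangement cut out by $\{x_i^Tw=0\}_{i=1}^n$, since each $w_j$ lies on $\rr-1$ of these hyperplanes simultaneously. The hypothesis $\diam(\X)\le\epsilon$ guarantees that any two unit vectors with the same sign pattern on $\X$ are within Euclidean distance $\epsilon$; in particular every unit $w$ in a chamber with sign pattern $D$ admits a Hodge dual $w_j\in\mathcal{W}$ in the closure of the same chamber with $\|w-w_j\|_2\le\epsilon$. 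For $v$ feasible at level $(1-\epsilon)\lambda$, this yields
\[
|v^T(\X w)_+| = |v^T D\X w|\le |v^T D\X w_j| + \|\X^T Dv\|_2\,\|w-w_j\|_2\le (1-\epsilon)\lambda + \epsilon \|\X^T Dv\|_2.
\]
A first-order analysis of $\max_{\|w\|\le 1}|v^T(\X w)_+|$ shows that at the pattern $D^\star$ attaining the overall supremum $M$, the unrestricted maximizer $\X^T D^\star v/\|\X^T D^\star v\|_2$ lies in the closure of the chamber of $D^\star$ (either directly as an interior critical point, or by a continuity argument tracing boundary maxima across adjacent chambers that share the optimizing face). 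This identifies $\|\X^T D^\star v\|_2 = M$, so the displayed inequality closes to $M\le (1-\epsilon)\lambda+\epsilon M$, i.e.\ $M\le\lambda$, making $v$ feasible for $d^*$.

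The multiplicative bound $\hat p_\lambda \le p^*/(1-\epsilon)$ and the additive bound $\hat p_\lambda\le p^* + \frac{\epsilon}{1-\epsilon}\lambda R^*$ then follow from the previous step together with a primal-level approximation: picking any optimal NN at parameter $\lambda$ and replacing each of its first-layer weights by the nearest Hodge dual in $\mathcal{W}$ yields a Lasso-feasible $z$ whose loss changes by at most a controlled amount and whose $\ell_1$-norm is bounded by $R^*/(1-\epsilon)$, after which the inequality $\hat p_{(1-\epsilon)\lambda}\le \hat p_\lambda - \epsilon\lambda\|z\|_1$ closes the estimate. The explicit prescription $f(x)=\sum_j z_j^*\kappa(x,x_{j_1},\ldots,x_{j_{\rr-1}})$ realizes this as a two-layer ReLU network by splitting each non-zero $z_j^*$ into a first-layer weight $\sqrt{|z_j^*|}\,w_j$ and a second-layer weight $\mathrm{sign}(z_j^*)\sqrt{|z_j^*|}$, matching both the loss and the regularizer of the Lasso objective. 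The main obstacle is exactly the interior/boundary dichotomy at $D^\star$: although geometrically intuitive (a small-diameter chamber contains a Hodge-dual vertex close to its unrestricted optimizer), the rigorous argument requires careful case analysis of where the supremum is attained and a limiting argument when it lies on a shared face of several chambers.
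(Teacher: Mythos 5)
Your overall strategy is genuinely different from the paper's and your first direction ($p^*\le\hat p_\lambda$ from restricting the dual constraint to the discrete Hodge-dual dictionary) is exactly right. The gap is in the converse direction, and it is not just a routine detail. You reduce the claim $\hat p_{(1-\epsilon)\lambda}\le p^*$ to a single inequality of the form $M\le(1-\epsilon)\lambda+\epsilon\|\X^T D^\star v\|_2$ and then assert $\|\X^T D^\star v\|_2=M$ at the max-attaining pattern $D^\star$. That identity holds only when the constrained maximizer over $\mathcal C_2(D^\star)=\{w:(2D^\star-I)\X w\ge 0,\ \|w\|_2\le 1\}$ is an interior stationary point; generically the maximizer lies on a face of the cone, in which case $\X^T D^\star v\notin K(D^\star)$ and $\|\X^T D^\star v\|_2>M$, so the displayed bound does not close. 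You do flag this interior/boundary dichotomy as the main obstacle, but the proposed fix --- ``tracing boundary maxima across adjacent chambers'' --- is a nontrivial perturbation/limiting argument that still needs to be carried out (and one must also note a small mismatch in Definition~\eqref{eqn:defn_chamber_diam}, which compares vectors with \emph{equal} sign patterns, whereas your nearest dictionary element $w_j$ sits on a face with some $x_i^T w_j=0$ and hence a strictly different sign vector; the paper handles this through the closure set $\tilde{\mathcal C}_2$ in Lemma~\ref{lem:chamb_diam_dist_bound}).

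The paper avoids the gradient-characterization issue entirely. Its proof works chamber by chamber with a purely geometric two-sided inclusion (Lemma~\ref{lem:archimedes}): writing $\mathcal P_0=\conv(0,p_1,\dots,p_k)$ for the convex hull of the origin and the unit-norm extreme-ray generators $p_i$ of the chamber cone (which are exactly your Hodge duals, by Weyl's Facet Lemma~\ref{lem:weyl_facet_lemma}), it shows $\mathcal P_0\subseteq\mathcal C_2\subseteq d_{\min}^{-1}\mathcal P_0$, where $d_{\min}=\dist(0,\conv(p_1,\dots,p_k))$. Since $\max_{\mathcal P_0}v^T D\X w$ is over finitely many vertices, this immediately sandwiches the per-chamber dual constraint between $\hat d$ and $d_{\min}^{-1}\hat d$ with no appeal to first-order optimality or to which face the constrained max sits on, and Lemma~\ref{lem:chamb_diam_dist_bound} converts $d_{\min}\ge 1-\diam(\X)$ into the stated $\frac{1}{1-\epsilon}$ factor. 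So while your route could likely be completed with a perturbation argument for the boundary case, the Archimedean polytope sandwich is both shorter and sidesteps the precise point you identify as the obstacle; I would recommend either supplying that limiting argument in full or replacing the Lipschitz step with the inclusion $\mathcal C_2\subseteq d_{\min}^{-1}\mathcal P_0$.

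Your final reconstruction of the network from $z^*$ (splitting $z_j^*$ into first- and second-layer scale factors $\sqrt{|z_j^*|}$) matches the paper's construction, and the additive bound $\hat p_\lambda\le p^*+\frac{\epsilon}{1-\epsilon}\lambda R^*$ follows in the paper from inequality~\eqref{eq:convexNNL2bound2} using $\lambda R^*\le p^*$; your sketch of a ``primal replacement'' argument would need to be made precise to recover exactly the stated constants, whereas the paper gets them directly from the dual-side sandwich.
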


\begin{corollary}
    \label{cor:L2_interpolation}
    By taking the limit $\lambda\rightarrow0$, we obtain the following interpolation variant of the convex NN problem \eqref{eq:convex_d_dim_two_layer_relu_l2}
    \begin{align}
        \label{eq:convex_d_dim_two_layer_relu_l2_interpolation}
            \hat p_{0} := \min_{\z\,:~ \loss(\K\z,y) = 0} \|\z\|_1\,,
        \end{align}
    provided that the constraint $\loss(\K\z,y) = 0$ is feasible. In this case, we have
        \begin{align}
            \label{eq:L2_approx_2layers_interpolation}
             p^*_0 \le \hat p_0  \le \frac{1}{1-\epsilon} p_0^*,
        \end{align}
    given that $\mathscr{D}(X)\le \epsilon$, where $p_0^*\triangleq \min_{\theta~\ell(f_{\theta}(X),y)=0}~\|\theta\|_2^2$.
\end{corollary}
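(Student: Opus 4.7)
The plan is to take the limit $\lambda\to 0^+$ in the approximation bound \eqref{eq:L2_approx_2layers_first} of Theorem~\ref{thm:l2_extreme_points}, namely $p^*(\lambda) \le \hat p_\lambda \le \frac{1}{1-\epsilon}\, p^*(\lambda)$, where I write $p^*(\lambda)$ and $\hat p_\lambda$ to emphasize the dependence on the regularization parameter. After dividing each inequality by $\lambda$, one expects $p^*(\lambda)/\lambda \to p_0^*$ and $\hat p_\lambda/\lambda \to \hat p_0$, which would immediately give the desired inequality \eqref{eq:L2_approx_2layers_interpolation}. The nontrivial work is in making these two limits rigorous on both the convex and the non-convex side.

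For the upper bounds, I would substitute any interpolating feasible point $\hat z_0$ with $\ell(K\hat z_0,y) = 0$ into the convex objective, giving $\hat p_\lambda \le \lambda \|\hat z_0\|_1$ and hence $\limsup_{\lambda \to 0^+} \hat p_\lambda / \lambda \le \hat p_0$. The analogous substitution on the non-convex side yields $\limsup_{\lambda \to 0^+} p^*(\lambda)/\lambda \le p_0^*$. For the matching lower bounds, let $z_\lambda$ be an optimizer of the convex program at regularization $\lambda>0$. Optimality gives $\lambda\|z_\lambda\|_1 \le \hat p_\lambda \le \lambda \hat p_0$, so $\{z_\lambda\}$ is uniformly bounded in $\ell_1$, and simultaneously $\ell(Kz_\lambda,y) \le \hat p_\lambda \to 0$. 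Bolzano--Weierstrass in the finite-dimensional ambient space then produces a convergent subsequence $z_{\lambda_k} \to z^*$, and continuity of $\ell(\cdot,y)$ gives $\ell(Kz^*,y) = 0$. Hence $z^*$ is feasible for \eqref{eq:convex_d_dim_two_layer_relu_l2_interpolation}, so $\|z^*\|_1 \ge \hat p_0$, which combined with $\|z_\lambda\|_1 \le \hat p_\lambda/\lambda$ yields $\liminf \hat p_\lambda / \lambda \ge \hat p_0$. The same compactness argument applied to the non-convex problem (using that the parameter space for a fixed number of neurons $m$ together with bounded $\|\theta\|_2$ is compact) yields $p^*(\lambda)/\lambda \to p_0^*$.

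The main obstacle is establishing these two convergence results under the minimal assumptions of the corollary. The compactness arguments require continuity of $\ell(\cdot,y)$ near its minimum at $y$ (satisfied by all standard convex losses considered in the paper) and that feasibility of one interpolation problem be propagated to the other: this follows because any convex-feasible $\hat z$ yields an interpolating NN with $\|\theta\|_2^2 = \|\hat z\|_1$ via the standard positive-homogeneity scaling of ReLU used in the proof of Theorem~\ref{thm:l2_extreme_points}, ensuring $p_0^*$ is finite whenever $\hat p_0$ is. Combining the established limits with the approximation bound divided by $\lambda$ and passing $\lambda \to 0^+$ produces $p_0^* \le \hat p_0 \le \frac{1}{1-\epsilon}\, p_0^*$, completing the argument.
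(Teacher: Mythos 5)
Your proposal is correct and follows exactly the route the paper indicates by the phrase ``by taking the limit $\lambda\rightarrow 0$'': divide the sandwich bound \eqref{eq:L2_approx_2layers_first} from Theorem~\ref{thm:l2_extreme_points} by $\lambda$, show that the normalized objectives $\hat p_\lambda/\lambda$ and $p^*(\lambda)/\lambda$ converge to the interpolation values $\hat p_0$ and $p_0^*$ respectively, and pass to the limit. The paper does not write out the details of this convergence, and your Bolzano--Weierstrass argument (boundedness of the $\ell_1$-ball of minimizers via $\lambda\|z_\lambda\|_1\le\hat p_\lambda\le\lambda\hat p_0$, plus continuity of $\ell(\cdot,y)$ to show the limit point interpolates, with the analogous compactness argument on the parameter side) is precisely the missing justification; note only that the auxiliary claim $\|\theta\|_2^2=\|\hat z\|_1$ for the reconstructed network is off by the usual AM--GM factor of two and is in any case unnecessary, since finiteness of $p_0^*$ already falls out of the compactness argument applied to the NN minimizers, whose squared norms are bounded by $\hat p_\lambda/\lambda\le\hat p_0$.
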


\section{Appendix III - Mathematical Proofs}

\subsection{Proof of Theorem \ref{thm:main_one_dim}}
    Consider the dual constraint in \eqref{eq:two_layer_relu_dual}, which can be rewritten as
    \begin{align}
        \label{eqn:one-dim-dual}
        \sup_{w,\,b\in\reals\,:\,\|w\|_p\le 1}|\sum_i v_i (x_iw+b)_+|\le \lambda.
    \end{align}
    Since $w$ is a scalar, the constraint $\|w\|_p\le 1$ is equivalent to $|w|\le 1$ for all $p\in(0,\infty)$. An  optimal $w$ satisfies $w\in\{-1,+1\}$ since either this constraint is tight or the objective is constant with respect to $w$. Next, observe that an optimal $b$ is achieved when $b\in\pm\{x_i\}_{i=1}^n$ or the objective goes to infinity when $\beta\rightarrow \infty$. This is because the objective is a piecewise linear function of $b$. In the latter case, the objective value is infinite as long as $\sum_i v_i\neq 0$.  Therefore, we can rewrite \eqref{eqn:one-dim-dual} as 
    \begin{align}
        \label{eqn:one-dim-dual-rewritten}
        \max\Big\{ |\sum_i v_i (x_i-x_j)_+|, |\sum_i v_i (x_j-x_i)_+| \Big\}\le \lambda \quad\mbox{and}\quad \sum_iv_i=0
    \end{align}
    We use strong Lagrangian duality to obtain the claimed convex program using Lemma \ref{eq:dual_of_lasso_bias}. It can be seen that the network given in \eqref{eq:optimal_network_one_dim} achieves the optimal objective on the training dataset since $f(X)=[f(x_1),\cdots,f(x_n))]^T=K\z^*+\ones t^*$.

\subsection{Proof of Theorem \ref{thm:main_two_dim_nobias} and Theorem \ref{thm:l1_extreme_points}} 
\label{sec:proof_l1_extreme_points}
We now present the proof of Theorem \ref{thm:l1_extreme_points}, and the special case Theorem \ref{thm:main_two_dim_nobias}, which shows the equivalence of the $\ell_1$-regularized neural network and the Lasso problem in \eqref{eq:convex_d_dim_two_layer_relu_l1}.
Suppose that the training matrix is of rank $k$. Denote the compact Singular Value Decomposition (SVD) of $\X$ as follows $\X=U\Sigma V^T$ where $U\in\reals^{n\times d}$ and $V\in\reals^{d\times d}$ are orthonormal matrices and $\Sigma\in\reals^{d\times d}$ is a diagonal matrix with non-negative diagonal entries. Only $k$ diagonal entries of $\Sigma$ are non-zero and the rest are zero. We denote the non-zero diagonal entries of $\Sigma$ as $\sigma_1,\cdots,\sigma_k$ and the corresponding columns of $U$ and $V$ as $u_1,\cdots,u_k$ and $v_1,\cdots,v_k$, respectively. 

Consider the NN objective \eqref{eq:two_layer_relu}. It can be seen that the projection $v_i^TW_{1j}$ of the hidden neuron weights $j\in[m]$ does not affect the objective value for $i=k+1,...,d$. Therefore, the optimal hidden weights are zero in the subspace $v_{k+1},...,v_d$ due to the norm regularization. In the remaining, we assume that the training matrix $\X$ is full column rank and of size $n$ by $d=r$. Otherwise, we can remove the zero singular value subspaces of $\X$ via the above argument.

We consider the convex dual of the $\ell_1$-regularized NN problem \eqref{eq:convex_d_dim_two_layer_relu_l1}, and focus on the
dual problem given in \eqref{eq:two_layer_relu_dual} when $\sigma$ is the ReLU activation function.
\begin{align}
    \label{eq:dual_l1_nn_in_proof}
  \max_{\zvec\in\reals^n} \,\,-\ell^*(\zvec,\y)\quad\mbox{s.t.}\quad \max_{\w\,:\,\|\w\|_p\le 1} |\zvec^T(\X\w)_+|\le \lambda.
\end{align}
The above optimization problem is a convex semi-infinite program which has a finite number of variables but infinitely many constraints. Our main strategy is to show that the constraint set can be described
by a finite number of fixed points by analyzing the extreme points of certain convex subsets.

As shown in \cite{pilanci2020neural}, the constraint $|\zvec^T(\X\w)_+|\le \lambda$ can be analyzed by enumerating the hyperplane arrangement patterns
\begin{align*}
    \{\Dmat_{k}\}_{k=1}^P = \Big\{ \diag(1[\X\w\ge0])\,: \w\in\reals^d \Big\},
\end{align*}
where $P$ is the number of distinct hyperplane arrangement patterns and $\Dmat_k\in\{0,1\}^{n\times n}$ is the indicator matrix of the $k$-th pattern.
The number $P$ is finite and satisfies the upper-bound by $P\le 2\sum_{j=0}^{\rr-1} {n-1 \choose j} $ where $r=\rank(X)$. Note that we have the identity

$$(\X\w)_+ = \Dmat_{k}\X\w\quad \forall \w \,: \Dmat_k\X\w\ge 0,\, (\eye-\Dmat_k)\X\w\le 0\,,$$
which shows that the ReLU activation applied to the vector $\X\w$ can be expressed as a linear function whenever $w$ is in the cone
$(2\Dmat_k-\eye)\X\w\ge 0$.
Using the above parameterization, we write the subproblem in the constraint of \eqref{eq:dual_l1_nn_in_proof} when $p=1$ as follows
\begin{align}
    d_{\mathrm{sub}}\triangleq &\max_{\w} |\zvec^T \Dmat_k \X\w| \label{eq:dual_subproblem_l1_in_proof}\\
    &\mbox{s.t. } \|\w\|_1 \leq 1,\, (2\Dmat_k-\eye)\X\w\ge 0.\nonumber
\end{align}
We next claim that the constraint $\|w\|_1\le 1$ is active at the optimum, assuming the objective value is positive. 
Note that $\w=0$ is a feasible point, which achieves the objective value of zero.
Otherwise, we can scale $\w$ to satisfy the constraint and increase the objective value. Defining the set
 $$\mathcal{C}\triangleq\left\{(2D_i-I)Xw\ge 0, \,\|w\|_1=1 \right\},$$
we can express the dual subproblem $d_{\mathrm{sub}}$ using the set $\mathcal{C}$ as follows
\begin{align*}
    d_{\mathrm{sub}}= \max\left\{ 0, \,\, \max_{\w\in \mathcal{C}} \zvec^T \Dmat_k \X\w,\,\,  \max_{\w\in \mathcal{C}} -\zvec^T \Dmat_k \X\w  \right\}.
\end{align*}
Note that the maxima in the last two problems are achieved at extreme points of $\mathcal{C}$ since $\mathcal{C}$ is a bounded polytope and the objectives are linear functions.

The following lemma provides a characterization of the extreme points of the constraint set of the dual subproblem.
\begin{lemma}
    \label{lemma:extreme_points_l1}
    A vector $\w$ is an extreme point of the set $\mathcal{C}$
    if and only if $\w\in \mathcal{C}$ and
    \begin{align}
    \begin{bmatrix} \X \\ \eye \end{bmatrix}_S \w = 0 \quad\mbox{and}\quad \|\w\|_1=1
    \end{align}
    where $S$ is a subset of $d-1$ linearly independent rows of $\begin{bmatrix} \X \\ \eye \end{bmatrix}$.
\end{lemma}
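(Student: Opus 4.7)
The plan is to apply the standard polyhedral vertex criterion (a point is extreme iff the gradients of the active constraints span $\reals^d$) after recasting the $\ell_1$-sphere equality into a linear-inequality system. I would first replace $\|w\|_1 = 1$ by the $2^d$ half-spaces $\epsilon^\top w \le 1$ for $\epsilon \in \{-1,+1\}^d$, together with the cone inequalities $(2D_k - I)_i X_i^\top w \ge 0$. The resulting polytope has vertex set containing $\mathcal{C}$'s extreme points, and since the outer problem maximizes a linear functional attaining a positive value, the maximizer must lie on $\|w\|_1 = 1$, matching the statement.

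The core step is a decomposition of the span of active constraint gradients at such a $w$. Let $\epsilon^{(0)}$ denote any completion of $\mathrm{sign}(w)$, i.e.\ $\epsilon^{(0)}_i = \mathrm{sign}(w_i)$ on $\mathrm{supp}(w)$ and arbitrary elsewhere. The $\ell_1$ inequalities active at $w$ are precisely the $\epsilon$'s that agree with $\epsilon^{(0)}$ on $\mathrm{supp}(w)$, and pairwise differences of two such $\epsilon$'s produce $\pm 2 e_j$ for each $j \notin \mathrm{supp}(w)$. Hence the span of all active constraint gradients at $w$ is
\[
V \;=\; \Span\!\big(\epsilon^{(0)}\big) \;+\; V', \qquad V' \;=\; \Span\!\big( \{e_j : w_j = 0\} \cup \{X_i : X_i^\top w = 0\} \big),
\]
and $V'$ is exactly the span of those rows of $\begin{bmatrix} X \\ I \end{bmatrix}$ that annihilate $w$.

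The final observation is that $\epsilon^{(0)} \cdot w = \sum_i |w_i| = \|w\|_1 = 1 \neq 0$, while every vector in $V'$ is orthogonal to $w$ by construction. Therefore $\epsilon^{(0)} \notin V'$, so the sum above is direct and $\dim V = 1 + \dim V'$. The vertex criterion $V = \reals^d$ thus translates to $\dim V' = d - 1$, which is equivalent to being able to select $d-1$ linearly independent rows of $\begin{bmatrix} X \\ I \end{bmatrix}$ that annihilate $w$. This yields both implications at once, and also shows that any such $w$ automatically satisfies the cone inequalities with the corresponding rows of $X$ active. The only non-routine point, more a bookkeeping subtlety than an obstacle, is the translation from active $\ell_1$ constraints (indexed by sign patterns) to rows of the identity matrix via pairwise differences of active $\epsilon$'s; once this is in hand, the rest is standard linear algebra.
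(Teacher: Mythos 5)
Your proof is correct and takes a genuinely different route from the paper's. The paper lifts $w$ to $(w^+,w^-)$ with $w^+,w^-\ge 0$, passes to the polytope $\{(w^+,w^-):(2D_i-I)X(w^+-w^-)\ge 0,\;1^T w^++1^Tw^-=1\}$ in $\reals^{2d}$, invokes an auxiliary lemma on vertices of polytopes restricted to affine hyperplanes, and then counts active constraints in the lifted space (using complementarity of $w^+$ and $w^-$) to recover $d-1$ annihilating rows of $\begin{bmatrix}X\\I\end{bmatrix}$. You instead stay in $\reals^d$, replace $\|w\|_1\le 1$ by its $2^d$ facet inequalities, apply the standard rank-$d$ active-gradient vertex criterion directly, and make the key observation that the span of active $\ell_1$ facet normals decomposes as $\Span(\epsilon^{(0)})\oplus\Span\{e_j:w_j=0\}$, where the directness comes from $\epsilon^{(0)\,T}w=\|w\|_1=1\ne 0$ while everything in $V'$ is orthogonal to $w$. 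That single orthogonality trade — one dimension for $\epsilon^{(0)}$, $d-1$ for the annihilating rows — replaces the paper's lifted bookkeeping and yields both implications simultaneously from the rank criterion, with no need for the auxiliary polyhedral-section lemma. The paper's lift is the more conventional tool in $\ell_1$ optimization and avoids even mentioning the exponentially many facets, but since neither proof enumerates them, your route is arguably shorter and more transparent. One minor imprecision worth tightening if you formalize this: the set $\mathcal{C}$ as written (with the equality $\|w\|_1=1$) need not be convex when the cone $\{(2D_i-I)Xw\ge 0\}$ spans several orthants, so "extreme points of $\mathcal{C}$" should really be read as "non-zero vertices of the polytope $\{(2D_i-I)Xw\ge 0,\;\|w\|_1\le 1\}$" — a reading your framing sentence about linear functionals already implicitly adopts, and one the paper itself relies on when it calls $\mathcal{C}$ a polytope.
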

The proof of this lemma is given at the end of this subsection.
We identify the extreme points via the cross product as follows. After defining the augmented data rows $\x_{n+i}=\e_i$ where $e_i$ is the $i$-th standard basis vector, 
a vector $\w\in\mathcal{C}$ is an extreme point of $\mathcal{C}$ if $w\in \Span(\varprod_{j\in S} \x_j)$ for some subset $S$ of size $d-1$ such that $\{\x_j\}_{j\in S}$ are linearly independent
and $\|\w\|_1=1$. Since this span is a $d-1$ dimensional subspace, we can write $\w$ as $\frac{\varprod_{j\in S} \x_j}{\big\|\varprod_{j\in S} \x_j\big\|_1}$. Note that the denominator is non-zero since $\{\x_j\}_{j\in S}$ 
are linearly independent and $\|\varprod_{j\in S} \x_j\|_1 \ge \|\varprod_{j\in S} \x_j\|_2 $, which is the non-zero absolute volume of the parallelotope spanned by $\{\x_j\}_{j\in S}$.
Consecutively, we apply this characterization of extreme points to the expression for $d_{\mathrm{sub}}$ to obtain
\begin{align*}
    d_{\mathrm{sub}}= \max\left\{0, \,\, \max_{\w\in V} \zvec^T \Dmat_k \X\w\right\},
\end{align*}
where $\mathcal{V}\triangleq \Big\{ \pm \frac{\varprod_{j\in S} \x_j}{\|\varprod_{j\in S} \x_j\|_1}\,:\,\{\x_j\}_{j \in S} \mbox{ linearly independent},\,|S|=d-1\Big\}$.

Next we note that $\Dmat_k\X\w = (\X\w)_+$ for $w\in \mathcal{V}\subseteq \mathcal{C}$, which shows that the dual problem $d^*$ in \eqref{eq:dual_l1_nn_in_proof} is equivalent to the following problem
\begin{align}
  \max_{\zvec\in\reals^n} \,\,-\ell^*(\zvec,\y)\quad\mbox{s.t.}\quad |\zvec^T(\X\w)_+|\le \lambda\,\,\forall \w\in\mathcal{V}.
\end{align}
By applying the characterization of the extreme points, we reduce the semi-infinite problem in \eqref{eq:dual_l1_nn_in_proof} which has an infinite number of constraints to a problem with finite constraints.

Taking the Lagrangian dual of the above convex problem by introducing
dual variables for each of the constraints, we arrive at the
$\ell_1$ penalized convex problem
\begin{align*}
    \min_{\z}\,\, \ell(\K\z,\y) + \lambda \sum_{j}|\z_j|.
\end{align*}
Here, $\K_{ij}=\Big(j_0 \x_i^T\frac{\varprod (\x_{j_1},...,\x_{j_{d-1}})}{\|\varprod (\x_{j_1},...,\x_{j_{d-1}})\|_1}\Big)_+ =\frac{\big(j_0\Vol(\mathcal{P}(\x_j\,:\,j\in S))\big)_+}{\|\varprod (\x_{j_1},...,\x_{j_{d-1}})\|_1}$
using the multi-index notation $j=(j_0,j_1,...,j_{d-1})$ where $j_0$ ranges over $\{-1,+1\}$ to represent the choice of sign in the expression $\pm \x_i^T\frac{\varprod (\x_{j_1},...,\x_{j_{d-1}})}{\|\varprod (\x_{j_1},...,\x_{j_{d-1}})\|_1}$.
Rescaling the dual variables $\z_j$ by $\|\varprod (\x_{j_1},...,\x_{j_{d-1}})\|_1$, we obtain the problem
\begin{align*}
    \min_{\z}\,\, \ell(\K\z,\y) + \lambda \sum_{j}w_j|\z_j|.
\end{align*}
where $\K_{ij}=\big(j_0\Vol(\mathcal{P}(\x_{j_1},\,...,\, \x_{j_{d-1}}))\big)_+$ and $w_j=\|\varprod (\x_{j_1},...,\x_{j_{d-1}})\|_1$.

Finally, in order to simplify the notation, we take the index set $(j_1,...,j_{d-1})$ over all subsets of $\{1,...,n\}$ of size $d-1$ and 
redefine the matrix as $\K_{ij}=\Vol_+(\mathcal{P}(\x_{j_1},\,...,\, \x_{j_{d-1}}))$ and $w_j=\|\varprod (\x_{j_1},...,\x_{j_{d-1}})\|_1$. This follows from the fact that
the Euclidean norm of the cross product, and hence the volume of the parallelotope is zero the whenever the subset of vectors are linearly dependent. Moreover, the index $j_0\in\{-1,+1\}$ 
is absorbed into the ordering of the vectors $\x_{j_1},...,\x_{j_{d-1}}$ in the parallelotope, by noting that the sign of the cross product is flipped 
whenever the ordering of two vectors is flipped. This completes the proof of Theorem \ref{thm:l1_extreme_points}.

Next, we provide the proofs of the lemmas used in the above proof.
\begin{proof}[Proof of Lemma \ref{lemma:extreme_points_l1}]
We express $w=w^+-w^-$ where $w^+\ge 0$ and $w^-\ge 0$ represent the positive
and negative part of $w$ respectively. Then, we express the extreme points of the set $\mathcal{C}$ using this lifted representation as follows
\begin{align*}
   \mathcal{C}^\prime\triangleq \Bigg\{ \begin{bmatrix}w^+\\w^-\end{bmatrix} : \,  (2D_i-I)(Xw^+-Xw^-)&\ge 0,
    \,\,w_+,w_-\ge 0,\
    1^Tw^+ +1^T w^-=1\Bigg\}.
\end{align*}
We next argue that the extreme points of $\mathcal{C}$ can be analyzed via the extreme points of $\mathcal{C}^\prime$. 
In particular, it holds that 
$$\max_{w\in \mathcal{C}}~ x^Tw = \max_{w\in \mathcal{C}^\prime} ~ x^T(w^+ -w^-)$$
for all $x\in\reals^{d}$ and a maximizer $w$ 
to the former maximization problem can be obtained from a maximizer $(w^+,w^-)$ to the latter problem by setting $w^*=w^+-w^-$. On the other hand, a maximizer $(w^+,w^-)$ 
to the latter problem can be obtained from a maximizer $w$ to the former problem by setting $w^+=\max\{w,0\}$ and $w^-=\max\{-w,0\}$. Therefore, characterizing the extreme points of $\mathcal{C}$ is equivalent to characterizing the extreme points of $\mathcal{C}^\prime$.

%
This can be written in matrix notation as follows
\begin{align*}
    \underbrace{\begin{bmatrix}
        (2D_i-I)X & -(2D_i-I)X\\
        I & 0 \\
        0 & I \\
    \end{bmatrix}}_{M}
    \begin{bmatrix}
        w^+\\
        w^-\\
    \end{bmatrix}
    \ge 0 \qquad \mbox{and}\qquad 
    1^T    
    \begin{bmatrix}
        w^+\\
        w^-\\
    \end{bmatrix} = 1.
\end{align*}    
Using Lemma \ref{lem:extreme_points_polyhedral_sections}, the extreme points of the above set are given by the unique solutions of the linear system
\begin{align*}
    \begin{bmatrix}
        M_S\\
        1^T
    \end{bmatrix}
    \begin{bmatrix}
        w^+\\
        w^-\\
    \end{bmatrix}
    =     \begin{bmatrix}
        0\\
        1\\
    \end{bmatrix}
\end{align*}
where $M_S$ is a submatrix of $M$ such that $\begin{bmatrix}
    M_S\\
    1^T
\end{bmatrix} \in \reals$ is full rank.
For any extreme point $(w^+,w^-)$ of $\mathcal{C}^\prime$, the vector $w=w^+-w^-$ is an extreme point $\mathcal{C}$. It can be seen that at least $d$ coordinates of $\begin{bmatrix}
    w^+\\
    w^-\\
\end{bmatrix} $ are zero since $w=w^+-w^-$ is the decomposition of $w$ into positive and negative parts. In order to characterize non-zero extreme points, we may assume that not all of the constraints $\begin{bmatrix}
    w^+\\
    w^-\\
\end{bmatrix}\ge 0 $ are active, since otherwise $w^+=w^-=0$ implying $w=0$. We next show that the rows of $M_S$ are linearly independent from the row vector $1^T$ due to the structure of $M$. 
Specifically, the top block $\begin{bmatrix}(2D_i-I)X & -(2D_i-I)X\end{bmatrix}$ 
has row span orthogonal to $1^T$, and the bottom block $\begin{bmatrix} I & 0\\ 0 &  I \end{bmatrix}$ consists of canonical basis vectors, for which not
all constraints are active (otherwise this implies $w=0$), which makes the subset of active rows linearly independent from $1^T$. 

We conclude that the non-zero extreme points of $\mathcal{C}$ are given by $w=w^+-w^- \in \mathcal{C}$ such that
\begin{align*}
    \begin{bmatrix}
        M_S\\
        1^T
    \end{bmatrix}
    \begin{bmatrix}
        w^+\\
        w^-\\
    \end{bmatrix}
    =     \begin{bmatrix}
        0\\
        1\\
    \end{bmatrix},
\end{align*}
where $S$ is a subset of $2d-1$ linearly independent rows of $M$ and the above linear system has a unique solution. 
Recall that at least $d$ coordinates of 
$\begin{bmatrix}
    w^+\\
    w^-\\
\end{bmatrix} $ are zero at any extreme point due to the decomposition $w=w^+-w^-$ where $w^+=(w)_+$ and $w^-=(w)_-$ are the positive and negative parts of $w$ respectively.
Suppose that the vector 
$\begin{bmatrix}
    w^+\\
    w^-\\
\end{bmatrix} $
is a non-zero extreme point and has $d+k$ zero entries for some $k\in\mathbb{Z}_+$, then $2d-1-(d+k)=d-1-k$ entries of 
$\begin{bmatrix}(2D_i-I)X & -(2D_i-I)X\end{bmatrix}\begin{bmatrix}
    w^+\\
    w^-\\
\end{bmatrix}$ are zero. 
Mapping this decomposition back to the usual representation, this implies that an extreme point $w\in \mathcal{C}$ has $k$ zero entries and the vector
$(2D_i-I)Xw$ has $d-1-k$ zero entries. Therefore, $d-1$ of the constraints $\begin{bmatrix}
    (2D_i-I)X\\
    I
\end{bmatrix}w\ge 0$ are active at a non-zero extreme point $w\in\mathcal{C}$.
 We conclude that $w\in \mathcal{C}$ is a non-zero extreme point of $\mathcal{C}$ if and only if
\begin{align*}
    \begin{bmatrix}
        (2D_i-I)X\\
        I
    \end{bmatrix}_{R}w = 0 \quad \mbox{and}\quad \|w\|_1=1,
\end{align*}
where $R$ is a subset of $d-1$ linearly independent rows of the matrix $\begin{bmatrix}
    (2D_i-I)X\\
    I
\end{bmatrix}$.
Finally, we observe that the constraint $\begin{bmatrix}
        (2D_i-I)X\\
        I
    \end{bmatrix}_{R}w = 0$ can be substituted by
    $\begin{bmatrix}
        X\\
        I
    \end{bmatrix}_{R}w = 0$ since $(2D_i-I)$
    is a diagonal matrix containing $\pm 1$ values on the diagonal, and note that linear independence is invariant to this diagonal sign multiplication.
\end{proof}
\subsection{Proof of Theorem \ref{thm:l2_extreme_points}}
\label{sec:thm:l2_extreme_points}
We now present the proof of Theorem \ref{thm:l2_extreme_points} which shows the approximate equivalence of the $\ell_2$-regularized neural network and the Lasso problem in \eqref{eq:convex_d_dim_two_layer_relu_l2}.
We consider the dual problem in \eqref{eq:two_layer_relu_dual} and analyze its constraints, which are as follows:
\begin{align}
     &\max_{i\in[P]}\max_{\w}  |\zvec^T \Dmat_i \X\w| \le \lambda \label{eq:dual_subproblem_l2_in_proof}\\
    &\mbox{s.t. } \|\w\|_2 \leq 1,\, (2\Dmat_i-\eye)\X\w\ge 0.\nonumber
\end{align}
Here \begin{align*}
    \{\Dmat_{i}\}_{i=1}^P = \Big\{ \diag(1[\X\w\ge0])\,: \w\in\reals^d \Big\},
\end{align*}
are the diagonal hyperplane arrangement patterns.

As in Section \ref{sec:proof_l1_extreme_points}, we assume that the training matrix $\X$ is full column rank and of size $n$ by $d=r$, without loss of generality.

Consider the sub-problem arising in the above constraint given by 
\begin{align*}
    d^+_{sub}(i)\triangleq&\max_{w} v^T D_i Xw\\
    &\mbox{s.t. } \|w\|_2 \leq 1,\, (2D_i-I)Xw\ge 0.
\end{align*}
We define the set
\begin{align*}
    \mathcal{C}_2(i) \triangleq \Big\{w\,:\, \|w\|_2 \le 1,\, (2D_i-I)Xw\ge 0\Big\},
\end{align*}
and let $d^-_{sub}(i)\triangleq\max_{w\in \mathcal{C}_2(i)} -v^T D_i Xw$. Note that the constraint in \eqref{eq:two_layer_relu_dual}
is precisely $d^+_{sub}(i)\le \lambda,\, d^{-}_{sub}(i)\le \lambda~\forall i$.
Our strategy is to show that $d^+_{sub}(i)$ and $d^-_{sub}(i)$ can be tightly approximated by a polyhedral approximation constructed
using the extreme rays of the cone $\{w~:~(2D_i-I)Xw\ge 0\}$. Consequently, we will be able to obtain an approximation
of the dual problem since the dual constraint is
\begin{align}
    \label{eq:dual_max_dsubk}
    d_{const}^* & = \max_{i\in \{1,...,P\}}\max_{w \in \mathcal{C}_2} |v^T D_i Xw| =  \max\Big(\max_{i\in\{1,...,P\}} d^+_{sub}(i), \max_{i\in\{1,...,P\}} d^-_{sub}(i)\Big).
\end{align}

We first focus on a generic instance of the problem $d^+_{sub}=d^+_{sub}(i)$ and $\mathcal{C}_2=\mathcal{C}_2(i)$ for a certain value of $i$.
Suppose that the extreme rays of the convex polyhedral cone 
$$\{w\,:\,(2D_i-I)Xw\ge 0\}\subseteq \reals^{d}$$
are $R_1,...,R_k\subseteq \reals^{d}$ for some $k\in\mathbb{Z}_+$. Note that by our assumption that $X$ is full column rank, this cone is pointed since it does not contain any nontrivial linear subspace. Let $p_1,\ldots, p_k\in\mathbb{S}^{d-1}$ denote the unit norm
 generators of the extreme rays $R_1,...,R_k$ normalized to unit Euclidean norm.
We define the convex sets
\begin{align*}
    \mathcal{P}&\triangleq \conv(p_1,...,p_k)\quad\mbox{and}\quad\mathcal{P}_0\triangleq \conv(0,p_1,...,p_k).
\end{align*}
We will use the convex set $\mathcal{P}_0$ as an inner polyhedral approximation of $\mathcal{C}_2$. 
We define the distance of the convex set $\mathcal{P}$ from the origin as
\begin{align*}
    d_{\min}\triangleq \dist(0,\mathcal{P})= \min_{w \in \mathcal{P}} \|w\|_2.
\end{align*}
We have $d_{\min}\le 1$ since $\|\sum_{i} \alpha_i p_i\|_2\le \sum_{i} |\alpha_i|\|p_i\|_2 = \sum_{i} |\alpha_i|=1$ for any $\alpha_i\ge 0$ and $\sum_{i} \alpha_i=1$, 
noting that $\|p_i\|_2=1$. 

We apply Lemma \ref{lem:archimedes} to obtain the following lower and upper bounds on $d^+_{sub}$:  
\begin{align}
    \label{eq:dual_sub_upper_lower_bound}
    \max_{w\in\mathcal{P}_0} v^T D_k Xw\ \le \max_{w\in\mathcal{C}_2} v^T D_k Xw &\le \max_{w\in d_{\min}^{-1}\mathcal{P}_0} v^T D_k Xw\\
    &= d_{\min}^{-1}~\max_{w\in\mathcal{P}_0} v^T D_k Xw.\nonumber
\end{align}
Next, we simplify the polyhedral approximation $\max_{w\in\mathcal{P}_0} v^T D_k Xw$.
Applying Weyl's Facet Lemma (see Lemma \ref{lem:weyl_facet_lemma}) to the cone $K=\{w\,:\,(2D_i-I)Xw\ge 0\}$, we see that a vector $v\in K$ belongs to an
extreme ray of $K$ if and only if there are $d-1$ linearly independent rows of the matrix $(2D_i-I)X$ orthogonal to $v$. Then, each of the generators
of the extreme rays $p_1,...,p_k$ satisfy $p_i^T x_j=0,\,j \in S_i$ where $S_i$ is a subset of $d-1$ linearly independent rows of the matrix $X$. 
Using the properties of the generalized cross product (see \ref{sec:generalized_cross_products}), we identify
each generator as a scalar multiple of the cross product $\varprod_{j\in S_i} x_j$ of the $d-1$ linearly independent rows $x_j$ of $X$. Note that the 
cross product of $d-1$ linearly independent vectors is orthogonal to each vector and lies on a one-dimensional linear subspace. Therefore the unit-norm 
vectors $p_i$ can be identified as $p_i=\pm \frac{\varprod_{j\in S_i} x_j}{\|\varprod_{j\in S_i} x_j\|_2}$,
 where the sign is chosen so that $p_i \in K$, i.e.,  $p_i^T D_{k,jj}x_j\ge 0$ 
for all $j\in S_i$. 

Next, note that Weyl's Facet Lemma also guarantees that 
any vector $p\in K$ of the form $p=\frac{\varprod_{j\in S} x_j}{\|\varprod_{j\in S} x_j\|_2}$ is on an extreme
ray of $K$ when $S$ is a subset of $d-1$ linearly independent rows of $X$. When $d\ge 2$, the collection of 
vectors $0 \cup \{\pm \varprod_{j\in S} x_j \,:\, S\subseteq[n],~ \dim\Span \{x_j:j\in S\}=d-1\}$ is equivalent to 
$0\cup \{\varprod_{j\in S} x_j \,:\, S\subseteq P_{d-1}([n])\}$, where $P_{d-1}([n])$ denotes all permutations of subsets of $[n]$
with $d-1$ elements. In order to see this equivalence, observe that the $\pm1$ multiplier can be removed when we consider permutations since exchanging the order of the two vectors 
in the cross product changes the sign of the resulting vector. Moreover, we can only consider subsets $S$ of size $d-1$, otherwise the vectors are linearly dependent and
the cross product is zero. In addition, when the vectors are linearly dependent, the cross product is zero, which is already included in the collection of vectors.

Using this characterization, we can simplify the first optimization problem in \eqref{eq:dual_sub_upper_lower_bound}
by enumerating all unit-norm generators of the extreme rays of the cone $K$, which
can be done by considering all subsets $S$ of $d-1$ linearly independent rows of $X$ and taking the cross product of the rows in each subset.
Define the set $\mathcal{D}_S$ as follows
\begin{align*}
    \mathcal{D}_S = \Bigg\{ \varprod_{\substack{j\in S}} x_j \Bigg\}\cup 0.
\end{align*}
Then, we have
\begin{align}
    \label{eq:sub_dual_simplified}
    \max_{w\in\mathcal{P}_0}\, v^T D_k Xw =  \max_{w\in \{0,p_1,...,p_k \}}\, v^T D_kX w =  \max_{S\subseteq P_{d-1}([n]) } \max_{\substack{w\in K \cap \mathcal{D}_{S}}}\, v^T\Big (\frac{Xw}{\|w\|_2}\Big )_+,
\end{align}
where the last maximization is over $d-1$ permutations $P_{d-1}([n])$ 
of subsets of $[n]$, and $w\in K \cap \mathcal{D}_S$. 
The justification of \eqref{eq:sub_dual_simplified} is as follows: In the first equality in \eqref{eq:sub_dual_simplified}, we can drop the convex hull of $\{0,p_1,...,p_k \}$ since the objective is linear. 
In the second equality, we replace the unit norm generators by their expressions given by the cross product and use the fact that $D_iXw=(Xw)_+$ for $w \in K$. 
Note that the cross product is zero when the vectors are linearly dependent to simplify the expression of $\mathcal{D}_S$ in the last equality above.

Next, we plug-in the expression for $d^+_{sub}$ given in the last equality of \eqref{eq:sub_dual_simplified} for $d^+_{sub}(i)$ in \eqref{eq:dual_max_dsubk}, 
repeat the same argument for $d^-_{sub}(i)$ which is identical, and use the approximation bound in \eqref{eq:dual_sub_upper_lower_bound}
\begin{align}
    \label{eq:dual_cons_defn}
    d_{\mathrm{const}}^*  ~&=~ \max_{i\in\{1,...,P\},~ s\in \{+1,-1\}}~~ \max_{w \in\mathcal{C}_2(i)} sv^T D_i Xw\\
    & \le d_{\min}^{-1}  \max_{i\in\{1,...,P\},~ s\in \{+1,-1\}} \max_{S\subseteq P_{d-1}([n]) } \max_{\substack{w\in K(i) \cap \mathcal{D}_{S}}}\, sv^T\Big (X\frac{w}{\|w\|_2}\Big )_+
\end{align}
where $K(i):=\{w~:~(2D_i-I)Xw\ge 0\}$. Noting that 
$$\cup_{i=1}^{P} K(i) = \cup_{i=1}^{P}\{w~:~(2D_i-I)Xw\ge 0\}=\mathbb{R}^d,$$ 
since the union of the chambers of the arrangement is the entire $d$-dimensional space, maximizing over the index $i\in\{1,...,P\}$ and $w\in K(i) \cap \mathcal{D}_{S}$  is equivalent to maximizing 
$w$ over $\cup_{i=1}^P K(i) \cap \mathcal{D}_S = \mathcal{D}_S$. Therefore, we obtain
\begin{align}
    \label{eq:dual_const_order}
   \hat d \le   d_{\mathrm{const}}^* 
    & \le d_{\min}^{-1}~\hat d,
\end{align}
where
\begin{align}
    \label{eq:dual_const_approx}
    \hat d \triangleq\max_{S\subseteq P_{d-1}([n]) } \max_{\substack{w\in \mathcal{D}_{S}}}\, \Big|v^T\Big (X\frac{w}{\|w\|_2}\Big )_+\Big|.
\end{align}
Using the expression of the dual in \eqref{eq:two_layer_relu_dual}, we have
\begin{align}
    p^* = \max_{d_{\mathrm{const}}^*(v)\le \lambda} - \ell^*(v)
\end{align}
where $d_{\mathrm{const}}^*=d_{\mathrm{const}}^*(v)$ is as defined in \eqref{eq:dual_cons_defn}. We then use the inequalities in \eqref{eq:dual_const_order} to get the lower and upper-bounds on the optimal objective as follows
\begin{align}
    \max_{d_{\min}^{-1}\hat d(v)\le \lambda} - \ell^*(v) \le p^* \le \max_{\hat d(v)\le \lambda} - \ell^*(v)\,,
\end{align}      
where $\hat d(v)=\hat d$ is as defined in \eqref{eq:dual_const_approx}. Replacing the duals of the maximization problems in the above equation using Lemma \ref{eq:dual_of_lasso}, we obtain
\begin{align}
    \label{eq:lasso_sandwich}
    \min_z \ell(\tilde Kz,y) + d_{\min}\lambda\|z\|_1  \le p^* \le \hat p_\lambda := \min_z \ell(\tilde Kz,y) + \lambda\|z\|_1\,.
\end{align}
Here, we defined $\hat p_\lambda$ as the optimal value of the Lasso objective and $\tilde K$ is the matrix defined as follows
\begin{align}
    \label{eq:defn_Lasso_Dict}
    K_{i,j} = x^T \frac{\cross (x_1,...,x_{d-1})}{\|\cross (x_1,...,x_{d-1})\|_2}\,,
\end{align}
where $j=(j_1,...,j_d)$ is a multi-index.
The above shows that the optimal value $p^*$ corresponding to the NN objective in \eqref{eq:neural_network} when $p=2$ is bounded between the convex Lasso program in the right-hand side of \eqref{eq:lasso_sandwich} and the same convex program with a slightly smaller regularization coefficient as follows
\newcommand{\dmin}{d_{\min}}
\begin{align*}
    \hat p_{\dmin\lambda } \le p^* \le \hat p_{\lambda}.
\end{align*}
Noting that
\begin{align*}
    \ell(\tilde K\hat z_{\dmin \lambda },y) + d_{\min}\lambda\|\hat z_{\dmin \lambda }\|_1  &\ge  \min_z \ell(\tilde Kz,y) + \lambda\|z\|_1 + (d_{\min}-1)\lambda \|\hat z_{\dmin \lambda }\|_1 \\
    & = \hat p_\lambda  - (1-d_{\min})\lambda \|\hat z_{\dmin \lambda }\|_1\,,
\end{align*}
where $\hat z_{\dmin \lambda }$ is a minimizer of the Lasso problem corresponding to $\hat p_{\dmin\lambda}$.
we get the inequalities
\begin{align}
    \label{eq:convexNNL2bound1}
 \hat p_{\lambda} - (1-\dmin)\|\hat z_{\dmin \lambda }\|_1 \le \hat p_{\dmin\lambda } \le p^* \le \hat p_{\lambda}\,.
\end{align}
 This shows that the optimality gap is at most $(1-d_{\min})\lambda \|\hat z_{\dmin \lambda }\|_1$, where $\hat z_{\dmin\lambda}$ is an optimal solution of the Lasso problem in the left-hand side of \eqref{eq:lasso_sandwich} with regularization coefficient $\dmin\lambda$.

We now obtain upper and lower-bounds on $\hat p_{\lambda}$ in terms of $p^*$ by noting that \eqref{eq:dual_const_order} implies 
$$d^*_\mathrm{const}\le d_{\min}^{-1} \hat d \le d_{\min}^{-1} d^*_\mathrm{const}.$$
Multiplying both sides by $d_{\min}$ we obtain
$$d_{\min} d^*_\mathrm{const}\le  \hat d \le d^*_\mathrm{const}.$$
Using the above inequalities in the dual programs, we get
\begin{align}
      \max_{d^*(v)\le \lambda} - \ell^*(v) \le \max_{\hat d(v)\le \lambda} - \ell^*(v) \le \max_{d_{\min} d^*_{\mathrm{const}}(v)\le \lambda} - \ell^*(v).
\end{align}  
Using the dual of the Lasso program again from Lemma \eqref{eq:two_layer_relu_dual}, we obtain
\begin{align}
    \label{eq:lasso_sandwich_v2}
    p^* \le \min_z \ell(\tilde Kz,y) + \lambda\|z\|_1  \le p^*_{d^{-1}_{\min}\lambda}~,
\end{align}
where $p^*_{\lambda^\prime}$ in the right-hand side is defined as
\begin{align}
    \label{eq:nonconvexNN_in_proof}
    p^*_{\lambda^\prime}\triangleq\min_{\W_1,\W_2} \loss\Big(\sum_{j=1}^m\sigma(\X \W_{1j}+\ones b_{1j})\W_{2j},\y\Big) + \lambda^\prime \sum_{j=1}^m \|\W_{1j}\|_2^2 + \|\W_{2j}\|_2^2,
\end{align}
for $m\ge m^*$ where $m^*$ is the number of neurons in an optimal solution of the bidual of \eqref{eq:dual_l1_nn_in_proof} when $p=2$.
Here, $p^*_{d_{\min}^{-1}\lambda}$ is the value of the optimal NN objective when the regularization coefficient is set to to the larger value
$d_{\min}^{-1}\lambda$. This value can be upper-bounded as follows
\begin{align}
    p^*_{d_{\min}^{-1}\lambda} &\le F(\W_1^*,\W_2^*) + d_{\min}^{-1}\lambda R(\W_1^*,\W_2^*) \\
    &= F(\W_1^*,\W_2^*) + \lambda R(\W_1^*,\W_2^*) + (d_{\min}^{-1}-1)\lambda R(\W_1^*,\W_2^*) \\
    &= p^*_{\lambda} + (d_{\min}^{-1}-1)\lambda R(\W_1^*,\W_2^*)\,,
\end{align}
where $F(\cdot,\cdot)$ and $R(\cdot,\cdot)$ are the functions in the first (loss) and second (regularization) terms in the objective \eqref{eq:nonconvexNN_in_proof} respectively, and $(\W_1^*,\W_2^*)$ is an optimal solution of \eqref{eq:nonconvexNN_in_proof}, i.e., we have $p^*_\lambda = F(\W_1^*,\W_2^*) + \lambda R(\W_1^*,\W_2^*)$. Therefore, \eqref{eq:lasso_sandwich_v2} implies that
\begin{align}
    \label{eq:convexNNL2bound2}
    p^* \le \min_z \ell(\tilde Kz,y) + \lambda\|z\|_1  \le p^* +  (d_{\min}^{-1}-1)\lambda R(\W_1^*,\W_2^*) \le d_{\min}^{-1} p^* \,,
\end{align}
where $R(\W_1^*,\W_2^*):=\sum_{j=1}^m \|\W^*_{1j}\|_2^2 + \|\W^*_{2j}\|_2^2$ is the regularization term corresponding to any optimal solution of \eqref{eq:nonconvexNN_in_proof}. The final inequality follows from $\lambda R(\W_1^*,\W_2^*)\le p^*$. This shows that the optimality gap of the Lasso solution is at most $\frac{1-d_{\min}}{d_{\min}}\lambda R(\W_1^*,\W_2^*)$.

Finally, to simplify the expression of the matrix $\tilde K$ given in \eqref{eq:defn_Lasso_Dict}, we use the following cross product representation of the distance to an affine hull (see Appendix)
\begin{align*}
    x^T \frac{\cross (x_1,...,x_{d-1})}{\|\cross (x_1,...,x_{d-1})\|_2} &= \dist(x,\Aff(0,x_1,...,x_{d-1}))\\
    &= \dist(x,\Span(x_1,...,x_{d-1}))\,,
\end{align*}
for any $x,x_1,...,x_{d-1}\in\reals^d$ where $\cross (x_1,...,x_{d-1})$ is the cross product of the vectors $x_1,...,x_{d-1}$. 
Combining the bounds \eqref{eq:convexNNL2bound1} and \eqref{eq:convexNNL2bound2} with the lower bound on the minimum distance $\dmin$ in terms of the maximum chamber diameter $r$ from Lemma \ref{lem:chamb_diam_dist_bound} completes the proof.

\subsection{Proof of Theorem \ref{thm:vectorout}}
    The proof parallels the proof of Theorems \ref{thm:l1_extreme_points} and \ref{thm:l2_extreme_points}. Here, we only highlight the main differences. 
    
    The convex dual of the problem \eqref{eq:convex_d_dim_two_layer_relu_l1_vector} is given by 
    \newcommand{\Zmat}{V}
    \newcommand{\Y}{Y}
    \begin{align}
        \label{eq:two_layer_relu_dual_vec}
        p_v^*\ge d_v^*\triangleq \max_{\Zmat\in\reals^n} \,\,-\ell^*(\Zmat,\Y)\quad\mbox{s.t.}\quad \|\Zmat^T\sigma(\X\w)\|_2\le \lambda,\,\forall \w \in \ball_p^d.
    \end{align}
    This dual convex program was derived in \cite{sahiner2020vector}, where it was also shown that strong duality holds if the number of neurons, $m$, exceed the critical threshold $m^*$, which is the number of neurons in the convex bidual program.

    We focus on the dual constraint subproblem in \eqref{eq:two_layer_relu_dual_vec} given by
    \begin{align*}
        d_{\mathrm{v-sub}}\triangleq 
        &\max_{\w} \|\Zmat^T \Dmat_k \X\w\|_2\\
        &\mbox{s.t. } \|\w\|_p \leq 1,\, (2\Dmat_k-\eye)\X\w\ge 0.\\
        = & \max_{\w} \max_{u:~\|u\|_2\le 1} |u^T\Zmat^T \Dmat_k \X\w|\\
        &\mbox{s.t. } \|\w\|_p \leq 1,\, (2\Dmat_k-\eye)\X\w\ge 0.
    \end{align*}
    We note that the above problem has the same form of equation \eqref{eq:dual_subproblem_l1_in_proof} and \eqref{eq:dual_subproblem_l2_in_proof} for $p=1$ and $p=2$ respectively, where the vector $v$ plays the role of $\Zmat u$. The rest of the proof is identical for both cases.
\qed

\begin{figure}
    \centering
    \begin{tikzpicture}
        \newcommand{\radius}{2cm}
        
        \newcommand{\sides}{8}
        
        \draw (0,0) circle (\radius);
        
        \draw (0,0) ++(0:\radius) \foreach \x in {1,2,...,\sides} {
            -- (\x*360/\sides:\radius)
        } -- cycle;
        \newcommand{\archratio}{1.0903}
        \draw[red] (0,0) -- (0:\archratio*\radius);
        \draw[red] (0,0) -- (360/8:\archratio*\radius);
        \draw[red] (0:\archratio*\radius) -- (360/8:\archratio*\radius);
    \end{tikzpicture}
    \caption{Illustration of the Archimedean approximation in Lemma \ref{lem:archimedes}.\label{fig:archimedean8gon}}
\end{figure}
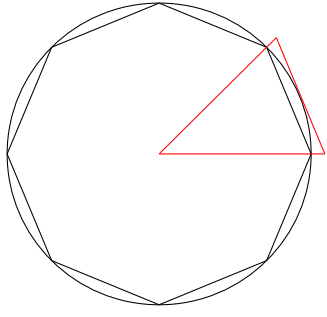

\begin{lemma}[Archimedean approximation of spherical sections via polytopes]
    \label{lem:archimedes}
    Suppose that $\cone(P)\cap -\cone(P) = \{0\}$, i.e., $\cone(P)$ does not contain a non-trivial (non-zero dimensional) linear subspace. 
    Then, it holds that $$\mathcal{P}_0 \subseteq \mathcal{C}_2 \subseteq d_{\min}^{-1} \mathcal{P}_0.$$
\end{lemma}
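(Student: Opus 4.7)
The plan is to establish the two inclusions separately, with the first being elementary and the second relying on the Minkowski--Weyl representation of the pointed polyhedral cone together with the definition of $d_{\min}$.

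For the inclusion $\mathcal{P}_0 \subseteq \mathcal{C}_2$, I would observe that each generator $p_j$ satisfies $\|p_j\|_2 = 1$ (by normalization) and $(2D_i-I)Xp_j \ge 0$ (since $p_j$ lies on an extreme ray of the cone $K = \{w:(2D_i-I)Xw \ge 0\}$). The origin trivially belongs to $\mathcal{C}_2$. Since $\mathcal{C}_2$ is the intersection of a Euclidean ball (convex) with a polyhedral cone (convex), it is convex, so it contains $\conv(0,p_1,\ldots,p_k) = \mathcal{P}_0$.

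The second inclusion $\mathcal{C}_2 \subseteq d_{\min}^{-1}\mathcal{P}_0$ is the substantive step. Pointedness of $K$ (i.e., $\cone(P)\cap -\cone(P)=\{0\}$) together with the fact that $p_1,\ldots,p_k$ generate all extreme rays lets me invoke the Minkowski--Weyl theorem to conclude $K = \cone(p_1,\ldots,p_k)$. So any $w \in \mathcal{C}_2$ can be written as $w = \sum_j \beta_j p_j$ with $\beta_j \ge 0$. Setting $s := \sum_j \beta_j$, if $s=0$ then $w=0 \in \mathcal{P}_0$ and we are done. Otherwise $w/s \in \mathcal{P}$, so by definition of $d_{\min}$,
\begin{align*}
d_{\min} \;\le\; \|w/s\|_2 \;=\; \|w\|_2/s \;\le\; 1/s,
\end{align*}
which gives $d_{\min} s \le 1$. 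I then define $\alpha_j := d_{\min}\beta_j$ for $j\ge 1$ and $\alpha_0 := 1 - \sum_{j\ge 1}\alpha_j = 1 - d_{\min}s \ge 0$; these are nonnegative and sum to $1$, and
\begin{align*}
\alpha_0\cdot 0 + \sum_{j=1}^k \alpha_j p_j \;=\; d_{\min}\sum_j \beta_j p_j \;=\; d_{\min} w,
\end{align*}
which shows $d_{\min} w \in \mathcal{P}_0$, i.e., $w \in d_{\min}^{-1}\mathcal{P}_0$.

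The only delicate point is verifying that $d_{\min}>0$, so that $d_{\min}^{-1}\mathcal{P}_0$ is well-defined; this again follows from pointedness, since $0\in\mathcal{P}$ would yield a convex combination $\sum\alpha_j p_j=0$ with $\alpha_j\ge 0$, $\sum\alpha_j=1$, contradicting that $\cone(P)$ contains no nonzero linear subspace. Beyond this, the argument is a clean bookkeeping exercise; the main conceptual obstacle is recognizing that the Minkowski--Weyl representation plus the definition of $d_{\min}$ is exactly what converts a norm constraint into an $\ell_1$-type simplex constraint on the coefficients $\alpha_j$.
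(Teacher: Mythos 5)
Your proposal is correct and follows essentially the same route as the paper's proof: the first inclusion via convexity of $\mathcal{C}_2$, and the second via the Minkowski--Weyl representation of the pointed cone (the paper cites Theorem 13 of Fenchel) combined with the scaling bound $t \le d_{\min}^{-1}$, which is the same as your $d_{\min} s \le 1$ in a slightly different parameterization. The one thing you add that the paper leaves implicit is the explicit verification that $d_{\min}>0$ under pointedness, which is a worthwhile remark.
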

\begin{proof}
    The first inclusion follows by noting that $\mathcal{P}_0$ is a polytope
    and it is contained in $\mathcal{C}_2$ since all of its extreme points $0,p_1,...,p_k$ are contained in the convex set $\mathcal{C}_2$.

    In order to prove the second inclusion, we first show that
    $\mathcal{C}_2=\cone(P) \cap \{\|w\|_2\le 1\}$. Since $\cone(P)$ is pointed, i.e., $\cone(P)\cap -\cone(P) = \{0\}$, it is equal to the convex hull
    of its extreme rays (Theorem 13 of \cite{fenchel1953convex}). Since the extreme rays of $\cone(P)$ and the extreme rays of the cone 
    $\{w\,:\,(2D_i-I)Xw\ge 0\}$, are identical and the latter cone is also pointed, these two cones are identical. Therefore, we have 
    $\mathcal{C}_2=\cone(P) \cap \{\|w\|_2\le 1\}$.

    Next, suppose that there exists some $w^*\in \mathcal{C}_2$. Then, $\exists t\in\reals_+, \alpha_i \ge 0, \sum_i \alpha_i =1$ such that $w^* = t \sum_i \alpha_i p_i$ since 
    $w^*\in \cone(P)$. Since $\|w^*\|_2\le 1$, we have $1\ge  \|w^*\|_2 = \| t \sum_i \alpha_i p_i\|_2 \ge  t \min_{\alpha_i} \|\sum_i \alpha_i p_i\|_2\ge t \min_{w \in \mathcal{P}}\|w\|_2=t\,d_{\min}$.
    From this chain of inequalities, we obtain the upper-bound $t\le d_{\min}^{-1}$. We have $w^* \in d_{\min}^{-1}\mathcal{P}_0$ since 
    $$\w^* = t \sum_i \alpha_i p_i \in t \conv(p_1,...,p_k) \in \conv(0, p_1,...,p_k) \in d_{\min}^{-1}\mathcal{P}_0.$$
\end{proof}

\begin{lemma}[Diameter of a chamber bounds its distance to the origin]
\label{lem:chamb_diam_dist_bound}
    Consider the set $\tilde {\mathcal{C}_2}:\{\w\,:\,(2D_i-I)\X\w\ge 0,\,\|w\|_2=1\}$.
    Suppose that $r\in\reals$ is the $\ell_2$-diameter of $\tilde {\mathcal{C}_2}$, which is defined as
    the smallest $r\ge 0$ such that $\|\w_1-\w_2\|_2\le r$ for all $\w_1,\w_2\in \tilde {\mathcal{C}_2}$. Then,
    $$d_{\min}:=\min_{v\in\conv(v_1,...,v_k)}\|v\|_2\ge 1-r,$$ for any set of $k$ vectors $\{v_1,...,v_k\} \in \tilde {\mathcal{C}_2}$.
\end{lemma}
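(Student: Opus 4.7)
The plan is to exploit the fact that every generator $v_i$ lies on the unit sphere and that the diameter bound $r$ controls how far apart any two of them can be, and then to conclude via the reverse triangle inequality. Concretely, I would fix any single reference vertex — say $v_1$ — and use it as an anchor against which every other point in $\conv(v_1,\ldots,v_k)$ is measured.

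First, take an arbitrary $v \in \conv(v_1,\ldots,v_k)$ and write $v = \sum_{i=1}^k \alpha_i v_i$ with $\alpha_i \ge 0$ and $\sum_i \alpha_i = 1$. Then
\begin{align*}
v - v_1 = \sum_{i=1}^k \alpha_i (v_i - v_1),
\end{align*}
so the triangle inequality and the convex weighting give $\|v - v_1\|_2 \le \sum_i \alpha_i \|v_i - v_1\|_2$. Since every $v_i \in \tilde{\mathcal{C}_2}$ and the Euclidean diameter of $\tilde{\mathcal{C}_2}$ is at most $r$, each term $\|v_i - v_1\|_2$ is bounded by $r$, and $\sum_i \alpha_i = 1$ yields $\|v - v_1\|_2 \le r$.

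Second, applying the reverse triangle inequality gives $\|v\|_2 \ge \|v_1\|_2 - \|v - v_1\|_2$. Since $v_1 \in \tilde{\mathcal{C}_2}$ enforces $\|v_1\|_2 = 1$, this becomes $\|v\|_2 \ge 1 - r$. Taking the minimum over $v$ yields the claimed bound $d_{\min} \ge 1 - r$.

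Honestly, there is no serious obstacle here: the lemma is a one-line consequence of the reverse triangle inequality once one observes that the diameter controls the distance from any convex combination to one of the extreme generators. The only thing to keep in mind is that the bound is automatically vacuous when $r \ge 1$ (since then $1-r \le 0 \le \|v\|_2$ trivially), so no separate case distinction is needed, and the argument is uniform in the number $k$ of generators.
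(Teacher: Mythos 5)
Your proof is correct, and the key step (reverse triangle inequality anchored at $v_1$, together with $\|v_1\|_2 = 1$) is the same as in the paper. Where you differ is in how you reach the intermediate bound $\|v - v_1\|_2 \le r$: you expand $v - v_1 = \sum_i \alpha_i (v_i - v_1)$ and bound each $\|v_i - v_1\|_2 \le r$ termwise, which is direct and self-contained. The paper instead first argues that a Euclidean ball of radius $r$ centered at the midpoint of a diameter-achieving pair $w_1, w_2$ contains all of $\tilde{\mathcal{C}_2}$, and hence contains $\conv(v_1,\ldots,v_k)$ by convexity of the ball, before applying the same reverse triangle inequality. That ball-containment detour, taken literally, only gives $\|v^* - v_1\|_2 \le 2r$ (two points in a ball of radius $r$), so the paper is in fact implicitly relying on the same convex-combination observation you spell out — namely, that the diameter of a convex hull equals the diameter of the generating set. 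Your version is the cleaner write-up; the ball step is unnecessary for this lemma.
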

\begin{proof}
    Since the $\ell_2$-diameter of the set $\tilde {\mathcal{C}_2}$ is upper bounded by $r$, we first argue that
     there exists a Euclidean ball of radius $r$ that contains $\tilde {\mathcal{C}_2}$ as follows. Suppose that $w_1,w_2$ are two vectors in $\tilde {\mathcal{C}_2}$
     that achieve the $\ell_2$-diameter $r$, i.e., $\|w_1-w_2\|_2=r$. 
     Then, the Euclidean ball of radius $r$ centered at $\frac{w_1+w_2}{2}$ contains $\tilde {\mathcal{C}_2}$ since for all $w\in \tilde {\mathcal{C}_2}$ it holds that
     $$ \|w-\frac{w_1+w_2}{2}\|_2 = \frac{1}{2}\|(w-w_1)+(w-w_2)\|_2 \le \frac{1}{2}\|w-w_1\|_2+\frac{1}{2}\|w-w_2\|\le r.$$
    Next, note that $\conv(v_1,...,v_k)$ is contained in this ball due to the convexity of the Euclidean ball and since 
    $v_1,...,v_k \in \tilde {\mathcal{C}_2}$ are contained in this set. Therefore, the minimum norm point 
    $$v^*:=\arg\min_{v\in\conv(v_1,...,v_k)}\|v\|_2,$$
    satisfies $\|v^*\|\ge \|v_1\|_2-\|v^*-v_1\|_2 = 1-\|v^*-v_1\|_2\ge 1-r$.
\end{proof}

\begin{proof}[Proof of Lemma \ref{lem:random_diam}]
    We use Lemma 2.1 of \cite{plan2014dimension} by taking $K$ equal to the unit Euclidean sphere in $\reals^d$. Since the Gaussian width of this set is bounded by $\sqrt{d}$, Lemma 2.1 implies that
    \begin{align*}
        \Big|\frac{1}{m}\|Xz\|_1-\alpha\Big| \le \varepsilon \quad \forall z~:~\|z\|_2=1,
    \end{align*}
    with probability at least $1-2e^{-nu^2/2}$, where $\alpha=\sqrt{2/\pi}$ and $\varepsilon=4\sqrt{\frac{d}{n}}+u$.
    Rewriting the above inequality as $-\varepsilon\le \frac{1}{m}\|X\frac{w}{\|w\|_2}\|_1-\alpha \le \varepsilon~\forall w$, multiplying each side by $\|w\|_2$ and letting $u=\sqrt{\frac{d}{n}}$,  we observe that the $\ell_1$ isometry condition \eqref{eq:l1-isometry} where $\epsilon=5\sqrt{\frac{d}{n}}$ holds with probability at least $1-2e^{-d/2}$. Invoking Lemma \ref{lem:conc_chamb_diam}, we obtain that the maximum chamber radius is bounded by $4\sqrt{\epsilon}=4\sqrt{5\sqrt{\frac{d}{n}}}\le 9 (\frac{d}{n})^{1/4}$, which concludes the proof.
\end{proof}
\begin{proof}[Proof of Lemma \ref{lem:conc_chamb_diam}]
    Suppose that $\w_1,\w_2\in\tilde {\mathcal{C}_2}$. Then, $\w_{a} := \frac{1}{2} (\w_1+\w_2)$ satisfies $(2D_i-I)\X\w_a\ge 0$ since 
    $w_1,w_2$ belongs to the set $(2D_i-I)\X\w\ge 0$, which is convex. Therefore, we have
    \begin{align*}
        |\x_i^T\w_a| &= \frac{1}{2} |\x_i^T\w_1+\x_i^T\w_2| \\
                     &= \frac{1}{2} \mathrm{sign}(\x_i^T\w_1+\x_i^T\w_2)(\x_i^T\w_1+\x_i^T\w_2)\\
                     &= \frac{1}{2} \Big( \mathrm{sign}(\x_i^T\w_1)(\x_i^T\w_1) + \mathrm{sign}(\x_i^T\w_2)(\x_i^T\w_2) \Big)\\
                     &= \frac{1}{2} |\x_i^T\w_1| + \frac{1}{2} |\x_i^T\w_2|.
    \end{align*}
    where the third equality follows since $\textrm{sign}(\x_i^T\w_1)=\textrm{sign}(\x_i^T\w_2)\,\forall i$ 
    due to the constraints $(2D_i-I)\X\w_1\ge 0$ and $(2D_i-I)\X\w_2\ge 0$. Note that since $\|Xw\|_1=\sum_{i=1}^n |\x_i^T\w|$, this implies the identity
    $$\|Xw_a\|_1=\frac{1}{2}\|Xw_1\|_1+\frac{1}{2}\|Xw_2\|_1.$$
    Applying the isometry condition given in \eqref{eq:l1-isometry} to $\w_a$ and using the above identity, we get
    \begin{align}
       \alpha (1+\epsilon) \|\w_a\|_2 &\ge \frac{1}{n}\|Xw_a\|_1 \nonumber  \\
                         &= \frac{1}{2n}\|Xw_1\|_1 + \frac{1}{2n}\|Xw_2\|_1 \nonumber \\
                         &\ge \frac{1}{2} \alpha (1-\epsilon)\|\w_1\|_2 + \frac{1}{2} \alpha(1-\epsilon) \|\w_2\|_2\label{eq:l1-isometry-inequality} \\\
                         &= \alpha (1-\epsilon)\,,\nonumber
    \end{align}
    where the inequality \eqref{eq:l1-isometry-inequality} follows from the concentration inequality \eqref{eq:l1-isometry} applied to $\w_1$ and $\w_2$. Therefore, we have $\|w_a\|_2\ge (1+\epsilon)^{-1}(1-\epsilon)$ and
    \begin{align*}
        \|w_a\|_2^2=\big\|\frac{1}{2}(\w_1+\w_2)\big\|_2^2 = \frac{1}{2} + \frac{\w_1^T\w_2}{2} &\ge \Big(\frac{1-\epsilon}{1+\epsilon} \Big)^2
        \ge  1-4\epsilon\,.
    \end{align*}
    The last inequality holds since $$(1-\epsilon^2)=1-2\epsilon+\epsilon^2 \ge 1-3\epsilon-4\epsilon^2=(1+\epsilon)(1-4\epsilon)\,,$$ for $\epsilon\ge 0$.
    This implies $\w_1^T\w_2\ge 1-8\epsilon$. Since $\w_1$ and $\w_2$ are unit Euclidean norm, we get 
    $\|\w_1-\w_2\|_2^2 = 2-2\w_1^T\w_2 \le 16\epsilon$. Therefore $\|w_1-w_2\|_2\le 4\sqrt{\epsilon}$.

\end{proof}

\subsection{Proofs for two-dimensional networks}
\begin{proof}[Proof of Theorem \ref{thm:main_two_dim_nobias_l2}]
    The proof follows by specializing the proof of Theorem \ref{thm:l2_extreme_points} to $d=2$ and noting that the chamber diameter is controlled by the angle $\epsilon\pi$. Consider the triangle corresponding the the chamber where the angular dispersion bound is achieved. Consider the bisector of this angle and observe that the chamber diameter is equal to $2\sin(\epsilon\pi/2)$, which is upper-bounded by $\epsilon\pi$. Although this provides the approximation factor $1/(1-\epsilon\pi)$ as long as $\epsilon<\pi^{-1}$, we can improve this bound by considering a direct Archimedean approximation of the circular section as shown in Figure \ref{fig:archimedean8gon} to replace Lemma \eqref{lem:archimedes}. Specifically, using the bisector of the angle we also have
    $$\mathcal{P}_0 \subseteq \mathcal{C}_2 \subseteq (\cos(\epsilon\pi/2))^{-1} \mathcal{P}_0,$$
    which improves Lemma \eqref{lem:archimedes}. Finally, note that 
    $$ (\cos(\epsilon\pi/2))^{-1}\le 1+\epsilon~,$$
    for $\epsilon \in[0,1/2]$, which completes the proof.
\end{proof}

\begin{proof}[Proof sketch of Theorem \ref{thm:main_two_dim_l2_bias}]
    The proof follows the same steps as in Theorem \ref{thm:l2_extreme_points} and \ref{thm:l2_extreme_points_with_biases}.
\end{proof}

\begin{subsection}{Proofs for deep neural networks}

\begin{proof}[Proof of Theorem \ref{thm:l1deep}]
    Suppose that an optimal solution of \eqref{eqn:deep_nonconvex} is given by 
    $$\theta^*=\big(W^{*(1)},\cdots,W^{*(L)}\big).$$
    Define optimal activations of the first two-layer block based on the optimal $W^{*(1)},W^{*(2)}$ above as the matrix $Y^{^*(1)}$ by defining
    $$Y^{^*(1)}:=\sigma(X W^{*(1)}) W^{*(2)}.$$
    Now, we focus on the first two-layer NN block and consider the following auxiliary problem
    \begin{align}
        \label{eq:deep_twolayer_first_block}
        &(\hat W^{(1)},\hat W^{(2)}) = \arg\min_{W_1^{(1)},W_2^{(2)}}~ \sum_{j=1}^m \|\W^{(1)}_{\cdot j}\|_p^2 + \|\W^{(2)}_{j \cdot}\|_p^2\\
        &\mbox{s.t.}\qquad \sigma(X W^{(1)}) W^{(2)}=Y^{^*(1)}\nonumber.
    \end{align}
    The above problem is in the form of a two-layer neural network optimization problem. We now show that $\hat W^{(1)},\hat W^{(2)}$ combined with the rest of the optimal weights $W^{*(3)},W^{*(4)}\cdots,W^{*(L)}$ are also optimal in the optimization problem \eqref{eqn:deep_nonconvex}. First, note that the outputs of the network, $f_{\theta^*}(X)$ and $f_{\hat \theta_1}(X)$, are identical where $\theta^*$ is an optimal solution and 
    $$\hat \theta_1 :=\big(\hat W^{(1)}, \hat W^{(2)}, W^{*(3)},W^{*(4)}\cdots,W^{*(L)}\big),$$
     since the activations of the first two-layer block are identical. Second, note that the regularization terms $R_{\theta}^{(1)}:= \sum_{j=1}^m \|\W^{(1)}_{\cdot j}\|_p^2 + \|\W^{(2)}_{j\cdot}\|_p^2$ are also equal since $R_{\hat \theta_1}^{(1)}<R_{\theta^*}^{(1)}$ contradicts the optimality of $\theta^*$ in \eqref{eqn:deep_nonconvex} and $R_{\hat \theta_1}^{(1)}>R_{\theta^*}^{(1)}$ contradicts the optimality of $(\hat W^{(1)},\hat W^{(2)})$ in \eqref{eq:deep_twolayer_first_block}.

    Next, we analyze the remaining layers. We define the optimal activations before and after the $\ell$-th two-layer NN block as follows
    \begin{align*}
        X^{*\ell}&= \sigma(\cdots\sigma(X W^{*(1)}) W^{*(2)}\cdots W^{*(\ell-1)})\\
        Y^{*\ell}&= \sigma(\cdots\sigma(X W^{*(1)}) W^{*(2)}\cdots W^{*(\ell-1)})W^{*(\ell+1)}.
    \end{align*}
    Consider the auxiliary problem for the $\ell$-th two-layer block
    \begin{align}
        \label{eq:deep_twolayer_ellth_block}
        &(\hat W^{(\ell)},\hat W^{(\ell+1)}) = \arg\min_{W^{(\ell)},W^{(\ell+1)}}~ \sum_{j=1}^m \|\W^{(\ell)}_{\cdot j}\|_p^2 + \|\W^{(\ell+1)}_{j\cdot}\|_p^2\\
        &\mbox{s.t.}\qquad \sigma(X^{*\ell} W^{(\ell)}) W^{(\ell+1)}=Y^{^*(\ell)}\nonumber.
    \end{align}
    We show that $\hat \theta_\ell :=\big(W^{*(1)}, W^{*(2)},\cdots  \hat W^{(\ell)},\hat W^{(\ell+1)}\cdots,W^{*(L-1)},W^{*(L)}\big)$, which equals $\theta^*$ optimal in \eqref{eqn:deep_nonconvex} except the $\ell$-th and $(\ell+1)$-th layer weights are replaced with an optimal solution of \eqref{eq:deep_twolayer_ellth_block}, is an optimal solution of \eqref{eqn:deep_nonconvex}. As in the above case, the outputs of the network, $f_{\theta^*}(X)$ and $f_{\hat \theta_\ell}(X)$, are identical by construction.  Moreover, the regularization terms $R_{\theta}^{(\ell)}:=\sum_{j=1}^m \|\W^{(\ell)}_{\cdot j}\|_p^2 + \|\W^{(\ell+1)}_{j\cdot}\|_p^2$ are also equal since $R_{\hat \theta_\ell}^{(\ell)}<R_{\theta^*}^{(\ell)}$ contradicts the optimality of $\theta^*$ in \eqref{eqn:deep_nonconvex} and $R_{\hat \theta_\ell}^{(\ell)}>R_{\theta^*}^{(\ell)}$ contradicts the optimality of $(\hat W^{(\ell)},\hat W^{(\ell+1)})$ in \eqref{eq:deep_twolayer_ellth_block} since $(\hat W^{(\ell)},\hat W^{(\ell+1)})$ is feasible in \eqref{eq:deep_twolayer_ellth_block}, i.e., $\sigma(X^{*\ell} W^{*(\ell)}) W^{*(\ell+1)}=Y^{^*(\ell)}$.
    
    Finally, we finish the proof by iteratively applying the above claims to invoke the two-layer result given in Theorem \ref{thm:vectorout}. Given an optimal solution $\theta^*$ of \eqref{eqn:deep_nonconvex}, we replace the first two-layer NN block with $(\hat W^{(1)},\hat W^{(2)})$ from \eqref{eq:deep_twolayer_first_block} while maintaining global optimality with respect to \eqref{eqn:deep_nonconvex}. For $\ell=2,3, ..., L$ we repeat this process in pairs of consecutive layers to reach the claimed result.
\end{proof}

\begin{proof}[Proof of Theorem \ref{thm:l2deep}]
    The proof proceeds similar to the proof of Theorem \ref{thm:l1deep}. We first show that given an optimal solution $\theta^*$, we can replace the first two layers $(W^{*(1)},W^{*(2)})$ with $(\hat W^{(1)},\hat W^{(2)})$, where
    \begin{align}
        \label{eq:deep_twolayer_first_block_l2}
        (\hat W^{(1)},\hat W^{(2)}) = &\arg\min_{W_1^{(1)},W_2^{(2)}}~ \sum_{j=1}^m \|\W^{(1)}_{\cdot j}\|_2^2 + \|\W^{(2)}_{j \cdot}\|_2^2\\
        &~~\qquad \mbox{s.t.}\qquad \sigma(X W^{(1)}) W^{(2)}=Y^{^*(1)}\nonumber,
    \end{align}
    and $Y^{^*(1)}:=\sigma(X W^{*(1)}) W^{*(2)}$. By Corollary \ref{cor:L2_interpolation}, the weights $(\hat W^{(1)},\hat W^{(2)})$ only increase the regularization $\sum_{j=1}^m \|\W^{(1)}_{\cdot j}\|_2^2 + \|\W^{(2)}_{j \cdot}\|_2^2$ by a factor of $1/(1-\mathscr{D}(X))$, while producing the identical network output $f_{\hat \theta}(X)=f_{\theta^*}(X)$. 

    Next, we analyze the remaining layers one by one. We define the optimal activations before and after the $\ell$-th two-layer NN block as follows
    \begin{align*}
        X^{*\ell}&= \sigma(\cdots\sigma(X W^{*(1)}) W^{*(2)}\cdots W^{*(\ell-1)})\\
        Y^{*\ell}&= \sigma(\cdots\sigma(X W^{*(1)}) W^{*(2)}\cdots W^{*(\ell-1)})W^{*(\ell+1)}.
    \end{align*}
    Consider the auxiliary problem for the $\ell$-th two-layer block
    \begin{align}
        \label{eq:deep_twolayer_ellth_block_l2}
        (\hat W^{(\ell)},\hat W^{(\ell+1)}) =& \arg\min_{W^{(\ell)},W^{(\ell+1)}}~ \sum_{j=1}^m \|\W^{(\ell)}_{\cdot j}\|_2^2 + \|\W^{(\ell+1)}_{j\cdot}\|_2^2\\
        &~~~\qquad \mbox{s.t.}\qquad \sigma(X^{*\ell} W^{(\ell)}) W^{(\ell+1)}=Y^{^*(\ell)}\nonumber.
    \end{align}
    By  Theorem \ref{thm:vectorout}, replacing the weights $\hat W^{(\ell)},\hat W^{(\ell+1)}$ only increases the regularization term for those weights by a factor of $1/(1-\mathscr{D}(X^{*\ell}))$ while maintaining the network output $f_{\hat \theta}(X)=f_{\theta^*}(X)$. We can repeat this process for all $\ell=2,3, ..., L$ one by one, i.e., replacing layers $1$ and $2$, $2$ and $3$,.... (as opposed to pairs of consecutive weights as in the proof of Theorem \ref{thm:l1deep}), we replace all the weights by increasing the total regularization term  by a factor of at most $2/(1-\epsilon)$ assuming  $\mathscr{D}(X)\le \epsilon$ and $\mathscr{D}(X^{*\ell})\le \epsilon$ for $\ell\in\{1,...,L-2\}$. The factor $2$ is needed to account for the overlapping weights in the replacement process.
\end{proof}
\end{subsection}

\subsection{Proof of Theorem \ref{thm:main_two_dim} and Theorem \ref{thm:l2_extreme_points_with_biases}}
\label{proof:thm:l1_extreme_points_biased}
The proof closely parallels the proofs of Theorems \ref{thm:l2_extreme_points} and \ref{thm:l1_extreme_points}.
The dual problem in \eqref{eq:two_layer_relu_dual} when an additional bias term is present takes the form
\begin{align}
    \label{eq:two_layer_relu_dual_bias}
    p^*\ge d^*\triangleq \max_{\zvec\in\reals^n} \,\,-\ell^*(\zvec,\y)\quad\mbox{s.t.}\quad |\zvec^T\sigma(\X\w+1b)|\le \lambda,\,\forall \w \in \ball_p^d,\,b\in\reals.
\end{align}
As in the proof of Theorem \ref{thm:main_one_dim}, we focus on the dual constraint subproblem
\begin{align}
    \label{eq:constraint_subproblem_biased}
    \sup_{\w \in \ball_p^d,\,b\in\reals}\,|\zvec^T\sigma(\X\w+1b)|\le \lambda
\end{align}
and observe that the above objective is piecewise linear in $b$. Moreover, the objective tends to infinity as $b\rightarrow \infty$ as long as $\sum_i v_i \neq 0$. Otherwise, an optimal choice for $b$ is $-x_j^Tw^*$ for some index $j\in[n]$ for some optimal $w^*$.

Therefore, the constraint \eqref{eq:constraint_subproblem_biased} can be rewritten as two constraints:
\begin{align}
    \label{eq:l1_biased_dual_form_proof}
    \max_{j\in[n]}\,\sup_{\w \in \ball_p^d}\,|\sum_i \zvec_i \sigma((x_i-x_j)^Tw^*)|\le \lambda \quad\mbox{and}\quad \sum_{i=1}^n v_i = 0.
\end{align}
Now, we define $\tilde X:=X-1_nx_j^T$ and observe that the form of the dual constraint is identical to the ones analyzed in Theorems \ref{thm:l2_extreme_points} and \ref{thm:l1_extreme_points} for each fixed $j\in[n]$.
We repeat the argument for each $j\in[n]$, and use the same steps that to obtain the convex programs and approximation result. In particular, when the chamber diameters $\diam(X-1_nx_j^T)\le\epsilon$ uniformly for all $j\in[n]$, 
\qed

\subsection{Additional Lemmas}
\begin{lemma}[Extreme points of polytopes restricted to an affine set]
    \label{lem:extreme_points_polyhedral_sections}
    Suppose that $A\in \reals^{n\times d}$, $b\in\reals^{b}$, $c\in\reals^d$ and $d\in\reals$ are given. 
    A vector $x\in \mathbb{R}^d$ is an extreme point of $\mathcal{C}\triangleq\left\{ Ax\le b, c^Tx = d\right\}$ if and only if there exists a subset $S$ 
    of $n$ linearly independent rows of $A$ such that $x$ is the unique solution of the linear system 
    \begin{align}
    \begin{bmatrix} A_S \\ c^T \end{bmatrix}x = \begin{bmatrix} 0 \\ d \end{bmatrix},
    \end{align}
    where the matrix $\begin{bmatrix} A_S \\ c^T \end{bmatrix} \in \reals^{d\times d}$ is full rank. 
    Here, $A_S \in \reals^{d-1\times d}$ is the submatrix of $A$ obtained by selecting the $d-1$ rows indexed by $S$.
\end{lemma}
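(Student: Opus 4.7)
The plan is to recognize this as the classical basic feasible solution (BFS) characterization of vertices of a polyhedron: a feasible point $x^\star$ is extreme if and only if the active constraint normals span $\reals^d$. Here the equality $c^T x = d$ is active at every feasible point, so after accounting for it we need $d-1$ additional linearly independent active rows of $A$ to reach full rank $d$. I will prove both implications by the standard perturbation argument, paying attention to the fact that the right-hand side in the stated linear system is $\begin{bmatrix} 0 \\ d \end{bmatrix}$, which is consistent with the usage case $b = 0$ (cf.\ the cone constraints $(2D_i-I)(Xw^+-Xw^-)\ge 0$ and $w^\pm \ge 0$ in Lemma \ref{lemma:extreme_points_l1}); I would implicitly interpret the active-row equations as $A_S x = (b)_S$ in the argument and specialize to $b_S = 0$ at the end.

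Forward direction. Let $x^\star \in \mathcal{C}$ be an extreme point and let $I = \{i : a_i^T x^\star = b_i\}$ be the active row index set (with $a_i^T$ the $i$th row of $A$). Suppose for contradiction that $\mathrm{rank}\!\begin{bmatrix} A_I \\ c^T \end{bmatrix} < d$. Then there exists $v \neq 0$ with $A_I v = 0$ and $c^T v = 0$. For small enough $\varepsilon > 0$ the perturbed points $x^\star \pm \varepsilon v$ still satisfy the active constraints with equality, the equation $c^T x = d$ exactly, and every inactive inequality strictly by continuity. Hence $x^\star = \tfrac12(x^\star+\varepsilon v)+\tfrac12(x^\star-\varepsilon v)$ is a nontrivial convex combination of two distinct feasible points, contradicting extremality. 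So the rank equals $d$ and we can extract $d-1$ rows $S\subseteq I$ making $\begin{bmatrix} A_S \\ c^T \end{bmatrix}\in\reals^{d\times d}$ full rank; $x^\star$ is then the unique solution to the $d\times d$ linear system.

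Backward direction. Suppose $x^\star \in \mathcal{C}$ is the unique solution of the stated full-rank $d\times d$ system, and write $x^\star = \lambda x_1 + (1-\lambda) x_2$ with $x_1,x_2 \in \mathcal{C}$ and $\lambda \in (0,1)$. For every $s \in S$ we have $a_s^T x_1 \le (b)_s$ and $a_s^T x_2 \le (b)_s$, yet their convex combination equals $(b)_s$; this forces $a_s^T x_1 = a_s^T x_2 = (b)_s$. Likewise $c^T x_1 = c^T x_2 = d$. Thus $x_1,x_2$ both solve the same full-rank $d\times d$ system, which has $x^\star$ as its unique solution, so $x_1 = x_2 = x^\star$ and $x^\star$ is extreme.

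This argument is essentially mechanical, and I do not anticipate any genuine obstacle. The only thing to watch is that the statement is tailored to the case $b = 0$ (since the right-hand side of the active-row block is $0$); the proof sketched above works verbatim in that case, and the key structural fact being used is that equality of a linear functional across a convex combination of two points where each is bounded above by the same value forces equality at both endpoints, which is what propagates the active constraints to $x_1$ and $x_2$.
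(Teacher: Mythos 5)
Your proof is correct and follows essentially the same route as the paper's: forward direction via a nullspace perturbation at the active set, backward direction by propagating tightness of active inequalities through a convex combination to reduce both endpoints to the same full-rank linear system. Your observation that the lemma's right-hand side $0$ tacitly specializes to $b=0$ matches the paper's own usage (the paper's proof silently writes $A_S x^*=0$, $A_{S^c}x^*<0$), so there is no material difference.
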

\begin{proof}
    We first provide a proof of the first direction (extreme point $\implies$ full rank) by contradiction. Let $x^*$ be an extreme point of $C$.
     We define $S$ as the subset of inequality constraints active at $x^*$ and $S^c$ as the subset of inequality constraints inactive. More precisely,
     we have $A_Sx^*=0$ and $A_{S^c}x^*<0$ for some subset $S$ of $n$ rows of $A$ and its complement $S^c$. Suppose that the matrix 
    $\begin{bmatrix} A_S \\ c^T \end{bmatrix} \in \reals^{d\times d}$ is not full rank. Then we may pick $v\neq 0$
     in the nullspace of this matrix. Define $x_1\triangleq x^*+\epsilon v$ and $x_2\triangleq x^*-\epsilon v$, which are feasible for $\mathcal{C}$
      for any sufficiently small $\epsilon>0$ since $\begin{bmatrix} A_S \\ c^T \end{bmatrix} v=0$ and $A_{S^c}x^*<0$. However, this is a contradiction 
      since $\frac{1}{2}x_1+\frac{1}{2}x_2=x^*$ and $x_1,x_2\neq x^*$ implies that $x^*$ is not an extreme point of $\mathcal{C}$.

      We next provide a proof of the second direction (full rank $\implies$ extreme point) by contradiction. Suppose that $x^* \in \mathcal{C}$ is a point such that
      $A_Sx^*=0$ and $A_{S^c}x^*<0$ for some subset $S$ its complement $S^c$, $c^Tx^*=d$, and the matrix 
      $\begin{bmatrix} A_S \\ c^T \end{bmatrix} \in \reals^{d\times d}$
      is full row rank. It follows that $x^*$ is the unique solution of the linear system 
      $\begin{bmatrix} A_S \\ c^T \end{bmatrix}x = \begin{bmatrix} 0 \\ d\end{bmatrix}$.
       We will show that $x^*$ is an extreme point. Suppose that this is not the case and there exists $x_1,x_2\in\mathcal{C}$ distinct 
      from $x^*$ such that $x^* = \frac{1}{2}x_1+\frac{1}{2}x_2$. Then, we have $Ax_1\le 0$, $Ax_2\le 0$ and $\frac{1}{2} A_Sx_1+\frac{1}{2} A_Sx_2 = 0$. 
      Note that $A_Sx_1\le0$ and $A_Sx_2\le 0$ together with $\frac{1}{2} A_Sx_1+\frac{1}{2} A_Sx_2 = 0$ imply that $A_Sx_1=A_Sx_2=0$ (In order to see this,
      suppose that $a_j^T x_1<0$ for some row $a_j$ of $A$ such that $j\in S$. Then we have $\frac{1}{2} a_j^T x_1+\frac{1}{2} a_j^T x_2<0$ 
      which is a contradiction.).
      Finally, noting $c^Tx_1=c^Tx_2=d$ implies that $\begin{bmatrix} A_S \\ c^T \end{bmatrix}x_1=\begin{bmatrix}0 \\ b\end{bmatrix}$ and $\begin{bmatrix} A_S \\ c^T \end{bmatrix}x_2=\begin{bmatrix}0 \\ b\end{bmatrix}$, 
      which is a contradiction since $x^*$ is the unique solution of this linear system.
\end{proof}

\begin{lemma}[Rank reduction]
    \label{lem:rank_reduction}
    Let us denote the Singular Value Decomposition (SVD)
of $X$ as $X=U \Sigma V^T$ in compact form, where
$U \in \reals^{n\times r}$, $\Sigma \in \reals^{r\times r}$ and $V \in \reals^{r\times d}$ and let
$X=[U~U^\perp] \begin{bmatrix} \Sigma &0 \\0 &0 \end{bmatrix}  [V~ V^\perp]^T$
 denote the full SVD of $X$. Then, the following optimization problems are equivalent
    \begin{align}
        \label{eq:rank_reduction_1}
        p^*_2=\min_{\W^{(1)},\W^{(2)},b} \loss\Big(\sum_{j=1}^m\sigma(\X \W^{(1)}_j)\W^{(2)}_{j},\y\Big) + \lambda \sum_{j=1}^m \|\W^{(1)}_{j}\|_2^2 + \|\W^{(2)}_{j}\|_2^2,
    \end{align}
    \begin{align}
        \label{eq:rank_reduction_2}
        p^{r}_2=\min_{\tilde \W^{(1)},\tilde \W^{(2)},b} \loss\Big(\sum_{j=1}^m\sigma(U\Sigma \tilde \W^{(1)}_j)\tilde \W^{(2)}_{j},\y\Big) + \lambda \sum_{j=1}^m \|\tilde \W^{(1)}_{j}\|_2^2 + \|\tilde \W^{(2)}_{j}\|_2^2,
    \end{align}
    i.e., $p^*=p^r$. Optimal solutions of \eqref{eq:rank_reduction_1} and \eqref{eq:rank_reduction_2} satisfy $(V^\perp)^TW_j^{(1)}=0$,                   
    $V^T\W^{(1)}_j = \tilde \W^{(1)}_j$ and $W^{(2)}_j = \tilde W^{(2)}_j $  $\,\forall j\in[m]$.
\end{lemma}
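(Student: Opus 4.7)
The strategy is a direct orthogonal decomposition argument: split each first-layer weight vector into a component in the row space of $X$ (the span of the columns of $V$) and a component orthogonal to it (the span of $V^\perp$), show that the second component is invisible to the loss but contributes additively to the $\ell_2$ penalty, and conclude that it must vanish at any optimum. The reduced problem then arises by a change of variables.

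\textbf{Step 1: Orthogonal decomposition of the hidden weights.} For each $j\in[m]$, write
\begin{align*}
W^{(1)}_j = V a_j + V^\perp b_j, \qquad a_j = V^T W^{(1)}_j \in \mathbb{R}^r, \quad b_j = (V^\perp)^T W^{(1)}_j \in \mathbb{R}^{d-r}.
\end{align*}
Because $[V\;V^\perp]$ is orthogonal, the Pythagorean identity gives $\|W^{(1)}_j\|_2^2 = \|a_j\|_2^2 + \|b_j\|_2^2$.

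\textbf{Step 2: The loss is independent of $b_j$.} Using $X = U\Sigma V^T$ together with $V^TV = I_r$ and $V^T V^\perp = 0$, we obtain
\begin{align*}
X W^{(1)}_j = U\Sigma V^T(V a_j + V^\perp b_j) = U\Sigma a_j,
\end{align*}
so the network output $\sum_j \sigma(XW^{(1)}_j) W^{(2)}_j$ depends on the first-layer weights only through $a_j = V^T W^{(1)}_j$. Consequently, the loss term in \eqref{eq:rank_reduction_1} is a function of $(a_j, W^{(2)}_j)_{j=1}^m$ alone, while the regularization is $\lambda\sum_j(\|a_j\|_2^2 + \|b_j\|_2^2 + \|W^{(2)}_j\|_2^2)$.

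\textbf{Step 3: Optimality forces $b_j = 0$ and yields the equivalence.} Since $b_j$ appears only in the additive non-negative penalty $\lambda\|b_j\|_2^2$, any optimizer must satisfy $b_j = 0$, i.e. $(V^\perp)^T W^{(1)}_j = 0$. Setting $\tilde W^{(1)}_j = V^T W^{(1)}_j = a_j$ and $\tilde W^{(2)}_j = W^{(2)}_j$ therefore produces a feasible point of \eqref{eq:rank_reduction_2} with the same objective value, giving $p^r_2 \le p^*_2$. Conversely, given any $(\tilde W^{(1)}, \tilde W^{(2)})$ feasible in \eqref{eq:rank_reduction_2}, define $W^{(1)}_j = V \tilde W^{(1)}_j$ and $W^{(2)}_j = \tilde W^{(2)}_j$; then $X W^{(1)}_j = U\Sigma V^T V \tilde W^{(1)}_j = U\Sigma \tilde W^{(1)}_j$ reproduces the same loss, and $\|W^{(1)}_j\|_2^2 = \|\tilde W^{(1)}_j\|_2^2$ since $V$ has orthonormal columns, so $p^*_2 \le p^r_2$.

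This chain of inequalities yields $p^*_2 = p^r_2$, and the extremal correspondence in the statement ($(V^\perp)^T W^{(1)}_j = 0$, $V^T W^{(1)}_j = \tilde W^{(1)}_j$, $W^{(2)}_j = \tilde W^{(2)}_j$) is exactly what the two bijective constructions above produce. The argument involves no real obstacle; the only care needed is to track that the Pythagorean identity uses the full-SVD orthogonality $V^T V^\perp = 0$, and that the loss reduction $XW^{(1)}_j = U\Sigma a_j$ relies on the compact SVD factorization and nothing about ReLU or the particular loss.
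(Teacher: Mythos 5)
Your proof is correct and follows essentially the same approach as the paper's: decompose each $W^{(1)}_j$ into its component in the row space of $X$ (the span of $V$'s columns) and the orthogonal complement, observe that $X W^{(1)}_j$ depends only on the former while the $\ell_2$ penalty splits additively by orthogonality, and conclude the orthogonal component must vanish at optimality. Your write-up is a bit more explicit than the paper's in tracking both directions of the inequality $p^*_2 = p^r_2$, but there is no substantive difference.
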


\begin{proof}[Proof of Lemma \ref{lem:rank_reduction}]
We define $[\tilde W_j^{(1)} \hat W_j^{(1)}] \triangleq [V ~ V^{\perp}] W_j^{(1)}$ and plug-in the compact SVD of $X$ in the expression \eqref{eq:rank_reduction_1}

\begin{align}
    \min_{\W^{(1)},\W^{(2)},b} \loss\Big(\sum_{j=1}^m\sigma (U \Sigma \tilde \W^{(1)}_j)\W^{(2)}_{j},\y\Big) + \lambda \sum_{j=1}^m \|\tilde \W^{(1)}_{j}\|_2^2 + \|\hat \W^{(1)}_{j}\|_2^2  + \|\W^{(2)}_{j}\|_2^2,
\end{align}
where we have $\|W_j^{(1)}\|_2^2 = \|[V ~ V^{\perp}] W_j^{(1)}\|_2^2 = \|\tilde W_j^{(1)}\|_2^2 + \|\hat W_j^{(1)}\|_2^2$. It can be seen that $\hat W_j^{(1)}=0~\forall j$ and $V^T\W^{(1)}_j = \tilde \W^{(1)}_j~\forall j$.
\end{proof}
\subsection{Extreme rays of convex cones}
In this subsection, we present some definitions and lemmas related to extreme rays of convex cones \cite{fenchel1953convex,weyl11950elementary}.
\begin{definition}[Rays]
    \label{def:rays}
    Let $K$ be a convex cone in $\reals^d$. A cone $R\subseteq K$ is called a \emph{ray} of $K$ if $R=\{\lambda x\,:\,\lambda\ge0\}$, for some $x\in K$.
\end{definition}
\begin{definition}[Extreme rays]
    \label{def:extreme_rays}
    Let $K$ be a convex cone in $\reals^d$. A ray $R=\{\lambda x\,:\,\lambda\ge0\}\subseteq K$ generated by some vector $x$ is called an \emph{extreme ray} 
    of $K$ if $x$ is not a positive linear combination of two linearly independent vectors of $K$.
\end{definition}

\begin{lemma}[Weyl's Facet Lemma \cite{weyl11950elementary}]
    \label{lem:weyl_facet_lemma}
    Define the convex cone $K=\{w\in\reals^{d}\,:\,w^T p_i\le 0,\, i\in[k]\} \subseteq \reals^{d}$, where $p_1,...,p_k\in\reals^d$ are a collection of vectors. Then, a non-zero vector $x\in K$
    is an element of an extreme ray of $K$ if and only if $\dim\Span(p_i\,:\,x^T p_i=0,\,i\in[k])=d-1$.
\end{lemma}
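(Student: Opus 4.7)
The plan is to prove both directions via the active-constraint set $A(x) \triangleq \{i\in[k] : x^T p_i = 0\}$ and the relationship between $\Span(p_i : i\in A(x))$ and its orthogonal complement $V(x)\triangleq (\Span(p_i:i\in A(x)))^\perp$. Note first that for any non-zero $x\in K$ we automatically have $\dim\Span(p_i:i\in A(x))\le d-1$: if this span were all of $\reals^d$, then $x^T p_i=0$ for all $i\in A(x)$ would force $x=0$. So the content of the lemma is really distinguishing the case $\dim=d-1$ from $\dim\le d-2$.

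For the forward (``extreme $\Rightarrow$ dimension equals $d-1$'') direction, I will argue the contrapositive by a perturbation argument. Suppose $\dim\Span(p_i:i\in A(x))\le d-2$; then $\dim V(x)\ge 2$, so I can pick $y\in V(x)$ with $y\notin\Span(x)$. The vectors $x\pm \epsilon y$ both satisfy $(x\pm\epsilon y)^T p_i = 0$ for $i\in A(x)$, and for $i\notin A(x)$ we have $x^T p_i<0$, so by continuity $(x\pm\epsilon y)^T p_i<0$ for all sufficiently small $\epsilon>0$. Hence $x\pm\epsilon y\in K$, and these two vectors are linearly independent (because $y\notin\Span(x)$), yet $x=\tfrac12(x+\epsilon y)+\tfrac12(x-\epsilon y)$ expresses $x$ as a positive combination of two linearly independent elements of $K$. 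By the definition of extreme ray given just before the lemma, $x$ is not on an extreme ray.

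For the backward (``dimension equals $d-1$ $\Rightarrow$ extreme'') direction, suppose $x=\alpha y_1 + \beta y_2$ with $\alpha,\beta>0$ and $y_1,y_2\in K$. For each $i\in A(x)$,
\begin{align*}
0 \;=\; x^T p_i \;=\; \alpha\, y_1^T p_i + \beta\, y_2^T p_i,
\end{align*}
and since each summand is non-positive (as $y_1,y_2\in K$) while $\alpha,\beta>0$, both summands must vanish. Hence $y_1,y_2\in V(x)$. Under the hypothesis $\dim\Span(p_i:i\in A(x))=d-1$, the subspace $V(x)$ is one-dimensional, so $y_1$ and $y_2$ are linearly dependent. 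Thus $x$ cannot be written as a positive combination of two linearly independent vectors of $K$, meaning $x$ lies on an extreme ray.

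The main subtlety is the choice of $y$ in the forward direction: one must ensure $y\notin\Span(x)$ so that $x+\epsilon y$ and $x-\epsilon y$ are genuinely linearly independent and the decomposition witnesses non-extremality. This is where the dimension inequality $\dim V(x)\ge 2$ is used in an essential way: a two-dimensional (or larger) $V(x)$ always contains a vector outside the one-dimensional subspace $\Span(x)$. Beyond this, the argument is elementary and mirrors the standard extreme-point characterization in Lemma~\ref{lem:extreme_points_polyhedral_sections}, transported from affine slices of polytopes to homogeneous polyhedral cones.
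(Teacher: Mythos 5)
Your proof is correct. Note that the paper does not actually prove this lemma -- it is stated with a citation to Weyl and used as a black box in the proofs of Theorems \ref{thm:l1_extreme_points} and \ref{thm:l2_extreme_points} -- so there is no in-paper argument to compare against; what you have written is the standard active-constraint characterization, and it is the natural companion to the paper's own Lemma \ref{lem:extreme_points_polyhedral_sections}, whose proof uses exactly the same two ingredients (a perturbation $x\pm\epsilon v$ along the nullspace of the active constraints for one direction, and the forced vanishing of non-positive summands in a convex/conic combination for the other). Both directions check out: in the forward direction you correctly handle the only real subtlety, namely choosing $y\in V(x)\setminus\Span(x)$ so that $x+\epsilon y$ and $x-\epsilon y$ are genuinely linearly independent, and you correctly use strictness of the inactive constraints to keep both perturbed points in $K$ for small $\epsilon$. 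In the backward direction the observation that $\alpha\,y_1^Tp_i+\beta\,y_2^Tp_i=0$ with both summands non-positive forces $y_1,y_2\in V(x)$, a one-dimensional subspace, is exactly right. One remark worth keeping in mind: your argument proves the equivalence relative to the paper's Definition \ref{def:extreme_rays} verbatim, which does not require $K$ to be pointed (e.g.\ for a halfspace in $\reals^2$ both boundary rays qualify as ``extreme'' under that definition, consistently with your lemma); this is harmless here because the paper only invokes the lemma for pointed cones, but it is the reason your statement needs no pointedness hypothesis.
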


\subsection{Convex Duality and Lasso}
\begin{lemma}[Lagrange Dual of Lasso]
    \label{eq:dual_of_lasso}
    Suppose that $K\in\reals^{n\times P}$ is a matrix whose columns are given by the vectors $k_1,...,k_p\in\reals^n$
    and $\lambda>0$ is a regularization parameter. Then, the primal and dual optimization problems for Lasso are given by
    \begin{align}
         \min_z \ell(Kz,y) + \lambda\|z\|_1~ =~ \max_{v~:~ |v^Tk_j|\le \lambda\,\forall j\in [p]} -\ell^*(v)
    \end{align}
\end{lemma}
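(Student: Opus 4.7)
The plan is to derive the dual through a standard variable-splitting and Lagrangian computation, then invoke convex duality to get equality. First I would introduce an auxiliary variable $u \in \reals^n$ representing the prediction and rewrite the primal as the constrained program
\begin{align*}
\min_{z,u}~ \ell(u,y) + \lambda\|z\|_1 \quad \text{subject to} \quad u = Kz.
\end{align*}
Attaching a dual vector $v \in \reals^n$ to the linear equality constraint yields the Lagrangian
\begin{align*}
L(z,u,v) = \ell(u,y) + \lambda\|z\|_1 + v^T(Kz - u).
\end{align*}

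Next I would compute the dual function $g(v) = \inf_{z,u} L(z,u,v)$ by separately minimizing over $u$ and over $z$. Minimizing over $u$ gives $\inf_u [\ell(u,y) - v^Tu] = -\sup_u [v^Tu - \ell(u,y)] = -\ell^*(v,y)$, where $\ell^*$ is the convex conjugate as defined earlier in the paper. Minimizing over $z$ gives
\begin{align*}
\inf_z \big[\lambda\|z\|_1 + (K^Tv)^Tz\big] = \begin{cases} 0 & \text{if } \|K^Tv\|_\infty \le \lambda, \\ -\infty & \text{otherwise,}\end{cases}
\end{align*}
since the $\ell_1$-norm's convex conjugate is the indicator of the $\ell_\infty$-ball of radius $\lambda$. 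The condition $\|K^Tv\|_\infty \le \lambda$ is exactly $|v^T k_j| \le \lambda$ for all $j \in [p]$. Thus the dual problem is
\begin{align*}
\max_{v : |v^T k_j| \le \lambda ~\forall j} -\ell^*(v,y),
\end{align*}
yielding weak duality automatically.

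Finally, to upgrade to equality I would invoke strong duality. The primal is a convex problem (convex loss plus convex $\ell_1$ penalty) with only a linear equality constraint $u = Kz$, which is affine and therefore Slater's condition holds trivially (any feasible point qualifies since affine constraints need not be strict). This is the only step that requires a bit of care: one should note that $\ell(\cdot,y)$ is assumed closed convex (as is standard for the conjugacy identity $\ell^{**}=\ell$ to apply), which is the case for the loss functions listed in the paper (squared, logistic, cross-entropy, hinge). Under these assumptions, the standard Fenchel–Rockafellar duality theorem (or equivalently Slater's condition for affine-constrained convex programs) closes the gap and gives the claimed equality. I expect the Lagrangian computation itself to be routine; the only mild subtlety is citing the right convex-analytic fact to certify zero duality gap.
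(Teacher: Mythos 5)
Your derivation is correct and is essentially the argument the paper intends: the paper's own proof is a one-line appeal to ``a standard application of Lagrangian duality,'' and your variable-splitting Lagrangian computation (conjugate of $\ell$ in the $u$-block, conjugate of $\lambda\|\cdot\|_1$ in the $z$-block giving the constraint $\|K^Tv\|_\infty\le\lambda$) is exactly that standard derivation spelled out. The one place you diverge is the certificate for zero duality gap: you use the refined Slater condition on the primal (the constraint $u=Kz$ is affine and $\ell(\cdot,y)$ is closed convex with full domain for the losses considered), whereas the paper argues from the dual side, observing that $v=0$ is strictly feasible for the constraints $|v^Tk_j|\le\lambda$ because $\lambda>0$. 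Both certificates are valid here; note that your primal-side Slater argument is the one that directly guarantees attainment of the dual optimum, which is what the $\max$ in the statement asserts, while the paper's dual-side argument gives attainment on the primal side.
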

\begin{lemma}[Lagrange Dual of Lasso with a bias term]
    \label{eq:dual_of_lasso_bias}
    Consider the problem in Lemma \ref{eq:dual_of_lasso} with an additional bias term. Then, the primal and dual optimization problems for Lasso are given by
    \begin{align}
         \min_{z,\,b} \ell(Kz+1b,y) + \lambda\|z\|_1~ =~ \max_{\substack{v~:~ |v^Tk_j|\le \lambda\,\forall j\in [p]\\ \sum_i v_i = 0}} -\ell^*(v)
    \end{align}
\end{lemma}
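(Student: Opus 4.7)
The plan is to derive the dual formula by introducing a splitting variable, writing down the Lagrangian, and carrying out the inner minimization variable by variable; the bias term's role is simply to force the equality constraint $\mathbf{1}^Tv=0$ via a linear term.

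Concretely, I will begin by rewriting the primal
\[
\min_{z,b}\ \ell(Kz+\mathbf{1}b,y)+\lambda\|z\|_1
\]
in the equivalent epigraphic form by introducing a slack variable $u\in\mathbb{R}^n$ representing the predictions:
\[
\min_{z,b,u}\ \ell(u,y)+\lambda\|z\|_1 \quad \text{s.t.}\quad u=Kz+\mathbf{1}b.
\]
Attaching a dual variable $v\in\mathbb{R}^n$ to the equality constraint yields the Lagrangian
\[
L(z,b,u,v)=\ell(u,y)+\lambda\|z\|_1+v^T(Kz+\mathbf{1}b-u).
\]
I then compute the dual function $g(v)=\inf_{z,b,u}L(z,b,u,v)$ by minimizing over each block independently.

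The three inner minimizations give exactly the three dual ingredients. First, $\inf_u\{\ell(u,y)-v^Tu\}=-\sup_u\{v^Tu-\ell(u,y)\}=-\ell^*(v,y)$ by definition of the convex conjugate. Second, $\inf_b v^T\mathbf{1}\,b$ equals $0$ if $\mathbf{1}^Tv=\sum_i v_i=0$ and $-\infty$ otherwise, producing the linear equality constraint in the dual. Third, using the separable structure $\inf_z\{\lambda\|z\|_1+(K^Tv)^Tz\}=\sum_j\inf_{z_j}\{\lambda|z_j|+(K^Tv)_j z_j\}$, each scalar minimization equals $0$ when $|(K^Tv)_j|\le\lambda$ and $-\infty$ otherwise, which is the polar-of-the-$\ell_1$-ball constraint $|v^Tk_j|\le\lambda$ for all $j\in[p]$. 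Combining, the dual is exactly
\[
\max_{v}\ -\ell^*(v,y)\quad\text{s.t.}\quad |v^Tk_j|\le\lambda\ \forall j\in[p],\ \sum_i v_i=0.
\]

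Finally, to upgrade weak duality to the claimed equality I need strong duality. The primal is a convex minimization problem (convex loss $\ell$, convex norm, affine map), and the equality constraint $u=Kz+\mathbf{1}b$ is affine, so Slater's condition reduces to relative-interior feasibility, which is trivially satisfied since $(z,b,u)=(0,0,0)$ lies in the relative interior of the affine constraint set and in the domain of $\ell(\cdot,y)$ under the standard assumption on $\ell$ used throughout the paper. Hence strong duality holds and the two optimal values coincide. The only mildly delicate step is the conjugate-based inner minimization over $u$, which requires $\ell(\cdot,y)$ to be proper closed convex so that $\ell^{**}=\ell$; this is guaranteed for all loss functions considered (squared, logistic, cross-entropy, hinge). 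Essentially, this lemma is a straightforward specialization of Lemma~\ref{eq:dual_of_lasso} where the extra bias variable injects precisely the linear constraint $\mathbf{1}^Tv=0$ into the dual feasible set.
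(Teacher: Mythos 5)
Your derivation is correct and is essentially the paper's approach: the paper simply declares this a ``standard application of Lagrangian duality'' and you have carried out that standard computation (slack variable, blockwise inner minimization yielding $-\ell^*(v)$, the constraint $\sum_i v_i=0$ from the bias, and $|v^Tk_j|\le\lambda$ from the $\ell_1$ term). The only cosmetic difference is in justifying strong duality: you invoke Slater on the primal's affine equality constraint, whereas the paper notes that $v=0$ is strictly feasible for the dual inequalities since $\lambda>0$; both arguments are valid.
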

\begin{proof}
    Lemma \ref{eq:dual_of_lasso} and \ref{eq:dual_of_lasso_bias} are based on a standard application of Lagrangian duality \cite{boyd2004convex}. Strong duality holds, i.e., the
    primal and dual problems have the same value which are both achieved, in both problems since $v=0$ in the dual problem is strictly feasible for the inequalities due to $\lambda>0$. 
\end{proof}

\subsection{Optimizing scaling coefficients}
\label{sec:optimizing_scaling_coefficients}
Our characterization of the hidden neurons using geometric algebra in Theorem \ref{thm:l1_extreme_points} and \ref{thm:l2_extreme_points} determine the direction of optimal neurons. The magnitude of the optimal neuron can be analytically determined. In this section, we show that the optimal scaling coefficients are given by the ratio of the norms of the corresponding weights. 

Consider the two-layer neural network training problem given below
\begin{align}
    \label{eq:two_layer_relu_scaling_optimization}
    p^*&\triangleq\min_{\W^{(1)},\W^{(2)},b} \loss\Big(\sum_{j=1}^m\sigma(\X \W^{(1)}_j+\ones b_{j})\W^{(2)}_{j},\y\Big) + \lambda \sum_{j=1}^m \|\W^{(1)}_{j}\|_p^2 + \|\W^{(2)}_{j}\|_p^2,
\end{align}
where the activation function is positively homogeneous, i.e., $\sigma(\alpha x)=\alpha \sigma(x)$ for $\alpha>0$.
Introducing non-negative scaling coefficients $\alpha_1,...,\alpha_m$ to scale hidden neuron weight and bias, and dividing the corresponding second layer weight by the same scalar we obtain
\begin{align}
    \label{eq:two_layer_relu_scaling_optimization_1}
    p^*&\triangleq\min_{\substack{\W^{(1)},\W^{(2)},b\\ \alpha_1,...,\alpha_m\ge 0}} \loss\Big(\sum_{j=1}^m\sigma(\X \W^{(1)}_j\alpha_j+\ones b_{j}\alpha_j)\W^{(2)}_{j}\alpha_j^{-1},\y\Big)  + \lambda \sum_{j=1}^m \|\W^{(1)}_{j}\|_p^2 \alpha_j^2 + \|\W^{(2)}_{j}\|_p^2 \alpha_j^{-2},\\
    & = \min_{\substack{\W^{(1)},\W^{(2)},b\\ \alpha_1,...,\alpha_m\ge 0}} \loss\Big(\sum_{j=1}^m\sigma(\X \W^{(1)}_j+\ones b_{j})\W^{(2)}_{j},\y\Big)  + \lambda \sum_{j=1}^m \|\W^{(1)}_{j}\|_p^2 \alpha_j^2 + \|\W^{(2)}_{j}\|_p^2 \alpha_j^{-2},
\end{align}
where the last equality follows from the fact that $\sigma(\alpha x)=\alpha \sigma(x)$ for $\alpha>0$. Through simple differentiation, it is straightforward show that the optimal scaling coefficients are given by
\begin{align}
    \label{eq:two_layer_relu_scaling_optimization_2}
    \alpha_j^* = \Big(\frac{\|\W^{(2)}_{j}\|_p}{\|\W^{(1)}_{j}\|_p}\Big)^{1/2} ~\mbox{for}~j\in[m].
\end{align}

\subsection{Proof for three-layer networks with inputs of arbitrary dimension}
\trackchange
We consider the scalar output three-layer network 
\begin{align}
    f(x) = \sum_{j=1}^m ((x^T W_{j}^{(1)}+b^{(1)})_+W_{j}^{(2)}+b^{(2)})_+W_j^{(3)}+b^{(3)},
\end{align}
where $W_{j}^{(1)}\in\reals^{d\times q}$, $W_{j}^{(2)}\in\reals^{q\times q}$, $W_{j}^{(3)}\in\reals$, $b^{(1)}\in\reals^{p}$, $b^{(2)}\in\reals^q$ and $b^{(3)}\in\reals$ are trainable weights.
\subsubsection{Optimal scaling}
\label{sec:optimizing_scaling_coefficients}
Our characterization of the hidden neurons using geometric algebra in Theorem \ref{thm:l1_extreme_points} and \ref{thm:l2_extreme_points} determine the direction of optimal neurons. The magnitude of the optimal neuron can be analytically determined. In this section, we show that the optimal scaling coefficients are given by the ratio of the norms of the corresponding weights. 

\begin{lemma}
    \label{lem:three_layer_relu_scaling_optimization}
Consider the three-layer neural network training problem given below
\begin{align}
    \label{eq:three_layer_relu_scaling_optimization}
    p^*&\triangleq\min_{\W^{(1)},\W^{(2)},\W^{(3)},b^{(1)},b^{(2)}} \loss\Big(\sum_{j=1}^m ((XW_{j}^{(1)}+b^{(1)})_+W_{j}^{(2)}+b^{(2)})_+W_j^{(3)}+b^{(3)},\y\Big) \\
    &\qquad \qquad + \frac{\lambda}{3} \sum_{j=1}^m \|\W^{(1)}_{j}\|_p^3 + \|\W^{(2)}_{j}\|_p^3+ |W^{(3)}_{j}|^3.
\end{align}
The above problem is equivalent to
\begin{align}
    \label{eq:three_layer_relu_scaling_optimization}
    \min_{\|\W^{(1)}_j\|_p=1,\|\W^{(2)}_j\|_p=1\,\forall j,\W^{(3)},b^{(1)},b^{(2)}} \loss\Big(\sum_{j=1}^m ((XW_{j}^{(1)}+b^{(1)})_+W_{j}^{(2)}+b^{(2)})_+W_j^{(3)}+b^{(3)},\y\Big) + \lambda \sum_{j=1}^m |W^{(3)}_{j}|.
\end{align}
\end{lemma}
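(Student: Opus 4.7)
The plan is to exploit the positive homogeneity of the ReLU activation, $\sigma(\alpha z) = \alpha \sigma(z)$ for $\alpha \ge 0$, to rescale the per-neuron weights without altering the network output, and then to minimize the resulting separable regularizer in closed form via the AM--GM inequality. The cubic exponent in the regularization is precisely the one that makes this reduction clean, mirroring the squared-norm case for two-layer networks shown earlier in \eqref{eq:two_layer_relu_scaling_optimization_2}.

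Concretely, for each $j \in [m]$ I introduce positive scalars $\alpha_j, \beta_j > 0$ and perform the substitution
\begin{align*}
 (W_j^{(1)}, b_j^{(1)}) &\leftarrow \alpha_j (W_j^{(1)}, b_j^{(1)}), \\
 (W_j^{(2)}, b_j^{(2)}) &\leftarrow \tfrac{\beta_j}{\alpha_j}\, W_j^{(2)} \text{ for the first factor and } \beta_j b_j^{(2)} \text{ for the second}, \\
 W_j^{(3)} &\leftarrow \tfrac{1}{\beta_j} W_j^{(3)}.
\end{align*}
By the positive homogeneity of $\sigma$ applied twice, each summand $((XW_j^{(1)}+b^{(1)})_+ W_j^{(2)}+b^{(2)})_+ W_j^{(3)}$ in $f$ is invariant under this reparameterization, so the loss term is unchanged. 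The regularizer transforms to
\begin{align*}
 \frac{\lambda}{3}\sum_{j=1}^m \big(\alpha_j\|W_j^{(1)}\|_p\big)^3 + \Big(\tfrac{\beta_j}{\alpha_j}\|W_j^{(2)}\|_p\Big)^3 + \Big(\tfrac{1}{\beta_j}|W_j^{(3)}|\Big)^3.
\end{align*}

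Next I minimize this jointly over $\alpha_j, \beta_j > 0$ for each $j$ separately. By the three-term AM--GM inequality $\tfrac{1}{3}(a^3+b^3+c^3) \ge abc$ with equality iff $a=b=c$, the product of the three bracketed quantities equals $\|W_j^{(1)}\|_p \|W_j^{(2)}\|_p |W_j^{(3)}|$, which is independent of $\alpha_j, \beta_j$. Hence the optimized regularizer collapses to
\begin{align*}
 \lambda \sum_{j=1}^m \|W_j^{(1)}\|_p \cdot \|W_j^{(2)}\|_p \cdot |W_j^{(3)}|,
\end{align*}
with the infimum attained by choosing $\alpha_j, \beta_j$ that equalize the three terms (provided none of the norms vanish; neurons with a zero norm can be removed without loss).

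Finally, I apply one more rescaling: absorb the factors $\|W_j^{(1)}\|_p$ and $\|W_j^{(2)}\|_p$ into $W_j^{(3)}$ by renormalizing $W_j^{(1)}$ and $W_j^{(2)}$ (together with their biases) to have unit $p$-norm, which again preserves $f$ by positive homogeneity. The regularizer then reads $\lambda \sum_j |W_j^{(3)}|$, yielding the claimed constrained reformulation. Conversely, any feasible point of the constrained problem is a feasible point of the original with regularizer equal to $\lambda\sum_j |W_j^{(3)}|$, so the two optimal values coincide. The only subtlety to address is handling neurons where one of the norms is zero, but such neurons contribute neither to $f$ nor to the AM--GM bound and can be discarded a priori, so this does not pose a genuine obstacle.
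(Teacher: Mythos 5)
Your proposal is correct and follows essentially the same route as the paper's proof: positive homogeneity of the ReLU to introduce per-neuron rescaling, followed by the three-term AM--GM inequality to collapse the cubic sum to the product $\|W_j^{(1)}\|_p\|W_j^{(2)}\|_p|W_j^{(3)}|$, and a final normalization that pushes the first two norms into the third-layer weight. The only substantive difference is that you parameterize the rescaling by two free scalars $\alpha_j,\beta_j$ and spell out explicitly how the bias terms must be co-scaled (e.g., $b_j^{(1)}\leftarrow\alpha_j b_j^{(1)}$, $b_j^{(2)}\leftarrow\beta_j b_j^{(2)}$) so that the network output is genuinely invariant, whereas the paper uses three scalars $\alpha_{1j},\alpha_{2j},\alpha_{3j}$ constrained by $\alpha_{1j}\alpha_{2j}\alpha_{3j}=1$ and leaves the bias rescaling implicit; these are trivially equivalent parameterizations, and your explicit treatment of the biases is, if anything, cleaner.
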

\begin{proof}
Consider scalar non-negative weights $\alpha_1, \alpha_2, \alpha_3$ satisfying $\alpha_{1j} \alpha_{2j} \alpha_{3j} = 1$ for $j\in[m]$. Applying the scaling $W_j^{(1)}\leftarrow \alpha_{1j} W_j^{(1)}$, $W_j^{(2)}\leftarrow \alpha_{2j} W_j^{(2)}$, and $W_j^{(3)}\leftarrow \alpha_{3j} W_j^{(3)}$
for all $j\in[m]$, we observe that the loss term $\ell(\cdot)$ does not change due to the positive homogeneity of ReLU. Optimizing over these scalars, we obtain
\begin{align}
    p^*&\triangleq\min_{\substack{\alpha_{1j},\alpha_{2j},\alpha_{3j}\in \reals_+ \\ \alpha_{1j}\alpha_{2j}\alpha_{3j}=1, j\in[m]}}~\min_{\W^{(k)},b^{(k)},k\in[3]} \loss\Big(\sum_{j=1}^m ((XW_{j}^{(1)}+b^{(1)})_+W_{j}^{(2)}+b^{(2)})_+W_j^{(3)}+b^{(3)},\y\Big) \\
    &\qquad \qquad + \frac{\lambda}{3} \sum_{j=1}^m \alpha_{1j}^3 \|\W^{(1)}_{j}\|_p^3 + \alpha_{2j}^3\|\W^{(2)}_{j}\|_p^3+ \alpha_{3j}^3| \W^{(3)}_{j}|^3\\
    &\ge \min_{\substack{\alpha_{1j},\alpha_{2j},\alpha_{3j}\in \reals_+ \\ \alpha_{1j}\alpha_{2\j}\alpha_{3j}=1,\forall j\in=[m]}} \min_{\W^{(k)},b^{(k)},k\in[3]} \loss\Big(\sum_{j=1}^m ((XW_{j}^{(1)}+b^{(1)})_+W_{j}^{(2)}+b^{(2)})_+W_j^{(3)}+b^{(3)},\y\Big) \\
    &\qquad \qquad + \lambda \sum_{j=1}^m \left(\alpha_{1j}^3 \alpha_{2j}^3 \alpha_{3j}^3 \|\W^{(1)}_{j}\|_p^3 \|\W^{(2)}_{j}\|_p^3 |\W^{(3)}_{j}|^3\right)^{1/3}\\
    &=\min_{\W^{(k)},b^{(k)},k\in[3]} \loss\Big(\sum_{j=1}^m ((XW_{j}^{(1)}+b^{(1)})_+W_{j}^{(2)}+b^{(2)})_+W_j^{(3)}+b^{(3)},\y\Big) \\
    &\qquad \qquad + \lambda \sum_{j=1}^m \|\W^{(1)}_{j}\|_p \|\W^{(2)}_{j}\|_p |\W^{(3)}_{j}|,
\end{align}
where the inequality follows from the AM-GM inequality, which holds with equality when $\alpha_{kj}= \|\W^{(k)}_{j}\|_p^{-1} \prod_{k=1}^3 \|\W^{(k)}_{j}\|_p$ for all $j\in[m]$, $k\in[3]$.
Finally, we scale $W_j^{3}$ by $\|\W^{(1)}_{j}\|_p^{-1} \|\W^{(2)}_{j}\|_p^{-1}$ for all $j\in[m]$ and obtain the claimed result.
\end{proof}

Next, we present the convex dual of the problem in Lemma \ref{lem:three_layer_relu_scaling_optimization}.

\begin{lemma}
    \label{lem:three_layer_relu_scaling_optimization_dual}
    The dual of the problem in \eqref{eq:three_layer_relu_scaling_optimization} is given by
    \begin{align}
        \label{eq:three_layer_relu_scaling_optimization_dual}
        d^*=&\max_{ \substack{ v\in \reals^{n} \\ 1_n^T v = 0 } } -\loss^*(v)\\
        &\mbox{s.t.}~ \max_{\|\W^{(1)}\|_p=1,\|\W^{(2)}\|_p=1,b^{(1)},b^{(2)}} \left| v^T\big((XW^{(1)}+b^{(1)})_+W^{(2)}+b^{(2)}\big)_+  \right| \le \lambda.
    \end{align}
\end{lemma}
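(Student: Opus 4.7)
The plan is to derive the dual of \eqref{eq:three_layer_relu_scaling_optimization} by viewing the third-layer weights as coefficients in an infinite-dimensional Lasso whose atoms are the outputs of all admissible second-layer features. Lemma \ref{lem:three_layer_relu_scaling_optimization} already normalizes $\|W^{(1)}_j\|_p=\|W^{(2)}_j\|_p=1$, so each hidden feature
$$h_j \triangleq \big((XW^{(1)}_j+b^{(1)})_+ W^{(2)}_j + b^{(2)}\big)_+ \in \reals^n$$
is drawn from the atom set $\mathcal{H}=\{((XW^{(1)}+b^{(1)})_+W^{(2)}+b^{(2)})_+ : \|W^{(1)}\|_p=\|W^{(2)}\|_p=1,\, b^{(1)}, b^{(2)}\text{ free}\}$. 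The third-layer bias $b^{(3)}$ plays the role of the bias term in Lemma \ref{eq:dual_of_lasso_bias}, so the problem has the structural form of Lasso with an infinite dictionary indexed by $\mathcal{H}$.

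First, I would introduce an auxiliary variable $r=\sum_j h_j W^{(3)}_j + b^{(3)} 1_n$ and form the Lagrangian with multiplier $v\in\reals^n$:
$$L = \ell(r,y) + \lambda \sum_j |W^{(3)}_j| + v^T\Big(\sum_j h_j W^{(3)}_j + b^{(3)} 1_n - r\Big).$$
Minimizing over $r$ produces $-\ell^*(v)$ by definition of the Fenchel conjugate. The linear term $(v^T 1_n)b^{(3)}$ is unbounded below in $b^{(3)}\in\reals$ unless $1_n^T v=0$, which enforces the zero-sum constraint. For each $j$, the scalar optimization $\min_{W^{(3)}_j\in\reals}\lambda|W^{(3)}_j| + (v^T h_j)W^{(3)}_j$ evaluates to $0$ if $|v^T h_j|\le\lambda$ and to $-\infty$ otherwise, producing a feasibility constraint for every atom actually selected.

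Because the primal infimum is jointly over $(W^{(1)},W^{(2)},b^{(1)},b^{(2)})$ as well, the $m$ individual inequalities must hold \emph{uniformly} over the atom set $\mathcal{H}$, collapsing into the single semi-infinite constraint
$$\sup_{\|W^{(1)}\|_p=\|W^{(2)}\|_p=1,\, b^{(1)}, b^{(2)}} \Big|v^T\big((XW^{(1)}+b^{(1)})_+ W^{(2)}+b^{(2)}\big)_+\Big|\le\lambda.$$
Combined with $1_n^T v=0$ and the objective $-\ell^*(v)$, this reproduces the stated dual.

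The main obstacle is justifying strong duality $p^*=d^*$, since $\mathcal{H}$ is nonconvex: it is the image of the unit $p$-sphere under a ReLU-composed nonlinear map. Weak duality $p^*\ge d^*$ is immediate from the construction above. For the reverse direction I would follow the template of \cite{pilanci2020neural, ergen2021global, ergen2022globally}: lift the primal to a Borel-measure-valued convex relaxation over $\mathcal{H}$, apply Fenchel--Rockafellar duality in that lifted space, and then use a Carath\'eodory-type argument to show that an optimal measure is supported on at most $n+1$ atoms, so the finite-width primal with sufficiently large $m$ attains the lifted optimum. Verifying attainment of the supremum inside the constraint (so that the dual is a genuine maximum rather than a supremum) and handling the unbounded biases $b^{(1)}, b^{(2)}$ via the same piecewise-linear truncation used in the proof of Theorem \ref{thm:main_one_dim} are the delicate technical points.
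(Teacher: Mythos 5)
Your proposal is correct and matches the paper's approach, except that the paper's ``proof'' is literally a one-sentence citation to Proposition~5 of \cite{wang2022parallel}, whereas you actually write out the Lagrangian computation (eliminate $r$ to get $-\ell^*(v)$, eliminate $b^{(3)}$ to get $1_n^Tv=0$, eliminate $W^{(3)}_j$ to get the per-atom constraint, then take the infimum over the inner-layer parameters to collapse to the semi-infinite constraint). That derivation is exactly what the cited Proposition~5 does, and your deferral of strong duality to the measure-lifting/Carath\'eodory machinery of \cite{pilanci2020neural,ergen2021global,ergen2022globally} is the same move the paper makes by citing \cite{wang2022parallel}; the only cosmetic point worth flagging is that the lemma as stated only asserts the \emph{form} of the dual (the Lagrangian dual, i.e., weak duality $p^*\ge d^*$), so the strong-duality discussion is ancillary to what the statement actually claims.
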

The proof of this lemma parallels the proof of Proposition 5 in \cite{wang2022parallel}, where it was also shown that
strong duality holds, i.e., $p^*=d^*$, as long as the number of neurons, $m$, is sufficiently large.

\subsubsection{Proofs for three-layer networks without bias terms}
\begin{proof}[Proof of Theorem \ref{thm:l1_three_layer}]
    We focus on the maximization subproblem in the constraints of the dual problem in Lemma \ref{lem:three_layer_relu_scaling_optimization_dual}. 
    We use the same steps in the proof of Theorem \ref{thm:l1_extreme_points} and \ref{thm:l2_extreme_points}
     to maximize over the layer two and three weights while the first layer weights are fixed, and finally maximize over the first layer weights.
    This yields dual constraints analogous to a nested version of \eqref{eq:l1_biased_dual_form_proof} given by $\{\N(v)\le \beta,~1_n^Tv=0\}$, where $\N=\N(v)$ is given by
\begin{align}
\N :=&\max_{j \in [n]} \max_{\|w\|_1=1} 
\max 
\left( \left | \sum_{i=1}^n \left( (x_i^T w)_+ - (x_j^T w)_+ \right)_+ v_i \right|,
\left | \sum_{i=1}^n \left( (x_j^T w)_+ - (x_i^T w)_+ \right)_+ v_i \right|\right) \\
=& \max(\N^{(-1)},\N^{(1)}, \XX^{(-1)}, \XX^{(1)} ).
\end{align}
Here, 
\begin{align}
\N^{(s)} :=& \max_{j \in [n]} \max_{\|w\|_1=1} ~ s \sum_{i=1}^n \left( (x_i^T w)_+ - (x_j^T w)_+ \right)_+ v_i  \mbox{ for $s\in\{-1,+1\}$}.
\end{align}
and
\begin{align}
    \XX^{(s)} :=& \max_{j \in [n]} \max_{\|w\|_1=1} ~ s \sum_{i=1}^n \left( (x_j^T w)_+ - (x_i^T w)_+ \right)_+ v_i  \mbox{ for $s\in\{-1,+1\}$}.
    \end{align}
We focus on $\N^{(1)}$ since the analysis for $\N^{(-1)}, \XX^{(-1)}, \XX^{(1)}$ are identical except small changes. Using hyperplane arrangements of the matrix $X$, we can write the above problem as
\begin{align}
    \N^{(1)} = \max_{j \in [n], k\in[P]} \max_{w \in \mathcal{C}_k, \|w\|_1=1} ~ \sum_{i=1}^n \left( {D_{k}}_{ii} x_i^T w - {D_k}_{jj}x_j^T w \right)_+ v_i,
\end{align}
where $D_1,...,D_P$ are the $P$ diagonal matrices of the hyperplane arrangement of $X$ and ${D_k}_{ii}$ is the $i$-th diagonal element of the matrix $D_k$.
Here $\mathcal{C}_k$ is defined as $\mathcal{C}_k:=\{w\in\reals^d\,:\,(2D_k-I) X w\ge 0\}$.
Now note that $D_{kii} = 0$ implies that
\begin{align*}
    {D_{k}}_{ii} x_i^T w - {D_k}_{jj}x_j^T w =  - (x_j^T w)_+ \le 0.  
\end{align*}
Therefore, we can write the above problem as
\begin{align}
    \N^{(1)} &= \max_{j \in [n],k \in [P]} \max_{w \in \mathcal{C}_k, \|w\|_1=1} ~ \sum_{i=1}^n {D_{k}}_{ii} \left( x_i^T w - {D_k}_{jj}x_j^T w \right)_+ v_i,\\
&= \max_{j \in [n], k \in [P]} \max_{w \in \mathcal{C}_k, \|w\|_1=1} ~ v^T D_k \left( X w  - D_{kj}1 x_j^T w \right)_+ 
\end{align}
Now we split the above problem into two subproblems by considering the values of $D_{kj}$.
\begin{align}
    \N^{(1)} = \max( \M_1, \M_2),
\end{align}
where
\begin{align}
    \M_1 = \max_{j \in [n], k \in [P], D_{kj} = 1} \max_{w \in \mathcal{C}_k, \|w\|_1=1} ~ v^T D_k \left( (X  - 1 x_j^T) w \right)_+
\end{align}
and
\begin{align}
    \M_2 = \max_{j \in [n], k \in [P], D_{kj} = 0} \max_{w \in \mathcal{C}_k, \|w\|_1=1} ~ v^T D_k X w.
\end{align}
Note that $(D_k X w)_+ = D_k X w$, which we used to simplify the expression for $\M_2$.
Now we focus on the subproblem $\M_1$, by considering the hyperplane arrangements of the matrix $X - 1 x_j^T$. Suppose that the diagonal 
arrangement patterns of the matrix $X - 1 x_j^T$ are given by $H_{j1},...,H_{jG}$, and the corresponding convex cones are given by $\mathcal{F}_{j1},...,\mathcal{F}_{jG}$.
Then, we can write the subproblem $\M_1$ as
\begin{align}
    \M_1 = \max_{j \in [n], k \in [P], D_{kj} = 1, \ell \in[G]} \max_{w \in \mathcal{F}_{j\ell} \cap \mathcal{C}_k, \|w\|_1=1} ~ v^T H_{j\ell}  (X  - 1 x_j^T)  w.
\end{align}
We also rewrite the subproblem $\M_2$ using the same hyperplane arrangements for reasons that will become clear later.
\begin{align}
    \M_2 = \max_{j \in [n], k \in [P], \ell \in[G], D_{kj} = 0} \max_{w \in \mathcal{F}_{j\ell} \cap \mathcal{C}_k, \|w\|_1=1} ~ v^T D_k  X  w.
\end{align}
Using Lemma \ref{lemma:extreme_points_l1}, we can rewrite the subproblems using the extreme points 
\begin{align}
    \label{TsetDepth3}
\mathcal{E}_{j k \ell}:=\mathbf{Ext}(\mathcal{T}_{j k \ell})\mbox{ where } \mathcal{T}_{j k \ell}:=\mathcal{F}_{j \ell} \cap \mathcal{C}_k \cap \{w:\|w\|_1=1\}\,,
\end{align}
as follows
\begin{align}
    \M_1 = \max_{j \in [n], k \in [P], \ell \in[G], D_{kj} = 1} \max_{w \in \mathcal{E}_{j k \ell}} ~  v^T D_k \left( X w  - D_{kj}1 x_j^T w \right)_+ ,
\end{align}
and
\begin{align}
    \M_2 = \max_{j \in [n], k \in [P], \ell \in[G], D_{kj} = 0} \max_{w \in \mathcal{E}_{j k \ell}} ~  v^T D_k \left( X w  - D_{kj}1 x_j^T w \right)_+ .
\end{align}
Then, we can write
\begin{align*}
    \N^{(1)} = \max( \M_1, \M_2) &= \max_{j \in [n], k \in [P], \ell \in[G]} \max_{w \in \mathcal{E}_{j k \ell}} ~ v^T D_k \left( X w  - D_{kj}1 x_j^T w \right)_+\\
    & = \max_{w \in \cup_{j \in [n], k\in[P],\ell\in[G]}\mathcal{E}_{jk \ell}} ~ \sum_{i=1}^n \left( (x_i^T w)_+ - (x_j^T w)_+ \right)_+ v_i.
\end{align*}
We now apply the characterization of extreme points from Lemma \ref{lemma:extreme_points_l1}, which shows that
\begin{align*}
    \mathcal{E}_{j k \ell} = \left\{
        w\in \mathcal{T}_{j k \ell}\subseteq \reals^d \, \mid \, \exists S \subseteq [n], |S|=d-1,~  
        M w = 0,~ M := \begin{bmatrix}
            X \\ 
            X-1x_j^T \\ 
            I
        \end{bmatrix}_S,~ \mbox{rank}(M)=d-1    \right\},
\end{align*}
where $T_{ j k \ell}$ is defined in \eqref{TsetDepth3}.
Now we note that for each fixed $j\in[n]$, the sets $\{F_{jl} \cap C_k \}_{k\in[P],l\in[G]}$ are chambers of hyperplane arrangements that exhaustively partition $\reals^d$.
Taking a union over all chambers as the tuple $(k,l)$ range over $[P]\times[G]$ gives the entire space $\reals^d$. Therefore, we have
\begin{align*}
    \N^{(1)} = \max_{j\in[n],\,w \in \mathcal{E}_j^\prime \subseteq \reals^d} ~ \sum_{i=1}^n \left( (x_i^T w)_+ - (x_j^T w)_+ \right)_+ v_i.
\end{align*}
where $\mathcal{E}_j^\prime$ is defined as
\begin{align*}
    \mathcal{E}_j^\prime  = 
    \left\{
        w\in \reals^d \, : \, \exists S \subseteq [n], |S|=d-1,~  
        M w = 0,~ M := \begin{bmatrix}
            X \\ 
            X-1x_j^T \\ 
            I
        \end{bmatrix}_S,~ \mbox{rank}(M)=d-1    
    \right\}.
\end{align*}
As in the proof of Theorem \ref{thm:l1_extreme_points}, we identify the elements of the
set $\mathcal{E}_j^\prime$ using the generalized cross product and obtain 
the following characterization of $\N$
\begin{align*}
    \N = \max_{j\in[n],\,w \in \mathcal{W} \subseteq \reals^d} ~ \sum_{i=1}^n \left( (x_i^T w)_+ - (x_j^T w)_+ \right)_+ v_i.
\end{align*}
where $\mathcal{W}_j$ is defined as
$\mathcal{W}_j\triangleq \Big\{ \pm \frac{\varprod_{\ell \in S} \tilde \x_j^{(j)}}{\|\varprod_{\ell \in S} \tilde \x_j^{(j)}\|_1}\,:\,\{\tilde\x_j^{(j)}\}_{j \in S} \mbox{ linearly independent},\,|S|=d-1\Big\}$ and
the set of vectors $\{\tilde \x_i^{(j)}\}_{i=1}^{2n+d}$ are the union of $\{x_i\}_{i=1}^n$, $\{x_i-x_j\}_{i=1}^n$, and $\{e_i\}_{i=1}^d$.

The arguments for $\N^{(-1)}, \XX^{(-1)}, \XX^{(1)}$ hold in the same way, and we obtain the following characterization of the dual problem.
Using the convex duality of Lasso from Lemma \ref{eq:dual_of_lasso}, we obtain the claimed convex bidual problem, and the 
optimal solution is given by the extreme points of the set $\mathcal{W}_j$.
\end{proof}

\subsubsection{Proofs for depth three networks with biases}

\begin{proof}[Proof of Theorem \ref{thm:l1_three_layer_biased}]
Consider the dual constraint subproblem
\begin{align}
    \N^{(b)} := &\max_{j \in [n]} \max_{\|w\|_1=1,\, b \in \mathbb{R}} ~ \left | \sum_{i=1}^n \left( (x_i^T w + b)_+ - ( x_j^T w + b)_+ \right)_+ v_i \right|.
\end{align}
We now focus on the maximization with respect to the scalar $b$ when the vector $w$ and the index $j$ are fixed.
Observe that the objective is a piecewise linear function of $b$, when all other variables are fixed. 
Therefore, it suffices to check the break points and when $\beta\rightarrow -\infty$ or $\beta\rightarrow \infty$.
The break points are when $b$ such that $x_i^T w + b = 0$ for some $i\in[n]$. When $\beta \rightarrow \infty$
we have 
$$(x_i^T w + b)_+ - ( x_j^T w + b)_+= x_i^Tw+b-x_j^Tw-b=(x_i-x_j)^Tw$$
for some $i,j\in[n]$. Then, we rewrite the above problem as
\begin{align*}
    \N^{(b)} = \max(\N_1^{(b)},\N_2^{(b)})\,,
\end{align*}
where $\N_1^{(b)}$ are the maximum value over the break points $b=-x_i^T w$ for each $i\in[n]$ and $\N_2^{(b)}$ is the maximum value 
when $\beta\rightarrow \infty$ as given below.
\begin{align}
\N_1^{(b)} := &\max_{j \in [n],\ell \in [n]} \max_{\|w\|_1=1} ~ \left | \sum_{i=1}^n \left( ((x_i-x_{\ell})^T w)_+ - ((x_j-x_\ell)^T w)_+ \right)_+ v_i \right|
\end{align}
and
\begin{align*}
    \N_2^{(b)} = &\max_{j \in [n]} \max_{\|w\|_1=1} ~ \left | \sum_{i=1}^n \left( (x_i-x_j)^T w\right)_+ v_i \right|,\\
\end{align*}
Finally we note that by re-labeling the data points as $x_i\leftarrow x_i-x_\ell$ for some $\ell\in[n]$, the problem $\N_1^{(b)}$ is equivalent to the maximization problem $\N$ analyzed in the proof of Theorem \ref{thm:l1_three_layer}.
The rest of the proof closely follows the proof of Theorem \ref{thm:l1_three_layer} and we obtain the claimed result.
\end{proof}



\end{document}